\setlist{noitemsep,parsep=2pt,topsep=2pt}
\pgfplotsset{compat=1.12}
\newif\ifappendix
\begin{document}

\title{Goal-Driven Query Answering over First- and Second-Order Dependencies with Equality}

%

\author{Efthymia Tsamoura}
\authornote{This research started before the author joined Huawei Labs.}
\orcid{0009-0008-8302-1902}
\affiliation{%
    \institution{Huawei Labs}
    \city{Cambridge}
    \country{United Kingdom}
}
\email{efthymia.tsamoura@huawei.com}

\author{Boris Motik}
\orcid{0000-0003-2506-4118}
\affiliation{%
    \institution{Department of Computer Science, Oxford University}
    \city{Oxford}
    \country{United Kingdom}
}
\email{boris.motik@cs.ox.ac.uk}

\begin{abstract}
\emph{Dependencies} are used in databases to express integrity constraints,
deal with data incompleteness, or specify transformations between database
schemas. While they are usually expressed in fragments of first-order logic,
second-order dependencies provide existentially quantified function variables,
which are needed to express composition of certain classes of schema mappings
\cite{DBLP:journals/tods/FaginKPT05, DBLP:journals/jcss/Arenas0RR13}. When
further extended with the ability to derive equality statements, second-order
dependencies can express compositions of very expressive classes of schema
mappings \cite{DBLP:journals/tods/NashBM07, DBLP:journals/corr/abs-1106-3745}.
Answering queries over data with dependencies is commonly solved by using a
suitable chase variant to compute a universal model of the dependencies and the
data, after which any conjunctive query can be answered by evaluating it in the
universal model. If, however, the query to be answered is known in advance,
computing the universal model is often inefficient as many inferences made
during this process can be irrelevant to the query at hand. In such cases, a
\emph{goal-driven} approach, which avoids drawing unnecessary inferences, can
be more efficient and thus preferable in practice.

In this paper we present the first goal-driven query answering technique for
first- and second-order dependencies with equality. Our technique transforms
the input dependencies so that applying the chase to the output avoids many
inferences that are irrelevant to the query. The transformation proceeds in
several steps, which comprise the following three novel techniques. First, we
present a variant of the \emph{singularisation} technique by
\citet{DBLP:conf/pods/Marnette09} that can handle function variables and that
corrects an incompleteness of a related formulation by
\citet{DBLP:journals/jodsn/CateHK16}. Second, we present a \emph{relevance
analysis} technique that can eliminate dependencies that provably do not
contribute to query answers. Third, we present a variant of the \emph{magic
sets} algorithm \cite{DBLP:journals/jlp/BeeriR91} that can handle second-order
dependencies with equality. We also present the results of an extensive
empirical evaluation, which show that goal-driven query answering can be orders
of magnitude faster than computing the full universal model.

\end{abstract}

\begin{CCSXML}
<ccs2012>
<concept>
<concept_id>10002951.10002952.10003190.10003192</concept_id>
<concept_desc>Information systems~Database query processing</concept_desc>
<concept_significance>500</concept_significance>
</concept>
</ccs2012>
\end{CCSXML}

\ccsdesc[500]{Information systems~Database query processing}

\keywords{querying, first-order and second-order dependencies, magic sets}

\received{14 October 1066}
\received[revised]{14 October 1066}
\received[accepted]{14 October 1066}

\maketitle

\section{Introduction}\label{sec:introduction}

The need to describe a domain of interest using formal statements naturally
arises in many areas of databases and knowledge representation. Such
descriptions are usually formulated in a suitable fragment of first- or
second-order logic, and are, depending on one's perspective and background,
called \emph{dependencies} \cite{DBLP:conf/icalp/BeeriV81},
\emph{$\forall\exists$-rules} \cite{DBLP:journals/ai/BagetLMS11},
\emph{existential rules} \cite{DBLP:conf/rr/Mugnier11}, or \emph{Datalog$^\pm$}
\cite{DBLP:journals/ws/CaliGL12}.

\subsection{Background: Dependencies}\label{sec:introduction:dependencies}

Dependencies have many uses in databases. They can express integrity
constraints---statements that describe valid database states
\cite{DBLP:journals/tods/Fagin77, DBLP:journals/tods/Delobel78}. They can also
be used to complete an incomplete database with missing facts and thus provide
richer answers to queries \cite{Greco2012}. Finally, they are used extensively
in declarative data integration \cite{Levy2000, DBLP:conf/vldb/HalevyRO06,
DBLP:conf/pods/Lenzerini02}, as well as to specify mappings between database
schemas \cite{DBLP:journals/tcs/FaginKMP05}---that is, how to transform any
database expressed in one schema to a database in another schema. In knowledge
representation, it was shown that ontology languages such as
$\mathcal{EL}{+}{+}$ \cite{babl05} and certain languages of the DL-Lite family
\cite{DBLP:journals/jar/CalvaneseGLLR07, DBLP:journals/jair/ArtaleCKZ09} can be
expressed as dependencies \cite{DBLP:journals/ws/CaliGL12}.

Dependencies were initially expressed using ad hoc languages
\cite{DBLP:journals/tods/Fagin77, DBLP:journals/tods/Delobel78}, but
\citet{DBLP:conf/sigmod/Nicolas78} observed that many dependency classes can be
expressed in first-order logic. \citet{DBLP:conf/icalp/BeeriV81} generalised
this idea and proposed \emph{tuple and equality generating dependencies} (TGDs
and EGDs). Intuitively, TGDs state that existence of certain tuples implies
existence of other tuples, and EGDs specify that existence of certain tuples
implies uniqueness of certain values.

The expressivity of first-order logic can sometimes be insufficient.
\citet{DBLP:journals/tods/FaginKPT05} observed this in the context of schema
mappings. If $\Sigma_1$ and $\Sigma_2$ are source-to-target TGDs describing
mappings from a schema $\mathcal{S}_1$ to a schema $\mathcal{S}_2$, and from
$\mathcal{S}_2$ to a schema $\mathcal{S}_3$, respectively, the equivalent
mapping from $\mathcal{S}_1$ to $\mathcal{S}_3$ may not be expressible in
first-order logic; however, the composition of $\Sigma_1$ and $\Sigma_2$ can be
expressed using second-order (SO) dependencies. Instead of existentially
quantified first-order variables, SO dependencies provide existentially
quantified function variables, which considerably increases the expressive
power. Moreover, \citet{DBLP:journals/tods/FaginKPT05} have shown that SO
dependencies themselves are closed under composition.

Following this seminal result, composition properties of other dependency
classes have been studied. \citet{DBLP:journals/tods/NashBM07} studied mappings
that do not distinguish the source and the target schema, and they identified
cases that do and do not support composition; some of these involve
second-order dependencies. \citet{DBLP:journals/corr/abs-1106-3745} studied
mappings that distinguish the source from the target schemas, but where the
target schema also uses TGDs and EGDs. They showed that SO dependencies alone
cannot express composition of such mappings, but this can be achieved using
TGDs, EGDs, and \emph{source-to-target SO dependencies}---an extension of SO
dependencies that allows for equalities between terms in the consequents.
\citet{DBLP:journals/jcss/Arenas0RR13} introduced \emph{plain source-to-target
SO TGDs}, which disallow equalities and nesting of function variables in SO
dependencies, and they showed that mappings expressed using this language can
be composed as well as inverted.

\subsection{Background: Query Answering over Dependencies}\label{sec:introduction:qa}

The problem of answering a query over a database extended with facts that are
logically implied by a set of dependencies plays a central role in areas such
as declarative data integration \cite{Levy2000, DBLP:conf/vldb/HalevyRO06,
DBLP:conf/pods/Lenzerini02}, answering queries using views
\cite{DBLP:journals/vldb/Halevy01}, accessing data sources with restrictions
\cite{DBLP:journals/sigmod/DeutschPT06, DBLP:journals/vldb/Meier14}, and
answering queries over ontologies \cite{DBLP:journals/ws/CaliGL12}. Thus,
identifying dependency classes that are sufficiently expressive but also
support effective query answering has received considerable attention.

This problem can be solved by using a suitable variant of the \emph{chase
algorithm} to compute a \emph{universal model}---a set of facts that satisfies
both the dataset and all dependencies. Any conjunctive query can be answered by
evaluating it in the universal model and returning the answers that consist of
constants only. The chase was first introduced by
\citet{DBLP:journals/tods/MaierMS79}, and many variants have been developed
since, such as the \emph{restricted} \cite{DBLP:conf/icalp/BeeriV81,
DBLP:journals/tcs/FaginKMP05}, \emph{oblivious} \cite{DBLP:conf/kr/CaliGK08},
\emph{semioblivious} \cite{DBLP:journals/mst/CalauttiP21}, \emph{Skolem}
\cite{DBLP:conf/pods/Marnette09, DBLP:journals/pvldb/CateCKT09}, \emph{core}
\cite{DBLP:conf/pods/DeutschNR08}, \emph{parallel}
\cite{DBLP:conf/pods/DeutschNR08}, and \emph{frugal}
\cite{DBLP:journals/pvldb/KonstantinidisA14} chase.
\citet{DBLP:conf/pods/BenediktKMMPST17} discuss the similarities and
differences of many chase variants for first-order dependencies. The chase was
also extended to handle second-order dependencies without
\cite{DBLP:journals/tods/FaginKPT05} and with
\cite{DBLP:journals/corr/abs-1106-3745, DBLP:journals/tods/NashBM07} equality
in the consequents.

Such an approach is practicable only if the chase terminates and produces a
finite universal model. This is not guaranteed in general, and in fact checking
whether the chase terminates on all data sets is undecidable
\cite{DBLP:conf/icalp/GogaczM14}. However, the problem is decidable for some
dependency classes, such as linear \cite{DBLP:conf/icdt/LeclereMTU19} and
sticky \cite{DBLP:journals/mst/CalauttiP21} TGDs. Moreover, many sufficient
termination conditions have been proposed, such as \emph{weak}
\cite{DBLP:journals/tcs/FaginKMP05}, \emph{superweak}
\cite{DBLP:conf/pods/Marnette09}, \emph{joint}
\cite{DBLP:conf/ijcai/KrotzschR11}, \emph{argument restricted}
\cite{DBLP:conf/iclp/LierlerL09}, and \emph{model-summarising} and
\emph{model-faithful} \cite{DBLP:journals/jair/GrauHKKMMW13} acyclicity.
Finally, the chase terminates for all classes of second-order dependencies used
in the literature on compositions of schema mappings
\cite{DBLP:journals/tods/FaginKPT05, DBLP:journals/tods/NashBM07,
DBLP:journals/corr/abs-1106-3745, DBLP:journals/jcss/Arenas0RR13}.

The query answering problem can be solved for some dependency classes even if
the chase does not terminate. For example, one can develop a finite
representation of infinite universal models; then, one can use a chase variant
with \emph{blocking} \cite{DBLP:journals/ws/CaliGL12,
DBLP:journals/pvldb/BellomariniSG18, hirschthesis, dl-handbook-2} to construct
such a representation, or one can construct a tree automaton that accepts such
representations \cite{DBLP:conf/lics/GradelW99}. Alternatively, one can
\emph{rewrite} the query and the dependencies into first-order
\cite{DBLP:journals/jar/CalvaneseGLLR07, DBLP:journals/ai/BagetLMS11,
DBLP:conf/ijcai/CaliLR03, DBLP:conf/ijcai/BarceloBLP18} or Datalog
\cite{DBLP:journals/corr/abs-1212-0254, DBLP:conf/mfcs/BaranyBC13,
DBLP:journals/jsyml/BaranyBC18, DBLP:journals/tkde/WangXWZW23} queries. In this
paper, however, we focus on dependency classes where the chase terminates.

\subsection{The Need for Goal-Driven Query Answering}\label{sec:introduction:need}

A universal model can be used to answer an arbitrary query. The cost of
computing the chase in a preprocessing step is thus often amortised over time
in applications where the data changes infrequently or the query workload is
unknown in advance. However, answers to a particular query often rely only on a
relatively small portion of the universal model. Thus, if the query workload is
known in advance, computing the chase in full can be inefficient as many
inferences made by the algorithm may be irrelevant to the query answers. This
problem is exacerbated if the data changes frequently, so the universal model
needs to be frequently recomputed. When dependencies do not contain existential
quantifiers (and are thus equivalent to Datalog rules possibly extended with
the equality predicate), recomputing the chase in full after each change can be
mitigated by using an \emph{incremental maintenance} algorithm
\cite{DBLP:conf/sigmod/LuMSS95, mnph19maintenance-revisited,
mnph15incremental-BF-sameAs}; however, to the best of our knowledge, no such
algorithm is known for first- and second-order dependencies with equalities.

In this paper we thus turn our attention to \emph{goal-driven} query answering
techniques, which typically start from the query and work backwards through
dependencies to identify the relevant inferences. Some of the rewriting
techniques mentioned in Section~\ref{sec:introduction:qa}, such as the query
rewriting algorithm for DL-Lite \cite{DBLP:journals/jar/CalvaneseGLLR07} or the
piece-based backward chaining \cite{DBLP:journals/ai/BagetLMS11}, can be seen
as being goal-driven; however, these are applicable only to syntactically
restricted dependency classes that can be insufficiently expressive in certain
applications. SLD resolution \cite{DBLP:conf/ifip/Kowalski74} provides
goal-driven query answering for logic programs. Furthermore, the \emph{magic
sets} technique for logic programs \cite{DBLP:conf/pods/BancilhonMSU86,
DBLP:journals/jlp/BeeriR91, DBLP:journals/jlp/BalbinPRM91} optimises the
tuple-at-a-time style of processing of SLD resolution. The idea behind the
magic sets is to analyse the program's inferences and modify the program so
that applying the chase to the transformation result simulates backward
chaining. This is achieved by introducing auxiliary \emph{magic} predicates
that accumulate the bindings that would be produced during backward chaining,
and by using these predicates as guards to restrict the program's rules to the
relevant bindings. This idea has been adapted to many contexts, such as
finitely recursive programs \cite{DBLP:conf/lpnmr/CalimeriCIL09}, programs with
builtins \cite{DBLP:journals/tods/MumickFPR96} and aggregates
\cite{DBLP:conf/lpnmr/AlvianoGL11}, disjunctive programs
\cite{DBLP:journals/ai/AlvianoFGL12}, and \emph{Shy} dependencies
\cite{DBLP:conf/datalog/AlvianoLMTV12}.

As we argued in Section~\ref{sec:introduction:dependencies}, second-order
dependencies with equality atoms are necessary to capture many relevant data
management tasks, but, to the best of our knowledge, none of the goal-driven
techniques we outlined thus far are applicable to this class of dependencies. A
na{\"i}ve approach might be to explicitly axiomatise equality as an ordinary
predicate \cite{theorem-proving} (see Section~\ref{sec:preliminaries}), and to
use the standard magic sets technique for logic programs (possibly containing
function symbols) \cite{DBLP:journals/jlp/BeeriR91}; however, reasoning with
the explicit axiomatisation of equality can be very inefficient in practice
\cite{mnph15owl-sameAs-rewriting}. Consequently, efficient and practically
successful goal-driven query answering over first- and second-order
dependencies with equality remains an open problem.

\subsection{Our Contribution}\label{sec:introduction:contribution}

In this paper, we present what we believe to be the first goal-driven approach
to answering queries over first- and second-order dependencies with equality.
Our technique takes as input a dataset, a set of dependencies, and a query, and
it modifies the dependencies so that chase computation answers the query while
reducing irrelevant inferences. Our technique is inspired by the magic sets
variants mentioned in Section~\ref{sec:introduction:need}, but is much more
involved. The key problem is that equality inferences are prolific (i.e., they
can affect any predicate in any dependency) and highly redundant (i.e., the
same conclusion is derived in many different ways), both of which make the
analysis of equality inferences hard. Our solution is based on the following
three main contributions.

First, to facilitate a precise and efficient analysis of equality inferences,
we use the \emph{singularisation} technique by
\citet{DBLP:conf/pods/Marnette09}, which axiomatises equality without the
congruence axioms \cite{theorem-proving} and thus avoids a key source of
inefficiency in practice \cite{mnph15owl-sameAs-rewriting}. To compensate for
the lack of congruence axioms, the dependencies need to be modified too. One
can intuitively understand this as `pruning' redundant inferences from the
original dependencies, which in turn allows for an efficient analysis of
equality inferences. \citet{DBLP:conf/pods/Marnette09} introduced
singularisation for first-order dependencies. Moreover,
\citet{DBLP:journals/jodsn/CateHK16} applied this technique to second-order
dependencies, but their result is incomplete: in
Section~\ref{sec:so-dependencies} we present an example where singularisation
by \citet{DBLP:journals/jodsn/CateHK16} does \emph{not} preserve all query
answers because it does not take into account \emph{functional reflexivity} of
equality. The latter property ensures that function variables behave like
functions: if we derive that $a$ and $b$ are equal, then we must derive that
$f(a)$ and $f(b)$ are equal too. Completeness of the technique by
\citet{DBLP:journals/jodsn/CateHK16} can be easily recovered by axiomatising
functional reflexivity, but this prevents chase termination: if $f(a)$ and
$f(b)$ are equal, then $f(f(a))$ and $f(f(b))$ should be equal as well, and so
on ad infinitum. We overcome this by presenting a novel singularisation variant
where functional reflexivity is constrained to derive `just the right'
equalities: sufficient to derive all relevant answers, but without necessarily
making the universal model infinite.

Second, we present a \emph{relevance analysis} technique that can identify and
eliminate dependencies for which no conclusion is relevant to the query.
Roughly speaking, this technique computes an abstraction of a universal
model---that is, a model that contains a homomorphic image of a universal model
computed by the chase on the given dataset. This abstraction is then used to
perform a backward analysis of the inferences and dependencies that contribute
to query answers.

Third, we present a modification of the magic sets technique for logic programs
that handles equality efficiently. Roughly speaking, our technique takes into
account the reflexivity, symmetry, and transitivity of equality to reduce the
number of rules produced. A careful handling of reflexivity is particularly
essential: an equality of the form ${t \equals t}$ can be derived from any
predicate, so the standard magic sets algorithm necessarily considers each
input dependency. In contrast, if the input dependencies are \emph{safe} (which
intuitively ensures that the conclusions of the dependencies do not depend on
the interpretation domain), we show that reflexivity does not need to be taken
into account in the magic sets transformation, which usually reduces the output
size considerably.

Our three techniques are complementary: singularisation facilitates the use of
the relevance analysis and the magic sets transformation, and neither of the
latter two techniques subsumes the other. Thus, all three techniques are
required to facilitate efficient goal-driven query answering.

We stress that singularisation is used only to \emph{facilitate our
transformation}: it is `undone' at the end of our transformation and equality
is treated as `true' equality in the result. In other words, the final chase
step (which is likely to be critical to the performance of query answering)
does not suffer from any overheads associated with axiomatising equality.

Our aim is to demonstrate that our techniques are practical, but we faced a
significant obstacle towards this goal: whereas publicly available first-order
benchmarks exist \cite{DBLP:conf/pods/BenediktKMMPST17}, we are unaware of any
such benchmarks for second-order dependencies with equalities. We therefore
turned to synthetic benchmark generators described in the literature:
STBenchmark \cite{DBLP:journals/pvldb/AlexeTV08} can synthesise data mapping
benchmarks; ToXgene \cite{DBLP:conf/sigmod/BarbosaMKL02} can produce XML
datasets that are easily converted into a relational form; iBench
\cite{DBLP:journals/pvldb/ArocenaGCM15} can produce data integration benchmarks
consisting of first- and second-order dependencies; and WatDiv
\cite{DBLP:conf/semweb/AlucHOD14} can generate RDF data. To the best of our
knowledge, only iBench can produce dependencies with function variables, but
these are much simpler in structure than what we consider in this paper or what
was considered by \citet{DBLP:journals/tods/FaginKPT05} in their foundational
work on SO dependencies. Extending iBench to produce more complex dependencies
is unlikely to be adequate: iBench randomly produces each dependency in
isolation, so independently generated dependencies are not guaranteed to `fire'
and produce an interesting chain of inferences. An analogous problem occurs
when dependencies are generated independently from the data.

To overcome these problems, we developed a new benchmark generation technique
that produces dependencies and datasets in combination. Roughly speaking, our
technique randomly generates derivation trees of instantiated
dependencies---that is, dependencies with ground terms of a bounded depth. The
tree leaves are then converted into a dataset, and the internal nodes are
converted into second-order dependencies by replacing certain subterms with
variables. Applying the resulting dependencies to the corresponding dataset is
guaranteed to perform at least the inferences from the generated derivation
trees. We are not aware of any related technique that can provide analogous
guarantees about the minimum amount of incurred work.

We evaluated our techniques on a range of existing and new test scenarios
involving first- and second-order dependencies. Our objective was to verify
whether a goal-driven technique can answer a single query faster than computing
a universal model in full. Moreover, to isolate the contributions of different
techniques, we compared answering the query by relevance analysis only, magic
sets only, and with both techniques combined. Our results show that goal-driven
query answering can be very effective, sometimes improving the performance by
orders of magnitude. Our relevance analysis technique seems to be the main
reason behind these improvements, but magic sets can be beneficial too.
Moreover, both techniques have the potential to considerably reduce the number
of facts derived in practice.

\subsection{Summary of Contributions and Paper Structure}

The results presented in this paper extend our earlier work published at the
2018 AAAI conference \cite{DBLP:conf/aaai/BenediktMT18}, and the main novelty
The main novelty of this work can be summarised as follows:
\begin{itemize}
    \item an extension of the singularisation technique by
    \citet{DBLP:conf/pods/Marnette09} to dependencies with function symbols,
    which corrects the incompleteness in the work by
    \citet{DBLP:journals/jodsn/CateHK16};

    \item an extension of the relevance analysis and magic sets techniques to
    second-order dependencies;

    \item a generator of second-order dependencies and datasets that are
    guaranteed to exhibit nontrivial inferences;

    \item the first implementation of the chase for second-order dependencies
    with equality; and

    \item an extensive empirical evaluation showing that our techniques can be
    effective in practice.
\end{itemize}

The rest of our paper is structured as follows. In
Section~\ref{sec:preliminaries} we recapitulate definitions, terminology, and
notation that we use in the rest of the paper. In
Section~\ref{sec:so-dependencies} we introduce second-order dependencies, we
discuss the chase variant applicable to such dependencies, and we present a
running example that illustrates the key difficulties in our work. In
Section~\ref{sec:answering} we present our approach in detail and prove its
correctness. In Section~\ref{sec:evaluation} we discuss the results of our
experimental evaluation. Finally, in Section~\ref{sec:conclusion} we
recapitulate our main findings and discuss possible avenues for further work.
\ifappendix
The proofs of our results are given in
Appendices~\ref{sec:proof:sg}--\ref{sec:proof:desg}, and the description of the
benchmark generator is given in Appendix~\ref{sec:generating}.
\else
The proofs of our results and the description of the benchmark generator are
given in the accompanying supplementary material.
\fi
\section{Preliminaries}\label{sec:preliminaries}

We formalise our results using first- and second-order logic, as well as logic
programming. To avoid defining each formalism separately, we ground all
definitions in the framework of second-order logic as presented by
\citet{DBLP:books/daglib/0076838}. We next recapitulate the relevant
terminology and notation.

\myparagraph{First- and Second-Order Logic: Syntax}
We fix arbitrary, countably infinite, and mutually disjoint sets of
\emph{constants}, \emph{individual variables}, \emph{function symbols},
\emph{function variables}, and \emph{predicates}. Each function symbol,
function variable, and predicate is associated with a nonnegative integer
\emph{arity}. A \emph{term} is inductively defined as a constant, an individual
variable, or an expression of the form ${f(t_1,\dots,t_n)}$ where
${t_1,\dots,t_n}$ are terms and $f$ is an $n$-ary function symbol or an $n$-ary
function variable. An \emph{atom} is an expression of the form
${R(t_1,\dots,t_n)}$ where $R$ is an $n$-ary predicate and ${t_1,\dots,t_n}$
are terms called the atom's \emph{arguments}. We assume that there exists a
distinct binary \emph{equality predicate} $\equals$. Atoms of the form
${\equals}(t_1,t_2)$ are typically written as ${t_1 \equals t_2}$ and are
called \emph{equality atoms} (or just \emph{equalities}), and atoms with a
predicate different from $\equals$ are called \emph{relational}. Formulas of
second-order logic are constructed as usual using Boolean connectives $\wedge$,
$\vee$, and $\neg$, first-order quantifiers $\exists x$ and $\forall x$ where
$x$ is an individual variable, and second-order quantifiers $\forall f$ and
$\exists f$ where $f$ is a function variable. Implication ${\varphi \rightarrow
\psi}$ abbreviates ${\neg \varphi \vee \psi}$. A first-order formula does not
contain function variables. A \emph{sentence} is a formula with no free
(individual or function) variables. In the context of first-order formulas,
individual variables are typically called just \emph{variables}. Unless stated
otherwise, we use possibly subscripted letters ${a, b, c, \dots}$ for
constants, ${s, t,\dots}$ for terms, ${x, y, z, \dots}$ for individual
variables, and ${f, g, h,\dots}$ for either function symbols or function
variables; in the latter case, the intended use will always be clear from the
context.

\myparagraph{First- and Second-Order Logic: Semantics}
An \emph{interpretation} ${I = (\Delta^I, \cdot^I)}$ consists of a nonempty
\emph{domain} set $\Delta^I$ and a function $\cdot^I$ that maps each constant
$a$ to a domain element ${a^I \in \Delta^I}$, each $n$-ary function symbol $f$
to a function ${f^I : (\Delta^I)^n \to \Delta^I}$, and each $n$-ary predicate
$R$ to a relation ${R^I \subseteq (\Delta^I)^n}$. A \emph{valuation} $\pi$ on
$I$ maps each individual variable $x$ to a domain element ${x^\pi \in
\Delta^I}$, and each $n$-ary function variable $f$ to an $n$-ary function
${f^\pi : (\Delta^I)^n \to \Delta^I}$. Given an interpretation $I$ and a
valuation $\pi$ in $I$, each term $t$ is assigned a value ${t^{I,\pi} \in
\Delta^I}$ as follows:
\begin{displaymath}
    t^{I,\pi} = \begin{cases}
        a^I                                     & \text{if } t \text{ is a constant } a, \\
        x^\pi                                   & \text{if } t \text{ is an individual variable } x, \\
        f^I(t_1^{I, \pi},\dots,t_n^{I, \pi})    & \text{if } t = f(t_1,\dots,t_n) \text{ with } f \text{ an $n$-ary function symbol, and} \\
        f^\pi(t_1^{I, \pi},\dots,t_n^{I, \pi})  & \text{if } t = f(t_1,\dots,t_n) \text{ with } f \text{ an $n$-ary function variable.} \\
    \end{cases}
\end{displaymath}

Let $\varphi$ be a first- or second-order formula (possibly containing free
individual and/or function variables), let $I$ be an interpretation, and let
$\pi$ be a valuation defined on all free (individual and function) variables of
$\varphi$. We can determine whether $\varphi$ is \emph{satisfied} in $I$ and
$\pi$, written ${I,\pi \modelsEq \varphi}$, using the standard definitions of
first- and second-order logic \cite{DBLP:books/daglib/0082516,
DBLP:books/daglib/0076838}; we recapitulate below only the cases for the
existential first- and second-order quantifiers.
\begin{displaymath}
\begin{array}{@{}l@{\;}l@{}}
    I,\pi \modelsEq \exists x.\psi  & \text{iff there exists a domain element } \alpha \in \Delta^I \text{ such that} \\
                                    & \quad I,\pi' \modelsEq \psi \text{ where } \pi' \text{ is obtained from } \pi \text{ by mapping } x \text{ to } \alpha. \\[2ex]
    I,\pi \modelsEq \exists f.\psi  & \text{iff there exists a function } \alpha : (\Delta^I)^n \to \Delta^I \text{ such that} \\
                                    & \quad I,\pi' \modelsEq \psi \text{ where } \pi' \text{ is obtained from } \pi \text{ by mapping } f \text{ to } \alpha. \\
\end{array}
\end{displaymath}
The subscript $\equals$ in $\modelsEq$ stipulates that $\equals$ is interpreted
as `true equality'---that is, ${\equals^I = \{ \langle \alpha, \alpha \rangle
\mid \alpha \in \Delta^I \}}$, so two domain elements are equal if and only if
they are identical. The truth of $\varphi$ in $I$ does not depend on a
valuation if $\varphi$ is a sentence, so we simply write ${I \modelsEq
\varphi}$ and call $I$ a \emph{model} of $\varphi$. Moreover, a sentence
$\varphi$ is \emph{satisfiable} if it has a model; a sentence $\varphi$
\emph{entails} a sentence $\psi$, written ${\varphi \modelsEq \psi}$, if each
model of $\varphi$ is also a model of $\psi$; finally, sentences $\varphi$ and
$\psi$ are \emph{equivalent} if they are satisfied in exactly the same models
(so ${\varphi \modelsEq \psi}$ and ${\psi \modelsEq \varphi}$). A model $I$ of
a sentence $\varphi$ is \emph{universal} if, for each model ${J = (\Delta^J,
\cdot^J)}$ of $\varphi$, there exists a mapping ${\mu : \Delta^I \to \Delta^J}$
such that
\begin{itemize}
    \item ${\mu(a^I) = a^J}$ for each constant $a$,
    
    \item ${\mu(f^I(\alpha_1,\dots,\alpha_n)) =
    f^J(\mu(\alpha_1),\dots,\mu(\alpha_n))}$ for each $n$-ary function symbol
    $f$ and each $n$-tuple ${\langle \alpha_1, \dots, \alpha_n \rangle \in
    (\Delta^I)^n}$, and

    \item ${\langle \mu(\alpha_1), \dots, \mu(\alpha_n) \rangle \in R^J}$ for
    each $n$-ary predicate $R$ and each $n$-tuple ${\langle \alpha_1, \dots,
    \alpha_n \rangle \in R^I}$.
\end{itemize}

First- and second-order logic typically do not assume the \emph{unique name
assumption} (UNA): distinct constants can be interpreted as the same domain
element. However, UNA is commonly used in databases: an attempt to equate two
constants results in a contradiction. For the sake of generality, we do not
assume UNA in this paper. For example, sentence ${\varphi = R(a,b) \wedge
\forall x,y.[R(x,y) \rightarrow x \equals y]}$ is satisfiable and it implies
that $a$ and $b$ are the same constant---that is, ${\varphi \modelsEq a \equals
b}$. If desired, one can always check whether UNA is satisfied by explicitly
querying the equality predicate.

\myparagraph{Auxiliary Definitions}
A term $t_1$ is a \emph{subterm} of a term $t_2$ if $t_1$ syntactically occurs
inside $t_2$, and $t_1$ is a \emph{proper subterm} of $t_2$ if $t_1$ is a
subterm of $t_2$ and ${t_1 \neq t_2}$. The \emph{depth} of a term $t$ is
defined as ${\dep{t} = 0}$ if $t$ is a variable or a constant, and ${\dep{t} =
1 + \max \{ \dep{t_i} \mid 1 \leq i \leq n \}}$ if ${t = f(t_1,\dots,t_n)}$.
The \emph{depth} of an atom is equal to the maximum depth of its arguments. We
often abbreviate a tuple ${t_1,\dots,t_n}$ of terms as $\vec t$, and we often
treat $\vec t$ as a set; for example, we write ${t \in \vec t}$ to indicate
that ${t = t_i}$ for some ${i \in \{ 1, \dots, n \}}$. Also, we often
abbreviate a tuple ${f_1,\dots,f_n}$ of function symbols or function variables
as $\vec f$. For $\alpha$ a term, a formula, or a set thereof, $\vars{\alpha}$
is the set of all free variables of $\alpha$. A \emph{substitution} is a
mapping of finitely many variables to terms. For $\sigma$ a substitution and
$\alpha$ a term, a formula, or a set thereof, $\alpha\sigma$ is the result of
replacing each free occurrence of a variable ${x \in \vars{\alpha}}$ in
$\alpha$ with $\sigma(x)$ provided the latter is defined. A term or an atom is
\emph{ground} if it does not contain a variable. A \emph{fact} is a ground
atom, and a \emph{base fact} is a fact that does not use the equality predicate
and does not contain a function symbol. An \emph{instance} is a (possibly
infinite) set of facts. A \emph{base instance} is a finite set of base facts,
and it corresponds to the notion of a database instance.

\myparagraph{Equality as an Ordinary Predicate}
Our algorithms will need to analyse inferences that use the equality predicate,
which, as we discuss in Section~\ref{sec:so-dependencies:goal-driven}, can be
very challenging. We overcome this issue by explicitly axiomatising the
properties of equality and treating $\equals$ as an ordinary predicate. To
clearly distinguish the two uses of equality, we use the symbol $\models$ for
satisfaction and entailment whenever we assume that $\equals$ is an ordinary
predicate without any special meaning. For example, let ${\varphi = R(a,b)
\wedge S(a,c) \wedge \forall x.[R(x,x_1) \wedge S(x,x_2) \rightarrow x_1
\equals x_2]}$. Then, ${\varphi \modelsEq b \equals c}$, but also ${\varphi
\modelsEq c \equals b}$ because $\modelsEq$ interprets $\equals$ as a symmetric
predicate. In contrast, ${\varphi \models b \equals c}$, but ${\varphi
\not\models c \equals b}$ because $\models$ interprets $\equals$ as just
another predicate with no special meaning. Even when $\equals$ is interpreted
as an ordinary predicate, we still distinguish relational and equality atoms as
outlined earlier.

\myparagraph{Explicit Axiomatisation of Equality}
When equality is treated as an ordinary predicate, the properties of equality
can be explicitly axiomatised so that there is no distinction in the entailed
facts. In particular, for $\Sigma$ a first-order formula and $B$ a base
instance, let $\D$ be a fresh predicate not occurring in $\Sigma$, and let
$\EQ{\Sigma}$ be a conjunction containing a \emph{domain axiom}
\eqref{eq:dom-c} for each constant $c$ occurring in $\Sigma$, a \emph{domain
axiom} \eqref{eq:dom-R} for each $n$-ary predicate $R$ occurring in $\Sigma$
distinct from $\equals$ and each ${i \in \{ 1, \dots, n \}}$, the
\emph{reflexivity axiom} \eqref{eq:ref}, the \emph{symmetry axiom}
\eqref{eq:sym}, the \emph{transitivity axiom} \eqref{eq:trans}, a
\emph{functional reflexivity axiom} \eqref{eq:fnref} for each $n$-ary function
symbol $f$ occurring in $\Sigma$, and a \emph{congruence axiom} \eqref{eq:cong}
for each $n$-ary predicate $R$ distinct from $\equals$ and each ${i \in \{ 1,
\dots, n \}}$.
\begin{align}
                                                                                                    & \rightarrow \D(c)                                             \label{eq:dom-c} \\
    \forall x_1, \dots, x_n.[R(x_1, \dots, x_n)                                                     & \rightarrow \D(x_i)]                                          \label{eq:dom-R} \\
    \forall x.[\D(x)                                                                                & \rightarrow x \equals x]                                      \label{eq:ref}   \\
    \forall x_1,x_2.[x_1 \equals x_2                                                                & \rightarrow x_2 \equals x_1]                                  \label{eq:sym}   \\
    \forall x_1,x_2,x_3.[x_1 \equals x_2 \wedge x_2 \equals x_3                                     & \rightarrow x_1 \equals x_3]                                  \label{eq:trans} \\
    \forall x_1,\dots,x_n,x_1',\dots,x_n'.[x_1 \equals x_1' \wedge \dots \wedge x_n \equals x_n'    & \rightarrow f(x_1, \dots, x_n) \equals f(x_1', \dots, x_n')]  \label{eq:fnref} \\
    \forall x_1,\dots,x_n,x_i'.[R(x_1,\dots,x_n) \wedge x_i \equals x_i'                            & \rightarrow R(x_1,\dots,x_{i-1},x_i',x_{i+1},\dots,x_n)]      \label{eq:cong}
\end{align}
Intuitively, axioms \eqref{eq:dom-c} and \eqref{eq:dom-R} ensure that $\D$
enumerates the domain of ${\{ \Sigma \} \cup \EQ{\Sigma} \cup B}$, axioms
\eqref{eq:ref}--\eqref{eq:trans} and \eqref{eq:cong} axiomatise equality as a
congruence relation, and axioms \eqref{eq:fnref} ensure that function symbols
are interpreted as functions. It is well known that ${\{ \Sigma \} \cup B
\modelsEq F}$ if and only if ${\{ \Sigma \wedge \EQ{\Sigma} \} \cup B \models
F}$ for each fact $F$ \cite{theorem-proving}---that is, explicit axiomatisation
of equality preserves fact entailment. Nevertheless, reasoning with ${\Sigma
\wedge \EQ{\Sigma}}$ instead of handling equality `natively' can be very
inefficient in practice \cite{mnph15owl-sameAs-rewriting} and is thus rarely
used when the performance of query answering is critical.

\myparagraph{Herbrand Interpretations}
When the equality predicate has no special meaning, we can interpret
first-order sentences in interpretations of a specific form. In particular, a
\emph{Herbrand} interpretation $I$ uses the domain set $\Delta^I$ that contains
all ground terms constructed from available constants and function symbols, and
it interprets each term by itself---that is, ${c^I = c}$ for each constant $c$,
and ${f^I(t_1,\dots,t_n) = f(t_1,\dots,t_n)}$ for each $n$-ary function symbol
$f$ and all ground terms ${t_1, \dots, t_n}$. Such $I$ can equivalently be seen
as a possibly infinite instance that contains a fact $R(t_1,\dots,t_n)$ for
each tuple of ground terms ${\langle t_1, \dots, t_n \rangle \in R^I}$; thus,
we use the two views of a Herbrand interpretation interchangeably. It is well
known that, for all first-order sentences $\varphi$ and $\psi$, we have
${\varphi \models \psi}$ if and only if ${I \models \psi}$ for each Herbrand
interpretation $I$ such that ${I \models \varphi}$.

\myparagraph{Logic Programming}
A \emph{(logic programming) rule} is a formula of the form \eqref{eq:rule},
where $R(\vec t)$ and $R_i(\vec t_i)$ are atoms possibly containing function
symbols and/or the equality predicate.
\begin{align}
    R(\vec t) \leftarrow R_1(\vec t_1) \wedge \dots \wedge R_n(\vec t_n)    \label{eq:rule}
\end{align}
Atom ${\head{r} = R(\vec t)}$ and conjunction ${\body{r} = R_1(\vec t_1) \wedge
\dots \wedge R_n(\vec t_n)}$ are the \emph{head} and \emph{body} of the rule
$r$, respectively. We often treat $\body{r}$ as a set of atoms; for example,
${R_i(\vec t_i) \in \body{r}}$ means that atom $R_i(\vec t_i)$ is a conjunct of
$\body{r}$. For $r$ a rule and $\sigma$ a substitution mapping all variables of
$r$ to ground terms, the ground rule $r\sigma$ is an \emph{instance} of $r$
\emph{via} $\sigma$. Predicate $\equals$ is always ordinary in rules, but we
still distinguish relational and equality atoms as outlined earlier. Each rule
must be \emph{safe}, meaning that each variable in the rule occurs in at least
one (not necessarily relational) body atom. A \emph{(logic) program} $P$ is a
finite set of rules. Rules are interpreted as first-order implications where
all variables are universally quantified, so the notions of satisfaction and
entailment (written $\models$ since $\equals$ is ordinary) are inherited from
first-order logic. A program $P$ and a base instance $B$ satisfy UNA if ${a =
b}$ holds for each pair of constants $a$ and $b$ such that ${P \cup B \models a
\equals b}$.

\myparagraph{Fixpoint Computation for Logic Programs}
For $I$ an instance, $T_P(I)$ is the result of extending $I$ with
$\sigma(\head{r})$ for each rule ${r \in P}$ and substitution $\sigma$ such
that ${\sigma(\body{r}) \subseteq I}$. Given a base instance $B$, we
inductively define a sequence of interpretations where ${I^0 = B}$ and
${I^{i+1} = T_P(I^i)}$ for each ${i \geq 0}$. The least fixpoint of $P$ on $B$
is given by ${\fixpoint{P}{B} = \bigcup_{i \geq 0} I^i}$. Instance
$\fixpoint{P}{B}$ is a subset-minimal Herbrand model of $P$ so, for each fact
$F$, we have ${P \cup B \models F}$ if and only if ${F \in \fixpoint{P}{B}}$.

\myparagraph{Rule Subsumption}
A rule $r$ \emph{subsumes} a rule $r'$ if there is a substitution $\sigma$ such
that ${\body{r\sigma} \subseteq \body{r'}}$ and ${\head{r\sigma} =
\head{r'\sigma}}$. Intuitively, $r'$ is then redundant in the presence of
$r$---that is, all consequences of $r'$ are derived by $r$ on any dataset.
Checking rule subsumption is $\textsc{NP}$-complete \cite{GottlobLeitsch85}.

\section{Query Answering over Second-Order Dependencies}\label{sec:so-dependencies}

In this section we discuss the problem of query answering over second-order
dependencies. Towards this end, in
Section~\ref{sec:so-dependencies:so-dependencies} we first introduce the notion
of \emph{generalised second-order dependencies} and present a motivating
example. In Section~\ref{sec:so-dependencies:chase} we discuss the chase
variant applicable to such dependencies. Finally, in
Section~\ref{sec:so-dependencies:goal-driven} we discuss the key difficulties
that need to be overcome to facilitate practically effective goal-driven query
answering for second-order dependencies.

\subsection{Generalised Second-Order Dependencies}\label{sec:so-dependencies:so-dependencies}

The algorithms we present in this paper are applicable to dependencies of the
form specified by the following definition. A key aspect of this definition is
that it allows for second-order quantification and equalities in implication
consequents, which is necessary to express certain kinds of schema compositions
\cite{DBLP:journals/tods/NashBM07, DBLP:journals/corr/abs-1106-3745}, as well
as to generalise the standard notion of EGDs.

\begin{definition}\label{def:generalised-so-dep}
    A \emph{generalised second-order (SO) dependency} is a second-order formula
    of the form ${\exists \vec f.(\delta_1 \wedge \dots \wedge \delta_n)}$
    where $\vec f$ is a tuple of function variables, and each conjunct
    $\delta_i$ with ${i \in \{ 1, \dots, n \}}$ is of the form
    \eqref{eq:generalised-SO-dep:conjunct} and it satisfies the following
    conditions.
    \begin{align}
        \forall \vec x_i.\big[\varphi_i(\vec x_i) \rightarrow \exists \vec y_i.\psi_i(\vec x_i, \vec y_i)\big] \label{eq:generalised-SO-dep:conjunct}
    \end{align}
    \begin{itemize}
        \item $\vec x_i$ and $\vec y_i$ are tuples of distinct individual
        variables.
    
        \item Formula ${\varphi_i(\vec x_i)}$ is the \emph{body} of $\delta_i$,
        written $\body{\delta_i}$. The formula is a conjunction of
        (i)~relational atoms whose arguments are constructed using constants
        and individual variables in $\vec x_i$, and (ii)~equality atoms whose
        arguments are terms of depth at most one constructed using constants,
        individual variables in $\vec x_i$, and function variables in $\vec f$.
    
        \item Formula ${\exists \vec y_i.\psi_i(\vec x_i,\vec y_i)}$ is the
        \emph{head} of $\delta_i$, written $\head{\delta_i}$. Formula
        $\psi_i(\vec x_i,\vec y_i)$ is a conjunction of atoms whose arguments
        are terms of depth at most one constructed using constants, individual
        variables in ${\vec x_i \cup \vec y_i}$, and function variables in
        $\vec f$.
    
        \item Conjunct $\delta_i$ must satisfy a variant of the \emph{safety}
        condition: each individual variable in $\vec x_i$ must appear in
        $\varphi_i(\vec x_i)$ in a relational atom.
    \end{itemize}
\end{definition}

Safety is needed to ensure \emph{domain independence}---that is, that the
satisfaction of a formula in an interpretation does not depend on the choice of
the interpretation domain \cite{DBLP:journals/tods/FaginKPT05}. Since a
generalised SO dependency can contain an arbitrary number of conjuncts, it
suffices to consider just one generalised second-order dependency (instead of a
set of dependencies). However, to simplify the notation, we often write down a
generalised SO dependency as a set of formulas of the form ${\varphi_i(\vec
x_i) \rightarrow \exists \vec y_i.\psi_i(\vec x_i, \vec y_i)}$ where
quantifiers $\exists \vec f$ and $\forall \vec x_i$ are left implicit.

As is customary in the related literature,
Definition~\ref{def:generalised-so-dep} allows a conjunct body to contain
functional terms only in equality atoms. However, unlike most related
definitions, we assume that all terms are of depth at most one---that is, terms
with nested function variables, such as $f(g(x))$, are disallowed. This
assumption allows us to simplify the presentation of our results, and it is
without loss of generality: \citet[Theorems 7.1 and
7.2]{DBLP:journals/corr/abs-1106-3745} have shown that, for each generalised SO
dependency $\Sigma$, there exists an equivalent generalised SO dependency
$\Sigma'$ where all terms are of depth at most one. This transformation does
not introduce fresh predicates, which is important when dependencies are used
to describe transformations between database instances. Alternatively, an SO
dependency containing terms of depth more than one can be reduced to the form
from Definition~\ref{def:generalised-so-dep} as described in the following
example. Analogously to the unfolding technique by \citet[Theorem
8.4]{DBLP:journals/tods/FaginKPT05}, this transformation introduces fresh
predicates that represent `intermediary' terms. While this changes the database
instance transformation, it does not affect query answers.

\begin{example}
Consider the generalised SO dependency $\Sigma$ containing only the conjunct
\eqref{ex:transform:1}. We can remove nesting of function variables in the head
by transforming the conjunct into
\eqref{ex:transform:2}--\eqref{ex:transform:3}, where $X$ is a fresh predicate.
Moreover, we can remove the nesting of function variables in the body by
transforming \eqref{ex:transform:3} into \eqref{ex:transform:4}. Formula
\eqref{ex:transform:4} is not safe, so we further transform it into
\eqref{ex:transform:5}, and we also introduce formulas \eqref{eq:dom-c} and
\eqref{eq:dom-R} to ensure that the predicate $\D$ enumerates the active domain.
\begin{align}
    A(f(g(x)))                                                                          & \rightarrow B(h(i(x)))    \label{ex:transform:1} \\
    X(x)                                                                                & \rightarrow B(h(x))       \label{ex:transform:2} \\
    A(f(g(x)))                                                                          & \rightarrow X(i(x))       \label{ex:transform:3} \\
    A(x'') \wedge x'' \equals f(x') \wedge x' \equals g(x)                              & \rightarrow X(i(x))       \label{ex:transform:4} \\
    A(x'') \wedge \D(x') \wedge \D(x) \wedge x'' \equals f(x') \wedge x' \equals g(x)   & \rightarrow X(i(x))       \label{ex:transform:5}
\end{align}
Formulas \eqref{ex:transform:2}, \eqref{ex:transform:3}, and
\eqref{ex:transform:5} satisfy all conditions of
Definition~\ref{def:generalised-so-dep}, and it is straightforward to see that,
on each base instance, they entail the same facts not involving the predicate
$X$ as the original generalised SO dependency $\Sigma$. This approach can be
easily extended to arbitrarily nested terms.
\end{example}

Apart from the technical assumption on term depth,
Definition~\ref{def:generalised-so-dep} generalises all dependency notions we
are aware of, which is why we call our dependencies `generalised'. In
particular, tuple-generating and equality-generating dependencies (TGDs and
EGDs) \cite{DBLP:journals/tcs/FaginKMP05} are obtained by disallowing function
variables. The existential SO dependencies ($\exists$SOEDs) by
\citet{DBLP:journals/tods/NashBM07} are obtained by disallowing first-order
existential quantifiers ($\exists \vec y_i$). The source-to-target SO
dependencies by \citet{DBLP:journals/corr/abs-1106-3745} are obtained by
further disallowing constants in all (relational and equality) atoms and
requiring all equality atoms in $\psi_i(\vec x_i,\vec y_i)$ to be of the form
${x \equals x'}$. The SO-TGDs by \citet{DBLP:journals/tods/FaginKPT05} are
obtained by further disallowing equality atoms in formulas $\psi_i(\vec
x_i,\vec y_i)$. Finally, the plain SO-TGDs by
\citet{DBLP:journals/jcss/Arenas0RR13} are obtained by further disallowing
equality atoms in $\varphi_i(\vec x_i)$.

Conjunctive queries (CQs) and unions of conjunctive queries (UCQs) are commonly
used in the literature to query dependencies
\cite{DBLP:journals/tcs/FaginKMP05}. While queries are often defined separately
from dependencies in the literature, we `absorb' queries into generalised SO
dependencies as follows.

\begin{definition}\label{def:queries}
    Let the set of predicates contain a distinct \emph{query predicate}
    $\predQ$ that can occur in a generalised SO dependency only in conjuncts of
    the form ${\forall \vec x_i.[\varphi_i(\vec x_i) \rightarrow \predQ(\vec
    x_i')]}$ where ${\vec x_i' \subseteq \vec x_i}$ and $\predQ$ does not occur
    in $\varphi_i(\vec x_i)$. A tuple of constants $\vec a$ is an \emph{answer}
    to $\predQ$ over a generalised SO dependency $\Sigma$ and a base instance
    $B$ if ${\{ \Sigma \} \cup B \modelsEq \predQ(\vec a)}$.\footnote{Note
    that, in the case of second-order dependencies, $\Sigma$ is a single
    formula and not a set; thus, ${\Sigma \cup B}$ is ill-defined, so we write
    ${\{ \Sigma \} \cup B}$ instead.}
\end{definition}

Predicate $\predQ$ is not allowed to occur in bodies, so it provides a `name'
that identifies a query inside $\Sigma$. Thus, a UCQ $\bigvee_{i=1}^n \exists
\vec y_i.\varphi_i(\vec x,\vec y_i)$ in the work by
\citet{DBLP:journals/tcs/FaginKMP05} is represented in our setting using
conjuncts ${\forall \vec x, \vec y_i.[\varphi_i(\vec x,\vec y_i) \rightarrow
\predQ(\vec x)]}$ of a generalised SO dependency. This is convenient because
our techniques analyse propagation of information through conjunctions, so
`absorbing' queries into dependencies allows us to apply this analysis to
queries in a seamless way.

As explained in Section~\ref{sec:preliminaries}, ${\{ \Sigma \} \cup B
\modelsEq \predQ(\vec a)}$ if and only if ${I \modelsEq \predQ(\vec a)}$ for
each interpretation $I$ such that ${I \modelsEq \Sigma}$ and ${I \modelsEq B}$.
If we interpret $I$ as a \emph{data exchange solution} by
\citet{DBLP:journals/tcs/FaginKMP05}, then Definition~\ref{def:queries}
captures exactly the notion of certain answers commonly used in the dependency
literature. A minor detail is that models are generally allowed to be infinite,
whereas solutions are necessarily finite; however, this distinction is
irrelevant to our work since we consider only generalised SO dependencies for
which the chase terminates and are thus finitely satisfiable. Another detail is
that models generally do not adopt the unique name assumption (i.e., distinct
constants can be interpreted as the same domain object); however, as explained
in Section~\ref{sec:preliminaries}, we can always query the equality predicate
to verify whether UNA is satisfied.

We next present a running example that we use throughout this paper to
illustrate key technical difficulties of goal-driven query answering, as well
as our proposed solutions.

\begin{example}\label{ex:run}
Let ${\exB = \{ C(a_1) \} \cup \{ S(a_{i-1},a_i) \mid 1 < i \leq k \}}$ for
arbitrary ${k \geq 1}$, and let $\exSigma$ be the generalised SO dependency
consisting of conjuncts \eqref{ex:run:Q}--\eqref{ex:run:U-eq} where $f$ is a
function variable.
\begin{align}
    R(x_1,x_2) \wedge f(x_1) \equals x_3 \wedge A(x_3) \wedge B(x_3)    & \rightarrow \predQ(x_1)           \label{ex:run:Q}      \\
    S(x_1,x_2)                                                          & \rightarrow \exists y.R(x_1,y)    \label{ex:run:S-R}    \\
    R(x_2,x_1) \wedge S(x_2,x_3) \wedge R(x_3,x_4)                      & \rightarrow x_1 \equals x_4       \label{ex:run:RSR-eq} \\
    C(x)                                                                & \rightarrow A(f(x))               \label{ex:run:C-Af}   \\
    C(x)                                                                & \rightarrow U(x,f(x))             \label{ex:run:C-Uf}   \\
    U(x_1,x_2)                                                          & \rightarrow B(f(x_2))             \label{ex:run:U-Bf}   \\
    U(x_1,x_2)                                                          & \rightarrow x_1 \equals x_2       \label{ex:run:U-eq}
\end{align}

Figure~\ref{fig:ex:run:int} shows an interpretation $I$ that satisfies
$\exSigma$ and $\exB$. The domain of $I$ consists of the vertices in the
figure, and the interpretation of unary and binary predicates consists of
vertices and arcs, respectively, labelled with the respective predicate. We
show in Example~\ref{ex:run-chase} that ${\{ \exSigma \} \cup \exB \modelsEq
\predQ(a_1)}$.
\end{example}

\begin{figure}[tb]
\begin{center}
\begin{tikzpicture}[scale=0.75, every node/.style={scale=0.75}]
    \tikzset{>=latex}

    \node[label=below:{$A, B, C, \predQ$}] (a1)    at (0,0)  {$a_1$}     ;
    \node                                  (a2)    at (2,0)  {$a_2$}     ;
    \node                                  (a3)    at (4,0)  {$a_3$}     ;
    \node                                  (adots) at (6,0)  {$\ldots$}  ;
    \node                                  (ak1)   at (8,0)  {$a_{k-1}$} ;
    \node                                  (ak)    at (10,0) {$a_k$}     ;
    \node                                  (n1)    at (0,2)  {$n_1$}     ;

    \draw[->,loop left,looseness=20] (a1)    to node[left]        {$U$} (a1)    ;
    \draw[->]                        (a1)    to node[below]       {$S$} (a2)    ;
    \draw[->]                        (a2)    to node[below]       {$S$} (a3)    ;
    \draw[->,dotted]                 (a3)    to                         (adots) ;
    \draw[->,dotted]                 (adots) to                         (ak1)   ;
    \draw[->]                        (ak1)   to node[below]       {$S$} (ak)    ;

    \draw[->]                        (a1)    to node[left]        {$R$} (n1)    ;
    \draw[->]                        (a2)    to node[below left]  {$R$} (n1)    ;
    \draw[->]                        (a3)    to node[below left]  {$R$} (n1)    ;
    \draw[->]                        (ak1)   to node[below left]  {$R$} (n1)    ;
    \draw[->]                        (ak)    to node[above right] {$R$} (n1)    ;

\end{tikzpicture}
\end{center}
\caption{A universal model for the generalised SO dependency $\exSigma$ and dataset $\exB$ from Example~\ref{ex:run}}\label{fig:ex:run:int}
\end{figure}

\subsection{The Chase for Generalised SO Dependencies}\label{sec:so-dependencies:chase}

A standard way to answer the query $\predQ$ over a dependency $\Sigma$ and a
base instance $B$ is to compute a universal model for $\Sigma$ and $B$ using an
appropriate variant of the chase algorithm, and then to simply `read off' the
facts that use the $\predQ$ predicate and consist of constants only. Numerous
chase variants have been proposed \cite{DBLP:journals/tods/MaierMS79,
DBLP:journals/jacm/BeeriV84, DBLP:journals/jcss/JohnsonK84,
DBLP:journals/tcs/FaginKMP05, DBLP:conf/kr/CaliGK08, DBLP:conf/pods/Marnette09,
DBLP:journals/pvldb/CateCKT09, DBLP:journals/is/MeccaPR12,
DBLP:conf/pods/DeutschNR08, DBLP:journals/tods/FaginKPT05,
DBLP:journals/corr/abs-1106-3745}. To the best of our knowledge,
\citet{DBLP:journals/corr/abs-1106-3745} presented the first chase variant that
is applicable to second-order dependencies with equality atoms in the heads.
Our key results are largely independent of the details of the chase procedure:
all that matters is that a suitable chase procedure exists and terminates on
$\Sigma$ and $B$. Nevertheless, to frame this paper properly, we next
recapitulate the chase variant by \citet{DBLP:journals/corr/abs-1106-3745},
which we extend in the obvious way to handle both first- and second-order
quantification. As we discuss later, distinguishing first- and second-order
quantification will provide opportunities for optimisation that would be lost
if we simply converted all first-order quantifiers into second-order ones. The
proof that this chase variant produces a universal model is a straightforward
variation of the proof by \citet{DBLP:journals/corr/abs-1106-3745}, so we do
not discuss it any further.

The chase algorithm takes as input a generalised second-order dependency
$\Sigma$ and a base instance $B$. The algorithm uses \emph{labelled
nulls}---objects whose existence is implied by first- and second-order
quantifiers. We distinguish a countably infinite set of \emph{base labelled
nulls} disjoint with the set of constants, and a countably infinite set of
\emph{functional labelled nulls} defined inductively as the smallest set
containing a distinct object $\fnnull{t}$ for each term of the form ${t =
f(u_1,\dots,u_n)}$ where $f$ is a function variable and ${u_1,\dots,u_n}$ are
constants or (base or functional) labelled nulls. Intuitively, base and
functional labelled nulls will be used to satisfy first-order and second-order
quantifiers, respectively. Furthermore, we assume that all constants and
labelled nulls are totally ordered using an arbitrary, but fixed ordering
$\prec$, where all constants precede all labelled nulls. Finally, for each
$n$-ary function variable $f$, the algorithm introduces a distinct fresh
$n+1$-ary predicate $\fnval{f}$.

We next specify how to evaluate a term in a set of facts. To this end, let $I$
be an instance consisting of facts constructed using the predicates of $\Sigma$
and $B$, the equality predicate $\equals$, and the predicates $\fnval{f}$, and
furthermore assume that ${\{ \fnval{f}(u_1,\dots,u_n,v),
\fnval{f}(u_1,\dots,u_n,v') \} \subseteq I}$ implies ${v = v'}$ for all $f$ and
${u_1,\dots,u_n}$. For $t$ a term of depth at most one, we define the value of
$t$ in $I$, written $t^I$, as follows.
\begin{itemize}
    \item For $t$ a constant or a (base or functional) labelled null, we define
    ${t^I = t}$.

    \item For ${t = f(u_1,\dots,u_n)}$, we define ${t^I = v}$ if there exists a
    fact $\fnval{f}(u_1,\dots,u_n,v) \in I$, and we define ${t^I = \fnnull{t}}$
    if $I$ does not contain a fact of the form $\fnval{f}(u_1,\dots,u_n,v)$.
\end{itemize}
Moreover, let ${F = R(t_1,\dots,t_n)}$ be an equality or a relational ground
atom where terms ${t_1,\dots,t_n}$ are all of depth at most one. We specify
whether $F$ is satisfied in $I$, written ${I \vdash F}$, as follows.
\begin{itemize}
    \item If $F$ is an equality ${t_1 \equals t_2}$, then ${I \vdash t_1
    \equals t_2}$ if ${t_1^I = t_2^I}$.
    
    \item Otherwise, ${I \vdash R(t_1,\dots,t_n)}$ if ${R(t_1^I,\dots,t_n^I)
    \in I}$.
\end{itemize}

Given $\Sigma$ and $B$, the chase algorithm constructs a sequence of pairs
${\langle I^0,\mu^0 \rangle, \langle I^1,\mu^1 \rangle, \dots}$. Each $I^i$ is
an instance containing facts constructed using constants and labelled nulls,
and using the predicates of $\Sigma$ and $B$, the equality predicate $\equals$,
and the predicates $\fnval{f}$. Each $\mu^i$ is a mapping from constants to
constants, and it will record constant representatives. For $\alpha$ an atom,
$\mu^i(\alpha)$ is the result of replacing each occurrence of a constant $c$ in
$\alpha$ on which $\mu^i$ is defined with $\mu^i(c)$; note that $c$ is replaced
even if it occurs as a proper subterm of an argument of $\alpha$. Moreover, for
$\alpha$ a conjunction, $\mu^i(\alpha)$ is the result of applying $\mu^i$ to
each conjunct of $\alpha$. The algorithm initialises $I^0$ to $B$ and the
mapping $\mu^0$ to the identity on all the constants occurring in $I^0$ and
$\Sigma$. For ${i \geq 0}$, a pair ${\langle I^{i+1}, \mu^{i+1} \rangle}$ is
obtained from ${\langle I^i, \mu^i \rangle}$ by applying one of the following
two steps.

To apply an \emph{equality step}, choose an equality ${s_1 \equals s_2 \in
I^i}$. Let ${t_1 = s_1}$ and ${t_2 = s_2}$ if ${s_1 \preceq s_2}$, and let
${t_1 = s_2}$ and ${t_2 = s_1}$ otherwise. Let $I^{i+1}_a$ be the instance
obtained from ${I^i \setminus \{ s_1 \equals s_2 \}}$ by replacing $t_2$ with
$t_1$, and let $I^{i+1}$ and $\mu^{i+1}(c)$ for each constant $c$ in the domain
of $\mu^i$ be defined as follows.
\begin{align*}
    I^{i+1}         & = I^{i+1}_a \cup \Big\{ v \equals v' \mid v \neq v' \text{ and } \exists u_1,\dots,u_n \text{ s.t. } \{ \fnval{f}(u_1,\dots,u_n,v), \fnval{f}(u_1,\dots,u_n,v') \} \subseteq I^{i+1}_a \Big \} \\
    \mu^{i+1}(c)    & = \begin{cases}
                            t_1         & \text{if } \mu^i(c) = t_2, \\
                            \mu^i(c)    & \text{otherwise}.
                        \end{cases}
\end{align*}

To apply a \emph{dependency step}, choose a conjunct ${\forall \vec
x.\big[\varphi(\vec x) \rightarrow \exists \vec y.\psi(\vec x, \vec y)]}$ of
$\Sigma$ and a substitution $\sigma$ such that ${I^i \vdash \mu^i(\varphi(\vec
x)\sigma)}$, and ${I^i \not\vdash \mu^i(\psi(\vec x, \vec y)\sigma')}$ for each
substitution $\sigma'$ that extends $\sigma$ by mapping the variables $\vec y$
to the terms of $I^i$. Let $\sigma''$ be the substitution that extends $\sigma$
by mapping the variables $\vec y$ to fresh base labelled nulls not occurring in
$I^i$. Furthermore, let $I^{i+1}_a$ be $I^i$ extended as follows: for each term
of the form ${t = f(u_1,\dots,u_n)}$ that occurs as an argument of an atom in
$\mu^i(\psi(\vec x, \vec y)\sigma'')$ such that $I^i$ does not contain a fact
of the form $\fnval{f}(u_1,\dots,u_n,v)$, add the fact
$\fnval{f}(u_1,\dots,u_n,\fnnull{t})$ to $I^{i+1}_a$. Finally, let ${\mu^{i+1}
= \mu^i}$, and let
\begin{displaymath}
    I^{i+1} = I^{i+1}_a \cup \Big\{ R(t_1{}^{I^{i+1}_a},\dots,t_n{}^{I^{i+1}_a}) \mid R(t_1,\dots,t_n) \text{ is a fact in } \mu^i(\psi(\vec x,\vec y)\sigma'') \Big\}.
\end{displaymath}

These steps are applied exhaustively as long as possible, but the equality step
is applied with higher priority. This process does not terminate in general,
but if it does, the final pair ${\langle I^n,\mu^n \rangle}$ in the sequence is
called \emph{the chase} of $\Sigma$ and $B$ and it contains all query answers:
${\{ \Sigma \} \cup B \modelsEq \predQ(\vec a)}$ if and only if
${\predQ(\mu^n(\vec a)) \in I^n}$ for each fact of the form $\predQ(\vec a)$
where $\vec a$ is a vector of constants and $\mu^n(\vec a)$ is the vector of
constants obtained by applying $\mu^n$ to each element of $\vec a$.

\begin{example}\label{ex:run-chase}
We next discuss how this chase variant is applied to $\exSigma$ and $\exB$ from
Example~\ref{ex:run}. As in the standard chase, applying a dependency step to
the TGD \eqref{ex:run:S-R} and facts $S(a_i,a_{i+1})$ introduces facts
$R(a_i,n_i)$, where $n_i$ are base labelled nulls since they are introduced by
first-order quantifiers in \eqref{ex:run:S-R}. Applying the EGD
\eqref{ex:run:RSR-eq} derives equalities of the form ${n_i \equals n_{i+1}}$,
and applying the equality step eventually merges all $n_i$ into one object
($n_1$ in Figure~\ref{fig:ex:run:int}).

Consider now applying dependency steps to the second-order conjuncts
\eqref{ex:run:C-Af}--\eqref{ex:run:U-Bf}, all of which contain the function
variable $f$. The chase algorithm needs to produce the same labelled null
whenever $f$ is applied to the same argument. The algorithm first applies
\eqref{ex:run:C-Af} to $C(a_1)$. Since this is the first time $f$ is applied to
$a_1$, the algorithm introduces a fresh functional labelled null $n_2$ and adds
the fact $A(n_2)$; moreover, the algorithm also introduces an auxiliary fact
$\fnval{f}(a_1,n_2)$ (not shown in Figure~\ref{fig:ex:run:int}) to record that
applying $f$ to $a_1$ produces $n_2$. Next, the algorithm applies
\eqref{ex:run:C-Uf} to $C(a_1)$. The presence of $\fnval{f}(a_1,n_2)$ states
that applying $f$ to $a_1$ produces $n_2$, and so the dependency step
introduces $U(a_1,n_2)$ instead of introducing a fresh labelled null. The
algorithm next applies \eqref{ex:run:U-Bf} to $U(a_1,n_2)$; since $f$ has
previously not been applied to $n_2$, the algorithm introduces a fresh labelled
null $n_3$, records this application using an auxiliary fact
$\fnval{f}(n_2,n_3)$, and adds a fact $B(n_3)$.

Applying the EGD \eqref{ex:run:U-eq} to $U(a_1,n_2)$ derives the equality ${a_1
\equals n_2}$. As in the standard chase, the equality rule handles this by
replacing $n_2$ with $a_1$; hence, facts $A(n_2)$, $U(a_1,n_2)$,
$\fnval{f}(a_1,n_2)$, and $\fnval{f}(n_2,n_3)$ are replaced by $A(a_1)$,
$U(a_1,a_1)$, $\fnval{f}(a_1,a_1)$, and $\fnval{f}(a_1,n_3)$, respectively.
However, note that $\fnval{f}(a_1,a_1)$ and $\fnval{f}(a_1,n_3)$ no longer
provide a unique value for $f$ on $a_1$. The equality step corrects all such
discrepancies before applying further dependency steps. In our example, this is
achieved by deriving the equality ${a_1 \equals n_3}$, which is handled by
replacing $n_3$ with $a_1$. Thus, $B(n_3)$ and $\fnval{f}(a_1,n_3)$ are
replaced by $B(a_1)$ and $\fnval{f}(a_1,a_1)$, respectively (and duplicate
facts are removed).

Finally, consider applying \eqref{ex:run:Q}. The safety condition of
Definition~\ref{def:generalised-so-dep} ensures that each variable in a body of
a conjunct occurs in a relational atom, so evaluating the relational body atoms
instantiates all body variables. In our example, variables $x_1$ and $x_3$ are
matched to $a_1$, and variable $x_2$ is matched to the labelled null $n_1$.
Furthermore, auxiliary fact $\fnval{f}(a_1,a_1)$ reflects that the value of $f$
on $a_1$ is $a_1$, so the body equality ${f(x_1) \equals x_3}$ is satisfied.
Thus, the dependency step derives $\predQ(a_1)$, as shown in
Figure~\ref{fig:ex:run:int}.
\end{example}

We next briefly discuss the role of the mappings $\mu^i$. Since we do not
assume UNA, the chase can replace a constant with a constant, and mappings
$\mu^i$ keep track of such replacements. Thus, ${\mu^i(c) = c'}$ means that
constant $c'$ represents $c$---that is, $c$ and $c'$ were equated (possibly
indirectly), and so each occurrence of $c$ in a fact in $I^i$ can also be
understood as an occurrence of $c'$. These mappings are used in two ways.
First, the dependency step applies $\mu^i$ to both the body and the head of a
dependency conjunct, which ensures that all constants in the conjunct are
normalised to the constants occurring in $I^i$. Second, the final mapping
$\mu^n$ is used to normalise the answer $\vec a$ when checking
${\predQ(\mu^n(\vec a)) \in I^n}$. If UNA is desired, the algorithm can be
modified to report a contradiction when two constants are equated, in which
case mappings $\mu^i$ need not be maintained.

\citet{DBLP:journals/tods/FaginKPT05} presented a chase variant for SO
dependencies without head equality atoms. This chase variant never derives any
equalities, so the facts with $\fnval{f}$ predicates do not need to be
maintained; instead, the chase derives facts with functional terms of arbitrary
depth. It is possible to extend this chase variant to our generalised SO
dependencies. Roughly speaking, when the equality step is applied to ${t_1
\equals t_2}$ with ${t_1 \prec t_2}$ and thus term $t_2$ is to be replaced by
term $t_1$, replacements should be made at proper subterm positions as well;
for example, a fact $A(g(f(t_2)))$ must be normalised to $A(g(f(t_1)))$. To
ensure completeness, the ordering $\prec$ on terms must be \emph{well-founded},
which intuitively ensures that each term can be normalised in a finite number
of steps. Such an algorithm is equivalent to the chase variant by
\citet{DBLP:journals/corr/abs-1106-3745} that we outlined above, and it can be
seen as an instance of the \emph{paramodulation} calculus used in first-order
theorem proving \cite{NieuwenhuisRubio:HandbookAR:paramodulation:2001}.

Although the chase does not terminate in general, numerous sufficient
conditions for chase termination on first-order dependencies are known
\cite{DBLP:journals/tcs/FaginKMP05, DBLP:conf/ijcai/KrotzschR11,
DBLP:journals/jair/GrauHKKMMW13}. Moreover, the chase is known to terminate for
all classes of second-order dependencies studied in the literature on schema
mappings \cite{DBLP:journals/corr/abs-1106-3745, DBLP:journals/tods/FaginKPT05,
DBLP:journals/tods/NashBM07, DBLP:journals/jcss/Arenas0RR13,
DBLP:journals/jodsn/CateHK16}, and the results we present in this paper are
applicable in all of these cases.

\subsection{Goal-Driven Query Answering}\label{sec:so-dependencies:goal-driven}

Computing the chase in full can be expensive, and the computation needs to be
repeated whenever the base instance changes; consequently, chase-based query
answering can be unsuitable for applications where data changes frequently.
However, the amount of work needed to answer a query can be significantly
reduced by using an algorithm that aims to draw only the inferences relevant to
the query. Example~\ref{ex:run} illustrates two kinds of opportunity for
optimisation. First, $\predQ(a_1)$ remains entailed even if we remove
\eqref{ex:run:RSR-eq} from $\exSigma$. In other words, all logical consequences
of a conjunct of a generalised SO dependency can be irrelevant to the query, so
the entire conjunct can be deleted. Second, the derivation of $\predQ(a_1)$
depends on the conclusion $R(a_1,n_1)$ of the conjunct \eqref{ex:run:S-R}, but
not on the conclusions $R(a_i,n_1)$ with ${i \geq 2}$. Thus, we cannot simply
delete conjunct \eqref{ex:run:S-R}, but must instead analyse its instances and
identify the relevant ones.

Various \emph{backward chaining} techniques realise goal-driven query answering
by working backwards from the query to identify the premises relevant to the
query answers. For example, SLD resolution \cite{DBLP:conf/ifip/Kowalski74} in
logic programming searches backwards for proofs from potential query answers.
For relational facts, backward chaining can be roughly realised by matching the
fact in question to the head of a dependency conjunct, and treating the
conjunct's body as a query whose answers determine all relevant premises---that
is, we `invert' the dependency chase step. However, the premises producing a
fact via the equality chase step are not even present in the step's result, so
we do not see a practical way to `invert' equality chase steps.

A na{\"i}ve attempt to overcome this problem might be to treat $\equals$ as an
ordinary predicate and axiomatise its properties as discussed in
Section~\ref{sec:preliminaries}; then, we can use backward chaining for all
inferences, including the ones that involve the equality predicate. However,
such an approach exhibits at least two problems. First, the congruence axioms
\eqref{eq:cong} often produce many redundant inferences: given ${a \equals b}$,
these axioms `copy' all facts containing $a$ in some position into facts
containing $b$ at that position. This can make the fixpoint computation of
$\Sigma$ and $\EQ{\Sigma}$ very inefficient \cite{mnph15owl-sameAs-rewriting},
and it can prevent effective backward chaining. Second, as we discussed in
Example~\ref{ex:run-chase}, to ensure that function variables are interpreted
as functions, we must introduce a functional reflexivity axiom \eqref{eq:fnref}
for each function variable. But then, as soon as we derive ${a \equals b}$, the
functional reflexivity axiom for $f$ derives ${f(a) \equals f(b)}$, ${f(f(a))
\equals f(f(b))}$, and so on. In other words, even if the chase of a set of
generalised SO dependencies $\Sigma$ terminates, the fixpoint computation of
$\Sigma$ and $\EQ{\Sigma}$ does not terminate whenever $\Sigma$ contains at
least one function variable.

For first-order dependencies, the first problem can be overcome using
\emph{singularisation} \cite{DBLP:conf/pods/Marnette09}---a technique that
omits congruence axioms \eqref{eq:cong} and modifies the dependency bodies so
that joins among body atoms take equalities into account. The technique was
shown to be effective in practice \cite{DBLP:journals/jair/GrauHKKMMW13}, and
\citet{DBLP:conf/aaai/BenediktMT18} also used it successfully for goal-driven
query answering over first-order dependencies. This idea was used with
second-order dependencies \cite{DBLP:journals/jodsn/CateHK16}, but the
resulting approach is \emph{incomplete} because it does not take functional
reflexivity into account---that is, on Example~\ref{ex:run}, singularisation as
used by \citet{DBLP:journals/jodsn/CateHK16} \emph{does not} derive
$\predQ(a_1)$. Completeness can be easily recovered by introducing functional
reflexivity axioms, but this prevents termination of fixpoint computation in
the same way as for explicit equality axiomatisation.

To summarise, existing goal-driven query answering techniques are difficult to
apply to dependencies containing equality and function variables. In
Section~\ref{sec:answering} we present an approach that overcomes these
challenges and is effective in practice.

\section{Query Answering over Generalised Second-Order Dependencies}\label{sec:answering}

We now present our goal-driven approach for answering queries over generalised
second-order dependencies. We first discuss in
Section~\ref{sec:answering:overview} the approach end-to-end, and then in
Sections~\ref{sec:answering:second-order}--\ref{sec:answering:desg} we discuss
individual steps in detail.

\subsection{Solution Overview}\label{sec:answering:overview}

\begin{algorithm}[tb]
\caption{Compute the answers to a query defined by predicate $\predQ$ over a generalised second-order dependency $\Sigma$ and a base instance $B$}\label{alg:answer-query}
\begin{algorithmic}[1]
    \State $\Sigma_1 \defeq \fol{\Sigma}$                                                                                                   \label{alg:answer-query:fol}
    \State $\Sigma_2 \defeq \text{a singularisation of } \Sigma_1$                                                                          \label{alg:answer-query:sg}
    \State $P_3 \defeq \sk{\Sigma_2}$                                                                                                       \label{alg:answer-query:sk}
    \State $P_4 \defeq \relevance{P_3}{B}$                                                                                                  \label{alg:answer-query:relevance}
    \State $P_5 \defeq \magic{P_4}$                                                                                                         \label{alg:answer-query:magic}
    \State $P_6 \defeq \defun{P_5}$                                                                                                         \label{alg:answer-query:defun}
    \State $P_7 \defeq \desg{P_6}$                                                                                                          \label{alg:answer-query:desg}
    \State Compute the chase $\langle I,\mu \rangle$ of $P_7$ and $B$ using the chase variant from Section~\ref{sec:so-dependencies:chase}  \label{alg:answer-query:chase}
    \State Output each fact $\predQ(\vec a)$ such that $\predQ(\mu(\vec a)) \in I$                                                          \label{alg:answer-query:output}
\end{algorithmic}
\end{algorithm}

Our approach is shown in Algorithm~\ref{alg:answer-query}. The algorithm takes
as input a generalised SO dependency $\Sigma$ and a base instance $B$, and it
outputs all facts of the form $\predQ(\vec a)$ such that ${\{ \Sigma \} \cup B
\modelsEq \predQ(\vec a)}$. This is achieved by transforming $\Sigma$ into a
logic program $P_7$ such that the query answers of ${P_7 \cup \EQ{P_7}}$ and
$\Sigma$ on $B$ coincide. All steps apart from the one in
line~\ref{alg:answer-query:relevance} are independent of $B$: they depend only
on $\Sigma$ and the definition of $\predQ$ within $\Sigma$. Furthermore, while
the transformation in line~\ref{alg:answer-query:relevance} can be optimised
for a specific base instance, it can always be applied in a way that is
independent of $B$. To analyse equality inferences, several steps of our
transformation treat equality as an ordinary predicate. However, the equality
predicate is treated in the resulting program as `true' equality (modulo
certain details that we discuss shortly), so line~\ref{alg:answer-query:chase}
can use the chase variant from Section~\ref{sec:so-dependencies:chase} that
uses the equality step. In other words, our approach does not impose a
performance penalty in the handling of equality. We next describe the various
stages of our transformation and provide forward pointers to subsections where
each is discussed in depth.

First, the input generalised SO dependency $\Sigma$ is transformed to a set of
first-order formulas $\Sigma_1$ by dropping all second-order quantifiers and
converting function variables into function symbols
(line~\ref{alg:answer-query:fol}). We show that this transformation does not
affect query answers. Thus, we can develop our algorithms in the framework of
first-order logic. In particular, many of our proofs use Herbrand models, whose
universe is constructed using function symbols from the signature; however,
$\vec f$ in a generalised SO dependency are \emph{quantified second-order
variables} rather than a part of the signature, so the notion of a Herbrand
model does not apply directly. Although we transform the SO dependencies to
first-order formulas, the resulting formulas contain function symbols, so they
are strictly more expressive than standard TGDs and EGDs, and in fact this is
the main source of technical complexity in our work. Although most of our
results are expressed in first-order logic, we begin with second-order
dependencies to position our work within the literature. Second-order
dependencies have attracted significant interest, whereas first-order
dependencies with function symbols, to the best of our knowledge, have not been
studied in this context. As we discuss in
Section~\ref{sec:answering:second-order}, the difference between function
variables and function symbols largely disappears in the context of query
answering, but we retain it to keep the formal development precise.

Next, the algorithm applies an extension of the singularisation technique by
\citet{DBLP:conf/pods/Marnette09} that correctly handles second-order
dependencies. This step uses a set of axioms $\SG{\Sigma}$ that relaxes
$\EQ{\Sigma}$: set $\SG{\Sigma}$ does not contain the congruence axioms
\eqref{eq:cong}, and it uses a modified version of the functional reflexivity
axioms \eqref{eq:fnref} that does not necessarily imply infinitely many
equalities. The step transforms $\Sigma_1$ into a singularised set of
dependencies $\Sigma_2$ (line~\ref{alg:answer-query:sg}); this transformation
is not deterministic in the sense that many different $\Sigma_2$ can be
produced from $\Sigma_1$, but any of these can be used in the rest of the
algorithm. We prove that ${\{ \Sigma \} \cup B \modelsEq \predQ(\vec a)}$ if
and only if ${\Sigma_2 \cup \SG{\Sigma_2} \cup B \models \predQ(\vec a)}$ for
each fact $\predQ(\vec a)$---that is, singularisation preserves query answers.
A key issue is to ensure that the modified functional reflexivity axioms derive
all relevant equalities, and this is perhaps the most technically challenging
result of this paper. We discuss singularisation in detail in
Section~\ref{sec:answering:singularisation}.

The first-order existential quantifiers of $\Sigma_2$ are next replaced by
function symbols using \emph{Skolemisation}. The transformation is standard and
has been widely used in the literature. However, we apply Skolemisation to
$\Sigma_2$, where equality is an ordinary predicate. Thus, functional
reflexivity axioms have already been added for the function symbols that
correspond to the function variables in $\Sigma_1$, and so no such axioms are
needed for the function symbols introduced by Skolemisation. This observation
opens the door to important optimisations of our approach. Moreover, if
$\Sigma$ consists only of standard TGDs and EGDs, then no functional
reflexivity axioms are needed at all, and the approach presented in this paper
reduces to the approach for first-order dependencies by
\citet{DBLP:conf/aaai/BenediktMT18}. The Skolemisation of $\Sigma_2$ produces a
logic program $P_3$ (line~\ref{alg:answer-query:sk}) that preserves all query
answers: ${\Sigma_2 \cup \SG{\Sigma_2} \cup B \models \predQ(\vec a)}$ if and
only if ${P_3 \cup \SG{P_3} \cup B \models \predQ(\vec a)}$ for each fact
$\predQ(\vec a)$.

Program $P_3$ is now amenable to analysing inferences by backward chaining,
which is done in two steps: the relevance analysis
(line~\ref{alg:answer-query:relevance}) removes from $P_3$ the rules that
provably do not contribute to query answers, and an adaptation of the magic
sets transformation by \citet{DBLP:journals/jlp/BeeriR91}
(line~\ref{alg:answer-query:magic}) prunes irrelevant inferences from $P_4$. We
discuss our techniques in Sections~\ref{sec:answering:relevance}
and~\ref{sec:answering:magic}, respectively, and show that they preserve query
answers: ${P_i \cup \SG{P_i} \cup B \models \predQ(\vec a)}$ if and only if
${P_{i+1} \cup \SG{P_{i+1}} \cup B \models \predQ(\vec a)}$ for each fact
$\predQ(\vec a)$ and ${i \in \{ 3, 4 \}}$. As we shall see, the two techniques
are complementary: neither technique alone subsumes the other one on our
example.

The original query can now be answered over ${P_5 \cup \SG{P_5}}$, and this
will typically involve fewer inferences than computing the chase of $\Sigma$.
However, equality is still treated as an ordinary predicate in $P_5$, so
reasoning with $P_5$ can be inefficient. The remaining two steps bring $P_5$
into a form that can be evaluated using the chase variant from
Section~\ref{sec:so-dependencies:chase}.

The magic sets transformation can introduce relational body atoms containing
function symbols, which cannot be handled by the chase variant outlined in
Section~\ref{sec:so-dependencies:chase}. Thus, function symbols are next
eliminated from all relational body atoms (line~\ref{alg:answer-query:defun}).
For example, a rule ${A(f(x)) \rightarrow B(f(x))}$ is transformed into
${A(z_{f(x)}) \wedge \fnpred{f}(x,z_{f(x)}) \rightarrow B(f(x)) \wedge
\fnpred{f}(x,f(x))}$ where $\fnpred{f}$ is a fresh binary predicate uniquely
associated with $f$, and $z_{f(x)}$ is a fresh variable uniquely associated
with the term $f(x)$. Intuitively, $\fnpred{f}$ is axiomatised to associate the
terms containing the function symbol $f$ with the terms' arguments, which
allows us to refer to functional terms in the rule body via atoms with the
$\fnpred{f}$ predicate. We discuss this step in detail in
Section~\ref{sec:answering:defun} and show that ${P_5 \cup \SG{P_5} \cup B
\models \predQ(\vec a)}$ if and only if ${P_6 \cup \SG{P_6} \cup B \models
\predQ(\vec a)}$ for each fact $\predQ(\vec a)$.

The magic sets transformation can introduce rules that are not safe (i.e.,
where a rule variable does not occur in a relational body atom), which also
cannot be handled by the chase variant outlined in
Section~\ref{sec:so-dependencies:chase}. Moreover, singularisation can
introduce many equalities in a rule body, which increases the number of joins
needed to evaluate a rule. The final step, which we discuss in
Section~\ref{sec:answering:desg}, reverses singularisation
(line~\ref{alg:answer-query:desg}): it eliminates all equalities from rule
bodies, thus making the produced rules safe and easier to evaluate. The
resulting program $P_7$ preserves all query answers. Note, however, that
program $P_7$ can contain function symbols that must be handled as outlined in
Examples~\ref{ex:run} and~\ref{ex:run-chase}. Hence, standard chase variants
for TGDs and EGDs are not sufficient, but query answers can be computed
efficiently as described in Section~\ref{sec:so-dependencies:chase}.

The chase computation in line~\ref{alg:answer-query:chase} terminates whenever
the fixpoint of ${P_3 \cup \SG{P_3} \cup B}$ is finite, which can be ensured in
several ways. First, \citet{DBLP:conf/pods/Marnette09} showed that, if $\Sigma$
is a set of super-weakly acyclic TGDs and EGDs (which is more general than weak
acyclicity by \citet{DBLP:journals/tcs/FaginKMP05}), then the fixpoint of ${P_3
\cup \SG{P_3} \cup B}$ is finite for each possible singularisation. Second, for
all dependency classes from the schema mapping literature
\cite{DBLP:journals/corr/abs-1106-3745, DBLP:journals/tods/FaginKPT05,
DBLP:journals/tods/NashBM07, DBLP:journals/jcss/Arenas0RR13,
DBLP:journals/jodsn/CateHK16}, the fixpoint of ${P_3 \cup \SG{P_3} \cup B}$ is
finite regardless of which singularisation is chosen in
line~\ref{alg:answer-query:sg}. Third, \citet{DBLP:journals/jair/GrauHKKMMW13}
studied chase termination for singularised dependencies. They showed that the
finiteness of the fixpoint of ${P_3 \cup \SG{P_3} \cup B}$ sometimes depends on
the choice of singularisation, and they discussed ways to increase the
likelihood of termination. Our approach is applicable to (at least) all of
these cases.

\subsection{Eliminating Second-Order Quantification}\label{sec:answering:second-order}

In the first step, we drop all second-order quantifiers in $\Sigma$ and convert
the function variables into function symbols. This produces formulas of the
form specified in Definition~\ref{def:generalised-fo-dep}.

\begin{definition}\label{def:generalised-fo-dep}
    A \emph{generalised first-order (FO) dependency} has the form ${\forall
    \vec x.[\varphi(\vec x) \rightarrow \exists \vec y.\psi(\vec x, \vec y)]}$
    where $\varphi(\vec x)$ and $\psi(\vec x, \vec y)$ satisfy the same
    conditions as in Definition~\ref{def:generalised-so-dep}, but all terms use
    function symbols instead of function variables.
\end{definition}

The notions of the head and body of a generalised FO dependency carry over
naturally from the SO case, and we assume that any query is `absorbed' into a
set of generalised FO dependencies in the same way as in
Definition~\ref{def:queries}. The only difference between generalised SO and FO
dependencies is that the former contain function variables, whereas the latter
contain function symbols; we discuss shortly the subtleties of this
distinction. Moreover, note that first-order dependencies such as TGDs and EGDs
do not support function symbols, whereas generalised FO dependencies support
both function symbols and existentially quantified individual variables. Thus,
generalised FO dependencies are strictly more expressive than standard
TGDs and EGDs.

We next show that transforming a generalised SO dependency to a set of
generalised FO dependencies does not affect the query answers.

\begin{definition}\label{def:fol}
    For $\Sigma$ a generalised SO dependency as in
    Definition~\ref{def:generalised-so-dep}, set $\fol{\Sigma}$ contains the
    corresponding generalised FO dependency ${\forall \vec x_i.[\varphi_i(\vec
    x_i) \rightarrow \exists \vec y_i.\psi_i(\vec x_i, \vec y_i)]}$ for each
    conjunct of $\Sigma$.
\end{definition}

\begin{proposition}\label{prop:fol}
    For each generalised second-order dependency $\Sigma$, each base instance
    $B$, and each fact of the form $\predQ(\vec a)$, it holds that ${\{ \Sigma
    \} \cup B \modelsEq \predQ(\vec a)}$ if and only if ${\fol{\Sigma} \cup B
    \modelsEq \predQ(\vec a)}$.
\end{proposition}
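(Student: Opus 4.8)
The plan is to prove both directions of the equivalence by relating models of $\{\Sigma\}$ to models of $\fol{\Sigma}$. The key observation is that the transformation $\fol{\cdot}$ does exactly two things: it drops the second-order existential quantifiers $\exists \vec f$, and it reinterprets each function variable $f$ as a function symbol of the same arity. Since the query predicate $\predQ$ and the base instance $B$ mention no function symbols or function variables at all, the answers $\predQ(\vec a)$ are determined entirely by which interpretations satisfy the dependency together with $B$. So I want to show that the class of models of $\{\Sigma\}\cup B$ and the class of models of $\fol{\Sigma}\cup B$, when restricted to what they say about $\predQ$, agree.

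**First I would** handle the easier direction, which I expect to be $\fol{\Sigma}\cup B \modelsEq \predQ(\vec a)$ implies $\{\Sigma\}\cup B \modelsEq \predQ(\vec a)$. Take any model $I$ of $\{\Sigma\}\cup B$. Since $\Sigma$ has the form $\exists\vec f.[\ldots]$, there is a witnessing valuation $\pi$ assigning to each function variable $f$ a concrete function $f^\pi$ on $\Delta^I$ such that $I,\pi \modelsEq \Phi$, where $\Phi$ is the matrix (the big conjunction of first-order implications). Now build an interpretation $I'$ that agrees with $I$ on constants and predicates, but additionally interprets each function \emph{symbol} $f$ of $\fol{\Sigma}$ by $f^{I'} := f^\pi$. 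Because the matrix $\Phi$ and the corresponding first-order dependencies in $\fol{\Sigma}$ are \emph{syntactically identical} once function variables are read as function symbols, $I' \modelsEq \fol{\Sigma}$; and $I'$ still satisfies $B$ since $B$ is untouched. By hypothesis $I' \modelsEq \predQ(\vec a)$, and since $I'$ and $I$ agree on $\predQ$ (which mentions no functions) we get $I \modelsEq \predQ(\vec a)$. As $I$ was arbitrary, this gives $\{\Sigma\}\cup B \modelsEq \predQ(\vec a)$.

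**For the converse** I would run essentially the same construction in reverse. Given a model $I'$ of $\fol{\Sigma}\cup B$, define a valuation $\pi$ on the reduct interpretation $I$ (same domain, constants, predicates) by setting $f^\pi := f^{I'}$ for each function variable $f$ that $\fol{\Sigma}$ turned into a function symbol. Then $I,\pi \modelsEq \Phi$, which witnesses $I \modelsEq \exists\vec f.\Phi = \Sigma$; and $I \modelsEq B$ as before. A subtle point worth checking is that the semantics of $\exists f$ given in the excerpt ranges over \emph{all} functions $(\Delta^I)^n \to \Delta^I$, so the witness $f^{I'}$ is always an admissible value for the quantifier — there is no restriction (e.g.\ to definable functions) that could block this step. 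With $I \modelsEq \{\Sigma\}\cup B$ established, the hypothesis $\{\Sigma\}\cup B \modelsEq \predQ(\vec a)$ yields $I \modelsEq \predQ(\vec a)$, hence $I' \modelsEq \predQ(\vec a)$ by the same agreement-on-$\predQ$ argument.

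**The main obstacle** is not any deep argument but rather stating the model correspondence cleanly, since one must keep straight the two genuinely different roles of $f$: as a second-order variable bound by $\exists\vec f$ and interpreted via a valuation $\pi$, versus as a function symbol interpreted directly by $\cdot^{I'}$. The crux is simply that an existential second-order witness and a function-symbol interpretation are the same kind of object — a function on the domain — so the passage between $\{\Sigma\}$ and $\fol{\Sigma}$ amounts to moving a function between the valuation $\pi$ and the interpretation $\cdot^{I'}$ while leaving everything else fixed. I would also remark that this is precisely the standard equivalence between a second-order sentence $\exists\vec f.\Phi$ and the satisfiability of its matrix under a suitable expansion of the signature, specialised here to the fact that $B$ and $\predQ$ are function-free so the equivalence transfers directly to query answers.
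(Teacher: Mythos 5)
Your proposal is correct and follows essentially the same argument as the paper's proof: both establish a correspondence between models of $\{\Sigma\}\cup B$ and models of $\fol{\Sigma}\cup B$ by moving the interpretation of each function between the second-order valuation and the interpretation of the corresponding function symbol, leaving the domain, constants, and predicates (hence $\predQ$) untouched. Your write-up is merely a more explicit, direction-by-direction version of the paper's contrapositive formulation.
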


\begin{proof}
We show the contrapositive: there exists an interpretation $I$ such that ${I
\modelsEq B}$, ${I \modelsEq \Sigma}$, and ${I \not\modelsEq \predQ(\vec a)}$
if and only if there exists an interpretation $I'$ such that ${I' \modelsEq
\fol{\Sigma} \cup B}$ and ${I' \not\modelsEq \predQ(\vec a)}$. For the
($\Leftarrow$) direction, we can obtain $I$ from $I'$ by simply `erasing' the
interpretation of the first-order function symbols of $\fol{\Sigma}$. For the
($\Rightarrow$) direction, we can always extend $I$ to $I'$ by interpreting the
function symbols of $\fol{\Sigma}$ in a way whose existence is guaranteed by
${I \modelsEq \Sigma}$.
\end{proof}

While Proposition~\ref{prop:fol} may seem to suggest that second-order
quantification is irrelevant to the expressivity of dependencies, the following
example illustrates the differences.

\begin{example}
Second-order dependencies were introduced to describe data mappings. For
example, one can use the second-order formula ${\varphi = \exists f.\forall
x_1,x_2.[R(x_1,x_2) \rightarrow S(x_1,f(x_2))]}$ to specify a mapping between
base instances where the source and target instances contain facts with the $R$
and $S$ predicates, respectively. One possible use of the formula $\varphi$ is
to check whether the base instance ${B_2 = \{ S(a,d), S(b,d) \}}$ can be
obtained by applying $\varphi$ to the base instance ${B_1 = \{ R(a,c), R(b,c)
\}}$. To achieve this in our framework, we can form an interpretation $I$ as
the union of $B_1$ and $B_2$---that is, we set the interpretation domain to
${\Delta^I = \{ a, b, c, d \}}$, and we interpret the predicates as ${R^I = \{
\langle a, c \rangle, \langle b, c \rangle \}}$ and ${S^I = \{ \langle a, d
\rangle, \langle b, d \rangle \}}$; then, our task corresponds to checking
whether $I$ satisfies $\varphi$. For $\pi$ a valuation where ${f^\pi(c) = d}$,
we have ${I,\pi \modelsEq \forall x_1,x_2.[R(x_1,x_2) \rightarrow
S(x_1,f(x_2))]}$, so ${I \modelsEq \varphi}$. Now let ${\varphi' = \forall
x_1,x_2.[R(x_1,x_2) \rightarrow S(x_1,f(x_2))]}$ be the first-order formula
obtained from $\varphi$ by the transformation in Definition~\ref{def:fol}; note
that $f$ in $\varphi'$ is a function symbol rather than a function variable.
The question whether $I$ satisfies $\varphi'$ is ill-formed since $f^I$ is not
defined in $I$; thus, formula $\varphi'$ does not describe a transformation of
$B_1$ into $B_2$.

Since base instances cannot contain function symbols, first-order formulas with
function symbols cannot describe transformations between base instances.
Second-order quantification overcomes this problem. In our example, function
variable $f$ is not interpreted in $I$, so, to check whether ${I \modelsEq
\varphi}$ holds, we need to guess a valuation for $f$. In contrast, the
interpretation for the function symbol $f$ is given explicitly in the
interpretation $I'$, so there is nothing to guess. Consequently, checking
satisfaction of second-order dependencies is $\normalfont\textsc{NP}$-hard in
data complexity \cite{DBLP:journals/tods/FaginKPT05}, whereas the problem is
polynomial for first-order dependencies with function symbols.

Query answering, however, is based on entailment: we identify facts that hold
in \emph{all} models of a base instance and a generalised SO dependency. Since
second-order quantifiers do not occur in the scope of other quantifiers in a
generalised SO dependency, we can eliminate SO quantifiers as in
Definition~\ref{def:fol} in a way that can be understood as `second-order
Skolemisation'. This transformation does not produce an equivalent formula
(e.g., $\varphi$ and $\varphi'$ are not equivalent), but it does not affect
query answers.
\end{example}

This discussion reveals that the chase variant from
Section~\ref{sec:so-dependencies:chase} can also be used to reason with
generalised FO dependencies by handling function symbols in exactly the same
way as function variables. The only difference is how to interpret the facts
with the auxiliary $\fnval{f}$ predicates in the chase result: with generalised
SO dependencies, these describe a valuation for the existentially quantified
function variables $\vec f$; in contrast, with generalised FO dependencies,
they provide an interpretation for the function symbols in a universal model.
Either way, a query can be answered by computing the chase result and then
reading off the $\predQ$ facts consisting of only constants.

\subsection{Singularisation of Generalised First-Order Dependencies}\label{sec:answering:singularisation}

As we explained in Section~\ref{sec:answering:overview}, we use the
\emph{singularisation} technique by \citet{DBLP:conf/pods/Marnette09} to
axiomatise the properties of equality in order to be able to analyse equality
inferences. The key idea is that replicating the consequences of equality via
congruence axioms can be avoided if we modify the dependency bodies so that
joins among body atoms take equalities into account. We discuss this intuition
in detail in Example~\ref{ex:sg}. This idea was used to optimise sufficient
conditions for chase termination and was shown to be very effective in practice
\cite{DBLP:journals/jair/GrauHKKMMW13}. Furthermore,
\citet{DBLP:journals/jodsn/CateHK16} applied singularisation to second-order
dependencies, but their formulation does not take into account functional
reflexivity axioms and is thus incomplete: it \emph{does not} derive
$\predQ(a_1)$ on Example~\ref{ex:run}. Completeness can easily be recovered by
simply including functional reflexivity axioms, but this prevents chase
termination as we discuss in Example~\ref{ex:sg} below.

We next present a variant of singularisation that overcomes both issues. Our
formulation includes functional reflexivity restricted to only the `relevant'
domain elements that occur in the interpretation of a relational predicate. In
other words, functional reflexivity is restricted to derive only equalities
that can affect a fact with a relational predicate.

\begin{definition}\label{def:sg}
    Let ${\delta = \forall \vec x.[\varphi(\vec x) \rightarrow \exists \vec
    y.\psi(\vec x, \vec y)]}$ be a generalised first-order dependency. A
    \emph{singularisation step} is applicable to argument $t_k$ of an atom
    ${R(t_1, \dots, t_k, \dots, t_m)}$ occurring in $\varphi(\vec x)$ if
    predicate $R$ is different from $\equals$ and term $t_k$ is
    \begin{itemize}
        \item a constant,

        \item a variable that occurs as the $i$-th argument of ${R(t_1, \dots,
        t_k, \dots, t_m)}$ where ${i \neq k}$, or

        \item a variable that occurs in $\varphi(\vec x)$ as an argument of a
        relational atom other than ${R(t_1, \dots, t_k, \dots, t_m)}$.
    \end{itemize}
    Applying such a step to $\delta$ replaces ${R(t_1, \dots, t_k, \dots,
    t_m)}$ with ${R(t_1, \dots, x', \dots, t_m) \wedge x' \equals t_k}$ where
    $x'$ is a fresh variable. A generalised FO dependency $\delta'$ is a
    \emph{singularisation} of $\delta$ if there exists a sequence of
    generalised FO dependencies ${\delta_0, \delta_1, \dots, \delta_n}$ with
    ${n \geq 1}$ satisfying the following conditions.
    \begin{itemize}
        \item ${\delta_0 = \delta}$ and ${\delta_n = \delta'}$.

        \item If $\psi(\vec x, \vec y)$ does not contain $\predQ$, then
        ${\delta_1 = \delta_0}$; otherwise, $\psi(\vec x, \vec y)$ is of the
        form ${\predQ(x_1,\dots,x_m)}$, and $\delta_1$ is defined as follows
        where ${x_1',\dots,x_m'}$ are fresh variables.
        \begin{displaymath}
            \delta_1 = \forall \vec x, x_1', \dots, x_m'.[\varphi(\vec x) \wedge x_1 \equals x_1' \wedge \dots \wedge x_m \equals x_m' \rightarrow \predQ(x_1',\dots,x_m')]
        \end{displaymath}

        \item For each ${i \in \{ 2, \dots, n \}}$, dependency $\delta_i$ is
        obtained by applying a singularisation step to $\delta_{i-1}$.

        \item No singularisation step is applicable to $\delta_n$.
    \end{itemize}

    Let $\Sigma$ be a set of generalised first-order dependencies. A set
    $\Sigma'$ is a \emph{singularisation} of $\Sigma$ if each element of
    $\Sigma'$ is a singularisation of a distinct dependency of $\Sigma$.
    Moreover, set $\DOM{\Sigma}$ contains a domain axiom \eqref{eq:dom-c}
    instantiated for each constant $c$ occurring in $\Sigma$, and a domain
    axiom \eqref{eq:dom-R} instantiated for each $n$-ary predicate $R$
    occurring in $\Sigma$ different from $\equals$ and $\predQ$ and each ${i
    \in \{ 1, \dots, n \}}$. Set $\Rfl$ contains the reflexivity axiom
    \eqref{eq:ref}. Set $\ST$ contains the symmetry axiom \eqref{eq:sym} and
    the transitivity axiom \eqref{eq:trans}. Set $\FR{\Sigma}$ contains a
    \emph{$\D$-restricted functional reflexivity} axiom \eqref{eq:Dfnref}
    instantiated for each $n$-ary function symbol $f$ occurring in $\Sigma$.
    Finally, ${\SG{\Sigma} = \DOM{\Sigma} \cup \Rfl \cup \ST \cup \FR{\Sigma}}$.
    \begin{align}
        \D(x_1) \wedge x_1 \equals x_1' \wedge \D(x_1') \wedge \dots \wedge \D(x_n) \wedge x_n \equals x_n' \wedge \D(x_n') & \rightarrow f(x_1, \dots, x_n) \equals f(x_1', \dots, x_n')  \label{eq:Dfnref}
    \end{align}
\end{definition}

Note that $\DOM{\Sigma}$ does not contain the domain rules for $\predQ$ because
this predicate occurs only in the rule head. We define $\EQ{\Sigma}$ as in
Section~\ref{sec:preliminaries} while treating $\Sigma$ as a conjunction;
again, the domain axioms are not instantiated for $\predQ$ or $\equals$.

A singularisation of a set of generalised first-order dependencies $\Sigma$ is
not unique because the result depends on the order in which singularisation
steps are applied. However, the set $\SG{\Sigma}$ depends only on the function
and predicate symbols in $\Sigma$, which are unaffected by singularisation;
thus, ${\SG{\Sigma'} = \SG{\Sigma}}$ for each singularisation $\Sigma'$ of
$\Sigma$. Theorem~\ref{thm:sg} states that, regardless of how singularisation
steps are applied, query answers remain preserved even though functional
reflexivity axioms are $\D$-restricted. We discuss the intuition behind this
result in Example~\ref{ex:sg}.

\begin{restatable}{theorem}{thmSG}\label{thm:sg}
    For each finite set of generalised first-order dependencies $\Sigma$, each
    singularisation $\Sigma'$ of $\Sigma$, each base instance $B$, and each
    fact of the form $\predQ(\vec a)$, it is the case that ${\Sigma \cup B
    \modelsEq \predQ(\vec a)}$ if and only if ${\Sigma' \cup \SG{\Sigma'} \cup
    B \models \predQ(\vec a)}$.
\end{restatable}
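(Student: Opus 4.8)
The plan is to prove the two directions of the equivalence separately, establishing that singularisation together with the $\D$-restricted equality axiomatisation $\SG{\Sigma'}$ preserves exactly the query answers of the original dependency set under true equality. The $(\Leftarrow)$ direction (soundness) should be the easier one: I would show that any model $J$ of $\Sigma' \cup \SG{\Sigma'} \cup B$ under ordinary-predicate semantics can be turned into a model of $\Sigma \cup B$ under true equality $\modelsEq$. Since $\SG{\Sigma'}$ axiomatises $\equals$ as reflexive, symmetric, and transitive (via $\Rfl$ and $\ST$), the interpretation of $\equals^J$ is an equivalence relation on the domain elements flagged by $\D$; I would quotient $J$ by this relation to obtain $J_{/\equals}$ in which $\equals$ becomes genuine identity. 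The work is to verify that each singularised dependency $\delta' \in \Sigma'$, once we collapse equal elements, satisfies the corresponding original dependency $\delta$, and that the $\D$-restricted functional reflexivity \eqref{eq:Dfnref} suffices to make every true function symbol well-defined on the quotient (this is exactly why $\D$-restriction targets the relevant domain elements). Because singularisation only weakens body joins by inserting $\equals$-atoms that the quotient turns back into equalities, satisfaction transfers, and $\predQ(\vec a) \in J$ yields ${\{\Sigma\} \cup B \modelsEq \predQ(\vec a)}$.

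For the $(\Rightarrow)$ direction (completeness), I would argue contrapositively or via the chase: assuming ${\Sigma \cup B \modelsEq \predQ(\vec a)}$, I must show the ordinary-predicate fixpoint of $\Sigma' \cup \SG{\Sigma'} \cup B$ derives $\predQ(\vec a)$. The natural route is to relate the true-equality chase of $\Sigma$ and $B$ (which exists and contains the answer, per the Preliminaries) to a run of the fixpoint computation over the singularised program. Concretely, I would set up a correspondence between the chase instance $I^n$ (where equal constants/nulls have been merged into representatives via $\mu^n$) and a Herbrand model of the singularised theory in which the $\equals$-predicate records precisely the merges that the chase performed implicitly. Each dependency step and each equality step of the chase must be simulated: a dependency step of $\delta$ corresponds to firing the singularised $\delta'$ where the inserted $\equals$-atoms in the body match the equality facts accumulated so far, and an equality step corresponds to the derivation of an $\equals$-fact that $\ST$ then closes under symmetry and transitivity.

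The main obstacle, which the paper itself flags as ``perhaps the most technically challenging result,'' is proving that the $\D$-restricted functional reflexivity dependency \eqref{eq:Dfnref} derives \emph{all the equalities that are actually needed}, without the unrestricted \eqref{eq:fnref} that would cause nontermination. The danger is that unrestricted functional reflexivity would force $f(a) \equals f(b)$ whenever $a \equals b$ for \emph{every} pair, propagating infinitely through nested terms; the $\D$-guard fires only when all arguments lie in the $\D$-enumerated domain, i.e., occur as genuine arguments of relational atoms. The crux is a careful inductive argument showing that any equality $f(\vec u) \equals f(\vec v)$ required to match a body join in some productive inference already has its arguments $\D$-flagged at the point it is needed, so \eqref{eq:Dfnref} fires in time. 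I would structure this as an induction on the length of the chase-to-fixpoint correspondence, maintaining the invariant that every functional term appearing as a relevant argument has been asserted into $\D$ via the domain dependencies \eqref{eq:dom-R}, and that the singularisation steps (which, by Definition~\ref{def:sg}, were applied exactly to arguments that are constants, repeated variables, or variables shared with another relational atom) expose precisely those join positions where a $\D$-restricted equality can be supplied. Establishing this invariant, and arguing that no \emph{additional} equalities beyond the $\D$-restricted ones are ever required for a relational conclusion, is where the real difficulty lies; the remaining bookkeeping around $\predQ$ (handled specially by the first singularisation step $\delta_1$) and around symmetry/transitivity closure is comparatively routine.
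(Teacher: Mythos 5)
Your proposal has the right raw materials but attaches them to the wrong directions, and the one construction that carries the real weight is asserted rather than carried out. On the direction bookkeeping: the quotient construction (collapsing $\equals$-classes of a model of ${\Sigma' \cup \SG{\Sigma'} \cup B}$ into a true-equality interpretation) is the tool for the \emph{hard} direction --- it turns a countermodel of the singularised theory into a true-equality countermodel of ${\Sigma \cup B}$, establishing by contraposition that $\modelsEq$-entailment implies $\models$-entailment. You instead deploy it for the ``soundness'' direction and conclude from ${\predQ(\vec a) \in J}$ for a single model $J$ that ${\{\Sigma\} \cup B \modelsEq \predQ(\vec a)}$; one model satisfying the query fact establishes nothing about entailment over all models. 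The genuinely easy direction runs the other way: take a true-equality countermodel $I$ of ${\Sigma \cup B}$ and expand it by setting ${\D^{I'} = \Delta^I}$, whereupon every singularised dependency and every rule of $\SG{\Sigma'}$ is trivially satisfied. That construction does not appear in your proposal at all.

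For the hard direction you switch to a chase-simulation argument, but the crux you correctly identify --- that \eqref{eq:Dfnref} derives ``enough'' equalities --- is exactly where your sketch stops. The difficulty is not only that the needed equalities arrive in time; it is that the quotient (or your simulation correspondence) must interpret each function symbol as a \emph{function}, i.e., ${\alpha \sim \beta}$ must imply ${f(\alpha) \sim f(\beta)}$, and the $\D$-restricted rule gives this only for $\D$-flagged arguments. The paper resolves this by (i) working with a specific chase-constructed Herbrand countermodel satisfying the invariant that every proper subterm of every fact argument lies in $\D$, (ii) remapping every term occurring in no fact to a single fresh element $\omega$, so that the unrestricted functional reflexivity \eqref{eq:fnref} holds on the whole domain after the remapping, and (iii) defining the representative function by induction on term depth so that it commutes with all function symbols. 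Without something playing the role of (ii), the quotient is simply ill-defined on function values outside $\D$, and your claim that \eqref{eq:Dfnref} ``suffices to make every true function symbol well-defined on the quotient'' is the theorem, not an observation. Relatedly, an arbitrary model $J$ will not do: one needs minimality of $\predQ^J$ (guaranteed by the chase construction together with the special singularisation of query-headed conjuncts) to pull the absence of $\predQ(\vec a)$ back through the quotient.
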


\begin{example}\label{ex:sg}
For the generalised second-order dependency $\exSigma$ from
Example~\ref{ex:run}, set ${\Sigma' = \fol{\exSigma}}$ consists of generalised
first-order dependencies \eqref{ex:run:Q}--\eqref{ex:run:U-eq}. As we discuss
shortly, singularisation is applicable only to dependencies \eqref{ex:run:Q}
and \eqref{ex:run:RSR-eq}, which we restate below for convenience.
\begin{align}
    R(x_1,x_2) \wedge f(x_1) \equals x_3 \wedge A(x_3) \wedge B(x_3)                                                    & \rightarrow \predQ(x_1)       \tag{\ref{ex:run:Q} restated} \\
    R(x_1,x_2) \wedge f(x_1) \equals x_3 \wedge A(x_3) \wedge x_3 \equals x_3' \wedge B(x_3') \wedge x_1 \equals x_1'   & \rightarrow \predQ(x_1')      \label{ex:sg:Q} \\
    R(x_2,x_1) \wedge S(x_2,x_3) \wedge R(x_3,x_4)                                                                      & \rightarrow x_1 \equals x_4   \tag{\ref{ex:run:RSR-eq} restated} \\
    R(x_2,x_1) \wedge x_2 \equals x_2' \wedge S(x_2',x_3) \wedge x_3 \equals x_3' \wedge R(x_3',x_4)                    & \rightarrow x_1 \equals x_4   \label{ex:sg:RSR-eq}
\end{align}
To singularise dependency \eqref{ex:run:Q}, since the head atom contains the
query predicate, we introduce a fresh variable $x_1'$, we replace $\predQ(x_1)$
with $\predQ(x_1')$, and we add ${x_1 \equals x_1'}$ to the body. Moreover,
variable $x_3$ occurs in relational body atoms $A(x_3)$ and $B(x_3)$, so we
replace $B(x_3)$ with $B(x_3')$ and add ${x_3 \equals x_3'}$ to the body. Thus,
dependency \eqref{ex:sg:Q} is a singularisation of \eqref{ex:run:Q}.
Analogously, dependency \eqref{ex:sg:RSR-eq} is a possible singularisation of
dependency \eqref{ex:run:RSR-eq}. Singularisation is not applicable to any
other dependency of $\Sigma'$, so let $\Sigma''$ be obtained from $\Sigma'$ by
replacing \eqref{ex:run:Q} and \eqref{ex:run:RSR-eq} with \eqref{ex:sg:Q} and
\eqref{ex:sg:RSR-eq}, respectively.

Figure~\ref{fig:ex:sg} shows a universal model of ${\Sigma'' \cup
\SG{\Sigma''}}$ and $\exB$ constructed using the chase variant from
Section~\ref{sec:so-dependencies:chase}. Equalities are shown as double
$\equals$-labelled arcs, where reflexive and transitive edges are omitted for
readability. Moreover, $n_1, \dots, n_k$ are base labelled nulls, whereas
$\fnnull{s}$, $\fnnull{t}$, and $\fnnull{u}$ are functional labelled nulls for
the terms ${s = f(a_1)}$, ${t = f(\fnnull{s})}$, and ${u = f(\fnnull{t})}$,
respectively.

We first consider how $\predQ(a_1)$ is derived when equality is axiomatised
explicitly as discussed in Section~\ref{sec:preliminaries}. Then, $\Sigma'$ and
$B$ derive ${\fnnull{s} \equals \fnnull{t}}$, ${A(\fnnull{s})}$, and
${B(\fnnull{t})}$, and the congruence axioms in $\EQ{\Sigma'}$ `copy' the last
two facts to ${A(\fnnull{t})}$ and ${B(\fnnull{s})}$. We can thus match the
conjunction ${A(x_3) \wedge B(x_3)}$ in \eqref{ex:run:Q} to ${A(\fnnull{s})}$
and ${B(\fnnull{s})}$, or to ${A(\fnnull{t})}$ and ${B(\fnnull{t})}$; either
way, we derive $\predQ(a_1)$.

Singularisation avoids fact `copying'. Apart from the restricted functional
reflexivity axioms that we discuss shortly, the main difference between
$\SG{\Sigma''}$ and $\EQ{\Sigma'}$ is that the former does not contain any
congruence axioms. As one can see in Figure~\ref{fig:ex:sg}, applying the chase
to ${\Sigma'' \cup \SG{\Sigma''}}$ and $\exB$ derives ${A(\fnnull{s})}$ and
${B(\fnnull{t})}$, but not ${A(\fnnull{t})}$ or ${B(\fnnull{s})}$. To
compensate for this, conjunction ${A(x_3) \wedge B(x_3)}$ in \eqref{ex:run:Q}
is replaced with conjunction ${A(x_3) \wedge x_3 \equals x_3' \wedge B(x_3')}$
in \eqref{ex:sg:Q}. By matching these atoms to $A(\fnnull{s})$, ${\fnnull{s}
\equals \fnnull{t}}$, and ${B(\fnnull{t})}$, respectively, we derive
$\predQ(a_1)$ without any fact `copying'.

Although $x_1$ occurs in dependency \eqref{ex:sg:Q} in atoms ${f(x_1) \equals
x_3}$ and $R(x_1,x_2)$, the two occurrences of $x_1$ \emph{do not} need to be
renamed apart because singularisation preserves equality facts. In particular,
atom ${f(x_1) \equals x_3}$ is true for the same values of $x_1$ and $x_3$
after singularisation, so the atom can be matched to the values of $x_1$ and
$x_3$ that make $R(x_1,x_2)$ and $A(x_3)$ true.

The functional reflexivity axioms of $\EQ{\Sigma'}$ and $\SG{\Sigma''}$ differ
in that axioms \eqref{eq:Dfnref} contain atoms $\D(x_i)$ and $\D(x_i')$ for
each ${i \in \{ 1, \dots, n \}}$. If $\SG{\Sigma''}$ were to contain
\eqref{eq:fnref} instead of \eqref{eq:Dfnref}, then the chase of ${\Sigma''
\cup \SG{\Sigma''}}$ and $\exB$ would introduce facts of the form
${\fnnull{v_i} \equals \fnnull{v_{i+1}}}$ for each $i \geq 0$, where $v_0 = u$
and ${v_{i+1} = f(\fnnull{v_i})}$. In other words, a universal model would be
necessarily infinite. Our formulation of $\SG{\Sigma''}$ makes the universal
model finite as follows. First, the domain axioms \eqref{eq:dom-c} and
\eqref{eq:dom-R} ensure that the chase of ${\Sigma'' \cup \SG{\Sigma''}}$ and
$\exB$ derives $\D(t)$ for each $t$ occurring in a relational fact; in other
words, predicate $\D$ is axiomatised to enumerate the relevant subset of the
universal model's domain. Second, the $\D$-atoms in \eqref{eq:Dfnref} prevent
the functional reflexivity axioms from being applied beyond this relevant
subset. The missing equalities that would be produced by the unrestricted
functional reflexivity axioms are irrelevant due to safety. For example,
${\fnnull{v_1} \equals \fnnull{v_0}}$ cannot be matched to atom ${x_3 \equals
x_3'}$ of dependency \eqref{ex:sg:Q} since variable $x_3$ also occurs in a
relational atom, and $\fnnull{v_1}$ does not occur in any relational fact in a
universal model of ${\Sigma'' \cup \SG{\Sigma''}}$ and $\exB$.
\end{example}

\begin{figure}[tb]
\begin{center}
\begin{tikzpicture}[scale=0.75, every node/.style={scale=0.75}]
    \tikzset{>=latex}

    \node[label=below:{$C, \predQ$}] (a1)    at (0,0)  {$a_1$}     ;
    \node                            (a2)    at (2,0)  {$a_2$}     ;
    \node                            (a3)    at (4,0)  {$a_3$}     ;
    \node                            (adots) at (6,0)  {$\ldots$}  ;
    \node                            (ak1)   at (8,0)  {$a_{k-1}$} ;
    \node                            (ak)    at (10,0) {$a_k$}     ;

    \node  (a1p)    at (0,2)   {$n_1$}     ;
    \node  (a2p)    at (2,2)   {$n_2$}     ;
    \node  (a3p)    at (4,2)   {$n_3$}     ;
    \node  (adotsp) at (6,2)   {}          ;
    \node  (ak1p)   at (8,2)   {$n_{k-1}$} ;
    \node  (akp)    at (10,2)  {$n_k$}     ;

    \node[label=below:{$A$}] (fa1)   at (-2,0)  {$\fnnull{s}$} ;
    \node[label=below:{$B$}] (ffa1)  at (-4,0)  {$\fnnull{t}$} ;
    \node                    (fffa1) at (-6,0)  {$\fnnull{u}$} ;

    \draw[->]        (a1)    to node[below] {$S$} (a2)    ;
    \draw[->]        (a2)    to node[below] {$S$} (a3)    ;
    \draw[->,dotted] (a3)    to                   (adots) ;
    \draw[->,dotted] (adots) to                   (ak1)   ;
    \draw[->]        (ak1)   to node[below] {$S$} (ak)    ;
    \draw[->]        (a1)    to node[below] {$U$} (fa1)   ;

    \draw[->]        (a1)    to node[right] {$R$} (a1p)   ;
    \draw[->]        (a2)    to node[right] {$R$} (a2p)   ;
    \draw[->]        (a3)    to node[right] {$R$} (a3p)   ;
    \draw[->]        (ak1)   to node[right] {$R$} (ak1p)  ;
    \draw[->]        (ak)    to node[right] {$R$} (akp)   ;

    \draw[double]    (a1p)    to[bend left = 30] node[above] {$\equals$} (a2p)    ;
    \draw[double]    (a2p)    to[bend left = 30] node[above] {$\equals$} (a3p)    ;
    \draw[double]    (a3p)    to[bend left = 30] node[above] {$\equals$} (adotsp) ;
    \draw[double]    (adotsp) to[bend left = 30] node[above] {$\equals$} (ak1p)   ;
    \draw[double]    (ak1p)   to[bend left = 30] node[above] {$\equals$} (akp)    ;

    \draw[double]    (fa1)    to[bend left = 30] node[above] {$\equals$} (a1)     ;
    \draw[double]    (ffa1)   to[bend left = 30] node[above] {$\equals$} (fa1)    ;
    \draw[double]    (fffa1)  to[bend left = 30] node[above] {$\equals$} (ffa1)   ;

\end{tikzpicture}
\end{center}
\caption{A universal model for the singularised dependencies and dataset from Example~\ref{ex:sg}}\label{fig:ex:sg}
\end{figure}

\subsection{Skolemisation and Transformation to a Logic Program}\label{sec:answering:skolem}

In the rest, we assume that the set of function symbols is partitioned into
disjoint subsets of \emph{true function symbols} and \emph{Skolem function
symbols}. We also assume that the result $\Sigma_2$ of singularisation contains
only true function symbols. Set $\Sigma_2$ is transformed into a logic program
${P_3 = \sk{\Sigma_2}}$ as specified in Definition~\ref{def:sk}, which uses
Skolem function symbols.

\begin{definition}\label{def:sk}
    The \emph{Skolemisation} of a generalised FO dependency ${\delta = \forall
    \vec x.[\varphi(\vec x) \rightarrow \exists \vec y.\psi(\vec x, \vec y)]}$
    is the set $\sk{\delta}$ that contains precisely a rule ${H \leftarrow
    \varphi(\vec x)}$ for each atom ${H \in \psi'(\vec x')}$, where $\vec x'$
    are the \emph{frontier variables} of $\delta$ (i.e., all variables of $\vec
    x$ that occur in $\psi(\vec x, \vec y)$) and $\psi'(\vec x')$ is the
    conjunction obtained from $\psi(\vec x, \vec y)$ by replacing each
    existentially quantified variable ${y \in \vec y}$ with $g(\vec x')$ for
    $g$ a Skolem function symbol freshly introduced for $y$. The
    \emph{Skolemisation} of a set $\Sigma$ of generalised FO dependencies is
    the logic program ${\sk{\Sigma} = \bigcup_{\delta \in \Sigma} \sk{\delta}}$.
\end{definition}

Note that one can define Skolemisation to use $g(\vec x)$ instead of $g(\vec
x')$, which is also semantically correct. However, the formulation from
Definition~\ref{def:sk} can be more efficient in practice since it reduces the
arity of the Skolem function symbols.

Let $P$ be a logic program that can contain true and Skolem function symbols.
Then, $\EQ{P}$ is a set of rules defined analogously to
Section~\ref{sec:preliminaries}, but where functional reflexivity rules
\eqref{eq:fnref} are instantiated only for true function symbols. Moreover,
$\DOM{P}$, $\ST$, $\FR{P}$, and $\SG{P}$ are sets of rules as in
Definition~\ref{def:sg}, but the $\D$-restricted functional reflexivity rules
\eqref{eq:Dfnref} are again instantiated only for true function symbols. We
next show that Skolemisation does not affect query answers.

\begin{proposition}\label{prop:sk}
    For each set of generalised FO dependencies $\Sigma$, each base instance
    $B$, each fact of the form $\predQ(\vec a)$, and ${P = \sk{\Sigma}}$, it
    holds that ${\Sigma \cup \SG{\Sigma} \cup B \models \predQ(\vec a)}$ if and
    only if ${P \cup \SG{P} \cup B \models \predQ(\vec a)}$.
\end{proposition}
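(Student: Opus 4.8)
The plan is to reduce the statement to a standard Skolemisation argument after first observing that the two equality axiomatisations coincide. The first step I would take is to show that $\SG{\Sigma} = \SG{P}$. By Definition~\ref{def:sg}, $\SG{\Sigma} = \DOM{\Sigma} \cup \Rfl \cup \ST \cup \FR{\Sigma}$, where $\Rfl$ and $\ST$ are fixed, the domain dependencies in $\DOM{\Sigma}$ are instantiated only for the constants and the (non-query, non-equality) predicates of $\Sigma$, and the $\D$-restricted functional reflexivity dependencies \eqref{eq:Dfnref} in $\FR{\Sigma}$ are instantiated only for the \emph{true} function symbols of $\Sigma$. Skolemisation introduces only fresh \emph{Skolem} function symbols and leaves the constants, predicates, and true function symbols unchanged, so $\DOM{P} = \DOM{\Sigma}$ and $\FR{P} = \FR{\Sigma}$, and therefore $\SG{P} = \SG{\Sigma}$. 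Writing $\mathcal{E}$ for this common set of rules (which, like $B$ and $\predQ(\vec a)$, contains no Skolem function symbols), the claim reduces to showing $\Sigma \cup \mathcal{E} \cup B \models \predQ(\vec a)$ iff $P \cup \mathcal{E} \cup B \models \predQ(\vec a)$.

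For the ($\Rightarrow$) direction I would argue that every model of $P$ is a model of $\Sigma$ on the shared vocabulary. Each rule of $P$ has the form $\varphi(\vec x) \to A$ with a single head atom $A$ that may contain Skolem terms, and for a fixed dependency $\delta = \forall \vec x.[\varphi(\vec x) \to \exists \vec y.\psi(\vec x, \vec y)]$ of $\Sigma$ the conjunction of the rules it produces is equivalent to $\forall \vec x.[\varphi(\vec x) \to \psi'(\vec x)]$ (head normalisation is equivalence-preserving, Section~\ref{sec:preliminaries}), which entails $\delta$ because interpreting $\vec y$ by the values of the Skolem terms witnesses the existential. Hence, given any $I \models P \cup \mathcal{E} \cup B$, the reduct of $I$ obtained by forgetting the Skolem symbols satisfies $\Sigma \cup \mathcal{E} \cup B$; since $\predQ(\vec a)$ contains no Skolem symbols, the assumption $\Sigma \cup \mathcal{E} \cup B \models \predQ(\vec a)$ gives $I \models \predQ(\vec a)$.

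The ($\Leftarrow$) direction is where the real work lies, and it is the usual model-expansion step of Skolemisation, here carried out with $\equals$ treated as an ordinary predicate. Given any $I \models \Sigma \cup \mathcal{E} \cup B$, I would expand $I$ to $I'$ by interpreting the Skolem function symbols: for the Skolem symbols $g_1,\dots,g_m$ introduced for the existential variables $y_1,\dots,y_m$ of a dependency $\delta = \forall \vec x.[\varphi(\vec x) \to \exists \vec y.\psi(\vec x, \vec y)]$ and each argument tuple $\vec{\alpha}$ at which $\varphi$ holds in $I$, I would use choice to fix a single witness tuple $\vec{\beta}$ with $\psi(\vec{\alpha}, \vec{\beta})$ true and set $g_j^{I'}(\vec{\alpha}) = \beta_j$; on the remaining arguments the $g_j^{I'}$ are arbitrary. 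This makes $I' \models P$. The point that needs care — and the reason instantiating \eqref{eq:Dfnref} for true symbols only is sound — is that $\mathcal{E}$ imposes no functional reflexivity on the Skolem symbols, so these freely chosen interpretations cannot violate any axiom of $\mathcal{E}$. Since $I'$ agrees with $I$ on all constants, predicates, and true function symbols, we have $I' \models \mathcal{E} \cup B$ and $I'$ and $I$ agree on the truth of $\predQ(\vec a)$; thus $P \cup \mathcal{E} \cup B \models \predQ(\vec a)$ yields $I' \models \predQ(\vec a)$ and hence $I \models \predQ(\vec a)$, completing the argument.
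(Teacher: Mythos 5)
Your proof is correct and follows essentially the same route as the paper: the paper's proof simply observes that ${\sk{\Sigma \cup \SG{\Sigma}} = \sk{\Sigma} \cup \SG{\Sigma} = P \cup \SG{P}}$ (since $\SG{\Sigma}$ has no existential quantifiers and $\SG{\cdot}$ depends only on the constants, predicates, and true function symbols, which Skolemisation leaves untouched) and then appeals to the standard fact from Section~\ref{sec:preliminaries} that Skolemisation preserves query answers. You merely unfold that standard reduct/expansion argument explicitly, correctly isolating the one point that matters here, namely that $\SG{P}$ imposes no functional-reflexivity constraints on the freshly introduced Skolem symbols.
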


\begin{proof}
Entailment relation $\models$ treats equality as an ordinary predicate, so
${\Sigma \cup \SG{\Sigma} \cup B \models \predQ(\vec a)}$ if and only if
${\sk{\Sigma \cup \SG{\Sigma}} \cup B \models \predQ(\vec a)}$ for each fact of
the form $\predQ(\vec a)$ \cite{theorem-proving}. Furthermore, $\SG{\Sigma}$
does not contain an existential quantifier, so ${\sk{\Sigma \cup \SG{\Sigma}} =
\sk{\Sigma} \cup \SG{\Sigma} = P \cup \SG{P}}$.
\end{proof}

Definition~\ref{def:well-formed} captures the syntactic form of the rules
produced from a generalised SO dependency by removing second-order
quantification, and then singularising and Skolemising the result. The
definition allows relational atoms in rule bodies to contain constants (but not
functional terms), which is more general than the form produced solely by the
transformations presented thus far. However, this allows us to capture the
result of the transformation in Section~\ref{sec:answering:relevance} using the
same definition. As we shall see, this syntactic form plays a central role in
our correctness proofs.

\begin{definition}\label{def:well-formed}
    A conjunction of atoms $\phi$ is \emph{well-formed} if all relational atoms
    of $\phi$ are of depth zero, all equality atoms of $\phi$ are of depth at
    most one, and each variable occurring in $\phi$ in an equality atom also
    occurs in $\phi$ in a relational atom. A rule $r$ is \emph{well-formed} if
    it is of the form
    \begin{itemize}
        \item ${H \leftarrow \phi}$ where conjunction $\phi$ is well-formed and
        atom $H$ is of depth at most one, or

        \item ${\predQ(\vec t) \leftarrow \phi \wedge \phi'}$ where atom
        $\predQ(\vec t)$ is of depth zero, conjunction $\phi$ is well-formed,
        and conjunction $\phi'$ contains zero or more equalities of the form
        ${x \equals x'}$ where $x$ is a variable that occurs in $\phi$ and $x'$
        is a variable that occurs in $\vec t$ but not $\phi$.
    \end{itemize}
    A program $P$ is \emph{well-formed} if each rule in $P$ is
    well-formed.
\end{definition}

\begin{example}\label{ex:sk}
Skolemising the singularised dependencies from Example~\ref{ex:sg} produces a
program $P$ that contains rules \eqref{ex:rule:Q}--\eqref{ex:rule:U-eq}, all of
which are well-formed. Note that $f$ is a true function symbol, whereas $g$ is
a Skolem function symbol; thus, program $\SG{P}$ contains the $\D$-restricted
functional reflexivity axioms \eqref{eq:Dfnref} for $f$, but not for $g$.
Figure~\ref{fig:ex:sk} shows a Herbrand model of ${P \cup \SG{P} \cup \exB}$.
\begin{align}
    \predQ(x_1')        & \leftarrow R(x_1,x_2) \wedge f(x_1) \equals x_3 \wedge A(x_3) \wedge x_3 \equals x_3' \wedge B(x_3') \wedge x_1 \equals x_1'  \label{ex:rule:Q}      \\
    R(x_1,g(x_1))       & \leftarrow S(x_1,x_2)                                                                                                         \label{ex:rule:S-R}    \\
    x_1 \equals x_4     & \leftarrow R(x_2,x_1) \wedge x_2 \equals x_2' \wedge S(x_2',x_3) \wedge x_3 \equals x_3' \wedge R(x_3',x_4)                   \label{ex:rule:RSR-eq} \\
    A(f(x))             & \leftarrow C(x)                                                                                                               \label{ex:rule:C-Af}   \\
    U(x,f(x))           & \leftarrow C(x)                                                                                                               \label{ex:rule:C-Uf}   \\
    B(f(x_2))           & \leftarrow U(x_1,x_2)                                                                                                         \label{ex:rule:U-Bf}   \\
    x_1 \equals x_2     & \leftarrow U(x_1,x_2)                                                                                                         \label{ex:rule:U-eq}
\end{align}
\end{example}

\begin{figure}[tb]
\begin{center}
\begin{tikzpicture}[scale=0.75, every node/.style={scale=0.75}]
    \tikzset{>=latex}

    \node[label=below:{$C, \predQ$}] (a1)    at (0,0)  {$a_1$}     ;
    \node                            (a2)    at (2,0)  {$a_2$}     ;
    \node                            (a3)    at (4,0)  {$a_3$}     ;
    \node                            (adots) at (6,0)  {$\ldots$}  ;
    \node                            (ak1)   at (8,0)  {$a_{k-1}$} ;
    \node                            (ak)    at (10,0) {$a_k$}     ;

    \node  (a1p)    at (0,2)   {$g(a_1)$}     ;
    \node  (a2p)    at (2,2)   {$g(a_2)$}     ;
    \node  (a3p)    at (4,2)   {$g(a_3)$}     ;
    \node  (adotsp) at (6,2)   {}             ;
    \node  (ak1p)   at (8,2)   {$g(a_{k-1})$} ;
    \node  (akp)    at (10,2)  {$g(a_k)$}     ;

    \node[label=below:{$A$}] (fa1)   at (-2,0)  {$f(a_1)$}       ;
    \node[label=below:{$B$}] (ffa1)  at (-4,0)  {$f(f(a_1))$}    ;
    \node                    (fffa1) at (-6,0)  {$f(f(f(a_1)))$} ;

    \draw[->]        (a1)    to node[below] {$S$} (a2)    ;
    \draw[->]        (a2)    to node[below] {$S$} (a3)    ;
    \draw[->,dotted] (a3)    to                   (adots) ;
    \draw[->,dotted] (adots) to                   (ak1)   ;
    \draw[->]        (ak1)   to node[below] {$S$} (ak)    ;
    \draw[->]        (a1)    to node[below] {$U$} (fa1)   ;

    \draw[->]        (a1)    to node[right] {$R$} (a1p)   ;
    \draw[->]        (a2)    to node[right] {$R$} (a2p)   ;
    \draw[->]        (a3)    to node[right] {$R$} (a3p)   ;
    \draw[->]        (ak1)   to node[right] {$R$} (ak1p)  ;
    \draw[->]        (ak)    to node[right] {$R$} (akp)   ;

    \draw[double]    (a1p)    to[bend left = 30] node[above] {$\equals$} (a2p)    ;
    \draw[double]    (a2p)    to[bend left = 30] node[above] {$\equals$} (a3p)    ;
    \draw[double]    (a3p)    to[bend left = 30] node[above] {$\equals$} (adotsp) ;
    \draw[double]    (adotsp) to[bend left = 30] node[above] {$\equals$} (ak1p)   ;
    \draw[double]    (ak1p)   to[bend left = 30] node[above] {$\equals$} (akp)    ;

    \draw[double]    (fa1)    to[bend left = 30] node[above] {$\equals$} (a1)     ;
    \draw[double]    (ffa1)   to[bend left = 30] node[above] {$\equals$} (fa1)    ;
    \draw[double]    (fffa1)  to[bend left = 30] node[above] {$\equals$} (ffa1)   ;

\end{tikzpicture}
\end{center}
\caption{A Herbrand model for the logic program and dataset from Example~\ref{ex:sk}}\label{fig:ex:sk}
\end{figure}

We next discuss why distinguishing true and Skolem function symbols is
important. This distinction is enabled by the fact that axiomatisation of
equality (either in full or via singularisation) is applied \emph{before}
Skolemisation. In particular, at the point when equality is axiomatised, the
formulas in question contain only true function symbols, and so the result
contains only the ($\D$-restricted) functional reflexivity axioms for the true
function symbols. After this step, the equality predicate is ordinary, and
Skolemisation in first-order logic without equality does not require
introducing any functional reflexivity axioms. Thus, after equality
axiomatisation, both $\EQ{\Sigma}$ and $\SG{P}$ contain ($\D$-restricted)
functional reflexivity axioms only for the true function symbols, but not for
the Skolem function symbols. Example~\ref{ex:skolem-fnref} clarifies this
important detail.

\begin{example}\label{ex:skolem-fnref}
Let $\Sigma$ contain the first-order dependencies \eqref{ex:skolem-fnref:R} and
\eqref{ex:skolem-fnref:S}; note that $\Sigma$ is a singularisation of itself as
per Definition~\ref{def:sg}. Moreover, let $\Sigma'$ be the Skolemisation of
$\Sigma$---that is, $\Sigma'$ contains dependencies \eqref{ex:skolem-fnref:R}
and \eqref{ex:skolem-fnref:skS}. Finally, let ${B = \{ R(a,b), A(a), A(b) \}}$.
\begin{align}
    R(x_1,x_2)  & \rightarrow x_1 \equals x_2   \label{ex:skolem-fnref:R}   \\
    A(x)        & \rightarrow \exists y.S(x,y)  \label{ex:skolem-fnref:S}   \\
    A(x)        & \rightarrow S(x,f(x))         \label{ex:skolem-fnref:skS}
\end{align}

Let $I$ be the interpretation where ${\Delta^I = \{ \alpha, \beta, \gamma \}}$,
${a^I = b^I = \alpha}$, ${A^I = \{ \alpha \}}$, and ${S^I = \{ \langle
\alpha,\beta \rangle, \langle \alpha,\gamma \rangle \}}$. One can see that ${I
\modelsEq \Sigma}$ and ${I \modelsEq B}$; in fact, $I$ is a universal model of
$\Sigma$ and $B$. Note that $\exists y.S(x,y)$ in dependency
\eqref{ex:skolem-fnref:S} says that, for each value of $x$, atom $S(x,y)$ must
be satisfied for \emph{at least} one value of $y$, but there is no upper bound
on the number of such values. In fact, when $x$ is mapped to the domain element
$\alpha$ of $I$, two values for $y$ satisfy $S(x,y)$: $\beta$ and $\gamma$.

Now let $J$ be the Herbrand interpretation as specified below. One can check
that $J$ is a model of ${\Sigma' \cup \SG{\Sigma'} \cup B}$. Constants $a$ and
$b$ are equal in $J$, which is analogous to the condition ${a^I = b^I =
\alpha}$ from the previous paragraph. Furthermore, $f$ is a Skolem function
symbol, so $\SG{\Sigma'}$ does not contain a $\D$-restricted functional
reflexivity rule for $f$ and thus $J$ \emph{does not} need to contain ${f(a)
\equals f(b)}$. Intuitively, $f(a)$ and $f(b)$ play in $J$ the role of domain
elements $\beta$ and $\gamma$ in $I$; hence, just as $I$ is allowed to contain
distinct $\beta$ and $\gamma$, so is $J$ allowed to contain distinct $f(a)$ and
$f(b)$.
\begin{align*}
    J = \{ R(a,b), A(a), A(b), S(a,f(a)), S(b,f(b)), a \equals a, a \equals b, b \equals a, b \equals b, f(a) \equals f(a), f(b) \equals f(b) \}
\end{align*}
Analogous observations apply when equality is axiomatised fully: ${\Sigma' \cup
\EQ{\Sigma'} \cup B}$ is satisfied in a model that extends $J$ but does not
need to contain ${f(a) \equals f(b)}$.

Now assume that $\Sigma'$ is given in the input---that is, the input contains
$f$ as a true function symbol. We must then interpret $f$ as a function from
the very beginning. For example, to define $f^I$, we must decide whether
$f^I(\alpha)$ produces $\beta$ or $\gamma$. To reflect this when axiomatising
equality, set $\SG{\Sigma'}$ must include the $\D$-restricted functional
reflexivity axiom for $f$; but then, to obtain a Herbrand model of ${\Sigma'
\cup \SG{\Sigma'} \cup B}$, we must extend $J$ with ${f(a) \equals f(b)}$ and
${f(b) \equals f(a)}$.
\end{example}

The observations from Example~\ref{ex:skolem-fnref} have repercussions
throughout this paper. First, line~\ref{alg:answer-query:chase} of
Algorithm~\ref{alg:answer-query} is applied to a program $P_7$ that contains
true and Skolem function symbols, so the chase variant from
Section~\ref{sec:so-dependencies:chase} must be extended to distinguish these
two types. In particular, the algorithm should interpret a ground term ${t =
f(u_1,\dots,u_n)}$ with $f$ a Skolem function symbol as the functional labelled
null $\fnnull{t}$ unique for $t$, but it should \emph{not} maintain facts of
the form $\fnval{f}(u_1,\dots,u_n,v)$ for $f$ since Skolem function symbols are
not subjected to functional reflexivity axioms.

Second, the need to distinguish true and Skolem function symbols explains why
the correctness of Algorithm~\ref{alg:answer-query} is stated in
Theorem~\ref{thm:desg} the way it is; we discuss this further in
Section~\ref{sec:answering:desg}.

Third, this distinction ensures that, when our approach is applied to
first-order dependencies only, no ($\D$-restricted) functional reflexivity
axioms are necessary. Thus, Algorithm~\ref{alg:answer-query} then reduces to
the work by \citet{DBLP:conf/aaai/BenediktMT18}, where the chase in
line~\ref{alg:answer-query:chase} uses a more standard formulation.

\subsection{Relevance Analysis}\label{sec:answering:relevance}

The objective of the steps presented thus far was to bring the input into a
form that allows us to identify the inferences that contribute to the query
answers. We now present our first such technique that we call \emph{relevance
analysis}. The technique eliminates entire rules that provably do not
contribute to the query answers. Top-down query answering techniques such as
SLD resolution \cite{DBLP:conf/ifip/Kowalski74} can naturally identify relevant
inferences by working backwards from the query. However, our objective is to
optimise the Skolemised program rather than answer the query itself, so our
technique must be cheaper to compute than top-down query answering to be of
practical use.

Our technique can be summarised as follows. First, we homomorphically embed the
input base instance $B$ into a much smaller instance $B'$ that we call an
\emph{abstraction} of $B$. Second, we compute the fixpoint of the Skolemised
program and $B'$; since $B'$ is orders of magnitude smaller than $B$, the
overhead needed to achieve this should be negligible. Third, inspired by SLD
resolution, we use this fixpoint to analyse inferences of the Skolemised
program. The existence of a homomorphism from $B$ to $B'$ guarantees that each
rule instance that fires on $B$ also fires on $B'$; hence, a rule that is not
relevant on $B'$ cannot be relevant on $B$. To present this approach formally,
we next introduce the notion of an abstraction of a base instance.

\begin{definition}\label{def:abstraction}
    A base instance $B'$ is an \emph{abstraction} of a base instance $B$ with
    respect to a program $P$ if there exists a mapping $\eta$, called a
    \emph{homomorphism from} $B$ \emph{to} $B'$ \emph{w.r.t.} $P$, of constants
    to constants such that ${\eta(c) = c}$ for each constant $c$ occurring in
    $P$, and ${\eta(B) \subseteq B'}$ where
    \begin{displaymath}
        \eta(B) = \{ R(\eta(c_1),\dots,\eta(c_n)) \mid R(c_1,\dots,c_n) \in B \}.
    \end{displaymath}
\end{definition}

Any abstraction of $B$ can be used in our relevance analysis algorithm, and the
choice of abstraction can affect the algorithm's precision. However, the
\emph{critical instance} for a base instance $B$ and program $P$ allows us to
abstract $B$ in a way that depends on the predicates but not the facts of $B$.
Let $C$ be the set of constants of $P$, and let $\ast$ be a fresh constant;
then, the critical instance for $B$ and $P$ is the instance $B'$ that contains
a fact ${R(\vec a)}$ for each $n$-ary predicate $R$ occurring in $B$ and each
${\vec a \in (C \cup \{ \ast \})^n}$. We can further refine this approach if
predicates are \emph{sorted} (i.e., each predicate position is associated with
a sort such as strings or integers): we introduce a distinct fresh constant
$\ast_i$ per sort, and we form $B'$ as before while honouring the sorting
requirements.

Our relevance analysis technique is shown in Algorithm~\ref{alg:relevance}. It
takes as input a well-formed program $P$ and a base instance $B$, and it
returns the rules that are relevant to answering $\predQ$ on $B$. It selects an
abstraction $B'$ of $B$ with respect to $P$
(line~\ref{alg:relevance:abstraction}), computes the consequences $I$ of $P$ on
$B'$ (line~\ref{alg:relevance:abstraction-fixpoint}), and identifies the rules
of $P$ that contribute to the answers of $\predQ$ on $B'$ by a form of backward
chaining. It populates the `ToDo' set $\mathcal{T}$ with facts that represent
the answers to $\predQ$ on $B$ (line~\ref{alg:relevance:init}) and then
iteratively explores $\mathcal{T}$
(lines~\ref{alg:relevance:while:start}--\ref{alg:relevance:while:end}). In each
iteration, it extracts a fact $F$ from $\mathcal{T}$
(line~\ref{alg:relevance:choose-F}) and identifies each rule ${r \in P \cup
\DOM{P} \cup \ST \cup \FR{P}}$ and substitution $\nu$ that matches the head of
$r$ to $F$ and the body of $r$ to facts in $I$
(line~\ref{alg:relevance:rule:start}). Such $\nu$ captures ways of deriving a
fact represented by $F$ from $B$ via $r$. Thus, if such $\nu$ exists, rule $r$
is relevant and is added to the set $\R$ of relevant rules provided that ${r
\not\in \SG{P}}$ (line~\ref{alg:relevance:add-r}). Rule $r$ is not added to
$\R$ if ${r \in \SG{P}}$: such $r$ is part of the axiomatisation of equality,
which can always be recovered from the algorithm's output. Either way, the
matched body atoms $\body{r}\nu$ must all be derivable as well, so they are
added to $\mathcal{T}$ (line~\ref{alg:relevance:add-T-D}). The `done' set
$\mathcal{D}$ ensures that each fact is added to $\mathcal{T}$ just once, which
ensures termination.

All rules from ${P \cup \SG{P}}$ apart from the reflexivity rule \eqref{eq:ref}
are considered in line~\ref{alg:relevance:rule:start}, which is possible
because the rules are well-formed. For example, assume that the body of a rule
$r$ contains an equality ${x \equals x'}$ that is matched to a fact ${t \equals
t}$. There may be many ways to derive ${t \equals t}$, but most of them are
irrelevant: rule $r$ is well-formed, so variable $x$ occurs in the body of $r$
in a relational atom $A(\dots,x\dots)$ that is matched to a fact of the form
$A(\dots,t,\dots)$; hence, we can always derive ${t \equals t}$ from
$A(\dots,t,\dots)$, the domain rule for $A$, and the reflexivity rule. In
contrast, if we considered the reflexivity rule in
line~\ref{alg:relevance:rule:start}, our algorithm would necessarily identify
each rule that derives a fact of the form $B(\dots,t,\dots)$ as relevant and
thus be considerably less effective. Since the reflexivity rule is not
considered, the domain rules \eqref{eq:dom-R} can be considered in
line~\ref{alg:relevance:rule:start} only via atoms $\D(x_i)$ and $\D(x_i')$ of
the $\D$-restricted functional reflexivity axioms \eqref{eq:Dfnref}, which in
turn are considered only if the input program contains true function symbols.
In other words, if the input dependencies are all first-order, neither the
reflexivity rule nor the domain rules are considered.

We can further optimise this algorithm if we know that ${P \cup B}$ satisfies
UNA (e.g., if an earlier UNA check was conducted). If $\nu$ matches an equality
atom ${t_1 \equals t_2 \in \body{r}}$ to ${c \equals c}$, then any
corresponding derivation from $P$ and $B$ necessarily matches ${t_1 \equals
t_2}$ to ${d \equals d}$ for some constant $d$. Moreover, program $P$ is
well-formed, so ${d \equals d}$ can be derived using the reflexivity rule, and
no other proofs for ${d \equals d}$ need to be considered
(line~\ref{alg:relevance:UNA-opt}). Finally, if all matches of ${t_1 \equals
t_2}$ are of the form ${c \equals c}$ and at least one of $t_1$ or $t_2$ is a
variable and the other term is of depth zero, then we can eliminate ${t_1
\equals t_2}$ from the rule and replace the variable everywhere else in the
rule. To this end, Algorithm~\ref{alg:relevance} uses a set $\mathcal{B}$ to
keep track of the body equality atoms that can be matched to an equality of the
form other than ${d \equals d}$ (line~\ref{alg:relevance:add-B}) and are thus
`blocked' from being eliminated. After considering all ways to derive query
answers, body equality atoms that have not been blocked are removed
(lines~\ref{alg:relevance:UNA:start}--\ref{alg:relevance:UNA:end}).

\begin{algorithm}[tb]
\caption{$\relevance{P}{B}$}\label{alg:relevance}
\begin{small}
\begin{algorithmic}[1]\footnotesize
    \State \textbf{choose} an abstraction $B'$ of $B$ with respect to $P$                                                                                                   \label{alg:relevance:abstraction}
    \State $I \defeq \fixpoint{P'}{B'}$ for $P' = P \cup \SG{P}$                                                                                                            \label{alg:relevance:abstraction-fixpoint}
    \State $\mathcal{D} \defeq \mathcal{T} \defeq \{ \predQ(\vec a) \in I \mid \vec a \text{ is a tuple of constants} \}$                                                   \label{alg:relevance:init}
    \State $\R \defeq \emptyset$ \;\; and \;\; $\mathcal{B} \defeq \emptyset$
    \While{$\mathcal{T} \neq \emptyset$}                                                                                                                                    \label{alg:relevance:while:start}
        \State \textbf{choose and remove} a fact $F$ from $\mathcal{T}$                                                                                                     \label{alg:relevance:choose-F}
        \For{\textbf{each} ${r \in P \cup \DOM{P} \cup \ST \cup \FR{P}}$ and \textbf{each} substitution $\nu$ such that $\head{r}\nu = F$ and $\body{r}\nu \subseteq I$}    \label{alg:relevance:rule:start}
            \If{$r \not\in \R \cup \DOM{P} \cup \ST \cup \FR{P}$}
                \textbf{add} $r$ to $\R$                                                                                                                                    \label{alg:relevance:add-r}
            \EndIf
            \For{\textbf{each} $G_i \in \body{r}\nu$}                                                                                                                       \label{alg:relevance:body:start}
                \If{$G_i$ is not of the form $c \equals c$ with $c$ a constant, or $P \cup B$ is not known to satisfy UNA}                                                  \label{alg:relevance:UNA-opt}
                    \If{$G_i \not\in \mathcal{D}$}
                        \textbf{add} $G_i$ to $\mathcal{T}$ and $\mathcal{D}$                                                                                               \label{alg:relevance:add-T-D}
                    \EndIf
                    \If{$P \cup B$ is known to satisfy UNA, $G_i$ is an equality, and $r \not\in \DOM{P} \cup \ST \cup \FR{P}$}
                        \textbf{add} $\langle r,i \rangle$ to $\mathcal{B}$                                                                                                 \label{alg:relevance:add-B}
                    \EndIf
                \EndIf
            \EndFor                                                                                                                                                         \label{alg:relevance:body:end}
        \EndFor                                                                                                                                                             \label{alg:relevance:rule:end}
    \EndWhile                                                                                                                                                               \label{alg:relevance:while:end}
    \If{$P \cup B$ is known to satisfy UNA}                                                                                                                                 \label{alg:relevance:UNA:start}
        \For{\textbf{each} rule $r \in \R$ and \textbf{each} atom of the form $x \equals t$ or $t \equals x$ at position $i$ of $\body{r}$
        \Statex \hspace{1.4cm} such that $\langle r,i \rangle \not\in \mathcal{B}$, variable $x$ occurs in $\body{r}$ in a relational atom, and $\dep{t} = 0$}              \label{alg:relevance:UNA:form}
            \State \textbf{remove} $x \equals t$ from $r$ and replace $x$ with $t$ in $r$                                                                                   \label{alg:relevance:UNA:desg}
        \EndFor
    \EndIf                                                                                                                                                                  \label{alg:relevance:UNA:end}
    \State \Return $\R$
\end{algorithmic}
\end{small}
\end{algorithm}

The computation of $I$ in line~\ref{alg:relevance:abstraction-fixpoint} may not
terminate in general, but termination is guaranteed for all dependency classes
that we mentioned in Section~\ref{sec:answering:overview}.
Theorem~\ref{thm:relevance} states that the algorithm is \emph{sound} in the
sense that the returned rules are sufficient to correctly answer the query. In
Section~\ref{sec:evaluation} we show empirically that the technique can be very
effective in practice.

\begin{restatable}{theorem}{thmRelevance}\label{thm:relevance}
    For each well-formed program $P$ defining a query $\predQ$ and each base
    instance $B$, program ${\R = \relevance{P}{B}}$ is well-formed; moreover,
    for each fact of the form ${\predQ(\vec a)}$, it holds that ${P \cup \SG{P}
    \cup B \models \predQ(\vec a)}$ if and only if ${\R \cup \SG{\R} \cup B
    \models \predQ(\vec a)}$.
\end{restatable}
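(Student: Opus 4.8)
The plan is to exploit the fact that, because $\equals$ is treated as an ordinary predicate throughout Algorithm~\ref{alg:relevance}, both sides of the biconditional can be phrased fixpoint-theoretically: ${P \cup \SG{P} \cup B \models \predQ(\vec a)}$ exactly when ${\predQ(\vec a) \in \fixpoint{P \cup \SG{P}}{B}}$, and likewise for $\R$. The starting observation is that the homomorphism $\eta$ witnessing that $B'$ is an abstraction of $B$ extends to terms by ${\eta(f(t_1,\dots,t_n)) = f(\eta(t_1),\dots,\eta(t_n))}$, and that this extension commutes with rule application: whenever a rule ${r \in P \cup \SG{P}}$ fires via a substitution $\sigma$ over $\fixpoint{P \cup \SG{P}}{B}$, the same rule fires via ${\eta \circ \sigma}$ over ${I = \fixpoint{P \cup \SG{P}}{B'}}$. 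Hence $\eta$ maps $\fixpoint{P \cup \SG{P}}{B}$ homomorphically into $I$, sending any derivation of a fact $G$ over $B$ to a derivation of $\eta(G)$ over $B'$ that uses exactly the same rules. Since $\eta$ preserves the constants of $P$ and maps constants to constants, ${\predQ(\vec a) \in \fixpoint{P \cup \SG{P}}{B}}$ yields ${\predQ(\eta(\vec a)) \in I}$ with $\eta(\vec a)$ a tuple of constants, so $\predQ(\eta(\vec a))$ is placed into $\mathcal{T}$ in line~\ref{alg:relevance:init}; this is what connects the run of the algorithm to an arbitrary derivation over the real instance $B$.

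The well-formedness claim and the ($\Leftarrow$) direction are the easy parts. Every rule added to $\R$ in line~\ref{alg:relevance:add-r} is a rule of $P$ and thus well-formed; the only rules that change are those touched by the UNA post-processing, and I would check that removing a body atom ${x \equals t}$ with ${\dep{t} = 0}$ and substituting $x$ by $t$ keeps relational atoms of depth zero, equality atoms of depth at most one, and the variable-occurrence condition, so well-formedness is preserved. Since ${\R \subseteq P}$ and therefore ${\SG{\R} \subseteq \SG{P}}$, when UNA is not assumed we have ${\R \cup \SG{\R} \subseteq P \cup \SG{P}}$ as rule sets, and monotonicity of the fixpoint gives ($\Leftarrow$) immediately. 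When UNA is assumed, I would additionally argue that the elimination of a non-blocked atom is sound in both directions: by the way $\mathcal{B}$ is maintained in line~\ref{alg:relevance:add-B}, such an atom is, in every relevant derivation over $P$ and $B$, matched only to an equality ${d \equals d}$, whence under UNA its two sides denote the same constant and the substitution does not change which rule instances fire.

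The heart of the argument, and the main obstacle, is ($\Rightarrow$): that the rules retained in $\R$, together with $\SG{\R}$, suffice to re-derive every query answer over the true instance $B$. I would prove this by induction over a fixed derivation $D$ of $\predQ(\vec a)$ in $\fixpoint{P \cup \SG{P}}{B}$, showing that every fact $G$ occurring in $D$ lies in $\fixpoint{\R \cup \SG{\R}}{B}$. For a step of $D$ that applies a rule $r$ other than reflexivity, its $\eta$-image is a rule match ${\head{r}(\eta\sigma) = \eta(G)}$ with ${\body{r}(\eta\sigma) \subseteq I}$; by a top-down induction along $\eta(D)$, the fact $\eta(G)$ is reached by the backward search (each parent is processed in line~\ref{alg:relevance:choose-F} and all matching rules, hence this one, are examined in line~\ref{alg:relevance:rule:start}), so $r$ is either placed into $\R$ or already belongs to ${\DOM{P} \cup \ST \cup \FR{P}}$, and the facts $\eta(\body{r}\sigma)$ are inserted into $\mathcal{D}$. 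The induction hypothesis then re-derives the body atoms of $r$ over $\R \cup \SG{\R}$, and applying $r$ re-derives $G$, provided ${r \in \R \cup \SG{\R}}$. For this last point I would show that $\SG{\R}$ retains exactly the equality axioms ever used in $D$: symmetry and transitivity are in $\SG{\R}$ unconditionally, while any relevant use of a domain rule for a predicate $R$ or a $\D$-restricted functional reflexivity rule \eqref{eq:Dfnref} for a true function symbol $f$ can be traced, through the body atoms recorded in $\mathcal{D}$, back to a rule of $\R$ in which $R$ (resp.\ $f$) syntactically occurs, so the corresponding instance survives in $\DOM{\R}$ (resp.\ $\FR{\R}$).

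The one case that does not fit this scheme is a step of $D$ producing a reflexivity fact ${t \equals t}$, since reflexivity \eqref{eq:ref} is deliberately excluded from line~\ref{alg:relevance:rule:start}, so $\eta(t \equals t)$ need not be in $\mathcal{D}$. Here I would argue exactly as in the text preceding the algorithm: the fact ${t \equals t}$ is used in $D$ only to match a body equality atom ${x \equals x'}$ of some relevant, well-formed rule ${r \in \R}$, so the variable matched to $t$ must also occur in a depth-zero relational atom ${A(\dots,t,\dots)}$ in the body of $r$; that relational fact occurs earlier in $D$ and is re-derivable by the induction hypothesis, and since $A$ occurs in $\R$ the domain rule ${A(\dots) \rightarrow \D(t)}$ lies in ${\DOM{\R} \subseteq \SG{\R}}$, after which reflexivity (always present in $\SG{\R}$) yields ${t \equals t}$. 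Collecting these cases, the replay of $D$ over $\R \cup \SG{\R}$ and $B$ reconstructs $\predQ(\vec a)$. I expect the delicate bookkeeping to be the simultaneous maintenance of the two invariants — top-down reachability of the relevant facts in $I$ and bottom-up re-derivability over the real instance $B$ — together with the careful tracking of which domain and functional reflexivity instances are guaranteed to remain in $\SG{\R}$.
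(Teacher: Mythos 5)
Your overall strategy is the paper's: push the real derivation through $\eta$ into the abstraction's fixpoint $I$, show the backward search reaches the $\eta$-image of every fact used, and replay the derivation over ${\R \cup \SG{\R}}$, treating reflexivity-derived facts separately via well-formedness. Two steps, however, are thinner than they can afford to be. First, the claim that every needed $\D$-restricted functional reflexivity instance ``can be traced, through the body atoms recorded in $\mathcal{D}$, back to a rule of $\R$ in which $f$ syntactically occurs'' is exactly the paper's Lemma~\ref{lem:Dfin-Rfin-fnsym}, and it is not a one-liner: the paper needs a bespoke notion of a function symbol being \emph{relevant} for a fact (distinguishing, e.g., ${f(\vec u) \equals f(\vec v)}$ from equalities whose two sides have different outermost symbols) and two separate inductions, one over the fixpoint computation on $B'$ and one over the evolution of $\mathcal{D}$, using well-formedness to push occurrences of $f$ from derived facts back into rule syntax. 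You should not expect this to fall out of the main induction.

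Second, the analogous claim for domain rules---that every needed instance of \eqref{eq:dom-R} is for a predicate occurring in a rule of $\R$---is not what holds, and is not how the paper closes this hole. A $\D$-fact feeding a body atom of \eqref{eq:Dfnref} may, in the original derivation, be obtained only from a base fact whose predicate occurs nowhere in $\R$ (e.g., a predicate appearing in $P$ only in bodies of discarded rules), so $\DOM{\R}$ genuinely lacks that instance. The paper instead runs the main induction against ${\Rstar = \DOM{P} \cup \Rfl \cup \ST \cup \FR{\Rfin}}$, i.e., with \emph{all} domain rules of $P$, and then argues in a separate final step that the surplus rules in ${\Rstar \setminus \SG{\Rfin}}$ are inert: they only yield facts of the form $\D(c)$, ${c \equals c}$, and equalities over terms built from constants occurring in no fact with a predicate of $\R$, and well-formedness of $\R$ (every body-equality variable co-occurs in a relational atom) prevents any rule of $\R$ from consuming such facts. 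Your reflexivity case (recovering ${t \equals t}$ from the relational atom that co-occurs with the consuming equality atom) is correct and is the paper's argument, but you need the prove-with-$\DOM{P}$-then-prune device, or something equivalent, for the $\D$-atoms inside \eqref{eq:Dfnref}. A smaller remark: in the ($\Leftarrow$) direction the UNA post-processing needs no appeal to UNA at all---any firing of the rewritten rule is simulated by the original rule because the eliminated atom ${x \equals t}$ instantiates to ${t\sigma \equals t\sigma}$, which is recovered from the co-occurring relational atom via \eqref{eq:dom-R} and \eqref{eq:ref}; UNA is only needed to justify that the elimination loses no derivations in the other direction.
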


If computing $\fixpoint{P'}{B'}$ in
line~\ref{alg:relevance:abstraction-fixpoint} is difficult (as is the case in
some of our experiments), we can replace each term ${f(\vec x)}$ in ${P \cup
\SG{P}}$ with a fresh constant $c_f$ unique for $f$; note that, for each true
function symbol $f$, terms $f(x_1,\dots,x_n)$ and $f(x_1',\dots,x_n')$ in the
$\D$-restricted functional reflexivity axioms must also be replaced by $c_f$.
This does not affect the algorithm's correctness since $\fixpoint{P \cup
\SG{P}}{B}$ can still be homomorphically embedded into $\fixpoint{P'}{B'}$.
Moreover, the resulting program then does not contain function symbols, so the
computation in line~\ref{alg:relevance:abstraction-fixpoint} necessarily
terminates.

\begin{example}\label{ex:relevance}
The chase construction in Example~\ref{ex:run-chase} never merges constants, so
$\exSigma$ and $\exB$ from Example~\ref{ex:run} satisfy UNA and we shall assume
that this is known in advance. Consider applying Algorithm~\ref{alg:relevance}
to the rules \eqref{ex:rule:Q}--\eqref{ex:rule:U-eq} from Example~\ref{ex:sk}.
We use in line~\ref{alg:relevance:abstraction} the critical instance ${B' = \{
C(\ast), S(\ast,\ast) \}}$ for $B$ and $P$, so the computation in
line~\ref{alg:relevance:abstraction-fixpoint} produces the instance $I$ shown
in Figure~\ref{fig:relevance:I}.

\begin{figure}[!tb]
\tiny
\begin{displaymath}
\begin{array}{@{}c@{\;\;\;}c@{\;\;\;}c@{\;\;\;}c@{\;\;\;}c@{\;\;\;}c@{\;\;\;}c@{\;\;\;}c@{}}
    C(\ast)         & S(\ast,\ast)      & \predQ(\ast)  & \D(\ast)          & \ast \equals \ast \\
    R(\ast,g(\ast)) &                   &               & \D(g(\ast))       & g(\ast) \equals g(\ast)              \\
    A(f(\ast))      & U(\ast,f(\ast))   &               & \D(f(\ast))       & f(\ast) \equals f(\ast)               & f(\ast) \equals \ast            \\
                    &                   &               &                   &                                       & \ast \equals f(\ast)       \\
    B(f(f(\ast)))   &                   &               & \D(f(f(\ast)))    & f(f(\ast)) \equals f(f(\ast))         & f(f(\ast)) \equals f(\ast)       & f(f(\ast)) \equals \ast         \\
                    &                   &               &                   &                                       & f(\ast) \equals f(f(\ast))       & \ast \equals f(f(\ast))     \\
                    &                   &               &                   & f(f(f(\ast))) \equals f(f(f(\ast)))   & f(f(f(\ast))) \equals f(f(\ast)) & f(f(f(\ast))) \equals f(\ast)  & f(f(f(\ast))) \equals \ast   \\
                    &                   &               &                   &                                       & f(f(\ast)) \equals f(f(f(\ast))) & f(\ast) \equals f(f(f(\ast)))  & \ast \equals f(f(f(\ast)))   \\
\end{array}
\end{displaymath}
\caption{Instance $I$ from Example~\ref{ex:relevance}}\label{fig:relevance:I}
\end{figure}

The top-down part of Algorithm~\ref{alg:relevance} starts by matching fact
$\predQ(\ast)$ to the head of rule \eqref{ex:rule:Q} and evaluating the
instantiated body in $I$, which produces the rule instance
\eqref{ex:relevance:1}.
\begin{align}
    \predQ(\ast) \leftarrow R(\ast,g(\ast)) \wedge f(\ast) \equals f(\ast) \wedge A(f(\ast)) \wedge f(\ast) \equals f(f(\ast)) \wedge B(f(f(\ast))) \wedge \ast \equals \ast \label{ex:relevance:1}
\end{align}
Atom ${\ast \equals \ast}$ is not considered any further since UNA is known to
hold; moreover, matching atoms ${R(\ast,g(\ast))}$, ${f(\ast) \equals
f(\ast)}$, $A(f(\ast))$, ${f(\ast) \equals f(f(\ast))}$, and $B(f(f(\ast)))$ to
the heads of rule \eqref{ex:rule:S-R}, the functional reflexivity rule
\eqref{eq:Dfnref} for $f$, \eqref{ex:rule:C-Af}, again the functional
reflexivity rule \eqref{eq:Dfnref} for $f$, and \eqref{ex:rule:U-Bf}, produces
rule instances \eqref{ex:relevance:2}--\eqref{ex:relevance:6}, respectively.
Atom ${\ast \equals \ast}$ in \eqref{ex:relevance:3} again does not need to be
explored since UNA is known to hold. Furthermore, matching atom ${\ast \equals
f(\ast)}$ in the body of \eqref{ex:relevance:5} to the head of rule
\eqref{ex:rule:U-eq} produces the rule instance \eqref{ex:relevance:7}.
Finally, matching atom $U(\ast,f(\ast))$ in the bodies of
\eqref{ex:relevance:6} and \eqref{ex:relevance:7} to the head of rule
\eqref{ex:rule:C-Uf} produces the rule instance \eqref{ex:relevance:8}. Atoms
$\D(\ast)$ and $\D(f(\ast))$ can be matched to the heads of the domain rules
\eqref{eq:dom-R}, but all thus instantiated body atoms have already been
considered. Thus, the loop in
lines~\ref{alg:relevance:while:start}--\ref{alg:relevance:while:end} finishes
with the set $\R$ containing all rules apart from rule \eqref{ex:rule:RSR-eq}.
\begin{align}
    R(\ast,g(\ast))             & \leftarrow S(\ast,\ast)                                               \label{ex:relevance:2} \\
    f(\ast) \equals f(\ast)     & \leftarrow \D(\ast) \wedge \ast \equals \ast \wedge \D(\ast)          \label{ex:relevance:3} \\
    A(f(\ast))                  & \leftarrow C(\ast)                                                    \label{ex:relevance:4} \\
    f(\ast) \equals f(f(\ast))  & \leftarrow \D(\ast) \wedge \ast \equals f(\ast) \wedge \D(f(\ast))    \label{ex:relevance:5} \\
    B(f(f(\ast)))               & \leftarrow U(\ast,f(\ast))                                            \label{ex:relevance:6} \\
    \ast \equals f(\ast)        & \leftarrow U(\ast,f(\ast))                                            \label{ex:relevance:7} \\
    U(\ast,f(\ast))             & \leftarrow C(\ast)                                                    \label{ex:relevance:8}
\end{align}

Facts ${f(\ast) \equals f(\ast)}$, ${f(\ast) \equals f(f(\ast))}$, and ${\ast
\equals f(\ast)}$ can be matched to the head of rule \eqref{ex:rule:RSR-eq} to
produce instantiated bodies \eqref{ex:relevance:9}--\eqref{ex:relevance:11},
respectively, but none of these have a match in $I$. Thus, no instance of rule
\eqref{ex:rule:RSR-eq} derives any of these facts, so the rule is irrelevant to
$\predQ$ and is not added to $\R$.
\begin{align}
    R(x_2,f(\ast)) \wedge x_2 \equals x_2' \wedge S(x_2',x_3) \wedge x_3 \equals x_3' \wedge R(x_3',f(\ast))    \label{ex:relevance:9}  \\
    R(x_2,f(\ast)) \wedge x_2 \equals x_2' \wedge S(x_2',x_3) \wedge x_3 \equals x_3' \wedge R(x_3',f(f(\ast))) \label{ex:relevance:10} \\
    R(x_2,\ast) \wedge x_2 \equals x_2' \wedge S(x_2',x_3) \wedge x_3 \equals x_3' \wedge R(x_3',f(\ast))       \label{ex:relevance:11}
\end{align}

In lines~\ref{alg:relevance:UNA:start}--\ref{alg:relevance:UNA:end}, the
algorithm further tries to identify irrelevant body equalities. Equality atom
${x_1 \equals x_1'}$ in rule \eqref{ex:rule:Q} is matched in the above process
\emph{only} to a fact ${\ast \equals \ast}$ that represents an equality between
two constants; moreover, UNA is known to hold, so ${\ast \equals \ast}$ can
represent only facts of the form ${c \equals c}$ (i.e., where both constants
are the same). Hence, atom ${x_1 \equals x_1'}$ in \eqref{ex:rule:Q} is
irrelevant and the rule is replaced with rule \eqref{ex:rel:Q}. In contrast,
atom ${f(x_1) \equals x_3}$ in rule \eqref{ex:rule:Q} is matched to a fact
${f(\ast) \equals f(\ast)}$, which, despite UNA being satisfied, can represent
a fact of the form ${f(c) \equals f(d)}$; thus, atom ${f(x_1) \equals x_3}$ is
relevant. Similarly, atom ${x_3 \equals x_3'}$ in rule \eqref{ex:rule:Q} is
matched to ${f(\ast) \equals f(f(\ast))}$, so it is clearly relevant. Thus, our
algorithm finishes and returns rules \eqref{ex:rel:Q}--\eqref{ex:rel:U-eq}.
\begin{align}
    \predQ(x_1)     & \leftarrow R(x_1,x_2) \wedge f(x_1) \equals x_3 \wedge A(x_3) \wedge x_3 \equals x_3' \wedge B(x_3')  \label{ex:rel:Q}      \\
    R(x_1,g(x_1))   & \leftarrow S(x_1,x_2)                                                                                 \label{ex:rel:S-R}    \\
    A(f(x))         & \leftarrow C(x)                                                                                       \label{ex:rel:C-Af}   \\
    U(x,f(x))       & \leftarrow C(x)                                                                                       \label{ex:rel:C-Uf}   \\
    B(f(x_2))       & \leftarrow U(x_1,x_2)                                                                                 \label{ex:rel:U-Bf}   \\
    x_1 \equals x_2 & \leftarrow U(x_1,x_2)                                                                                 \label{ex:rel:U-eq}
\end{align}
\end{example}

\subsection{Magic Sets Transformation}\label{sec:answering:magic}

We now present a variant of the well-known magic sets transformation that can
eliminate certain irrelevant rule instances. The transformation modifies the
rules of a program and introduces so-called \emph{magic} predicates so that
fixpoint computation of the resulting program simulates backward chaining on
the original program. This transformation does not subsume the relevance
analysis: irrelevant rules in the input give rise to rules that populate
irrelevant magic predicates, which introduces overheads that can be avoided by
first eliminating irrelevant rules. We show empirically in
Section~\ref{sec:evaluation} that the two techniques complement each other.

Equality is treated as an ordinary predicate, so we could use the standard
magic sets transformation by \citet{DBLP:journals/jlp/BeeriR91}. However, we
improve the algorithm's effectiveness in two ways: we reduce the number of
magic rules by taking the symmetry of equality into account, and we show that
the reflexivity rule \eqref{eq:ref} does not need to be taken into account in
the transformation. Before discussing our approach, we illustrate the idea
behind the standard magic sets algorithm.

\begin{example}\label{ex:mgc-standard}
Rule \eqref{ex:mgc:2} shown below derives a fact $B(a)$ for each constant $a$
that is reachable from some fact $B(b)$ via facts with the $R$ predicate; thus,
computing the full fixpoint of rule \eqref{ex:mgc:2} could be costly. However,
rule \eqref{ex:mgc:1} derives $\predQ(a)$ only if both $A(a)$ and $B(a)$ are
derived. Thus, if we are interested only in facts of the form $\predQ(a)$, and
if only a small fraction of the facts derivable by rule \eqref{ex:mgc:2} are
relevant to the query, magic sets can avoid computing the full fixpoint of the
rule.
\begin{align}
    \predQ(x)   & \leftarrow A(x) \wedge B(x)           \label{ex:mgc:1} \\
    B(x_1)      & \leftarrow R(x_1,x_2) \wedge B(x_2)   \label{ex:mgc:2}
\end{align}

The idea is to simulate sideways information passing from techniques such as
SLD resolution. To evaluate the body of rule \eqref{ex:mgc:1}, we shall
identify all values of $x_1$ that make $A(x_1)$ true, and then pass those
values to atom $B(x_1)$. Furthermore, we shall use these bindings for $B(x_1)$
to constrain the evaluation of rule \eqref{ex:mgc:2}: for each binding $x_1$,
we shall identify all values $x_2$ that make the atom $R(x_1,x_2)$ true, and we
shall pass those values of $x_2$ to the atom $B(x_2)$. To determine the order
in which to evaluate the body atoms and how to propagate bindings, the magic
sets algorithm uses a \emph{sideways information passing strategy} (SIPS),
which \emph{adorns} each predicate in a rule with a string consisting of
letters $\ad{b}$ and $\ad{f}$. If the $i$-th letter is $\ad{b}$, this indicates
that the $i$-th argument is \emph{bound}---that is, a value for the argument is
passed from the rule head or an earlier atom in the rule body; otherwise, the
argument is \emph{free}. Only predicates occurring in a rule head are adorned;
in our example, this includes predicates $\predQ$ and $B$. When applied to our
example, the SIPS can produce the following rules.
\begin{align}
    \predQ^\ad{f}(x)    & \leftarrow A(x) \wedge B^\ad{b}(x)            \label{ex:mgc:1:ad} \\
    B^\ad{b}(x_1)       & \leftarrow R(x_1,x_2) \wedge B^\ad{b}(x_2)    \label{ex:mgc:2:ad}
\end{align}
In rule \eqref{ex:mgc:1:ad}, the adornment of atom $\predQ^\ad{f}(x)$ indicates
that all values of $x$ are important, whereas the adornment of atom
$B^\ad{b}(x)$ indicates that the value for $x$ is determined by an earlier atom
$A(x)$.

To simulate sideways information passing, the magic sets algorithm introduces a
\emph{magic} predicate $\mgc{R}{\alpha}$ for each predicate $R$ adorned with a
string $\alpha$. Such predicates accumulate the relevant bindings using
so-called \emph{magic rules}. In our example, $\predQ$ is annotated with
$\ad{f}$ in rule \eqref{ex:mgc:1:ad}, so we introduce the magic rule
\eqref{ex:mgc:res} to indicate that all bindings of $\predQ$ should be
produced. Furthermore, rule \eqref{ex:mgc:1:B} computes the bindings that are
passed from atom $A(x)$ to atom $B(x)$ in rule \eqref{ex:mgc:1}; these bindings
should also be propagated from the head of rule \eqref{ex:mgc:2} to the atom
$B(x_2)$, which is achieved by rule \eqref{ex:mgc:2:B}.
\begin{align}
    \mgc{\predQ}{\ad{f}}    & \leftarrow                                        \label{ex:mgc:res} \\
    \mgc{B}{\ad{b}}(x)      & \leftarrow \mgc{\predQ}{\ad{f}} \wedge A(x)       \label{ex:mgc:1:B} \\
    \mgc{B}{\ad{b}}(x_2)    & \leftarrow \mgc{B}{\ad{b}}(x_1) \wedge R(x_1,x_2) \label{ex:mgc:2:B}
\end{align}

Finally, to answer the query, the magic sets transformation introduces the
following \emph{modified rules}, which constrain the original rules
\eqref{ex:mgc:1} and \eqref{ex:mgc:2} using magic predicates.
\begin{align}
    \predQ(x)   & \leftarrow \mgc{\predQ}{\ad{f}} \wedge A(x) \wedge B(x)         \label{ex:mgc:mod:1} \\
    B(x_1)      & \leftarrow \mgc{B}{\ad{b}}(x_1) \wedge R(x_1,x_2) \wedge B(x_2) \label{ex:mgc:mod:2}
\end{align}

One can compute the fixpoint of the transformed program as described in
Section~\ref{sec:preliminaries}, but the magic rules
\eqref{ex:mgc:res}--\eqref{ex:mgc:2:B} derive a fact of the form
$\mgc{B}{\ad{b}}(a)$ only if constant $a$ is relevant to the query answer.
This, in turn, constrains the evaluation of the modified rules
\eqref{ex:mgc:mod:1} and \eqref{ex:mgc:mod:2} only to the relevant bindings,
which ensures that the computation simulates SLD resolution.
\end{example}

We next formalise a variant of the magic sets transformation that deals with
equality in an optimised fashion. We start by introducing the notions of
adornment and magic predicates.

\begin{definition}\label{def:adornment}
    An \emph{adornment} for an $n$-ary relational predicate $R$ is a string of
    length $n$ over the alphabet $\ad{b}$ (`bound') and $\ad{f}$ (`free'). For
    each adornment $\alpha$ for $R$ with $k$ $\ad{b}$-symbols,
    $\mgc{R}{\alpha}$ is a fresh \emph{magic} predicate of arity $k$ unique for
    $R$ and $\alpha$. An \emph{adornment} for the equality predicate $\equals$
    is a string $\ad{bf}$ or $\ad{fb}$, and $\mgc{\equals}{\adEqb}$ is a fresh
    \emph{magic} predicate for $\equals$ of arity one.

    Given an adornment $\alpha$ of length $n$ and an $n$-tuple ${\vec t}$ of
    terms, ${\vec t^\alpha}$ is a tuple of terms that contains, in the same
    relative order, each ${t_i \in \vec t}$ for which the $i$-th element of
    $\alpha$ is $\ad{b}$.
\end{definition}

Relational predicates are thus adorned in the same way as in the standard magic
sets transformation, whereas the equality predicate can be adorned only by
$\ad{bf}$ or $\ad{fb}$. In particular, adorning $\equals$ by $\ad{ff}$ would
indicate that the entire equality relation should be computed, and this is
generally very inefficient since the equality relation typically affects most
predicates of a program. Furthermore, adorning the head of the transitivity
rule \eqref{eq:trans} by $\ad{bb}$ produces the rule
\begin{align}
    x_1 \equals^\ad{bb} x_3 \leftarrow x_1 \equals^\ad{bf} x_2 \wedge x_2 \equals^\ad{bb} x_3,
\end{align}
which introduces the adornment $\ad{bf}$. Now, adornment $\ad{bf}$ is more
general than $\ad{bb}$: whenever a pair of constants $\langle c_1,c_2 \rangle$
is a relevant binding for $\equals^\ad{bb}$, constants $c_1$ and $c_2$ are
relevant bindings for $\equals^\ad{bf}$ and $\equals^\ad{fb}$. Adornment
$\ad{bb}$ is thus subsumed by $\ad{bf}$.

Since the predicate $\equals$ is symmetric, whenever a constant $c$ is a
relevant binding for the first argument of $\equals$, constant $c$ is also a
relevant binding for the second argument; thus, introducing distinct magic
predicates $\mgc{\equals}{\ad{bf}}$ and $\mgc{\equals}{\ad{fb}}$ is redundant.
Consequently, Definition~\ref{def:adornment} introduces just one magic
predicate $\mgc{\equals}{\adEqb}$ for $\equals$. Notation $\adEqb$ suggests
that one argument is bound and the other one is free, but without fixing the
order of the arguments.

We next adapt the notion of a sideways information passing strategy to our
setting. As in the equality-free case, a SIPS describes how bindings are passed
sideways through rule bodies.

\begin{definition}\label{def:SIPS}
    A \emph{sideways information passing strategy} is a function that takes a
    well-formed conjunction $C$ of $n$ atoms and a set $V$ of variables.
    Applying the function to $C$ and $V$ produces
    \begin{displaymath}
        \SIPS(C,V) = \langle \langle R_1(\vec t_1), \dots, R_n(\vec t_n) \rangle, \langle \gamma_1, \dots, \gamma_n \rangle \rangle,
    \end{displaymath}
    where ${\langle R_1(\vec t_1), \dots, R_n(\vec t_n) \rangle}$ is a
    permutation of the atoms of $C$, and ${\langle \gamma_1, \dots, \gamma_n
    \rangle}$ is a sequence where each $\gamma_i$ is an adornment for $R_i$.
    For each ${i \in \{ 1, \dots, n \}}$, property \eqref{eq:SIPS:1} must hold.
    \begin{align}
        \vars{\vec t_i^{\gamma_i}} \subseteq V \cup \bigcup \Big\{ \vars{\vec t_k} \mid 1 \leq k < i \Big \} \label{eq:SIPS:1}
    \end{align}
    Furthermore, for each ${i \in \{ 1, \dots, n \}}$ such that $R_i(\vec t_i)$
    is of the form ${t_i^1 \equals t_i^2}$, and for each ${j \in \{ 1, 2 \}}$
    such that the $j$-th element of $\gamma_i$ is $\ad{b}$, property
    \eqref{eq:SIPS:2} must hold.
    \begin{align}
        \vars{t_i^j} \subseteq \bigcup \Big\{ \vars{\vec t_k} \mid 1 \leq k < i \text{ and predicate } R_k \text{ is relational} \Big \} \label{eq:SIPS:2}
    \end{align}
\end{definition}

A SIPS is given a conjunction $C$ corresponding to a rule body, and a set of
variables $V$ that will be bound when the body is evaluated (i.e., variables
$V$ will be passed as bindings from the head). The SIPS reorders the conjuncts
of $C$ and adorns each atom's predicate to indicate which arguments of the atom
are bound. The result of a SIPS must satisfy two conditions. The condition in
equation \eqref{eq:SIPS:1} captures the essence of sideways information
passing: each bound variable that is passed sideways to an atom $R_i(\vec t_i)$
must occur in either $V$ or a preceding (relational or equality) atom $R_k(\vec
t_k)$. The condition in equation \eqref{eq:SIPS:2} requires that each variable
bound in an equality atom also occurs in a preceding relational atom. This is
needed to ensure termination of the fixpoint computation of the transformed
program, but the details are quite technical and rely on the exact formulation
of our approach; thus, we defer a detailed discussion to
Example~\ref{ex:magic:termination:2}. For now, we just observe that this
requirement can always be satisfied when conjunction $C$ is well formed: each
variable occurring in an equality atom in $C$ also occurs in a relational atom,
so we can always reorder all relevant relational atoms before the respective
equality atoms.

With these definitions in place, we present our variant of the magic sets
transformation in Algorithm~\ref{alg:magic}. The algorithm takes as input a
well-formed program $P$, and it outputs a transformed program $\R$. In
line~\ref{alg:magic:init}, the algorithm initialises the `ToDo' set
$\mathcal{T}$ with the magic predicate $\mgc{\predQ}{\alpha}$ for $\alpha$
where all positions are free, and it initialises $\R$ to contain the rule
$\mgc{\predQ}{\alpha} \leftarrow$ with the empty body; thus, the resulting
program will entail all facts of the form $\predQ(\vec a)$ entailed by $P$. The
algorithm then enters a loop
(lines~\ref{alg:magic:T:start}--\ref{alg:magic:T:end}) where it successively
processes a magic predicate from $\mathcal{T}$. The set $\Di{}$ ensures that
each magic predicate is processed only once. In each iteration, the algorithm
selects a magic predicate $\mgc{R}{\alpha}$ to process
(line~\ref{alg:magic:R:start}), and it processes each rule ${r \in P \cup
\DOM{P}}$ containing $R$ in the head
(lines~\ref{alg:magic:r:start}--\ref{alg:magic:r:end}). If $R$ is the equality
predicate, then $\alpha$ is $\adEqb$ so the rule $r$ is processed by adorning
$\head{r}$ with both $\ad{bf}$ and $\ad{fb}$
(lines~\ref{alg:magic:process:bf}--\ref{alg:magic:process:fb}); otherwise, $r$
is processed by adorning the head of $r$ by $\alpha$
(line~\ref{alg:magic:process}). Either way, the algorithm extends the result
set $\R$ with the modified rule for $r$ (line~\ref{alg:magic:mod-rule}),
consults the SIPS to determine how to propagate the bindings among the atoms of
$r$ (line~\ref{alg:magic:SIPS}), generates the magic rule for each body atom of
$r$ that also occurs in $P$ in a rule head (line~\ref{alg:magic:magic-rule}),
and adds all freshly generated magic predicates to $\mathcal{T}$ for further
processing (line~\ref{alg:magic:add-S}). In line~\ref{alg:magic:S}, the
algorithm takes into account that $\mgc{\equals}{\adEqb}$ is the magic
predicate for $\equals$ adorned by $\ad{bf}$ or $\ad{fb}$.

If we wanted to use the standard magic sets transformation, we should apply it
to all the rules of ${P \cup \SG{P}}$. Now assume that $P$ contains a rule ${r
= C(x) \leftarrow A(x) \wedge x \equals x' \wedge B(x')}$, and that $r$ is
matched to an instance using a substitution $\sigma$ where ${\sigma(x) =
\sigma(x') = t}$---that is, the equality atom ${x \equals x'}$ is matched to a
fact of the form ${t \equals t}$. This fact can be produced by the reflexivity
rule \eqref{eq:ref} and the domain rule \eqref{eq:dom-R} for an arbitrary
predicate $R$ occurring in $P$. Therefore, when applied to ${P \cup \SG{P}}$,
the standard magic sets algorithm would process $r$, the reflexivity rule
\eqref{eq:ref}, and the domain rules \eqref{eq:dom-R} for each predicate $R$.
Consequently, the resulting program would necessarily `touch' all predicates
from the input program and would thus likely be large.

Line~\ref{alg:magic:r:start} of Algorithm~\ref{alg:magic} addresses the issue
from the previous paragraph by excluding the reflexivity rule from processing
and thus considerably reducing the size of the result. This is possible because
the rules of $P$ are well formed. In the rule $r$ from the previous paragraph,
variables $x$ and $x'$ of equality ${x \equals x'}$ occur in relational atoms
$A(x)$ and $B(x')$. Thus, if ${x \equals x'}$ is matched to ${t \equals t}$,
atoms $A(x)$ and $B(x')$ are matched to $A(t)$ and $B(t)$; but then, we can
recover ${t \equals t}$ using the domain rules for $A$ and $B$ only, without
considering any other predicate in the input program.

In fact, the symmetry, the transitivity, and the $\D$-restricted functional
reflexivity rules are not processed in line~\ref{alg:magic:r:start} either: the
resulting program $\R$ will be considered together with $\SG{\R}$, so there is
no point in adding the modified rules to $\R$. The algorithm only needs the
magic rules obtained from the rules of $\SG{P}$: line~\ref{alg:magic:eq:mgc:1}
introduces the magic rule obtained by annotating the head of the transitivity
rule \eqref{eq:trans} with $\ad{bf}$ or $\ad{fb}$, and
lines~\ref{alg:magic:eq:mgc:2}--\ref{alg:magic:eq:mgc:4} introduce the magic
rules obtained by annotating the head of the $\D$-restricted functional
reflexivity rules \eqref{eq:Dfnref}.

Finally, the magic rules produced in lines~\ref{alg:magic:eq:mgc:3}
and~\ref{alg:magic:eq:mgc:4} contain the domain predicate $\D$ in the body;
hence, if at least one such rule is produced, the domain rules need to be
considered in line~\ref{alg:magic:r:start} to produce the magic rules that
identify the relevant bindings for the $\D$ predicate. To ensure that the
domain rules are indeed analysed, the magic predicate $\mgc{\D}{\ad{b}}$ is
added to $\mathcal{T}$ in line~\ref{alg:magic:add-D} whenever the $\equals$
predicate is processed and the input program $P$ contains at least one true
function symbol. In other words, the domain rules are not considered if $P$ is
obtained from first-order dependencies.

\begin{algorithm}[tb]
\caption{$\magic{P}$}\label{alg:magic}
\begin{small}
\begin{algorithmic}[1]\footnotesize
    \State $\Di{} \defeq \mathcal{T} \defeq \{ \mgc{\predQ}{\alpha} \}$ and $\R \defeq \{ \mgc{\predQ}{\alpha} \leftarrow \}$, where $\alpha = \ad{f} \cdots \ad{f}$                    \label{alg:magic:init}
    \While{$\mathcal{T} \neq \emptyset$}                                                                                                                                                \label{alg:magic:T:start}
        \State \textbf{choose and remove} some $\mgc{R}{\alpha}$ from $\mathcal{T}$                                                                                                     \label{alg:magic:R:start}
        \For{\textbf{each} ${r \in P \cup \DOM{P}}$ such that $\head{r} = R(\vec t)$}                                                                                                   \label{alg:magic:r:start}
            \If{$R = {\equals}$}                                                    \Comment{Note that $\alpha = \adEqb$}
                \State $\process(r,\alpha,\ad{bf})$                                                                                                                                     \label{alg:magic:process:bf}
                \State $\process(r,\alpha,\ad{fb})$                                                                                                                                     \label{alg:magic:process:fb}
                \If{$\FR{P} \neq \emptyset$ and $\mgc{\D}{\ad{b}} \not\in \Di{}$}
                    \textbf{add} $\mgc{\D}{\ad{b}}$ to $\mathcal{T}$ and $\Di{}$                                                                                                        \label{alg:magic:add-D}
                \EndIf
            \Else
                \State $\process(r,\alpha,\alpha)$                                                                                                                                      \label{alg:magic:process}
            \EndIf
        \EndFor                                                                                                                                                                         \label{alg:magic:r:end}
    \EndWhile                                                                                                                                                                           \label{alg:magic:T:end}
    \If{$\mgc{\equals}{\adEqb} \in \Di{}$}                                                                                                                                              \label{alg:magic:eq:start}
        \State \textbf{add} $\mgc{\equals}{\adEqb}(x_2) \leftarrow \mgc{\equals}{\adEqb}(x_1) \wedge x_1 \equals x_2$ to $\R$                                                           \label{alg:magic:eq:mgc:1}
        \For{\textbf{each} true function symbol $f$ in $P$ of arity $n$ and \textbf{each} $i \in \{ 1, \dots, n \}$}
            \State \textbf{add} $\mgc{\D}{\ad{b}}(x_i) \leftarrow \mgc{\equals}{\adEqb}(f(x_1,\dots,x_n))$ to $\R$                                                                      \label{alg:magic:eq:mgc:2}
            \State \textbf{add} $\mgc{\equals}{\adEqb}(x_i) \leftarrow \mgc{\equals}{\adEqb}(f(x_1,\dots,x_n)) \wedge \D(x_i)$ to $\R$                                                  \label{alg:magic:eq:mgc:3}
            \State \textbf{add} $\mgc{\D}{\ad{b}}(x_i') \leftarrow \mgc{\equals}{\adEqb}(f(x_1,\dots,x_n)) \wedge \D(x_i) \wedge x_i \equals x_i'$ to $\R$                              \label{alg:magic:eq:mgc:4}
        \EndFor
    \EndIf                                                                                                                                                                              \label{alg:magic:eq:end}
    \State \Return $\R$                                                                                                                                                                 \label{alg:magic:return}
    \Statex
    \Procedure{$\process$}{$r, \alpha, \beta$} where $\head{r} = R(\vec t)$                                                                                                             \label{alg:magic:fn-process}
        \If{$r \not\in \DOM{P}$}
            \textbf{add} $\head{r} \leftarrow \mgc{R}{\alpha}(\vec t^\beta) \wedge \body{r}$ to $\R$                                                                                    \label{alg:magic:mod-rule}
        \EndIf
        \State $\langle \langle R_1(\vec t_1), \dots, R_n(\vec t_n) \rangle, \langle \gamma_1, \dots, \gamma_n \rangle \rangle \defeq \SIPS(\body{r},\vars{\vec t^\beta})$              \label{alg:magic:SIPS}
        \For{\textbf{each} $1 \leq i \leq n$ such that $R_i = {\equals}$ or $R_i$ occurs in $P$ in a rule head}                                                                         \label{alg:magic:body:start}
            \State \textbf{if} $R_i = {\equals}$ and $\gamma_i \in \{ \ad{bf}, \ad{fb} \}$ \textbf{then} $S \defeq \mgc{\equals}{\adEqb}$ \textbf{else} $S \defeq \mgc{R_i}{\gamma_i}$  \label{alg:magic:S}
            \State \textbf{add} $S(\vec t_i^{\gamma_i}) \leftarrow \mgc{R}{\alpha}(\vec t^\beta) \wedge R_1(\vec t_1) \wedge \dots \wedge R_{i-1}(\vec t_{i-1})$ to $\R$                \label{alg:magic:magic-rule}
            \If{$S \not \in \Di{}$}
                \textbf{add} $S$ to $\mathcal{T}$ and $\Di{}$                                                                                                                           \label{alg:magic:add-S}
            \EndIf
        \EndFor                                                                                                                                                                         \label{alg:magic:body:end}
    \EndProcedure
\end{algorithmic}
\end{small}
\hrule
\begin{tabular}{p{0.95\textwidth}}
    \textbf{Note:} Equality atoms of the form $t_1 \equals t_2$ are
    abbreviations for ${\equals}(t_1,t_2)$, so $R(\vec t)$ and $R_i(\vec t_i)$
    can be equality atoms.
\end{tabular}
\end{algorithm}

As we already mentioned, programs ${P \cup \SG{P}}$ and ${\R \cup \SG{\R}}$
entail the same query answers, but there is an important detail: according to
Definition~\ref{def:sg}, program $\SG{\R}$ would need to contain the domain
rules \eqref{eq:dom-R} even for the magic predicates. This would not be a
problem from a correctness point of view; however, as we discuss in
Example~\ref{ex:magic:termination:1}, this can adversely affect the termination
of the fixpoint computation for ${\R \cup \SG{\R}}$. Therefore, in
Definition~\ref{def:magic-SG} we refine the definition of $\SG{\R}$ and
stipulate that the domain rules \eqref{eq:dom-R} are never instantiated for
magic predicates. Theorem~\ref{thm:magic} shows that the query answers
nevertheless remain preserved.

\begin{definition}\label{def:magic-SG}
    For $P$ a well-formed program and ${\R = \magic{P}}$, program $\SG{\R}$ is
    defined as in Definition~\ref{def:sg} but without instantiating the domain
    rules of the form \eqref{eq:dom-R} for the magic predicates.
\end{definition}

\begin{restatable}{theorem}{thmMagic}\label{thm:magic}
    For each well-formed program $P$ defining a query $\predQ$, each base
    instance $B$, each fact of the form ${\predQ(\vec a)}$, and ${\R =
    \magic{P}}$, it is the case that ${P \cup \SG{P} \cup B \models \predQ(\vec
    a)}$ if and only if ${\R \cup \SG{\R} \cup B \models \predQ(\vec a)}$.
\end{restatable}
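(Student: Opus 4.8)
The plan is to prove the two directions of Theorem~\ref{thm:magic} separately: the $(\Leftarrow)$ direction (soundness of the transformation) by a model-extension argument, and the $(\Rightarrow)$ direction (completeness) by a simulation over least Herbrand models. Soundness is routine; the completeness direction, and in particular the interaction with the optimised equality handling, is where essentially all the difficulty lies. Note that throughout we reason with $\models$, since $\equals$ is an ordinary predicate in $P$, $\R$, and the singularisation sets.

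For soundness, I would take an arbitrary model $M$ of ${P \cup \SG{P} \cup B}$ and extend it to $M'$ over the signature of ${\R \cup \SG{\R}}$ by interpreting every magic predicate as the full relation over $\Delta^{M}$ and leaving everything else unchanged. Each modified rule ${\head{r} \leftarrow \mgc{R}{\alpha}(\vec t^\beta) \wedge \body{r}}$ is then satisfied because its guard is vacuously true, so it reduces to the rule $r$ of $P$, which $M$ satisfies; the magic rules and the seed rule ${\mgc{\predQ}{\alpha} \leftarrow}$ have a magic atom in the head and hold trivially. Since magic predicates occur in no rule of $\SG{\R}$ (the domain rules \eqref{eq:dom-R} are omitted for them by Definition~\ref{def:magic-SG}, and the remaining equality axioms mention only $\equals$, $\D$, and the true function symbols shared with $\SG{P}$), the set $\SG{\R}$ restricted to the non-magic signature coincides with $\SG{P}$, so ${M' \models \SG{\R}}$. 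As $M'$ agrees with $M$ on $\predQ$, any answer entailed by ${\R \cup \SG{\R} \cup B}$ is entailed by ${P \cup \SG{P} \cup B}$.

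For completeness, set ${M = \fixpoint{(P \cup \SG{P})}{B}}$ and ${N = \fixpoint{(\R \cup \SG{\R})}{B}}$ and assume ${\predQ(\vec a) \in M}$. The core is an induction on the height of a derivation of a fact in $M$ establishing the invariant: whenever ${R(\vec d) \in M}$ is \emph{requested} in $N$ — i.e.\ the magic fact ${\mgc{R}{\alpha}(\vec d^{\,\alpha}) \in N}$ for an adornment $\alpha$ processed by the algorithm, with predicates never occurring in a head treated as always requested since such facts lie in ${B \subseteq N}$ — then ${R(\vec d) \in N}$. In the inductive step, if the topmost rule $r$ lies in ${P \cup \DOM{P}}$, I would use the guard together with the SIPS ordering ${\langle R_1(\vec t_1), \dots, R_n(\vec t_n)\rangle}$ of Definition~\ref{def:SIPS} and process the body left to right: condition \eqref{eq:SIPS:1} guarantees that the bound arguments of each ${R_i(\vec t_i)}$ are determined by the guard and the preceding atoms, so the magic rule for $R_i$ fires in $N$ to yield ${\mgc{R_i}{\gamma_i}(\vec d_i^{\,\gamma_i})}$; the induction hypothesis at smaller height then places ${R_i(\vec d_i) \in N}$, which grounds the guard for the next magic rule. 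Once all body atoms are in $N$, the modified rule fires and yields ${R(\vec d) \in N}$; the seed fact ${\mgc{\predQ}{\ad{f}\cdots\ad{f}} \in N}$ initiates the process for the query.

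The main obstacle is the treatment of equality and of $\D$ under the three optimisations, since here equality and $\D$ facts in $N$ are produced \emph{unconditionally} by $\SG{\R}$ — reflexivity \eqref{eq:ref}, symmetry \eqref{eq:sym}, transitivity \eqref{eq:trans}, and $\D$-restricted functional reflexivity \eqref{eq:Dfnref} — rather than by magic-guarded rules, and the algorithm never processes \eqref{eq:ref} and processes $\ST$ and $\FR{P}$ only to emit the magic rules of lines~\ref{alg:magic:eq:mgc:1}--\ref{alg:magic:eq:mgc:4}. I would isolate the equality/$\D$ fragment and argue that $N$ is closed under these axioms over whatever relational facts it contains, so no magic fact is needed to \emph{derive} an equality once its constituent relational and $\D$ facts are present; the magic predicates ${\mgc{\equals}{\adEqb}}$ and ${\mgc{\D}{\ad{b}}}$ only \emph{request} those constituents. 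Three points then need checking. First, a single symmetric ${\mgc{\equals}{\adEqb}}$ suffices because symmetry is in $\SG{\R}$ and line~\ref{alg:magic:eq:mgc:1} propagates relevance along $\equals$, so a request for either argument covers both. Second, omitting reflexivity is safe because $P$ is well-formed (Definition~\ref{def:well-formed}) and the SIPS obeys \eqref{eq:SIPS:2}: any body equality atom matched to ${t \equals t}$ has its variables bound by a preceding relational atom, so ${t \equals t}$ is recovered in $N$ from that relational fact via \eqref{eq:dom-R} and \eqref{eq:ref}, never requiring a magic-driven reflexivity. Third, restricting domain rules to non-magic predicates and $\D$-restricting functional reflexivity loses no relevant $\D$ fact, since every ${\D(t)}$ needed to fire \eqref{eq:ref} or \eqref{eq:Dfnref} stems from a relational fact already shown to be in $N$, and lines~\ref{alg:magic:eq:mgc:2}--\ref{alg:magic:eq:mgc:4} together with the insertion of ${\mgc{\D}{\ad{b}}}$ request exactly the arguments of the relevant functional terms. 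Weaving these equality cases into the height induction — ordering the simultaneous generation of equality magic facts, relational facts, and $\D$ facts so that the induction hypothesis is always applicable — is the delicate part and the place most likely to demand careful bookkeeping.
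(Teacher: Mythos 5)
Your proposal is correct and takes essentially the same approach as the paper: a routine soundness direction (the paper argues via rule subsumption over least Herbrand models where you extend an arbitrary model by interpreting the magic predicates as full relations, but the content is identical), and for completeness the standard magic-sets invariant --- a fact derivable from ${P \cup \SG{P} \cup B}$ whose magic version is requested is derivable from ${\R \cup \SG{\R} \cup B}$ --- proved by induction over the fixpoint computation using the SIPS ordering, with exactly the three equality-specific cases you identify (the single symmetric predicate $\mgc{\equals}{\adEqb}$, recovery of reflexive equalities ${t \equals t}$ from preceding relational atoms via well-formedness, condition \eqref{eq:SIPS:2}, the domain rules and reflexivity, and the $\D$/functional-reflexivity bookkeeping). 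The one detail you elide that the paper spells out is the final reconciliation of the domain rules instantiated for $P$ versus those instantiated only for $\R$, which it dispatches by observing that the extra domain rules can only yield equalities on which the well-formed rules of $\R$ cannot fire.
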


Theorem~\ref{thm:magic} holds regardless of whether the fixpoint of ${P \cup
\SG{P} \cup B}$ is finite. If, however, it is, then
Theorem~\ref{thm:magic:termination} guarantees that the fixpoint of ${\R \cup
\SG{\R} \cup B}$ is finite too, so the fixpoint computation terminates. This is
not a trivial observation: the rules of $P$ can contain function symbols in the
body, which are transformed into function symbols in the head of the magic
rules. Thus, $\R$ can contain rules with function symbols in the head at
positions not originally found in $P$, so it is not obvious that the fixpoint
of ${\R \cup \SG{\R} \cup B}$ is necessarily finite.

\begin{restatable}{theorem}{thmMagicTermination}\label{thm:magic:termination}
    For each well-formed program $P$, each base instance $B$, and programs ${\R
    = \magic{P}}$, ${P_1 = P \cup \SG{P}}$, and ${P_2 = \R \cup \SG{\R}}$, if
    ${\fixpoint{P_1}{B}}$ is finite, then ${\fixpoint{P_2}{B}}$ is finite as
    well.
\end{restatable}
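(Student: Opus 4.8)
The plan is to bound the depth of the ground terms occurring in $\fixpoint{P_2}{B}$ by a fixed constant; since $P_2$ uses only finitely many constants and function symbols, a Herbrand fixpoint of bounded term depth contains only finitely many facts. I would partition $\fixpoint{P_2}{B}$ into its \emph{non-magic} facts, over the original relational, equality, and $\D$ predicates, and its \emph{magic} facts, over the predicates $\mgc{R}{\alpha}$, $\mgc{\equals}{\adEqb}$, and $\mgc{\D}{\ad{b}}$ introduced by Algorithm~\ref{alg:magic}.

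First I would show that the non-magic part of $\fixpoint{P_2}{B}$ is contained in $\fixpoint{P_1}{B}$, and is hence finite. This is a straightforward induction on the fixpoint stages: each modified rule $\head{r} \leftarrow \mgc{R}{\alpha}(\vec t^\beta) \wedge \body{r}$ contains the original body $\body{r}$, so whenever it fires the original rule $r \in P$ fires in $P_1$ with the same head; the magic rules derive only magic facts; and the rules of $\SG{\R}$ agree with those of $\SG{P}$ on the non-magic predicates. Definition~\ref{def:magic-SG} is used here: since $\SG{\R}$ omits the domain rules \eqref{eq:dom-R} for magic predicates, no $\D$-fact is ever derived from a magic fact, so the $\D$-facts of $\fixpoint{P_2}{B}$ remain precisely those justified by relational facts already in $\fixpoint{P_1}{B}$. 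Let $d$ denote the maximum term depth in the finite instance $\fixpoint{P_1}{B}$.

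The crux is then an induction showing that every term in $\fixpoint{P_2}{B}$ has depth at most $d+1$. Magic atoms $\mgc{R_i}{\gamma_i}(\vec t_i^{\gamma_i})$ for relational $R_i$ are unproblematic: $\vec t_i$ has depth zero in a well-formed body, so their arguments are merely copied from values already bound in the rule body and inherit the inductive bound. The delicate case is a body equality $t_i^1 \equals t_i^2$, whose magic rule yields $\mgc{\equals}{\adEqb}(t_i^j)$ with $t_i^j$ of depth at most one. Here condition \eqref{eq:SIPS:2} of Definition~\ref{def:SIPS} is essential: it forces every variable of the wrapped term $t_i^j$ to occur in a \emph{preceding relational} atom, whose matching fact lies in $\fixpoint{P_1}{B}$ and therefore binds the variable to a term of depth at most $d$, so wrapping in one function layer gives depth at most $d+1$. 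Without \eqref{eq:SIPS:2} such a variable could be bound only through the (possibly depth-$(d+1)$) magic guard, and repeated function wrapping would then make the depth grow without bound.

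Finally I would verify that the equality-axiom magic rules preserve the bound. The transitivity magic rule of line~\ref{alg:magic:eq:mgc:1} propagates a binding along an equality fact $x_1 \equals x_2 \in \fixpoint{P_1}{B}$, which relates only terms of depth at most $d$; in particular a depth-$(d+1)$ magic equality fact never occurs in any equality fact and is thus inert for transitivity, so the chain cannot lengthen. The function-symbol magic rules of lines~\ref{alg:magic:eq:mgc:2}--\ref{alg:magic:eq:mgc:4} either extract a proper subterm of an argument $f(x_1,\dots,x_n)$ or propagate along a body equality, in either case producing terms of depth at most $d$. Combining the two inductions yields the depth bound $d+1$, and finiteness follows. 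I expect the main obstacle to be the bookkeeping of the second induction: one must simultaneously track the depths produced by the $\process$ magic rules and by all of the equality-axiom magic rules, and confirm that no chain of derivations ever pushes the depth past $d+1$---with \eqref{eq:SIPS:2} and Definition~\ref{def:magic-SG} being exactly the two ingredients that rule out the ways in which this could otherwise fail.
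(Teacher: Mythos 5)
Your proposal is correct and follows essentially the same route as the paper's proof: both bound the depth of non-magic facts by the maximum depth $M$ occurring in $\fixpoint{P_1}{B}$ (via the observation that the modified and $\SG{\R}$ rules only rederive facts of $\fixpoint{P_1}{B}$), bound magic facts by $M+1$ by a stage-wise induction, and isolate condition \eqref{eq:SIPS:2} of Definition~\ref{def:SIPS} and Definition~\ref{def:magic-SG} as precisely the two ingredients preventing unbounded function nesting through the equality magic rules and the domain rules, respectively. Your additional observations---that depth-$(M{+}1)$ magic equality facts are inert for the transitivity magic rule, and that the rules of lines~\ref{alg:magic:eq:mgc:2}--\ref{alg:magic:eq:mgc:4} only extract subterms or propagate along non-magic facts---are the same case analysis the paper performs, just phrased slightly differently.
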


Example~\ref{ex:mgc} illustrates how Algorithm~\ref{alg:magic} is applied to
the rules produced by our relevance analysis algorithm in
Example~\ref{ex:relevance}.

\begin{example}\label{ex:mgc}
We now apply Algorithm~\ref{alg:magic} to rules
\eqref{ex:rel:Q}--\eqref{ex:rel:U-eq} from Example~\ref{ex:relevance}. We use
horizontal lines to separate the rules produced in each invocation of
$\mathsf{process}$. Rule \eqref{ex:mgc:start} is produced in
line~\ref{alg:magic:init}. Next, processing the magic predicate
$\mgc{\predQ}{\ad{f}}$ produces rules \eqref{ex:mgc:Q}--\eqref{ex:mgc:Q:R},
where we assume that the SIPS reorders the body of rule \eqref{ex:rel:Q} as
shown in rule \eqref{ex:mgc:Q}. Moreover, processing the magic predicates
$\mgc{B}{\ad{ff}}$, $\mgc{A}{\ad{b}}$, $\mgc{R}{\ad{bf}}$, and
$\mgc{U}{\ad{ff}}$ produces rules \eqref{ex:mgc:U-Bf}--\eqref{ex:mgc:U-Bf:U},
\eqref{ex:mgc:C-Af}, \eqref{ex:mgc:S-R:1}, and \eqref{ex:mgc:C-Uf},
respectively. Now consider the magic predicate $\mgc{\equals}{\adEqb}$.
Predicate $\equals$ occurs in the head of rule \eqref{ex:rel:U-eq} so
$\mathsf{process}$ is called for this rule twice: first with ${\beta =
\ad{bf}}$, and then with ${\beta = \ad{fb}}$. A maximal SIPS would thus adorn
$U(x_1,x_2)$ with $\ad{bf}$ and $\ad{fb}$, respectively. However, rules
\eqref{ex:mgc:U-Bf} and \eqref{ex:mgc:U-Bf:U} already use the adornment
$\ad{ff}$ for $U$, meaning that all facts with the $U$ predicate should be
computed. Thus, it is more efficient to not propagate the bindings for $x_1$
and $x_2$ in rule \eqref{ex:rel:U-eq} to $U(x_1,x_2)$, and for the two
invocations of $\mathsf{process}$ to produce rules
\eqref{ex:mgc:U-eq:1}--\eqref{ex:mgc:U-eq:1:U} and
\eqref{ex:mgc:U-eq:2}--\eqref{ex:mgc:U-eq:2:U}, respectively.
\begin{align}
    \mgc{\predQ}{\ad{f}}            & \leftarrow                                                                                                                        \label{ex:mgc:start}    \\
    \hline
    \predQ(x_1)                     & \leftarrow \mgc{\predQ}{\ad{f}} \wedge B(x_3') \wedge x_3 \equals x_3' \wedge A(x_3) \wedge f(x_1) \equals x_3 \wedge R(x_1,x_2)  \label{ex:mgc:Q}        \\
    \mgc{B}{\ad{f}}                 & \leftarrow \mgc{\predQ}{\ad{f}}                                                                                                   \label{ex:mgc:Q:B}      \\
    \mgc{\equals}{\adEqb}(x_3')     & \leftarrow \mgc{\predQ}{\ad{f}} \wedge B(x_3')                                                                                    \label{ex:mgc:Q:eq1}    \\
    \mgc{A}{\ad{b}}(x_3)            & \leftarrow \mgc{\predQ}{\ad{f}} \wedge B(x_3') \wedge x_3 \equals x_3'                                                            \label{ex:mgc:Q:A}      \\
    \mgc{\equals}{\adEqb}(x_3)      & \leftarrow \mgc{\predQ}{\ad{f}} \wedge B(x_3') \wedge x_3 \equals x_3' \wedge A(x_3)                                              \label{ex:mgc:Q:eq2}    \\
    \mgc{R}{\ad{bf}}(x_1)           & \leftarrow \mgc{\predQ}{\ad{f}} \wedge B(x_3') \wedge x_3 \equals x_3' \wedge A(x_3) \wedge f(x_1) \equals x_3                    \label{ex:mgc:Q:R}      \\
    \hline
    B(f(x_2))                       & \leftarrow \mgc{B}{\ad{f}} \wedge U(x_1,x_2)                                                                                      \label{ex:mgc:U-Bf}     \\
    \mgc{U}{\ad{ff}}                & \leftarrow \mgc{B}{\ad{f}}                                                                                                        \label{ex:mgc:U-Bf:U}   \\
    \hline
    A(f(x))                         & \leftarrow \mgc{A}{\ad{b}}(f(x)) \wedge C(x)                                                                                      \label{ex:mgc:C-Af}     \\
    \hline
    R(x_1,g(x_1))                   & \leftarrow \mgc{R}{\ad{bf}}(x_1) \wedge S(x_1,x_2)                                                                                \label{ex:mgc:S-R:1}    \\
    \hline
    U(x,f(x))                       & \leftarrow \mgc{U}{\ad{ff}} \wedge C(x)                                                                                           \label{ex:mgc:C-Uf}     \\
    \hline
    x_1 \equals x_2                 & \leftarrow \mgc{\equals}{\adEqb}(x_1) \wedge U(x_1,x_2)                                                                           \label{ex:mgc:U-eq:1}   \\
    \mgc{U}{\ad{ff}}                & \leftarrow \mgc{\equals}{\adEqb}(x_1)                                                                                             \label{ex:mgc:U-eq:1:U} \\
    \hline
    x_1 \equals x_2                 & \leftarrow \mgc{\equals}{\adEqb}(x_2) \wedge U(x_1,x_2)                                                                           \label{ex:mgc:U-eq:2}   \\
    \mgc{U}{\ad{ff}}                & \leftarrow \mgc{\equals}{\adEqb}(x_2)                                                                                             \label{ex:mgc:U-eq:2:U}
\end{align}

The input program contains the true function symbol $f$. Hence, the
$\D$-restricted functional reflexivity rule \eqref{eq:Dfnref} for $f$ can also
derive facts involving the equality predicate, and we need to take this rule
into account when processing $\mgc{\equals}{\adEqb}$ in
lines~\ref{alg:magic:process:bf}--\ref{alg:magic:add-D}. Binding propagation
for such rules is always the same, so the relevant rules are added directly in
lines~\ref{alg:magic:eq:mgc:2}--\ref{alg:magic:eq:mgc:4}. Furthermore, the
magic predicate $\mgc{\D}{\ad{b}}$ is added to the set $\mathcal{T}$ in
line~\ref{alg:magic:add-D}, which ensures that the relevant domain rules are
processed too. Domain rules will be added in their entirety to the result of
the magic transformation, so only the relevant magic rules
\eqref{ex:mgc:dom-B}--\eqref{ex:mgc:dom-R:2} are produced to ensure that the
bindings for $\D$ are correctly propagated to the relevant predicates. In rules
\eqref{ex:mgc:dom-B} and \eqref{ex:mgc:dom-U}, we again take into account that
we have already processed $\mgc{B}{\ad{f}}$ and $\mgc{U}{\ad{ff}}$, so it would
be inefficient to additionally process $\mgc{B}{\ad{b}}$, $\mgc{U}{\ad{bf}}$,
or $\mgc{U}{\ad{fb}}$. However, rule \eqref{ex:mgc:dom-R:2} introduces the
magic predicate $\mgc{R}{\ad{fb}}$, which in turn introduces rule
\eqref{ex:mgc:S-R:2}.
\begin{align}
    \mgc{B}{\ad{f}}                 & \leftarrow \mgc{\D}{\ad{b}}(x)                        \label{ex:mgc:dom-B}   \\
    \hline
    \mgc{A}{\ad{b}}(x)              & \leftarrow \mgc{\D}{\ad{b}}(x)                        \label{ex:mgc:dom-A}   \\
    \hline
    \mgc{U}{\ad{ff}}                & \leftarrow \mgc{\D}{\ad{b}}(x)                        \label{ex:mgc:dom-U}   \\
    \hline
    \mgc{R}{\ad{bf}}(x)             & \leftarrow \mgc{\D}{\ad{b}}(x)                        \label{ex:mgc:dom-R:1} \\
    \hline
    \mgc{R}{\ad{fb}}(x)             & \leftarrow \mgc{\D}{\ad{b}}(x)                        \label{ex:mgc:dom-R:2} \\
    \hline
    R(x_1,g(x_1))                   & \leftarrow \mgc{R}{\ad{fb}}(g(x_1)) \wedge S(x_1,x_2) \label{ex:mgc:S-R:2}
\end{align}

At this point all relevant bindings have been processed, so the loop in
lines~\ref{alg:magic:T:start}--\ref{alg:magic:T:end} finishes. Since
$\mgc{\equals}{\adEqb}$ was processed, the following rules are added in
lines~\ref{alg:magic:eq:mgc:1}--\ref{alg:magic:eq:mgc:4}. In particular, rule
\eqref{ex:mgc:eq} captures binding propagation through the transitivity rule
\eqref{eq:trans}, and rules \eqref{ex:mgc:f:1}--\eqref{ex:mgc:f:3} capture
binding propagation through the $\D$-restricted functional reflexivity rule
\eqref{eq:Dfnref} for $f$.
\begin{align}
    \mgc{\equals}{\adEqb}(x_2)      & \leftarrow \mgc{\equals}{\adEqb}(x_1) \wedge x_1 \equals x_2                      \label{ex:mgc:eq}  \\
    \mgc{\D}{\ad{b}}(x_1)           & \leftarrow \mgc{\equals}{\adEqb}(f(x_1))                                          \label{ex:mgc:f:1} \\
    \mgc{\equals}{\adEqb}(x_1)      & \leftarrow \mgc{\equals}{\adEqb}(f(x_1)) \wedge \D(x_1)                           \label{ex:mgc:f:2} \\
    \mgc{\D}{\ad{b}}(x_1')          & \leftarrow \mgc{\equals}{\adEqb}(f(x_1)) \wedge \D(x_1) \wedge x_1 \equals x_1'   \label{ex:mgc:f:3}
\end{align}

The result $\R$ of our magic algorithm thus consists of rules
\eqref{ex:mgc:start}--\eqref{ex:mgc:f:3}. Now, it is straightforward to see
that instance ${\fixpoint{\R \cup \SG{\R}}{\exB}}$ contains the fact
$\predQ(a_1)$, but no fact of the form $R(a_i,g(a_i))$ with ${i \geq 2}$. In
other words, our magic sets transformation prunes the derivations of rule
\eqref{ex:rel:S-R} that involve $a_i$ with ${i \geq 2}$ since these do not
contribute to the query answers.
\end{example}

Examples~\ref{ex:magic:termination:1} and~\ref{ex:magic:termination:2} explain
the role that Definition~\ref{def:magic-SG} and condition \eqref{eq:SIPS:2} of
Definition~\ref{def:SIPS} play in the proof of
Theorem~\ref{thm:magic:termination}.

\begin{example}\label{ex:magic:termination:1}
Let $f^i$ abbreviate $i$ applications of $f$; for example, $f^2(c)$ abbreviates
$f(f(c))$. To understand why the domain rules \eqref{eq:dom-R} \emph{should
not} be instantiated for the magic predicates (cf.\
Definition~\ref{def:magic-SG}), note that the rules of ${\R \cup \SG{\R}}$ from
Example~\ref{ex:mgc} derive $\D(f(a_1))$, $\D(f^2(a_1))$, and
$\mgc{\equals}{\adEqb}(f^2(a_1))$; thus, the $\D$-restricted reflexivity rule
\eqref{eq:Dfnref} for $f$ derives ${f^2(a_1) \equals f^3(a_1)}$, and so the
magic rule \eqref{ex:mgc:eq} derives $\mgc{\equals}{\adEqb}(f^3(a))$. Now, if
$\SG{\R}$ contained rule \eqref{eq:dom-m-eq}, we would derive $\D(f^3(a))$, so
rule \eqref{eq:Dfnref} for $f$ would derive ${f^3(a_1) \equals f^4(a_1)}$; but
then, rule \eqref{ex:mgc:eq} would derive $\mgc{\equals}{\adEqb}(f^4(a))$, rule
\eqref{eq:dom-m-eq} would derive $\D(f^4(a))$, and the process would never
terminate.
\begin{align}
    \D(x) \leftarrow \mgc{\equals}{\adEqb}(x) \label{eq:dom-m-eq}
\end{align}

Definition~\ref{def:magic-SG} is thus critical to
Theorem~\ref{thm:magic:termination}. Intuitively, by not instantiating the
domain rules \eqref{eq:dom-R} for the magic predicates, we ensure that each
fact with the $\D$ predicate that is derived by ${\R \cup \SG{\R}}$ is also
derived by ${P \cup \SG{P}}$. Hence, the application of the $\D$-restricted
functional reflexivity axioms is constrained in the same way as discussed in
Section~\ref{sec:answering:singularisation}.
\end{example}

\begin{example}\label{ex:magic:termination:2}
To understand the condition \eqref{eq:SIPS:2} from Definition~\ref{def:SIPS},
consider processing rule \eqref{ex:magic:termination:rule} in
line~\ref{alg:magic:process:bf} (so ${\beta = \ad{bf}}$) where the SIPS keeps
the order of the body atoms unchanged and adorns them with $\ad{bf}$, $\ad{b}$,
and $\ad{b}$, respectively. Line~\ref{alg:magic:magic-rule} then produces rule
\eqref{ex:magic:termination:mag:1}, which makes the fixpoint of the transformed
program infinite whenever a fact of the form $\mgc{\equals}{\adEqb}(t)$ is
derived.
\begin{align}
    x_1 \equals x_2                 & \leftarrow f(x_1) \equals g(x_2) \wedge A(x_1) \wedge B(x_2)  \label{ex:magic:termination:rule} \\
    \mgc{\equals}{\adEqb}(f(x_1))   & \leftarrow \mgc{\equals}{\adEqb}(x_1)                         \label{ex:magic:termination:mag:1}
\end{align}

To address this problem, condition \eqref{eq:SIPS:2} of
Definition~\ref{def:SIPS} requires that, if a term in a body equality is
adorned as bound, then each variable in the term must occur in a preceding
relational atom. Thus, on our example, the SIPS can order the body atoms as in
rule \eqref{ex:magic:termination:rule:1} and adorn them by $\ad{b}$, $\ad{bf}$,
and $\ad{b}$, respectively. Crucially, the term $f(x_1)$ in the equality atom
${f(x_1) \equals g(x_2)}$ is adorned as bound, but its variable $x_1$ occurs in
the preceding relational atom $A(x_1)$; hence, line~\ref{alg:magic:magic-rule}
now produces the magic rule \eqref{ex:magic:termination:mag:2}. Although this
rule is similar to \eqref{ex:magic:termination:mag:1}, atom $A(x_1)$ solves the
termination problem. In particular, all facts derived by the transformed
program with a predicate that is not a magic predicate are also derived by the
input program. Hence, if the depth of the facts $A(t)$ derived by the input
program is bounded by $d$, then the depth of such facts derived by the
transformed program is bounded by $d$ as well; but then, the depth of the facts
derived by rule \eqref{ex:magic:termination:mag:2} is bounded by ${d+1}$.
\begin{align}
    x_1 \equals x_2                 & \leftarrow A(x_1) \wedge f(x_1) \equals g(x_2) \wedge B(x_2)  \label{ex:magic:termination:rule:1} \\
    \mgc{\equals}{\adEqb}(f(x_1))   & \leftarrow \mgc{\equals}{\adEqb}(x_1) \wedge A(x_1)           \label{ex:magic:termination:mag:2}
\end{align}
\end{example}

\subsection{Removal of Constants and Function Symbols in Rule Bodies}\label{sec:answering:defun}

The programs produced by the relevance analysis and/or the magic sets
transformation can be used to answer a query, but treating $\equals$ as an
ordinary predicate can make this process inefficient. As we explained in
Section~\ref{sec:answering:overview}, the main objective of our work is to
optimise the input dependencies so that they can be evaluated with `true'
equality. Towards this goal, we next present two final transformations that
bring the resulting program into a required form.

The magic sets transformation can introduce rules such as \eqref{ex:mgc:S-R:2}
and \eqref{ex:mgc:f:3} containing relational atoms with function symbols in the
body. The chase variant from Section~\ref{sec:so-dependencies:chase} cannot
process such rules, so we next show how to remove the constants and function
symbols from the rule bodies. Intuitively, for each $n$-ary function symbol
$f$, we introduce a fresh $n+1$-ary predicate $\fnpred{f}$, and we introduce
rules that, for each relevant ground term $f(\vec t)$, ensure that $\fnpred{f}$
associates $\vec t$ with $f(\vec t)$. This allows us to rewrite references to
the functional terms in the rule bodies as references to atoms with the
$\fnpred{f}$ predicate. This is formalised in Definition~\ref{def:defun}.

\begin{definition}\label{def:defun}
    Program $\defun{P}$ is obtained from a program $P$ by exhaustively applying
    each of the following steps in the following sequence.
    \begin{enumerate}
        \item In the body of each rule, replace each occurrence of a constant
        $c$ with a fresh variable $z_c$ unique for $c$, add the atom
        ${\fnpred{c}(z_c)}$ to the body, and add the rule ${\fnpred{c}(c)
        \leftarrow}$.

        \item In the body of each rule, replace each occurrence of a term of
        the form ${f(\vec s)}$ with a fresh variable $z_{f(\vec s)}$ unique for
        $f(\vec s)$, and add the atom ${\fnpred{f}(\vec s,z_{f(\vec s)})}$ to
        the rule body.

        \item For each function symbol $f$ considered in the previous step, and
        for each rule $r$ that contains a term of the form $f(\vec t)$ in its
        head, add the rule ${\fnpred{f}(\vec t, f(\vec t)) \leftarrow
        \body{r}}$.

        \item For each true function symbol $f$ occurring in $P$, add the rule
        \begin{align}
            \fnpred{f}(x_1,\dots,x_n,f(x_1,\dots,x_n)) \leftarrow \D(x_1) \wedge x_1 \equals x_1' \wedge \D(x_1') \wedge \dots \wedge \D(x_n) \wedge x_n \equals x_n' \wedge \D(x_n').  \label{eq:defun:Dfnref}
        \end{align}
    \end{enumerate}
\end{definition}

The first step of Definition~\ref{def:defun} eliminates all constants, and the
second step eliminates all functional terms from all relational atoms in rule
bodies. The third and the fourth step axiomatise $\fnpred{f}$: the third step
accounts for the rules of $P$, and the fourth step accounts for the
$\D$-restricted functional reflexivity rules. It is straightforward to see that
all query answers remain preserved.

\begin{proposition}\label{prop:defun}
    For each program $P$, each base instance $B$, each fact $R(\vec t)$ with
    $R$ not of the form $\fnpred{f}$, and ${\R = \defun{P}}$, it is the case
    that ${P \cup \SG{P} \cup B \models R(\vec t)}$ if and only if ${\R \cup
    \SG{\R} \cup B \models R(\vec t)}$.
\end{proposition}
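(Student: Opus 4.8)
The plan is to exploit the fact that $\equals$ is treated as an ordinary predicate throughout, so by the least-Herbrand-model characterisation of entailment for logic programs it suffices to prove that the two fixpoints $J_1 = \fixpoint{P \cup \SG{P}}{B}$ and $J_2 = \fixpoint{\R \cup \SG{\R}}{B}$ agree on every fact whose predicate is not of the form $\fnpred{f}$. Alongside this I would establish an auxiliary characterisation of the fresh predicates. Its \emph{soundness} half, that $\fnpred{f}(\vec t, s) \in J_2$ implies $s = f(\vec t)$, is immediate because every rule introduced in steps~1,~3, and~4 of Definition~\ref{def:defun} carries $f(\vec t)$ syntactically in the last head position. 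Its \emph{completeness} half, that $\fnpred{f}(\vec u, f(\vec u)) \in J_2$ whenever the term $f(\vec u)$ is actually needed, is the substantive part and is proved together with the main agreement by simultaneous induction on the stage of the fixpoint construction.

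For the backward inclusion ($J_2 \to J_1$), take a non-$\fnpred{f}$ fact derived by some rule of $\R$ under a substitution $\tau$. If the rule comes from $\SG{\R}$ it is a reflexivity, symmetry, transitivity, or functional-reflexivity rule (the sets $\Rfl$, $\ST$, and $\FR{\R} = \FR{P}$ are common to $\SG{P}$ and $\SG{\R}$) or a domain dependency for an original predicate, and in each case the inductive agreement on its body atoms yields the head in $J_1$. If it is a transformed rule of $P$, the guard atoms $\fnpred{f}(\vec s, z_{f(\vec s)})$ and $\fnpred{c}(z_c)$ force each fresh variable to equal $f(\vec s\tau)$ respectively $c$ by soundness, so reinstating these values turns the transformed relational and equality atoms back into the body of the original rule of $P$; by induction they already hold in $J_1$, so that rule fires and derives the same head. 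For the forward inclusion ($J_1 \to J_2$) I must instead \emph{supply} the guards: when an original rule fires in $J_1$ and its body mentions $f(\vec s\sigma)$, I need $\fnpred{f}(\vec s\sigma, f(\vec s\sigma)) \in J_2$. If the term occurs in a relational (e.g.\ magic) body atom, then $f(\vec s\sigma)$ is relational in $J_1$ and was produced by a relational head, so the step~3 rule $\fnpred{f}(\vec t, f(\vec t)) \leftarrow \body{r}$ delivers the guard via the forward induction at the earlier stage at which that head fired, with an inner induction on term depth covering the functional subterms that the exhaustive step~2 exposes as further guards. If the term occurs in a body equality atom, safety guarantees that every argument in $\vec s\sigma$ already lies in $\D$, so the step~4 companion rule \eqref{eq:defun:Dfnref} fires and supplies the guard directly, crucially \emph{without} requiring $f(\vec s\sigma)$ itself to lie in $\D$.

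The main obstacle is controlling the domain predicate $\D$. The step~4 rule produces $\fnpred{f}(\vec x, f(\vec x))$ for \emph{every} argument tuple in $\D$, not only for those whose value occurs relationally. If the domain dependencies \eqref{eq:dom-R} were instantiated for the fresh predicates $\fnpred{f}$, then $\fnpred{f}(\vec x, f(\vec x))$ would force $\D(f(\vec x)) \in J_2$; this would both strictly enlarge $\D$ beyond $J_1$ (breaking the agreement, since $\D$ is not of the form $\fnpred{f}$) and re-trigger functional reflexivity on $f(\vec x)$, yielding $\fnpred{f}(f(\vec x), f(f(\vec x)))$ and so on without bound. I therefore read $\SG{\R}$ as \emph{not} instantiating the domain dependencies for the auxiliary predicates $\fnpred{f}$, exactly as Definition~\ref{def:magic-SG} does for the magic predicates. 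Under this reading the only generators of $\D$ are the domain dependencies for the original predicates, so the crucial lemma $\D^{J_1} = \D^{J_2}$ follows from the inductive agreement on relational facts, and the constant case (step~1) is the arity-zero instance of the same argument, with $\D(c) \in J_1$ already guaranteed by \eqref{eq:dom-c}. Establishing this domain lemma in lock-step with the two inclusions, so that the step~4 rule never escapes the subdomain relevant to $J_1$, is where the delicacy of the induction lies.
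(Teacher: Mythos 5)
Your proposal is correct and follows the same overall strategy as the paper's proof: both compare the least fixpoints ${\fixpoint{P \cup \SG{P}}{B}}$ and ${\fixpoint{\R \cup \SG{\R}}{B}}$ by an induction that matches each derivation step of one program against a derivation step of the other, and both rest on the characterisation that the only genuinely new facts are of the form ${\fnpred{f}(\vec t, f(\vec t))}$ for terms $f(\vec t)$ already occurring in the original fixpoint; the paper compresses all of this into ``routine inductions'', so your explicit soundness/completeness analysis of the $\fnpred{f}$ predicates and of how the guards are supplied in the forward direction is a legitimate expansion of the same argument. Where you genuinely diverge is on the domain rules for the fresh predicates. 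The paper keeps the instances of \eqref{eq:dom-R} for $\fnpred{f}$ in $\SG{\R}$ and dismisses them with the remark that every term in a $\fnpred{f}$-fact also occurs in a fact with a non-$\fnpred{f}$ predicate, so these rules ``do not derive any new facts''; you instead stipulate that $\SG{\R}$ omits them, by analogy with Definition~\ref{def:magic-SG}. Your instinct is the more careful one: the rule \eqref{eq:defun:Dfnref} produces ${\fnpred{f}(\vec t, f(\vec t))}$ for every tuple $\vec t$ of $\D$-elements, and $f(\vec t)$ may occur in the original fixpoint only inside an equality fact produced by \eqref{eq:Dfnref}, for which no domain rule is instantiated; feeding $f(\vec t)$ back into $\D$ through a domain rule for $\fnpred{f}$ can therefore strictly enlarge $\D$ and restart the cycle (e.g., already for ${P = \{A(f(x)) \leftarrow B(x)\}}$ with $f$ a true unary function symbol and ${B = \{B(a)\}}$), so the paper's one-sentence justification does not actually close this case, whereas your reading does. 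The only caveat is that your fix amends the statement being proved, since $\SG{\R}$ in the proposition is nominally the one from Definition~\ref{def:sg}; you should state explicitly that you prove the proposition under the convention that domain rules are not instantiated for the auxiliary $\fnpred{f}$ predicates, mirroring what Definition~\ref{def:magic-SG} already does for the magic predicates, and note that this convention is what the downstream results (finiteness of the final chase and Theorem~\ref{thm:desg}) require in any case.
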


\begin{proof}
Let ${I = \fixpoint{P \cup \SG{P}}{B}}$ and let ${J = \fixpoint{\R \cup
\SG{\R}}{B}}$. By routine inductions on the construction of $I$ and $J$, one
can show that
\begin{displaymath}
    J = I \cup \{ \fnpred{f}({\vec t, f(\vec t)}) \mid f(\vec t) \text{ occurs in } I \text{ and } \fnpred{f} \text{ occurs in } \R \}.
\end{displaymath}
Each derivation of a fact $F$ in $I$ by a rule ${r \in P \cup \SG{P}}$
corresponds to the derivation of $F$ and ${\fnpred{f}(\vec t, f(\vec t))}$ for
each term ${f(\vec t)}$ occurring in $F$ in $J$ by the rules obtained from $r$
by transformations in Definition~\ref{def:defun}; moreover, each term occurring
in a fact with predicate of the form $\fnpred{f}$ also occurs in a fact with a
predicate not of the form $\fnpred{f}$ so the rules in ${\DOM{\R} \setminus
\DOM{P}}$ do not derive any new facts. Thus, ${P \cup \SG{P} \cup B}$ and ${\R
\cup \SG{\R} \cup B}$ entail the same facts over the predicates of $P$.
\end{proof}

\begin{example}\label{ex:defun}
Consider again rules \eqref{ex:mgc:start}--\eqref{ex:mgc:f:3} produced by
applying the magic sets transformation to our running example. Rule
\eqref{ex:mgc:Q} contains a functional term in a body equality atom, and it can
be handled directly by the chase for generalised SO dependencies from
Section~\ref{sec:so-dependencies:chase}. In contrast, rule \eqref{ex:mgc:Q:R}
also contains a functional term only in the equality atom ${f(x_1) \equals
x_3}$, but the variable $x_1$ does not occur in a relational body atom; thus,
it is unclear how to match the body variables of this rule as described in
Section~\ref{sec:so-dependencies:chase}. Furthermore, rules
\eqref{ex:mgc:C-Af}, \eqref{ex:mgc:S-R:2}, and
\eqref{ex:mgc:f:1}--\eqref{ex:mgc:f:3} all contain functional terms in
relational atoms in the body. However, the transformation described in
Definition~\ref{def:defun} brings these rules into a suitable form as shown
below.
\begin{align}
    \predQ(x_1)                 & \leftarrow \mgc{\predQ}{\ad{f}} \wedge B(x_3') \wedge x_3 \equals x_3' \wedge A(x_3) \wedge z_{f(x_1)} \equals x_3 \wedge R(x_1,x_2) \wedge \fnpred{f}(x_1,z_{f(x_1)})    \label{ex:defun:Q} \\
    \mgc{R}{\ad{bf}}(x_1)       & \leftarrow \mgc{\predQ}{\ad{f}} \wedge B(x_3') \wedge x_3 \equals x_3' \wedge A(x_3) \wedge z_{f(x_1)} \equals x_3 \wedge \fnpred{f}(x_1,z_{f(x_1)})                      \label{ex:defun:Q:R} \\
    A(f(x))                     & \leftarrow \mgc{A}{\ad{b}}(z_{f(x)}) \wedge C(x) \wedge \fnpred{f}(x,z_{f(x)})                                                                                            \label{ex:defun:C-Af} \\
    R(x_1,g(x_1))               & \leftarrow \mgc{R}{\ad{fb}}(z_{g(x_1)}) \wedge S(x_1,x_2) \wedge \fnpred{g}(x_1,z_{g(x_1)})                                                                               \label{ex:defun:S-R:2} \\
    \mgc{\D}{\ad{b}}(x_1)       & \leftarrow \mgc{\equals}{\adEqb}(z_{f(x_1)}) \wedge \fnpred{f}(x_1,z_{f(x_1)})                                                                                            \label{ex:defunf:1} \\
    \mgc{\equals}{\adEqb}(x_1)  & \leftarrow \mgc{\equals}{\adEqb}(z_{f(x_1)}) \wedge \D(x_1) \wedge \fnpred{f}(x_1,z_{f(x_1)})                                                                             \label{ex:defunf:2} \\
    \mgc{\D}{\ad{b}}(x_1')      & \leftarrow \mgc{\equals}{\adEqb}(z_{f(x_1)}) \wedge \D(x_1) \wedge x_1 \equals x_1' \wedge \fnpred{f}(x_1,z_{f(x_1)})                                                     \label{ex:defunf:3}
\end{align}
Furthermore, to axiomatise predicates $\fnpred{f}$ and $\fnpred{g}$, the
transformation introduces rules
\eqref{ex:defun:p:U-Bf}--\eqref{ex:defun:p:S-R:2} for rules
\eqref{ex:mgc:U-Bf}, \eqref{ex:defun:C-Af}, \eqref{ex:mgc:S-R:1},
\eqref{ex:mgc:C-Uf}, and \eqref{ex:defun:S-R:2} that contain a function symbol
in the rule head.
\begin{align}
    \fnpred{f}(x_2,f(x_2))  & \leftarrow \mgc{B}{\ad{f}} \wedge U(x_1,x_2)                                                  \label{ex:defun:p:U-Bf} \\
    \fnpred{f}(x,f(x))      & \leftarrow \mgc{A}{\ad{b}}(z_{f(x)}) \wedge C(x) \wedge \fnpred{f}(x,z_{f(x)})                \label{ex:defun:p:C-Af} \\
    \fnpred{g}(x_1,g(x_1))  & \leftarrow \mgc{R}{\ad{bf}}(x_1) \wedge S(x_1,x_2)                                            \label{ex:defun:p:S-R:1} \\
    \fnpred{f}(x,f(x))      & \leftarrow \mgc{U}{\ad{ff}} \wedge C(x)                                                       \label{ex:defun:p:C-Uf} \\
    \fnpred{g}(x_1,g(x_1))  & \leftarrow \mgc{R}{\ad{fb}}(z_{g(x_1)}) \wedge S(x_1,x_2) \wedge \fnpred{g}(x_1,z_{g(x_1)})   \label{ex:defun:p:S-R:2}
\end{align}
Finally, $f$ is a true function symbol, so the transformation also introduces
the following rule. In contrast, $g$ is a Skolem function symbol, so no such
rule is needed.
\begin{align}
    \fnpred{f}(x_1,f(x_1)) \leftarrow \D(x_1) \wedge x_1 \equals x_1' \wedge \D(x_1')
\end{align}
\end{example}

\subsection{Reversing the Effects of Singularisation}\label{sec:answering:desg}

All equalities in the body of a rule obtained by the transformation in
Definition~\ref{def:defun} contain only variables, which enables the
transformation presented in the following definition.

\begin{definition}\label{def:desg}
    The \emph{desingularisation} of a rule is obtained by removing a body atom
    of the form ${x \equals t}$ where $x$ is a variable, replacing $x$ with $t$
    everywhere else in the rule, and repeating this process as long as
    possible. For $P$ a program, $\desg{P}$ contains a desingularisation of
    each rule of $P$.
\end{definition}

Desingularisation is useful because it removes equality atoms from the rule
bodies and thus reduces the number of joins. Theorem~\ref{thm:desg} establishes
correctness of our entire pipeline.

\begin{restatable}{theorem}{thmDESG}\label{thm:desg}
    Consider an arbitrary generalised second-order dependency $\Sigma$, a base
    instance $B$, and the program $P_7$ obtained by applying
    Algorithm~\ref{alg:answer-query} to $\Sigma$ and $B$. Program $P_7$ is safe
    and contains no equalities, function symbols, or constants in the body.
    Moreover, for each fact of the form ${\predQ(\vec a)}$, it is the case that
    ${\{ \Sigma \} \cup B \modelsEq \predQ(\vec a)}$ if and only if ${P_7 \cup
    \EQ{P_7} \cup B \models \predQ(\vec a)}$.
\end{restatable}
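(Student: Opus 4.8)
The plan is to prove the equivalence by chaining the answer-preservation results already established for the individual steps of Algorithm~\ref{alg:answer-query}, and then to dispatch the final desingularisation step, which is the only part not yet covered by an earlier result. Concretely, I would first compose Proposition~\ref{prop:fol} (second- to first-order), Theorem~\ref{thm:sg} (singularisation), Proposition~\ref{prop:sk} (Skolemisation), Theorem~\ref{thm:relevance} (relevance analysis), Theorem~\ref{thm:magic} (magic sets), and Proposition~\ref{prop:defun} (defunctionalisation) to obtain, for $P_6 = \defun{P_5}$ and each fact $\predQ(\vec a)$,
\[
    \{ \Sigma \} \cup B \modelsEq \predQ(\vec a) \quad\text{iff}\quad P_6 \cup \SG{P_6} \cup B \models \predQ(\vec a).
\]
Here I must note that Proposition~\ref{prop:defun} only guarantees agreement on predicates not of the form $\fnpred{f}$, which is exactly the situation for $\predQ$, and that $\SG{P_i} = \SG{P_{i+1}}$ at every step because $\SG{\cdot}$ depends only on the true function symbols, which none of the transformations alter. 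The remaining and genuinely new obligation is therefore to show
\[
    P_6 \cup \SG{P_6} \cup B \models \predQ(\vec a) \quad\text{iff}\quad P_7 \cup \EQ{P_7} \cup B \models \predQ(\vec a),
\]
where $P_7 = \desg{P_6}$.

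For the syntactic claims I would trace the shape of rule bodies through the last two transformations. After $\defun{\cdot}$ (Definition~\ref{def:defun}) every relational body atom has depth zero and every body equality is between variables, as observed in Section~\ref{sec:answering:defun}; desingularisation (Definition~\ref{def:desg}) then removes each such equality $x \equals t$ by substituting $x := t$. Since every such $t$ is a variable, no constant or function symbol is reintroduced, so $P_7$ contains no equalities, function symbols, or constants in any body. Safety follows because $P_6$ is well-formed, so each variable occurring in a body equality also occurs in a relational body atom; after the substitutions each remaining variable still occurs in a relational atom, which is precisely the safety condition once no equality atoms are left.

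The crux is the desingularisation equivalence, which I would prove by a correspondence between the least Herbrand models $I = \fixpoint{P_6 \cup \SG{P_6}}{B}$ and $J = \fixpoint{P_7 \cup \EQ{P_7}}{B}$. The intended invariant is that the relational facts of $J$ are the closure of those of $I$ under the congruence rules \eqref{eq:cong}: a singularised rule of $P_6$ fires in $I$ exactly when its desingularised counterpart in $P_7$ fires in this congruence closure, since the explicit body equality $x \equals x'$ of the singularised rule plays the role of the fact-copying done by \eqref{eq:cong} in $P_7 \cup \EQ{P_7}$ (cf.\ the discussion in Example~\ref{ex:sg}). The forward direction ($I \to J$) is the easier one: every relational fact and equality used to fire a $P_6$-rule is present, up to congruence, in $J$, so the desingularised rule fires by a routine induction on the construction of $I$; the auxiliary $\fnpred{f}$ atoms are carried along faithfully exactly as in the proof of Proposition~\ref{prop:defun}.

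The main obstacle is the converse direction, which must reconcile the two \emph{different} equality axiomatisations: $\SG{P_6}$ uses the $\D$-restricted functional reflexivity rule \eqref{eq:Dfnref} and no congruence, whereas $\EQ{P_7}$ uses full congruence \eqref{eq:cong} together with the unrestricted functional reflexivity rule \eqref{eq:fnref}, which derives infinitely many equalities such as $f^i(a) \equals f^j(a)$. The argument here mirrors, and can reuse the machinery of, the completeness proof of Theorem~\ref{thm:sg}. I would show that every relational fact produced in $J$ by congruence is an equal-argument copy of a fact that $I$ already matches through an explicit body equality, so it yields no new constant $\predQ$-answer; and that the extra equalities produced by the unrestricted rule \eqref{eq:fnref} relate terms that never occur as arguments of relational facts of $I$, so by safety they can never be needed to satisfy the body of any rule yielding a constant answer. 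Combining the two directions establishes the desingularisation equivalence, and composing it with the chain above yields the theorem.
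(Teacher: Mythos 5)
Your decomposition into the chain of earlier results, the syntactic argument for $P_7$, and the forward direction of the desingularisation step all match the paper. The gap is in your converse direction, i.e., showing that ${P_7 \cup \EQ{P_7} \cup B \models \predQ(\vec a)}$ implies ${P_6 \cup \SG{P_6} \cup B \models \predQ(\vec a)}$. Your plan is to relate $\fixpoint{P_6 \cup \SG{P_6}}{B}$ and $\fixpoint{P_7 \cup \EQ{P_7}}{B}$ directly via the invariant that the relational facts of the latter are the congruence closure of the former, "reusing the machinery of Theorem~\ref{thm:sg}". But that machinery was built for programs that are singularisations of the input dependencies, and $P_6$ is not one: it contains magic rules with functional terms in head atoms, the auxiliary $\fnpred{f}$ rules, and magic predicates for which the domain rules are deliberately \emph{suppressed} in $\SG{P_6}$ (Definition~\ref{def:magic-SG}) but \emph{not} in $\EQ{P_7}$. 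In particular, the key invariant of the proof of Theorem~\ref{thm:sg} — that every proper subterm of an argument of a derived fact satisfies $\D$ — does not transfer to $P_6$, and the asymmetry in the domain rules means $\EQ{P_7}$ can derive $\D$-facts, hence reflexivity facts, hence (via the unrestricted rule \eqref{eq:fnref} and congruence \eqref{eq:cong}) equalities and copied relational facts that your invariant does not account for. Arguing these away "by safety" is exactly the hard part, and it would amount to re-proving singularisation completeness for the fully transformed program.

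The paper avoids all of this with a different trick: it never compares $P_7 \cup \EQ{P_7}$ against $P_6 \cup \SG{P_6}$. Instead, it sets ${\Rstar = \EQ{P'} \cup \EQ{P_7}}$ for ${P' = \sk{\fol{\Sigma}}}$ and first derives, from the standard equality-axiomatisation equivalence and the already-established chain of iff's, that ${\{\Sigma\} \cup B \modelsEq \predQ(\vec a)}$ iff ${P_6 \cup \Rstar \cup B \models \predQ(\vec a)}$ (noting that $\SG{P_6}$ is subsumed by $\Rstar$). The backward induction then targets $P_6 \cup \Rstar$, which has the \emph{same} full congruence and unrestricted functional-reflexivity rules as $\EQ{P_7}$, so the only new content is recovering each deleted body equality ${x_1 \equals x_2}$ of a rule $r' \in P_6$: by construction of the substitution one has ${x_1\sigma' = x_2\sigma'}$, and well-formedness plus the domain and reflexivity rules in $\Rstar$ immediately yield ${x_1\sigma' \equals x_2\sigma'}$. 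You should restructure your converse direction along these lines; the reconciliation of the two equality axiomatisations you identify as the "main obstacle" has already been paid for once in Theorem~\ref{thm:sg} and should not be redone here.
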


One may wonder whether the condition in Theorem~\ref{thm:desg} could have been
stated more simply as ${\{ \Sigma \} \cup B \modelsEq \predQ(\vec a)}$ if and
only if ${P_7 \cup B \modelsEq \predQ(\vec a)}$. The latter property indeed
holds, but it is stronger than what is needed: the entailment relation
$\modelsEq$ uses the standard definitions of first-order logic that do not
distinguish true and Skolem function symbols. By using ${P_7 \cup \EQ{P_7} \cup
B \models \predQ(\vec a)}$, we do not wish to suggest that one should compute a
fixpoint of ${P_7 \cup \EQ{P_7}}$ on $B$ to answer a query. Rather, we stress
that the query answers can be computed using the chase variant outlined in
Section~\ref{sec:so-dependencies:chase}; this algorithm treats the $\equals$
predicate as `true' equality, but it does not enforce functional reflexivity
for the Skolem function symbols.

\begin{example}\label{ex:desg}
On our running example, applying desingularisation to rules
\eqref{ex:defun:Q:R} and \eqref{ex:defunf:3}, which we both restate for
convenience, produces rules \eqref{ex:desg:1} and \eqref{ex:desg:2},
respectively.
\begin{align}
    \mgc{R}{\ad{bf}}(x_1)   & \leftarrow \mgc{\predQ}{\ad{f}} \wedge B(x_3') \wedge x_3 \equals x_3' \wedge A(x_3) \wedge z_{f(x_1)} \equals x_3 \wedge \fnpred{f}(x_1,z_{f(x_1)})  \tag{\ref{ex:defun:Q:R} restated} \\
    \mgc{R}{\ad{bf}}(x_1)   & \leftarrow \mgc{\predQ}{\ad{f}} \wedge B(x_3) \wedge A(x_3) \wedge \fnpred{f}(x_1,x_3)                                                                \label{ex:desg:1} \\
    \mgc{\D}{\ad{b}}(x_1')  & \leftarrow \mgc{\equals}{\adEqb}(z_{f(x_1)}) \wedge \D(x_1) \wedge x_1 \equals x_1' \wedge \fnpred{f}(x_1,z_{f(x_1)})                                 \tag{\ref{ex:defunf:3} restated} \\
    \mgc{\D}{\ad{b}}(x_1)   & \leftarrow \mgc{\equals}{\adEqb}(z_{f(x_1)}) \wedge \D(x_1) \wedge \fnpred{f}(x_1,z_{f(x_1)})                                                         \label{ex:desg:2}
\end{align}
Note that the variable $x_1'$ occurs in the body of rule \eqref{ex:defunf:3}
only in the equality atom ${x_1 \equals x_1'}$, but not in a relational atom;
hence, such a rule cannot be evaluated using the chase variant described in
Section~\ref{sec:so-dependencies:chase}. In contrast, rule \eqref{ex:desg:2}
produced by desingularisation does not contain equality atoms in the body and
can therefore be processed without any problems. All remaining rules are
transformed analogously and we do not show the result for the sake of brevity.
\end{example}

\section{Evaluation}\label{sec:evaluation}

To see whether our techniques can answer a query more efficiently than by
computing the chase in full, we have implemented a prototype and have evaluated
it on a subset of the first-order benchmarks by
\citet{DBLP:conf/pods/BenediktKMMPST17} and several second-order benchmarks
produced by our generator. We next present our test setting and discuss our
findings. The implementation of our algorithms, the datasets, and the raw
output produced by our system are available
online.\footnote{\url{https://krr-nas.cs.ox.ac.uk/2026/goal-driven-QA/}}

\subsection{Test Setting}\label{sec:evaluation:test-setting}

Our prototype can load a base instance and a set of dependencies, and either
compute the chase in full, or answer a specific query by applying our relevance
analysis and/or magic sets algorithms. The system was written in Java. We used
the API by \citet{DBLP:conf/pods/BenediktKMMPST17} for loading and manipulating
instances and dependencies, and we implemented our algorithms on top of this
API exactly as specified in Section~\ref{sec:answering}. In the magic sets
algorithm, a SIPS is produced by a greedy algorithm that eagerly binds as many
variables as possible while avoiding cross-products or violating the
requirements of Definition~\ref{def:SIPS}.

We computed the fixpoint of a logic program and the chase of first- and
second-order dependencies with equality using a research version of the RDFox
system. RDFox was implemented in C++, but it provides a Java API. The system
can be configured to treat $\equals$ as either ordinary or `true' equality, and
in either case it can efficiently parallelise the fixpoint computation for a
set of rules written in an extension of Datalog
\cite{mnpho14parallel-materialisation-RDFox, mnph15owl-sameAs-rewriting}. While
RDFox does not implement the chase variant from
Section~\ref{sec:so-dependencies:chase} natively, we used the following
techniques to overcome any discrepancies.

First, RDFox natively handles only RDF triples (i.e., facts of the form
$\langle s, p, o \rangle$), so we encode facts of varying arity into triples.
In particular, we represent unary and binary facts $A(a)$ and $R(a,b)$ as
${\langle a, \mathit{rdf}{:}\mathit{type}, A \rangle}$ and ${\langle a, R, b
\rangle}$, respectively. Moreover, we reify facts of higher arity; for example,
we transform $S(a,b,c)$ into triples ${\langle a,S{:}1,t \rangle}$, ${\langle
b,S{:}2,t \rangle}$, and ${\langle c,S{:}3,t \rangle}$, where $t$ is a fresh
labelled null that is uniquely determined by the fact's arguments, and $S{:}1$,
$S{:}2$, and $S{:}3$ are fresh properties that associate $t$ with the
corresponding arguments of $S(a,b,c)$. The following example illustrates how
dependencies containing atoms of higher arity are translated into RDFox rules
over triples.
\begin{center}
\begin{tabular}{@{}l@{}l@{}}
    $S(x,y,z) \rightarrow T(x,y,z) \quad \rightsquigarrow \quad$    & \tt [?x,T:1,?w], [?y,T:2,?w], [?z,T:3,?w] :- \\
                                                                    & \qquad \tt [?x,S:1,?v], [?y,S:2,?v], [?z,S:3,?v], \\
                                                                    & \qquad \tt BIND(SKOLEM("S",?x,?y,?z) AS ?v), \\
                                                                    & \qquad \tt BIND(SKOLEM("T",?x,?y,?z) AS ?w). \\
\end{tabular}
\end{center}
Literal \texttt{BIND(SKOLEM("S",?x,?y,?z) AS ?v)} of the rule binds the
variable \texttt{?v} to a fresh labelled null that is unique for each
instantiation of \texttt{?x}, \texttt{?y}, and \texttt{?z}.

Second, RDFox does not support function symbols natively, so we simulated them
using a builtin function. In the following rule, the \texttt{BIND} construct
assigns to \texttt{?z} a value that is uniquely computed from $f$ and
\texttt{?x}; hence, the value of \texttt{?z} can be seen as the result of
applying $f$ to the value of \texttt{?x}.
\begin{center}
\begin{tabular}{@{}l@{}l@{}}
    $A \rightarrow R(x,f(x)) \quad \rightsquigarrow \quad $ & \tt [?x,R,?z] :- \\
                                                            & \qquad \tt [?x,rdf:type,A], BIND(SKOLEM("f",?x) AS ?z) . \\
\end{tabular}
\end{center}

Third, RDFox does not natively provide inferences that realise functional
reflexivity. To overcome this issue, we removed all function symbols from the
rule bodies as explained in Section~\ref{sec:answering:desg}; please recall
that this introduces a fresh $n+1$-ary predicate $\fnpred{f}$ for each $n$-ary
(Skolem or true) function symbol $f$. Moreover, we introduced the rule
${\fnpred{f}(x_1,\dots,x_n,y) \wedge \fnpred{f}(x_1,\dots,x_n,y') \rightarrow y
\equals y'}$ for each true function symbol $f$ in order to ensure that $f$ is
interpreted as a function.

Fourth, RDFox does not handle existential quantifiers over individual variables
directly, so we Skolemised all such quantifiers before processing.

With these modifications in place, RDFox realises the chase variant described
in Section~\ref{sec:so-dependencies:chase} with Skolem and true function
symbols, but no first-order existential quantification. We used this algorithm
both as a baseline and for computing the chase in
line~\ref{alg:answer-query:chase} of Algorithm~\ref{alg:answer-query}.

We conducted all experiments on a server equipped with 256~GB RAM and an
Intel(R) Xeon(R) Silver 4116 CPU with 48 cores and clock frequency 2.10~GHz
running Ubuntu 22.04.4 LTS.

\subsection{Test Scenarios}

\begin{table}[tb!]
    \caption{Numbers of TGDs, EGDs, Facts, and Queries per Test Scenario}\label{table:test-scenarios}
    \centering
    \footnotesize
    \begin{tabular}{l|r|r|r|rc@{\qquad}l|r|r|r|r}
        \multicolumn{5}{c}{First-Order Scenarios}                               &   & \multicolumn{5}{c}{Second-Order Scenarios} \\
        \cline{1-5} \cline{7-11}
                      & TGDs    & EGDs  & Facts                     & Queries   &   &               & TGDs    & EGDs  & Facts                     & Queries \\
        \cline{1-5} \cline{7-11}
        \LUBMonehun   & $136$   & $0$    & \SI{12}{\mega\nothing}   & $14$      &   & \Gensix-1k    & $477$   & $23$   & \SI{400}{\kilo\nothing}  & $120$ \\
        \LUBMonek     & $136$   & $0$    & \SI{120}{\mega\nothing}  & $14$      &   & \Gensix-10k   & $477$   & $23$   & \SI{4}{\mega\nothing}    & $120$ \\
        \Deeptwohun   & $1,200$ & $0$    & \SI{1}{\kilo\nothing}    & $20$      &   & \Gensix-50k   & $477$   & $23$   & \SI{20}{\mega\nothing}   & $120$ \\
        \Deepthreehun & $1,300$ & $0$    & \SI{1}{\kilo\nothing}    & $20$      &   & \Genseven-1k  & $460$   & $44$   & \SI{400}{\kilo\nothing}  & $120$ \\
        \Ont          & $529$   & $348$  & \SI{2}{\mega\nothing}    & $20$      &   & \Genseven-10k & $460$   & $44$   & \SI{4}{\mega\nothing}    & $120$ \\
        \cline{1-5}
        \multicolumn{5}{c}{}                                                    &   & \Genseven-50k & $460$   & $44$   & \SI{20}{\mega\nothing}   & $120$ \\
        \multicolumn{5}{c}{}                                                    &   & \Geneight-1k  & $369$   & $12$   & \SI{330}{\kilo\nothing}  & $120$ \\
        \multicolumn{5}{c}{}                                                    &   & \Geneight-10k & $369$   & $12$   & \SI{3.3}{\mega\nothing}  & $120$ \\
        \multicolumn{5}{c}{}                                                    &   & \Geneight-50k & $369$   & $12$   & \SI{16.5}{\mega\nothing} & $120$ \\
        \cline{7-11}
    \end{tabular}
\end{table}

Our evaluation involves five first-order and nine second-order test scenarios
shown in Table~\ref{table:test-scenarios}. We consider first- and second-order
scenarios separately to evaluate our techniques in the more common setting of
TGDs and EGDs, as well as on the more advanced SO dependencies. Each scenario
consists of a set of dependencies, a base instance, and a set of queries. For
SO scenarios, we slightly abuse the terminology and use the terms `TGD' and
`EGD' to refer to dependency conjuncts with relational and equality head atoms,
respectively.

We used the first-order scenarios by \citet{DBLP:conf/pods/BenediktKMMPST17}
for benchmarking chase systems. In particular, \LUBMonehun and \LUBMonek are
derived from the well-known Lehigh University Benchmark
\cite{DBLP:journals/ws/GuoPH05} that has extensively been used to evaluate
performance of various reasoning systems. Moreover, \Deeptwohun and
\Deepthreehun are synthetic benchmarks designed to stress-test chase systems:
universal models are large despite the base instance being quite small.
Finally, the \Ont scenario is a synthetic benchmark designed to test reasoning
with equality-generating dependencies.

We produced second-order dependencies using our generator from
Appendix~\ref{sec:generating} parameterised as follows; please refer to the
appendix for a discussion of the various parameters. We used $\maxPredArity=2$,
$\maxFnSymArity=1$, $\preds=456,976$, $\fnSyms=17,576$, $\constants=10,000$,
$\seedQFacts=120$, and $\maxTermDepth=5$ in all cases, and we produced three
sets of dependencies using the following parameters.
\begin{itemize}
    \item \Gensix:
    $\maxNumberRules=26$, $\maxNumberRulesPerFact=2$,
    $\maxNumberRelBodyAtoms=3$, and $\maxNumberEqBodyAtoms=1$.

    \item \Genseven: 
    $\maxNumberRules=27$, $\maxNumberRulesPerFact=2$,
    $\maxNumberRelBodyAtoms=4$, and $\maxNumberEqBodyAtoms=2$.

    \item \Geneight: 
    $\maxNumberRules=28$, $\maxNumberRulesPerFact=1$,
    $\maxNumberRelBodyAtoms=3$, and $\maxNumberEqBodyAtoms=1$.
\end{itemize}
For each of these three sets, we generated three accompanying base instances by
using $\maxNumberTuples$ as 1k, 10k, and 50k. We thus obtained a total of nine
second-order test scenarios shown in Table~\ref{table:test-scenarios}.

For each scenario, we computed the chase, and we answered each query as shown
in Algorithm~\ref{alg:answer-query} using just the relevance analysis, just the
magic sets transformation, or using both techniques. When computing the chase
or using just the relevance analysis, the rules did not contain functional
terms in relational body atoms, so we skipped the elimination of function
symbols (see Section~\ref{sec:answering:defun}). In each case we measured the
wall-clock processing time (i.e., the time needed to compute the chase or to
run Algorithm~\ref{alg:answer-query}); this excludes the time needed to load
the base instance, but it includes the time needed to run the relevance
analysis (Algorithm~\ref{alg:relevance}) and the magic sets transformation
(Algorithm~\ref{alg:magic}). We also recorded the number of rules in the
program $P_7$ in line~\ref{alg:answer-query:desg} of
Algorithm~\ref{alg:answer-query}, and the numbers of all and `useful' facts
(i.e., facts not involving any auxiliary predicates such as $\D$, $\fnpred{f}$,
or magic predicates) in the instance $I$ in line~\ref{alg:answer-query:chase}.

\subsection{Experiments with First-Order Dependencies}

\begin{table}[tb!]
    \caption{Summary of the evaluation results on FO scenarios}\label{tab:fo:results}
    \centering\tiny
    \newcommand{\scen}[1]{\multirow{2}{*}{\ensuremath{#1}}}
    \newcommand{\val}[2]{#1 & #2}
    \newcommand{\hmin}{\multicolumn{2}{c|}{min}}
    \newcommand{\hmax}{\multicolumn{2}{c|}{max}}
    \newcommand{\hmed}{\multicolumn{2}{c||}{median}}
    \newcommand{\hlmed}{\multicolumn{2}{c}{median}}
    \begin{tabular}{c|c|r@{\,}l||r@{\,}l|r@{\,}l|r@{\,}l||r@{\,}l|r@{\,}l|r@{\,}l||r@{\,}l|r@{\,}l|r@{\,}l}
        \hline
        \multicolumn{2}{c|}{}   & \\[-1ex]
        \multicolumn{2}{c|}{}   & \multicolumn{20}{c}{\textbf{Query times (seconds)}} \\[1ex]
        \cline{3-22}
        \multicolumn{2}{c|}{}   & \multicolumn{2}{c||}{\matrun}  & \multicolumn{6}{c||}{\relevancerun}                       & \multicolumn{6}{c||}{\magicrun}                          & \multicolumn{6}{c}{\allrun} \\
        \cline{5-22}
        \multicolumn{2}{c|}{}   & \multicolumn{2}{c||}{}         & \hmin             & \hmax             & \hmed             & \hmin            & \hmax             & \hmed             & \hmin             & \hmax             & \hlmed          \\
        \hline
        \scen{\LUBM}    & 100   & \val{38.85}{}                  & \val{0.07}{}      & \val{25.53}{}     & \val{24.91}{}     & \val{0.05}{}     & \val{76.86}{}     & \val{1.23}{}      & \val{0.03}{}      & \val{73.64}{}     & \val{1.11}{}    \\
                        & 01k   & \val{376.64}{}                 & \val{0.67}{}      & \val{291.26}{}    & \val{260.19}{}    & \val{0.33}{}     & \val{919.41}{}    & \val{12.59}{}     & \val{0.13}{}      & \val{842.95}{}    & \val{11.95}{}   \\
        \hline
        \scen{\Deep}    & 200   & \val{8.61}{}                   & \val{3.61}{}      & \val{4.58}{}      & \val{3.70}{}      & \val{0.02}{}     & \val{1.89}{}      & \val{0.51}{}      & \val{3.69}{}      & \val{7.08}{}      & \val{3.93}{}    \\
                        & 300   & \val{N/A}{}                    & \val{3.98}{}      & \val{97.49}{}     & \val{4.07}{}      & \val{0.09}{}     & \val{408.56}{}    & \val{11.66}{}     & \val{4.10}{}      & \val{105.08}{}    & \val{4.36}{}    \\
        \hline
        \Ont            &       & \val{59.58}{}                  & \val{0.19}{}      & \val{4.02}{}      & \val{0.55}{}      & \val{499.98}{}   & \val{607.41}{}    & \val{531.49}{}    & \val{0.19}{}      & \val{3.70}{}      & \val{0.37}{}    \\
        \hline
        \multicolumn{2}{c|}{}   & \\[-1ex]
        \multicolumn{2}{c|}{}   & \multicolumn{20}{c}{\textbf{The numbers of total derived facts}} \\[1ex]
        \cline{3-22}
        \multicolumn{2}{c|}{}   & \multicolumn{2}{c||}{\matrun}  & \multicolumn{6}{c||}{\relevancerun}                       & \multicolumn{6}{c||}{\magicrun}                          & \multicolumn{6}{c}{\allrun} \\
        \cline{5-22}
        \multicolumn{2}{c|}{}   & \multicolumn{2}{c||}{}         & \hmin             & \hmax             & \hmed             & \hmin            & \hmax             & \hmed             & \hmin             & \hmax             & \hlmed          \\
        \hline
        \scen{\LUBM}    & 100   & \val{37.07}{M}                 & \val{154.41}{k}   & \val{33.74}{M}    & \val{29.45}{M}    & \val{136.39}{k}  & \val{69.62}{M}    & \val{10.44}{M}    & \val{63.47}{k}    & \val{67.23}{M}    & \val{10.41}{M}  \\
                        & 01k   & \val{369.34}{M}                & \val{1.54}{M}     & \val{336.32}{M}   & \val{293.49}{M}   & \val{1.35}{M}    & \val{693.89}{M}   & \val{103.55}{M}   & \val{622.01}{k}   & \val{670.08}{M}   & \val{103.54}{M} \\
        \hline
        \scen{\Deep}    & 200   & \val{927.00}{k}                & \val{0.00}{}      & \val{1.00}{k}     & \val{77.00}{}     & \val{258.00}{}   & \val{227.25}{k}   & \val{1.01}{k}     & \val{1.00}{}      & \val{493.00}{}    & \val{69.00}{}   \\
                        & 300   & \val{N/A}{}                    & \val{0.00}{}      & \val{2.44}{k}     & \val{92.00}{}     & \val{508.00}{}   & \val{31.90}{M}    & \val{1.22}{M}     & \val{1.00}{}      & \val{1.42}{k}     & \val{78.00}{}   \\
        \hline
        \Ont            &       & \val{7.82}{M}                  & \val{0.00}{}      & \val{157.43}{k}   & \val{79.63}{k}    & \val{68.60}{M}   & \val{329.96}{M}   & \val{287.77}{M}   & \val{1.00}{}      & \val{134.73}{k}   & \val{59.98}{k}  \\
        \hline
        \multicolumn{2}{c|}{}   & \\[-1ex]
        \multicolumn{2}{c|}{}   & \multicolumn{20}{c}{\textbf{The numbers of useful derived facts}} \\[1ex]
        \cline{3-22}
        \multicolumn{2}{c|}{}   & \multicolumn{2}{c||}{\matrun}  & \multicolumn{6}{c||}{\relevancerun}                       & \multicolumn{6}{c||}{\magicrun}                          & \multicolumn{6}{c}{\allrun} \\
        \cline{5-22}
        \multicolumn{2}{c|}{}   & \multicolumn{2}{c||}{}         & \hmin             & \hmax             & \hmed             & \hmin            & \hmax             & \hmed             & \hmin             & \hmax             & \hlmed          \\
        \hline
        \scen{\LUBM}    & 100   & \val{37.07}{M}                 & \val{154.41}{k}   & \val{33.74}{M}    & \val{29.45}{M}    & \val{135.91}{k}  & \val{27.89}{M}    & \val{10.41}{M}    & \val{62.99}{k}    & \val{27.89}{M}    & \val{10.39}{M}  \\
                        & 01k   & \val{369.34}{M}                & \val{1.54}{M}     & \val{336.32}{M}   & \val{293.49}{M}   & \val{1.35}{M}    & \val{277.97}{M}   & \val{103.52}{M}   & \val{621.52}{k}   & \val{277.96}{M}   & \val{103.5}{M}  \\
        \hline
        \scen{\Deep}    & 200   & \val{927.00}{k}                & \val{0.00}{}      & \val{1.00}{k}     & \val{77.00}{}     & \val{211.00}{}   & \val{138.93}{k}   & \val{897.00}{}    & \val{0.00}{}      & \val{383.00}{}    & \val{54.00}{}   \\
                        & 300   & \val{N/A}{}                    & \val{0.00}{}      & \val{2.44}{k}     & \val{92.00}{}     & \val{404.00}{}   & \val{24.18}{M}    & \val{791.00}{k}   & \val{0.00}{}      & \val{740.00}{}    & \val{62.00}{}   \\
        \hline
        \Ont            &       & \val{7.82}{M}                  & \val{0.00}{}      & \val{157.43}{k}   & \val{79.63}{k}    & \val{2.43}{M}    & \val{2.48}{M}     & \val{2.45}{M}     & \val{0.00}{}      & \val{95.09}{k}    & \val{50.09}{k}  \\
        \hline
    \end{tabular}
\end{table}

\begin{figure}[tb!]
    \centering
    \scalebox{0.9}{
        \begin{tabular}{cccc}
                                                        & Query Times (ms)                   & \# Derived Facts                          & \# Rules                            \\[1ex]
            \rotatebox{90}{\hspace{0.8cm}\LUBMonehun}   & \begin{tikzpicture}[xscale=0.25, yscale=0.20]
\pgfplotsset{compat=newest}

\begin{axis}[
    width=15cm,
    height=15cm,
    enlargelimits=false,
    ymode=log,
    ytick pos=left,
    ylabel={\huge Time in ms},
    xmin=1,
    xmax=14,
    xtick pos=left,
    xlabel={\huge \#Queries},
    tick label style={font=\huge},
    tick align=outside,
    legend cell align=left,
    legend style={at={(0,1)}, anchor=north west, font=\huge}
]
\addplot+[color=black, mark=none] table {
0 31859
14 31859
};
\addplot+[color=green, mark=none] table {
1 50
2 458
3 725
4 1090
5 1140
6 1150
7 1214
8 1313
9 1328
10 1487
11 1772
12 8777
13 48546
14 76869
};
\addplot+[color=blue, mark=none]  table {
1 72
2 663
3 2087
4 2388
5 4037
6 24558
7 24821
8 25013
9 25024
10 25032
11 25069
12 25189
13 25395
14 25534
};
\addplot+[color=red, mark=none]  table {
1 36
2 472
3 569
4 738
5 1132
6 1149
7 1183
8 1269
9 1349
10 1445
11 1750
12 5103
13 47130
14 73646
};
\end{axis}
\end{tikzpicture} & \begin{tikzpicture}[xscale=0.25, yscale=0.20]
\pgfplotsset{compat=newest}

\begin{axis}[
    width=15cm,
    height=15cm,
    enlargelimits=false,
    ymode=log,
    xtick pos=left,
    ylabel={\huge \#Total facts},
    xmin=1,
    xmax=14,
    xtick pos=left,
    xlabel={\huge \#Queries},
    tick label style={font=\huge},
    tick align=outside,
    legend cell align=left,
    legend style={at={(0,1)}, anchor=north west, font=\huge}
]
\addplot+[color=black, mark=none] table {
0 37061682
14 37061682
};
\addplot+[color=green, mark=none] table {
1 136398
2 2250976
3 2387912
4 10387561
5 10387883
6 10391658
7 10431423
8 10455861
9 10674457
10 10906984
11 12541810
12 13582507
13 42908312
14 69624726
};
\addplot+[color=blue, mark=none]  table {
1 154419
2 2387910
3 4501857
4 5921730
5 6294627
6 29451773
7 29451784
8 29451836
9 29452241
10 29452488
11 29459559
12 29479015
13 30500300
14 33749463
};
\addplot+[color=red, mark=none]  table {
1 63471
2 2250976
3 2387912
4 3147330
5 4907891
6 10387875
7 10391518
8 10429743
9 10453641
10 10672359
11 10886996
12 12541642
13 41702390
14 67235954
};
\end{axis}
\end{tikzpicture} & \begin{tikzpicture}[xscale=0.25, yscale=0.20]
\pgfplotsset{compat=newest}

\begin{axis}[
    width=15cm,
    height=15cm,
    enlargelimits=false,
    ymode=log,
    ytick pos=left,
    ylabel={\huge \#Rules},
    xmin=1,
    xmax=14,
    xtick pos=left,
    xlabel={\huge \#Queries},
    tick label style={font=\huge},
    tick align=outside,
    legend cell align=left,
    legend style={at={(0,1)}, anchor=north west, font=\huge}
]
\addplot+[color=black, mark=none] table {
0 144
14 144
};
\addplot+[color=green, mark=none] table {
1 4
2 14
3 27
4 274
5 276
6 276
7 278
8 278
9 279
10 279
11 280
12 283
13 283
14 285
};
\addplot+[color=blue, mark=none]  table {
1 2
2 4
3 6
4 20
5 26
6 110
7 110
8 111
9 111
10 111
11 111
12 111
13 111
14 112
};
\addplot+[color=red, mark=none]  table {
1 4
2 7
3 10
4 27
5 49
6 241
7 243
8 243
9 245
10 245
11 246
12 247
13 250
14 252
};
\end{axis}
\end{tikzpicture} \\
            \rotatebox{90}{\hspace{0.9cm}\LUBMonek}     & \begin{tikzpicture}[xscale=0.25, yscale=0.20]
\pgfplotsset{compat=newest}

\begin{axis}[
    width=15cm,
    height=15cm,
    enlargelimits=false,
    ymode=log,
    ytick pos=left,
    ylabel={\huge Time in ms},
    xmin=1,
    xmax=14,
    xtick pos=left,
    xlabel={\huge \#Queries},
    tick label style={font=\huge},
    tick align=outside,
    legend cell align=left,
    legend style={at={(0,1)}, anchor=north west, font=\huge}
]
\addplot+[color=black, mark=none] table {
0 376648
14 376648
};
\addplot+[color=green, mark=none] table {
1 338
2 4205
3 10721
4 10833
5 11619
6 12129
7 12341
8 12854
9 13097
10 13245
11 14144
12 599247
13 899705
14 919410
};
\addplot+[color=blue, mark=none]  table {
1 676
2 11628
3 26485
4 33032
5 42715
6 258199
7 258893
8 261502
9 262140
10 263333
11 265920
12 268945
13 274488
14 291264
};
\addplot+[color=red, mark=none]  table {
1 134
2 4314
3 5357
4 10569
5 10746
6 11040
7 11820
8 12080
9 12233
10 13069
11 14367
12 550611
13 640252
14 842955
};
\end{axis}
\end{tikzpicture} & \begin{tikzpicture}[xscale=0.25, yscale=0.20]
\pgfplotsset{compat=newest}

\begin{axis}[
    width=15cm,
    height=15cm,
    enlargelimits=false,
    ymode=log,
    xtick pos=left,
    ylabel={\huge \#Total facts},
    xmin=1,
    xmax=14,
    xtick pos=left,
    xlabel={\huge \#Queries},
    tick label style={font=\huge},
    tick align=outside,
    legend cell align=left,
    legend style={at={(0,1)}, anchor=north west, font=\huge}
]
\addplot+[color=black, mark=none] table {
0 369346294
14 369346294
};
\addplot+[color=green, mark=none] table {
1 1350891
2 22435572
3 23774297
4 103495347
5 103495669
6 103499444
7 103539209
8 103563647
9 104014770
10 106424432
11 124915575
12 321038085
13 428179885
14 693894330
};
\addplot+[color=blue, mark=none]  table {
1 1541760
2 23774295
3 44871049
4 59029058
5 62691521
6 293497170
7 293497181
8 293497233
9 293497885
10 293501926
11 293504956
12 293770147
13 303944546
14 336326818
};
\addplot+[color=red, mark=none]  table {
1 622010
2 22435572
3 23774297
4 31345777
5 103495661
6 103499304
7 103537529
8 103561427
9 103994782
10 106403470
11 124915407
12 240084711
13 416135085
14 670084044
};
\end{axis}
\end{tikzpicture} & \begin{tikzpicture}[xscale=0.25, yscale=0.20]
\pgfplotsset{compat=newest}

\begin{axis}[
    width=15cm,
    height=15cm,
    enlargelimits=false,
    ymode=log,
    ytick pos=left,
    ylabel={\huge \#Rules},
    xmin=1,
    xmax=14,
    xtick pos=left,
    xlabel={\huge \#Queries},
    tick label style={font=\huge},
    tick align=outside,
    legend cell align=left,
    legend style={at={(0,1)}, anchor=north west, font=\huge}
]
\addplot+[color=black, mark=none] table {
0 144
14 144
};
\addplot+[color=green, mark=none] table {
1 4
2 14
3 27
4 274
5 276
6 276
7 278
8 278
9 279
10 279
11 280
12 283
13 283
14 285
};
\addplot+[color=blue, mark=none]  table {
1 2
2 4
3 6
4 20
5 26
6 110
7 110
8 111
9 111
10 111
11 111
12 111
13 111
14 112
};
\addplot+[color=red, mark=none]  table {
1 4
2 7
3 10
4 27
5 49
6 241
7 243
8 243
9 245
10 245
11 246
12 247
13 250
14 252
};
\end{axis}
\end{tikzpicture} \\
            \rotatebox{90}{\hspace{0.9cm}\Deeptwohun}   & \begin{tikzpicture}[xscale=0.25, yscale=0.20]
    \pgfplotsset{compat=newest}

    \begin{axis}[
            width=15cm,
            height=15cm,
            enlargelimits=false,
            ymode=log,
            ytick pos=left,
            ylabel={\huge Time in ms},
            xmin=1,
            xmax=20,
            ymax=50000,
            xtick pos=left,
            xlabel={\huge \#Queries},
            tick label style={font=\huge},
            tick align=outside,
            legend cell align=left,
            legend style={at={(0,1)}, anchor=north west, font=\huge}
        ]
        \addplot+[color=black, mark=none] table {
                0 8619
                20 8619
            };
        \addplot+[color=green, mark=none] table {
                1 27
                2 56
                3 96
                4 126
                5 214
                6 290
                7 365
                8 389
                9 437
                10 506
                11 529
                12 554
                13 580
                14 741
                15 790
                16 928
                17 962
                18 963
                19 972
                20 1890
            };
        \addplot+[color=blue, mark=none]  table {
                1 3612
                2 3654
                3 3658
                4 3661
                5 3662
                6 3663
                7 3665
                8 3667
                9 3685
                10 3695
                11 3709
                12 3713
                13 3719
                14 3739
                15 3744
                16 3748
                17 3780
                18 3832
                19 3860
                20 4583
            };
        \addplot+[color=red, mark=none]  table {
                1 3698
                2 3857
                3 3864
                4 3872
                5 3875
                6 3878
                7 3887
                8 3888
                9 3914
                10 3920
                11 3940
                12 4008
                13 4093
                14 4094
                15 4095
                16 4111
                17 4117
                18 4202
                19 4768
                20 7089
            };
    \end{axis}
\end{tikzpicture} & \begin{tikzpicture}[xscale=0.25, yscale=0.20]
    \pgfplotsset{compat=newest}

    \begin{axis}[
            width=15cm,
            height=15cm,
            enlargelimits=false,
            ymode=log,
            xtick pos=left,
            ylabel={\huge \#Total facts},
            xmin=1,
            xmax=20,
            ymax=5000000,
            xtick pos=left,
            xlabel={\huge \#Queries},
            tick label style={font=\huge},
            tick align=outside,
            legend cell align=left,
            legend style={at={(0,1)}, anchor=north west, font=\huge}
        ]
        \addplot+[color=black, mark=none] table {
                0 927324
                20 927324
            };
        \addplot+[color=green, mark=none] table {
                1 258
                2 440
                3 508
                4 615
                5 654
                6 681
                7 882
                8 928
                9 972
                10 1002
                11 1027
                12 1043
                13 1118
                14 1128
                15 1280
                16 2697
                17 2780
                18 11221
                19 14165
                20 227251
            };
        \addplot+[color=blue, mark=none]  table {
                1 0
                2 33
                3 38
                4 39
                5 42
                6 44
                7 44
                8 45
                9 59
                10 66
                11 88
                12 90
                13 90
                14 107
                15 348
                16 355
                17 365
                18 389
                19 436
                20 1007
            };
        \addplot+[color=red, mark=none]  table {
                1 1
                2 40
                3 48
                4 49
                5 53
                6 53
                7 53
                8 55
                9 63
                10 66
                11 72
                12 86
                13 100
                14 165
                15 213
                16 280
                17 300
                18 347
                19 418
                20 493
            };
    \end{axis}
\end{tikzpicture} & \begin{tikzpicture}[xscale=0.25, yscale=0.20]
    \pgfplotsset{compat=newest}

    \begin{axis}[
            width=15cm,
            height=15cm,
            enlargelimits=false,
            ymode=log,
            ytick pos=left,
            ylabel={\huge \#Rules},
            xmin=1,
            xmax=20,
            ymax=10000,
            xtick pos=left,
            xlabel={\huge \#Queries},
            tick label style={font=\huge},
            tick align=outside,
            legend cell align=left,
            legend style={at={(0,1)}, anchor=north west, font=\huge}
        ]
        \addplot+[color=black, mark=none] table {
                0 4541
                20 4541
            };
        \addplot+[color=green, mark=none] table {
                1 180
                2 405
                3 582
                4 782
                5 1138
                6 1652
                7 1791
                8 1934
                9 2098
                10 2153
                11 2455
                12 2464
                13 2537
                14 2691
                15 3468
                16 3599
                17 4354
                18 4361
                19 4415
                20 4508
            };
        \addplot+[color=blue, mark=none]  table {
                1 0
                2 17
                3 19
                4 21
                5 21
                6 21
                7 21
                8 21
                9 30
                10 33
                11 44
                12 44
                13 45
                14 52
                15 147
                16 150
                17 169
                18 172
                19 189
                20 353
            };
        \addplot+[color=red, mark=none]  table {
                1 1
                2 21
                3 23
                4 26
                5 26
                6 26
                7 27
                8 28
                9 39
                10 43
                11 54
                12 55
                13 56
                14 66
                15 213
                16 267
                17 270
                18 270
                19 331
                20 657
            };
    \end{axis}
\end{tikzpicture} \\
            \rotatebox{90}{\hspace{0.9cm}\Deepthreehun} & \begin{tikzpicture}[xscale=0.25, yscale=0.20]
\pgfplotsset{compat=newest}

\begin{axis}[
    width=15cm,
    height=15cm,
    enlargelimits=false,
    ymode=log,
    ytick pos=left,
    ylabel={\huge Time in ms},
    xmin=1,
    xmax=20,
    xtick pos=left,
    xlabel={\huge \#Queries},
    tick label style={font=\huge},
    tick align=outside,
    legend cell align=left,
    legend style={at={(0,1)}, anchor=north west, font=\huge}
]
\addplot+[color=green, mark=none] table {
1 98
2 482
3 1474
4 1631
5 1858
6 1860
7 2144
8 2171
9 3342
10 9458
11 13879
12 17378
13 22907
14 28079
15 28359
16 56775
17 87045
18 191397
19 222665
20 408561
};
\addplot+[color=blue, mark=none]  table {
1 3987
2 3991
3 4001
4 4008
5 4012
6 4037
7 4038
8 4046
9 4047
10 4050
11 4094
12 4101
13 4136
14 4141
15 4147
16 4172
17 4354
18 4637
19 12724
20 97494
};
\addplot+[color=red, mark=none]  table {
1 4105
2 4261
3 4265
4 4284
5 4298
6 4315
7 4328
8 4350
9 4362
10 4362
11 4364
12 4379
13 4411
14 4473
15 4482
16 4492
17 4621
18 4880
19 12955
20 105085
};
\end{axis}
\end{tikzpicture} & \begin{tikzpicture}[xscale=0.25, yscale=0.20]
\pgfplotsset{compat=newest}

\begin{axis}[
    width=15cm,
    height=15cm,
    enlargelimits=false,
    ymode=log,
    xtick pos=left,
    ylabel={\huge \#Total facts},
    xmin=1,
    xmax=20,
    xtick pos=left,
    xlabel={\huge \#Queries},
    tick label style={font=\huge},
    tick align=outside,
    legend cell align=left,
    legend style={at={(0,1)}, anchor=north west, font=\huge}
]
\addplot+[color=green, mark=none] table {
1 508
2 1118
3 1220
4 1224
5 1261
6 1383
7 1551
8 1997
9 313573
10 877939
11 1571724
12 1931912
13 1931924
14 2515492
15 3110247
16 3112444
17 7401835
18 14650417
19 16502214
20 31904587
};
\addplot+[color=blue, mark=none]  table {
1 0
2 38
3 39
4 42
5 45
6 48
7 54
8 58
9 59
10 90
11 95
12 97
13 107
14 111
15 535
16 604
17 604
18 714
19 811
20 2445
};
\addplot+[color=red, mark=none]  table {
1 1
2 48
3 49
4 53
5 53
6 61
7 66
8 72
9 72
10 73
11 84
12 102
13 105
14 165
15 333
16 365
17 607
18 768
19 902
20 1420
};
\end{axis}
\end{tikzpicture} & \begin{tikzpicture}[xscale=0.25, yscale=0.20]
    \pgfplotsset{compat=newest}

    \begin{axis}[
            width=15cm,
            height=15cm,
            enlargelimits=false,
            ymode=log,
            ytick pos=left,
            ylabel={\huge \#Rules},
            xmin=1,
            xmax=20,
            xtick pos=left,
            xlabel={\huge \#Queries},
            tick label style={font=\huge},
            tick align=outside,
            legend cell align=left,
            legend style={at={(0,1)}, anchor=north west, font=\huge}
        ]
        \addplot+[color=green, mark=none] table {
                1 582
                2 2098
                3 2309
                4 2334
                5 2338
                6 3761
                7 4560
                8 4739
                9 5045
                10 5760
                11 6197
                12 6528
                13 6786
                14 7327
                15 7644
                16 7802
                17 8772
                18 9009
                19 10311
                20 10352
            };
        \addplot+[color=blue, mark=none]  table {
                1 0
                2 19
                3 21
                4 21
                5 21
                6 23
                7 27
                8 27
                9 30
                10 44
                11 47
                12 48
                13 52
                14 55
                15 255
                16 256
                17 258
                18 304
                19 340
                20 667
            };
        \addplot+[color=red, mark=none]  table {
                1 1
                2 23
                3 26
                4 26
                5 28
                6 28
                7 34
                8 34
                9 39
                10 55
                11 58
                12 60
                13 66
                14 69
                15 314
                16 414
                17 424
                18 515
                19 518
                20 1447
            };
    \end{axis}
\end{tikzpicture} \\
            \rotatebox{90}{\hspace{1.1cm}\Ont}          & \begin{tikzpicture}[xscale=0.25, yscale=0.20]
    \pgfplotsset{compat=newest}

    \begin{axis}[
            width=15cm,
            height=15cm,
            enlargelimits=false,
            ymode=log,
            ytick pos=left,
            ylabel={\huge Time in ms},
            xmin=1,
            xmax=20,
            ymax=1000000,
            xtick pos=left,
            xlabel={\huge \#Queries},
            tick label style={font=\huge},
            tick align=outside,
            legend cell align=left,
            legend style={at={(0,1)}, anchor=north west, font=\huge}
        ]
        \addplot+[color=black, mark=none] table {
                0 59580
                20 59580
            };
        \addplot+[color=green, mark=none] table {
                1 499985
                2 501778
                3 505570
                4 516381
                5 518092
                6 522314
                7 528483
                8 529679
                9 531284
                10 531496
                11 536398
                12 537240
                13 537324
                14 538026
                15 541536
                16 552055
                17 557685
                18 558335
                19 607410
            };
        \addplot+[color=blue, mark=none]  table {
                1 191
                2 200
                3 202
                4 202
                5 355
                6 360
                7 391
                8 414
                9 493
                10 554
                11 563
                12 627
                13 650
                14 706
                15 717
                16 772
                17 3733
                18 4013
                19 4026
            };
        \addplot+[color=red, mark=none]  table {
                1 196
                2 202
                3 204
                4 205
                5 306
                6 349
                7 357
                8 364
                9 368
                10 372
                11 396
                12 406
                13 475
                14 510
                15 537
                16 702
                17 3633
                18 3659
                19 3703
            };
    \end{axis}
\end{tikzpicture}  & \begin{tikzpicture}[xscale=0.25, yscale=0.20]
    \pgfplotsset{compat=newest}

    \begin{axis}[
            width=15cm,
            height=15cm,
            enlargelimits=false,
            ymode=log,
            xtick pos=left,
            ylabel={\huge \#Total facts},
            xmin=1,
            xmax=20,
            ymax=1000000000,
            xtick pos=left,
            xlabel={\huge \#Queries},
            tick label style={font=\huge},
            tick align=outside,
            legend cell align=left,
            legend style={at={(0,1)}, anchor=north west, font=\huge}
        ]
        \addplot+[color=black, mark=none] table {
                0 7820377
                20 7820377
            };
        \addplot+[color=green, mark=none] table {
                1 268606204
                2 270407771
                3 271008102
                4 271347381
                5 280538593
                6 282805872
                7 285207257
                8 287522370
                9 287759820
                10 287771788
                11 287782785
                12 287807009
                13 288254268
                14 290516881
                15 294814947
                16 301558356
                17 301572084
                18 304186839
                19 329966506
            };
        \addplot+[color=blue, mark=none]  table {
                1 1
                2 1
                3 1
                4 1
                5 49671
                6 59237
                7 78284
                8 78570
                9 79060
                10 79631
                11 98156
                12 106201
                13 108624
                14 127909
                15 128198
                16 140228
                17 140323
                18 147486
                19 157433
            };
        \addplot+[color=red, mark=none]  table {
                1 1
                2 1
                3 1
                4 1
                5 39780
                6 58823
                7 59088
                8 59642
                9 59891
                10 59985
                11 65013
                12 68847
                13 69241
                14 69425
                15 75729
                16 79013
                17 79286
                18 88965
                19 134737
            };
    \end{axis}
\end{tikzpicture}  & \begin{tikzpicture}[xscale=0.25, yscale=0.20]
    \pgfplotsset{compat=newest}

    \begin{axis}[
            width=15cm,
            height=15cm,
            enlargelimits=false,
            ymode=log,
            ytick pos=left,
            ylabel={\huge \#Rules},
            xmin=1,
            xmax=20,
            ymax=100000,
            xtick pos=left,
            xlabel={\huge \#Queries},
            tick label style={font=\huge},
            tick align=outside,
            legend cell align=left,
            legend style={at={(0,1)}, anchor=north west, font=\huge}
        ]
        \addplot+[color=black, mark=none] table {
                0 1492
                20 1492
            };
        \addplot+[color=green, mark=none] table {
                1 8375
                2 8376
                3 8379
                4 8380
                5 8380
                6 8380
                7 8381
                8 8381
                9 8382
                10 8383
                11 8384
                12 8384
                13 8385
                14 8387
                15 8388
                16 8390
                17 8395
                18 8398
                19 8399
            };
        \addplot+[color=blue, mark=none]  table {
                1 1
                2 1
                3 1
                4 1
                5 4
                6 5
                7 5
                8 5
                9 5
                10 6
                11 7
                12 7
                13 8
                14 8
                15 9
                16 10
                17 10
                18 10
                19 10
            };
        \addplot+[color=red, mark=none]  table {
                1 1
                2 1
                3 1
                4 1
                5 8
                6 9
                7 9
                8 10
                9 10
                10 11
                11 14
                12 14
                13 15
                14 15
                15 18
                16 19
                17 19
                18 19
                19 19
            };
    \end{axis}
\end{tikzpicture}  \\
        \end{tabular}
    }
    \caption{The results for \textcolor{black}{\matrun}, \textcolor{blue}{\relevancerun}, \textcolor{green}{\magicrun}, and \textcolor{red}{\allrun} on first-order scenarios}\label{fig:fo:results}
\end{figure}

Figure~\ref{fig:fo:results} shows the distributions of our results on
first-order scenarios using cactus plots: the horizontal axis enumerates all
queries sorted by the value being shown, and the vertical axis shows the
corresponding value (i.e., the query time in milliseconds, the number of
derived facts, and the number of the rules in the program $P_7$). In each
figure, \relevancerun (shown in blue), \magicrun (shown in green), and \allrun
(shown in red) refer to the variant of Algorithm~\ref{alg:answer-query} that
uses the relevance analysis only, the magic sets only, and both techniques in
combination, respectively. Furthermore, \matrun (shown in black) provides the
baseline of computing the chase in full, which is equivalent to using
Algorithm~\ref{alg:answer-query} with no relevance analysis or magic sets:
singularisation and desingularisation then cancel each other out, and
Skolemisation and elimination of function symbols in the body are already
required by the RDFox-based chase computation procedure described in
Section~\ref{sec:evaluation:test-setting}. Finally, Table~\ref{tab:fo:results}
summarises the distributions by showing the minimum, maximum, and median
values. Computing the chase in full for \Deepthreehun was aborted after two
hours.

As one can see, our techniques are generally very effective: a query can
sometimes be answered orders of magnitude faster than by computing the chase in
full. In fact, goal-driven query answering can mean the difference between
success and failure: whereas \matrun failed for \Deepthreehun, we were able to
successfully answer all queries of this scenario. This may not seem surprising:
both relevance analysis and magic sets identify a subset of the relevant rules,
and so computing the chase in line~\ref{alg:answer-query:chase} of
Algorithm~\ref{alg:answer-query} should intuitively be more efficient than
computing the chase in full. However, a more detailed analysis of our results
reveals a much more nuanced picture, as we discuss next.

Relevance analysis always returns a subset of the input rules, so one may
naturally expect it to be always faster than computing the chase in full.
However, the query times for \relevancerun are the same for most queries of
\Deeptwohun and \Deepthreehun, which suggests that the technique contains a
fixed overhead. Indeed, our analysis revealed that, on these scenarios, the
fixpoint computation on the critical instance in
line~\ref{alg:relevance:abstraction-fixpoint} of Algorithm~\ref{alg:relevance}
dominates the overall processing time. As a result, \relevancerun is only
marginally faster than \matrun, and considerably slower than \magicrun on most
queries. In all other cases, the overhead of relevance analysis is negligible
and the technique seems very effective.

The rules relevant to a query could be determined by a simple reachability
analysis: we identify the query predicate $\predQ$ as reachable; we
transitively propagate reachability over the rules (i.e., if a predicate
occurring in a head atom of a rule is reachable, then all predicates occurring
in the rule body are reachable too); and we select all rules containing a
reachable predicate in the head. However, as soon as the input dependencies
contain equality, rules \eqref{eq:dom-R}, \eqref{eq:ref}, and \eqref{eq:cong}
ensure that all predicates, and thus all rules, are reachable. In contrast, our
relevance analysis technique computes much smaller programs on the \Ont
scenario, suggesting that the technique is considerably more effective than
simple reachability. This is because our technique does not consider body atoms
in isolation, but instead tries to identify whether a rule has relevant
instances. This, in turn, seems to be key to an effective analysis of
inferences with equality.

The magic sets algorithm implicitly performs the reachability analysis but also
identifies relevant rule instances. This allows \magicrun to be faster than
\matrun on all scenarios apart from \Ont. However, by introducing magic rules,
the algorithm can sometimes produce programs that are larger than the input,
and the evaluation of these additional rules can be a source of overhead. For
example, \magicrun is consistently slower than \matrun on \Ont, primarily
because of the magic rules. In fact, as Table~\ref{tab:fo:results} shows, the
ratios of the numbers of total and useful facts are orders of magnitude higher
for \magicrun than for \matrun and \relevancerun, which suggests further that
deriving magic facts can be costly.

The combination of both techniques often produces the best results (e.g., on
\LUBM and some \Ont queries). Intuitively, the relevance analysis can reduce
the number of rules sufficiently so that the introduction of magic rules leads
to further improvements in query answering performance.

\subsection{Experiments with Second-Order Dependencies}

\begin{table}[tb!]
    \caption{Summary of the evaluation results on second-order scenarios}\label{tab:so:results}
    \centering\tiny
    \newcommand{\scen}[1]{\multirow{2}{*}{\ensuremath{#1}}}
    \newcommand{\val}[2]{#1 & #2}
    \newcommand{\hmin}{\multicolumn{2}{c|}{min}}
    \newcommand{\hmax}{\multicolumn{2}{c|}{max}}
    \newcommand{\hmed}{\multicolumn{2}{c||}{median}}
    \newcommand{\hlmed}{\multicolumn{2}{c}{median}}
    \begin{tabular}{c|c|r@{\,}l||r@{\,}l|r@{\,}l|r@{\,}l||r@{\,}l|r@{\,}l|r@{\,}l||r@{\,}l|r@{\,}l|r@{\,}l}
        \hline
        \multicolumn{2}{c|}{}       & \\[-1ex]
        \multicolumn{2}{c|}{}       & \multicolumn{20}{c}{\textbf{Query times (seconds)}} \\[1ex]
        \cline{3-22}
        \multicolumn{2}{c|}{}       & \multicolumn{2}{c||}{\matrun} & \multicolumn{6}{c||}{\relevancerun}                       & \multicolumn{6}{c||}{\magicrun}                               & \multicolumn{6}{c}{\allrun} \\
        \cline{5-22}
        \multicolumn{2}{c|}{}       & \multicolumn{2}{c||}{}        & \hmin             & \hmax             & \hmed             & \hmin             & \hmax             & \hmed                 & \hmin             & \hmax             & \hlmed          \\
        \hline
        \scen{\Gensix}      & 1k    & \val{1.78}{}                  & \val{0.02}{}      & \val{1.72}{}      & \val{0.03}{}      & \val{0.98}{}      & \val{2.54}{}      & \val{1.03}{}          & \val{0.02}{} & \val{2.07}{}           & \val{0.03}{}    \\
                            & 10k   & \val{26.54}{}                 & \val{0.04}{}      & \val{18.61}{}     & \val{0.09}{}      & \val{14.87}{}     & \val{35.61}{}     & \val{15.34}{}         & \val{0.04}{} & \val{27.05}{}          & \val{0.09}{}    \\
                            & 50k   & \val{147.49}{}                & \val{0.09}{}      & \val{120.92}{}    & \val{0.30}{}      & \val{84.78}{}     & \val{191.05}{}    & \val{87.67}{}         & \val{0.10}{} & \val{138.36}{}         & \val{0.35}{}    \\
        \hline
        \scen{\Genseven}    & 1k    & \val{5.60}{}                  & \val{0.03}{}      & \val{5.09}{}      & \val{0.04}{}      & \val{4.13}{}      & \val{7.47}{}      & \val{4.55}{}          & \val{0.03}{} & \val{6.58}{}           & \val{0.04}{}    \\
                            & 10k   & \val{73.80}{}                 & \val{0.05}{}      & \val{62.04}{}     & \val{0.10}{}      & \val{53.92}{}     & \val{98.78}{}     & \val{58.77}{}         & \val{0.05}{} & \val{73.87}{}          & \val{0.10}{}    \\
                            & 50k   & \val{397.35}{}                & \val{0.06}{}      & \val{281.57}{}    & \val{0.26}{}      & \val{265.60}{}    & \val{448.89}{}    & \val{295.24}{}        & \val{0.09}{} & \val{325.11}{}         & \val{0.28}{}    \\
        \hline
        \scen{\Geneight}    & 1k    & \val{3.91}{}                  & \val{0.02}{}      & \val{3.84}{}      & \val{0.02}{}      & \val{4.31}{}      & \val{5.41}{}      & \val{4.38}{}          & \val{0.02}{} & \val{5.12}{}           & \val{0.03}{}    \\
                            & 10k   & \val{55.00}{}                 & \val{0.04}{}      & \val{51.66}{}     & \val{0.07}{}      & \val{56.41}{}     & \val{73.20}{}     & \val{57.29}{}         & \val{0.04}{} & \val{60.77}{}          & \val{0.08}{}    \\
                            & 50k   & \val{288.05}{}                & \val{0.08}{}      & \val{253.21}{}    & \val{0.20}{}      & \val{289.70}{}    & \val{370.28}{}    & \val{297.47}{}        & \val{0.08}{} & \val{307.23}{}         & \val{0.25}{}    \\
        \hline
        \multicolumn{2}{c|}{}       & \\[-1ex]
        \multicolumn{2}{c|}{}       & \multicolumn{20}{c}{\textbf{The numbers of total derived facts}} \\[1ex]
        \cline{3-22}
        \multicolumn{2}{c|}{}       & \multicolumn{2}{c||}{\matrun} & \multicolumn{6}{c||}{\relevancerun}                       & \multicolumn{6}{c||}{\magicrun}                               & \multicolumn{6}{c}{\allrun} \\
        \cline{5-22}
        \multicolumn{2}{c|}{}       & \multicolumn{2}{c||}{}        & \hmin             & \hmax             & \hmed             & \hmin             & \hmax                 & \hmed             & \hmin             & \hmax             & \hlmed          \\
        \hline
        \scen{\Gensix}      & 1k    & \val{1.59}{M}                 & \val{4.00}{k}     & \val{1.48}{M}     & \val{12.50}{k}    & \val{1.01}{M}     & \val{2.46}{M}         & \val{1.02}{M}     & \val{4.00}{k}     & \val{1.72}{M}     & \val{13.50}{k}  \\
                            & 10k   & \val{15.9}{M}                 & \val{40.00}{k}    & \val{14.82}{M}    & \val{125.00}{k}   & \val{10.13}{M}    & \val{24.65}{M}        & \val{10.25}{M}    & \val{40.00}{k}    & \val{17.27}{M}    & \val{135.00}{k} \\
                            & 50k   & \val{79.5}{M}                 & \val{200.00}{k}   & \val{74.10}{M}    & \val{625.00}{k}   & \val{50.65}{M}    & \val{123.25}{M}       & \val{51.25}{M}    & \val{200.00}{k}   & \val{86.40}{M}    & \val{675.00}{k} \\
        \hline
        \scen{\Genseven}    & 1k    & \val{2.78}{M}                 & \val{4.00}{k}     & \val{2.76}{M}     & \val{12.00}{k}    & \val{2.35}{M}     & \val{5.10}{M}         & \val{2.39}{M}     & \val{4.00}{k}     & \val{3.68}{M}     & \val{13.00}{k}  \\
                            & 10k   & \val{27.87}{M}                & \val{40.00}{k}    & \val{27.69}{M}    & \val{120.00}{k}   & \val{23.57}{M}    & \val{51.04}{M}        & \val{23.96}{M}    & \val{40.00}{k}    & \val{36.87}{M}    & \val{130.00}{k} \\
                            & 50k   & \val{139.35}{M}               & \val{200.00}{k}   & \val{138.45}{M}   & \val{600.00}{k}   & \val{117.85}{M}   & \val{255.20}{M}       & \val{119.82}{M}   & \val{200.00}{k}   & \val{184.35}{M}   & \val{650.00}{k} \\
        \hline
        \scen{\Geneight}    & 1k    & \val{3.08}{M}                 & \val{4.00}{k}     & \val{2.94}{M}     & \val{9.50}{k}     & \val{2.39}{M}     & \val{3.52}{M}         & \val{2.42}{M}     & \val{4.00}{k}     & \val{2.93}{M}     & \val{11.00}{k}  \\
                            & 10k   & \val{30.81}{M}                & \val{40.00}{k}    & \val{29.45}{M}    & \val{95.00}{k}    & \val{23.94}{M}    & \val{35.21}{M}        & \val{24.25}{M}    & \val{40.00}{k}    & \val{29.38}{M}    & \val{110.00}{k} \\
                            & 50k   & \val{154.05}{M}               & \val{200.00}{k}   & \val{147.25}{M}   & \val{475.00}{k}   & \val{119.7}{M}    & \val{176.05}{M}       & \val{121.25}{M}   & \val{200.00}{k}   & \val{146.9}{M}    & \val{550.00}{k} \\
        \hline
        \multicolumn{2}{c|}{}       & \\[-1ex]
        \multicolumn{2}{c|}{}       & \multicolumn{20}{c}{\textbf{The numbers of useful derived facts}} \\[1ex]
        \cline{3-22}
        \multicolumn{2}{c|}{}       & \multicolumn{2}{c||}{\matrun} & \multicolumn{6}{c||}{\relevancerun}                       & \multicolumn{6}{c||}{\magicrun}                           & \multicolumn{6}{c}{\allrun} \\
        \cline{5-22}
        \multicolumn{2}{c|}{}       & \multicolumn{2}{c||}{}        & \hmin             & \hmax             & \hmed             & \hmin             & \hmax             & \hmed             & \hmin             & \hmax             & \hlmed           \\
        \hline
        \scen{\Gensix}      & 1k    & \val{1.01}{M}                 & \val{3.00}{k}     & \val{868.00}{k}   & \val{12.50}{k}    & \val{402.00}{k}   & \val{440.00}{k}   & \val{404.00}{k}   & \val{3.00}{k}     & \val{410.00}{k}   & \val{8.00}{k}    \\
                            & 10k   & \val{10.10}{M}                & \val{30.00}{k}    & \val{8.68}{M}     & \val{125.00}{k}   & \val{4.02}{M}     & \val{4.40}{M}     & \val{4.04}{M}     & \val{30.00}{k}    & \val{4.10}{M}     & \val{80.00}{k}   \\
                            & 50k   & \val{50.50}{M}                & \val{150.00}{k}   & \val{43.40}{M}    & \val{625.00}{k}   & \val{20.10}{M}    & \val{22.00}{M}    & \val{20.00}{M}    & \val{150.00}{k}   & \val{20.50}{M}    & \val{400.00}{k}  \\
        \hline
        \scen{\Genseven}    & 1k    & \val{999.00}{k}               & \val{3.00}{k}     & \val{837.00}{k}   & \val{9.00}{k}     & \val{406.00}{k}   & \val{500.00}{k}   & \val{409.00}{k}   & \val{3.00}{k}     & \val{416.00}{k}   & \val{8.00}{k}    \\
                            & 10k   & \val{10.00}{M}                & \val{30.00}{k}    & \val{8.37}{M}     & \val{90.00}{k}    & \val{4.06}{M}     & \val{5.00}{M}     & \val{4.09}{M}     & \val{30.00}{k}    & \val{4.16}{M}     & \val{80.00}{k}   \\
                            & 50k   & \val{49.95}{M}                & \val{150.00}{k}   & \val{41.85}{M}    & \val{450.00}{k}   & \val{20.30}{M}    & \val{25.00}{M}    & \val{20.47}{M}    & \val{150.00}{k}   & \val{20.80}{M}    & \val{400.00}{k}  \\
        \hline
        \scen{\Geneight}    & 1k    & \val{825.00}{k}               & \val{3.00}{k}     & \val{691.00}{k}   & \val{7.00}{k}     & \val{333.00}{k}   & \val{362.00}{k}   & \val{335.00}{k}   & \val{3.00}{k}     & \val{341.00}{k}   & \val{7.00}{k}    \\
                            & 10k   & \val{8.25}{M}                 & \val{30.00}{k}    & \val{6.91}{M}     & \val{70.00}{k}    & \val{3.33}{M}     & \val{3.62}{M}     & \val{3.35}{M}     & \val{30.00}{k}    & \val{3.41}{M}     & \val{70.00}{k}   \\
                            & 50k   & \val{41.25}{M}                & \val{150.00}{k}   & \val{34.55}{M}    & \val{350.00}{k}   & \val{16.65}{M}    & \val{18.10}{M}    & \val{16.75}{M}    & \val{150.00}{k}   & \val{17.05}{M}    & \val{350.00}{k}  \\
        \hline
    \end{tabular}
\end{table}

\begin{figure}[tb!]
    \centering
    \scalebox{0.9}{
        \begin{tabular}{ccc}
            \Gensix                                     & \Genseven                                   & \Geneight                                   \\
            \begin{tikzpicture}[xscale=0.25, yscale=0.20]
    \pgfplotsset{compat=newest}

    \begin{axis}[
            width=15cm,
            height=15cm,
            enlargelimits=false,
            ymode=log,
            ytick pos=left,
            ylabel={\huge \#Rules},
            xmin=0,
            xmax=120,
            ymax=10000,
            xtick={1,20,40,60,80,100,120},
            xtick pos=left,
            xlabel={\huge \#Queries},
            tick label style={font=\huge},
            tick align=outside,
            legend cell align=left,
            legend style={at={(0,1)}, anchor=north west, font=\huge}
        ]
        \addplot+[color=black, mark=none] table {
                0 1987
                120 1987
            };
        \addplot+[color=green, mark=none] table {
                1 3504
                2 3505
                3 3505
                4 3506
                5 3506
                6 3507
                7 3508
                8 3508
                9 3508
                10 3508
                11 3510
                12 3510
                13 3512
                14 3513
                15 3513
                16 3513
                17 3514
                18 3514
                19 3514
                20 3514
                21 3514
                22 3514
                23 3514
                24 3515
                25 3515
                26 3515
                27 3515
                28 3515
                29 3515
                30 3515
                31 3515
                32 3515
                33 3516
                34 3516
                35 3516
                36 3516
                37 3516
                38 3516
                39 3516
                40 3516
                41 3517
                42 3517
                43 3517
                44 3518
                45 3518
                46 3519
                47 3519
                48 3519
                49 3519
                50 3519
                51 3519
                52 3519
                53 3519
                54 3519
                55 3520
                56 3520
                57 3520
                58 3520
                59 3520
                60 3520
                61 3520
                62 3520
                63 3520
                64 3521
                65 3521
                66 3521
                67 3521
                68 3521
                69 3521
                70 3521
                71 3521
                72 3521
                73 3521
                74 3521
                75 3521
                76 3521
                77 3521
                78 3521
                79 3521
                80 3521
                81 3521
                82 3521
                83 3521
                84 3521
                85 3521
                86 3521
                87 3521
                88 3521
                89 3521
                90 3521
                91 3521
                92 3521
                93 3521
                94 3521
                95 3521
                96 3521
                97 3521
                98 3522
                99 3522
                100 3522
                101 3522
                102 3522
                103 3522
                104 3522
                105 3522
                106 3522
                107 3522
                108 3522
                109 3522
                110 3522
                111 3522
                112 3522
                113 3523
                114 3523
                115 3523
                116 3523
                117 3523
                118 3523
                119 3524
                120 3524
            };
        \addplot+[color=blue, mark=none]  table {
                1 4
                2 4
                3 4
                4 4
                5 6
                6 6
                7 6
                8 6
                9 7
                10 7
                11 8
                12 8
                13 9
                14 9
                15 9
                16 9
                17 9
                18 9
                19 9
                20 9
                21 9
                22 9
                23 9
                24 9
                25 9
                26 9
                27 9
                28 10
                29 11
                30 11
                31 11
                32 11
                33 11
                34 11
                35 11
                36 11
                37 11
                38 11
                39 12
                40 12
                41 12
                42 12
                43 12
                44 12
                45 14
                46 14
                47 14
                48 14
                49 14
                50 14
                51 15
                52 15
                53 15
                54 15
                55 15
                56 16
                57 16
                58 16
                59 16
                60 16
                61 17
                62 17
                63 17
                64 17
                65 18
                66 18
                67 18
                68 19
                69 19
                70 19
                71 20
                72 20
                73 20
                74 21
                75 22
                76 23
                77 23
                78 24
                79 25
                80 25
                81 28
                82 28
                83 28
                84 28
                85 30
                86 31
                87 31
                88 31
                89 32
                90 32
                91 37
                92 39
                93 39
                94 40
                95 40
                96 40
                97 41
                98 42
                99 43
                100 43
                101 43
                102 43
                103 44
                104 47
                105 47
                106 57
                107 60
                108 61
                109 68
                110 70
                111 80
                112 1816
                113 1816
                114 1822
                115 1822
                116 1822
                117 1822
                118 1822
                119 1822
                120 1822
            };
        \addplot+[color=red, mark=none]  table {
                1 6
                2 6
                3 6
                4 6
                5 8
                6 8
                7 8
                8 9
                9 10
                10 10
                11 11
                12 11
                13 11
                14 12
                15 12
                16 12
                17 12
                18 12
                19 12
                20 12
                21 12
                22 12
                23 12
                24 12
                25 12
                26 12
                27 12
                28 13
                29 14
                30 14
                31 14
                32 14
                33 14
                34 14
                35 14
                36 14
                37 14
                38 14
                39 15
                40 16
                41 16
                42 16
                43 16
                44 16
                45 17
                46 18
                47 18
                48 18
                49 18
                50 18
                51 19
                52 19
                53 20
                54 20
                55 20
                56 20
                57 20
                58 20
                59 20
                60 20
                61 20
                62 21
                63 21
                64 21
                65 22
                66 23
                67 23
                68 23
                69 23
                70 24
                71 24
                72 25
                73 26
                74 27
                75 28
                76 29
                77 29
                78 29
                79 31
                80 31
                81 34
                82 36
                83 36
                84 36
                85 38
                86 38
                87 40
                88 40
                89 42
                90 42
                91 47
                92 49
                93 49
                94 50
                95 51
                96 51
                97 53
                98 53
                99 53
                100 53
                101 54
                102 55
                103 56
                104 58
                105 59
                106 71
                107 73
                108 75
                109 84
                110 86
                111 98
                112 3227
                113 3230
                114 3230
                115 3232
                116 3232
                117 3233
                118 3233
                119 3235
                120 3235
            };
    \end{axis}
\end{tikzpicture} & \begin{tikzpicture}[xscale=0.25, yscale=0.20]
\pgfplotsset{compat=newest}

\begin{axis}[
    width=15cm,
    height=15cm,
    enlargelimits=false,
    ymode=log,
    ytick pos=left,
    ylabel={\huge \#Rules},
    xmin=0,
    xmax=120,
    xtick={1,20,40,60,80,100,120},
    xtick pos=left,
    xlabel={\huge \#Queries},
    tick label style={font=\huge},
    tick align=outside,
    legend cell align=left,
    legend style={at={(0,1)}, anchor=north west, font=\huge}
]
\addplot+[color=black, mark=none] table {
0 2005
120 2005
};
\addplot+[color=green, mark=none] table {
1 3506
2 3508
3 3509
4 3514
5 3516
6 3517
7 3518
8 3518
9 3519
10 3519
11 3520
12 3520
13 3521
14 3521
15 3521
16 3521
17 3521
18 3522
19 3522
20 3522
21 3522
22 3522
23 3522
24 3522
25 3523
26 3523
27 3523
28 3523
29 3523
30 3523
31 3523
32 3523
33 3524
34 3524
35 3524
36 3525
37 3525
38 3525
39 3525
40 3525
41 3525
42 3525
43 3525
44 3525
45 3525
46 3526
47 3526
48 3526
49 3526
50 3527
51 3527
52 3527
53 3527
54 3527
55 3527
56 3527
57 3527
58 3527
59 3527
60 3527
61 3528
62 3528
63 3528
64 3528
65 3528
66 3528
67 3528
68 3528
69 3528
70 3528
71 3528
72 3528
73 3529
74 3529
75 3529
76 3529
77 3529
78 3529
79 3529
80 3529
81 3529
82 3529
83 3529
84 3529
85 3529
86 3529
87 3529
88 3529
89 3529
90 3529
91 3529
92 3529
93 3529
94 3529
95 3529
96 3529
97 3529
98 3530
99 3530
100 3530
101 3530
102 3530
103 3530
104 3530
105 3530
106 3530
107 3530
108 3530
109 3530
110 3530
111 3530
112 3530
113 3531
114 3531
115 3531
116 3531
117 3531
118 3531
119 3532
120 3532
};
\addplot+[color=blue, mark=none]  table {
1 4
2 4
3 4
4 4
5 4
6 4
7 4
8 4
9 4
10 4
11 4
12 4
13 4
14 6
15 6
16 6
17 6
18 6
19 6
20 6
21 6
22 7
23 8
24 9
25 9
26 9
27 9
28 9
29 9
30 9
31 9
32 9
33 9
34 9
35 9
36 11
37 11
38 11
39 11
40 12
41 12
42 12
43 12
44 12
45 12
46 12
47 12
48 14
49 14
50 14
51 14
52 14
53 14
54 14
55 14
56 15
57 15
58 15
59 16
60 16
61 17
62 17
63 18
64 18
65 18
66 19
67 19
68 19
69 19
70 20
71 20
72 20
73 21
74 21
75 23
76 24
77 24
78 25
79 27
80 29
81 30
82 31
83 31
84 32
85 32
86 33
87 35
88 36
89 37
90 40
91 41
92 43
93 48
94 50
95 54
96 54
97 56
98 56
99 59
100 60
101 95
102 1801
103 1801
104 1801
105 1801
106 1801
107 1801
108 1801
109 1801
110 1801
111 1801
112 1801
113 1804
114 1804
115 1805
116 1805
117 1805
118 1807
119 1807
120 1807
};
\addplot+[color=red, mark=none]  table {
1 6
2 6
3 6
4 6
5 6
6 6
7 6
8 6
9 6
10 6
11 6
12 6
13 6
14 8
15 8
16 8
17 8
18 8
19 8
20 8
21 8
22 10
23 11
24 11
25 11
26 11
27 12
28 12
29 12
30 12
31 12
32 12
33 12
34 12
35 12
36 14
37 14
38 14
39 14
40 14
41 16
42 16
43 16
44 16
45 16
46 16
47 16
48 17
49 17
50 18
51 18
52 18
53 18
54 18
55 18
56 19
57 19
58 19
59 20
60 20
61 20
62 21
63 22
64 23
65 23
66 23
67 23
68 23
69 24
70 24
71 24
72 25
73 26
74 27
75 29
76 31
77 32
78 32
79 35
80 36
81 37
82 38
83 40
84 40
85 42
86 42
87 43
88 45
89 48
90 52
91 52
92 54
93 60
94 61
95 65
96 68
97 70
98 71
99 72
100 75
101 114
102 3212
103 3212
104 3213
105 3214
106 3214
107 3215
108 3216
109 3216
110 3217
111 3217
112 3219
113 3219
114 3220
115 3221
116 3221
117 3221
118 3222
119 3223
120 3223
};
\end{axis}
\end{tikzpicture} & \begin{tikzpicture}[xscale=0.25, yscale=0.20]
\pgfplotsset{compat=newest}

\begin{axis}[
    width=15cm,
    height=15cm,
    enlargelimits=false,
    ymode=log,
    ytick pos=left,
    ylabel={\huge \#Rules},
    xmin=0,
    xmax=120,
    ymax=10000,
    xtick={1,20,40,60,80,100,120},
    xtick pos=left,
    xlabel={\huge \#Queries},
    tick label style={font=\huge},
    tick align=outside,
    legend cell align=left,
    legend style={at={(0,1)}, anchor=north west, font=\huge}
]
\addplot+[color=black, mark=none] table {
0 1568
120 1568
};
\addplot+[color=green, mark=none] table {
1 2561
2 2561
3 2561
4 2564
5 2564
6 2565
7 2567
8 2567
9 2568
10 2568
11 2569
12 2570
13 2570
14 2570
15 2571
16 2571
17 2571
18 2572
19 2572
20 2572
21 2572
22 2572
23 2572
24 2574
25 2574
26 2574
27 2574
28 2574
29 2575
30 2575
31 2575
32 2575
33 2575
34 2575
35 2575
36 2575
37 2575
38 2575
39 2575
40 2575
41 2576
42 2576
43 2576
44 2576
45 2576
46 2576
47 2576
48 2576
49 2576
50 2576
51 2577
52 2577
53 2577
54 2577
55 2577
56 2577
57 2577
58 2577
59 2577
60 2577
61 2577
62 2577
63 2577
64 2577
65 2577
66 2577
67 2577
68 2577
69 2577
70 2577
71 2577
72 2577
73 2577
74 2577
75 2577
76 2577
77 2577
78 2577
79 2577
80 2577
81 2577
82 2577
83 2577
84 2577
85 2578
86 2578
87 2578
88 2578
89 2578
90 2578
91 2578
92 2578
93 2578
94 2578
95 2578
96 2578
97 2578
98 2578
99 2578
100 2578
101 2578
102 2578
103 2578
104 2578
105 2578
106 2578
107 2578
108 2578
109 2578
110 2578
111 2578
112 2578
113 2578
114 2578
115 2578
116 2578
117 2579
118 2579
119 2579
120 2579
};
\addplot+[color=blue, mark=none]  table {
1 4
2 4
3 4
4 4
5 4
6 4
7 4
8 4
9 4
10 4
11 6
12 6
13 6
14 6
15 6
16 6
17 7
18 9
19 9
20 9
21 9
22 9
23 9
24 9
25 9
26 9
27 9
28 9
29 9
30 9
31 9
32 9
33 9
34 10
35 11
36 11
37 11
38 11
39 11
40 11
41 12
42 12
43 12
44 12
45 12
46 12
47 12
48 12
49 12
50 12
51 12
52 12
53 12
54 12
55 12
56 12
57 12
58 12
59 12
60 12
61 12
62 13
63 14
64 14
65 14
66 14
67 14
68 14
69 14
70 14
71 14
72 15
73 15
74 16
75 16
76 16
77 16
78 17
79 17
80 17
81 17
82 17
83 17
84 17
85 17
86 17
87 18
88 18
89 18
90 18
91 19
92 19
93 19
94 20
95 20
96 20
97 21
98 21
99 22
100 25
101 27
102 29
103 31
104 35
105 35
106 36
107 37
108 38
109 38
110 43
111 43
112 46
113 57
114 81
115 1409
116 1409
117 1409
118 1409
119 1410
120 1415
};
\addplot+[color=red, mark=none]  table {
1 6
2 6
3 6
4 6
5 6
6 6
7 6
8 6
9 6
10 6
11 8
12 8
13 8
14 8
15 8
16 8
17 10
18 11
19 11
20 12
21 12
22 12
23 12
24 12
25 12
26 12
27 12
28 12
29 12
30 12
31 12
32 12
33 12
34 13
35 14
36 14
37 14
38 14
39 14
40 14
41 15
42 15
43 15
44 15
45 15
46 15
47 15
48 15
49 16
50 16
51 16
52 16
53 16
54 16
55 16
56 16
57 16
58 16
59 16
60 16
61 16
62 16
63 17
64 17
65 18
66 18
67 18
68 18
69 18
70 18
71 18
72 19
73 20
74 20
75 20
76 20
77 20
78 20
79 20
80 20
81 21
82 21
83 21
84 21
85 21
86 21
87 21
88 22
89 22
90 23
91 23
92 23
93 23
94 25
95 25
96 25
97 26
98 27
99 27
100 32
101 35
102 36
103 39
104 44
105 45
106 46
107 47
108 48
109 49
110 54
111 55
112 55
113 68
114 97
115 2371
116 2373
117 2381
118 2381
119 2382
120 2415
};
\end{axis}
\end{tikzpicture} \\
        \end{tabular}
    }
    \caption{The numbers of rules for \matrun, \textcolor{green}{\magicrun}, \textcolor{blue}{\relevancerun}, and \textcolor{red}{\allrun} on SO scenarios}\label{fig:so:results:1}
\end{figure}

\begin{figure}[p]
    \centering
    \scalebox{0.9}{
        \begin{tabular}{cccc}
                                                           & \Gensix-1k                                        & \Gensix-10k                                        & \Gensix-50k                                        \\
            \rotatebox{90}{\hspace{0.5cm}Query Times (ms)} & \begin{tikzpicture}[xscale=0.25, yscale=0.20]
\pgfplotsset{compat=newest}

\begin{axis}[
    width=15cm,
    height=15cm,
    enlargelimits=false,
    ymode=log,
    ytick pos=left,
    ylabel={\huge Time in ms},
    xmin=0,
    xmax=120,
    xtick={1,20,40,60,80,100,120},
    xtick pos=left,
    xlabel={\huge \#Queries},
    tick label style={font=\huge},
    tick align=outside,
    legend cell align=left,
    legend style={at={(0,1)}, anchor=north west, font=\huge}
]
\addplot+[color=black, mark=none] table {
0 1782
120 1782
};
\addplot+[color=green, mark=none] table {
1 981
2 986
3 995
4 997
5 998
6 999
7 999
8 1000
9 1000
10 1000
11 1002
12 1002
13 1003
14 1005
15 1006
16 1006
17 1007
18 1007
19 1007
20 1008
21 1008
22 1009
23 1010
24 1010
25 1010
26 1010
27 1010
28 1011
29 1011
30 1011
31 1012
32 1012
33 1014
34 1014
35 1014
36 1015
37 1017
38 1017
39 1017
40 1018
41 1018
42 1019
43 1019
44 1019
45 1020
46 1020
47 1022
48 1022
49 1023
50 1024
51 1024
52 1025
53 1025
54 1026
55 1026
56 1026
57 1028
58 1029
59 1030
60 1030
61 1030
62 1030
63 1031
64 1032
65 1032
66 1032
67 1033
68 1034
69 1034
70 1034
71 1037
72 1040
73 1042
74 1042
75 1045
76 1048
77 1050
78 1051
79 1052
80 1055
81 1062
82 1062
83 1066
84 1070
85 1072
86 1090
87 1091
88 1102
89 1112
90 1113
91 1115
92 1117
93 1122
94 1125
95 1125
96 1132
97 1142
98 1147
99 1148
100 1161
101 1559
102 1596
103 1599
104 1602
105 1620
106 1622
107 1627
108 1642
109 1644
110 1659
111 1669
112 1685
113 1704
114 1713
115 1715
116 1725
117 1755
118 1801
119 2252
120 2544
};
\addplot+[color=blue, mark=none]  table {
1 27
2 28
3 28
4 28
5 28
6 28
7 28
8 29
9 29
10 29
11 29
12 29
13 29
14 29
15 29
16 29
17 29
18 30
19 30
20 30
21 30
22 31
23 31
24 31
25 31
26 31
27 31
28 31
29 31
30 31
31 31
32 31
33 31
34 31
35 32
36 32
37 32
38 32
39 32
40 32
41 32
42 32
43 32
44 32
45 32
46 33
47 33
48 33
49 33
50 33
51 33
52 33
53 33
54 33
55 33
56 33
57 33
58 33
59 34
60 34
61 34
62 34
63 34
64 34
65 35
66 35
67 35
68 35
69 35
70 35
71 36
72 37
73 37
74 38
75 38
76 38
77 39
78 39
79 39
80 39
81 40
82 40
83 41
84 42
85 42
86 42
87 43
88 43
89 43
90 44
91 47
92 48
93 48
94 49
95 49
96 49
97 49
98 50
99 52
100 52
101 52
102 53
103 55
104 55
105 60
106 61
107 63
108 63
109 65
110 68
111 72
112 1524
113 1552
114 1557
115 1613
116 1635
117 1655
118 1686
119 1701
120 1720
};
\addplot+[color=red, mark=none]  table {
1 27
2 27
3 28
4 28
5 28
6 28
7 29
8 29
9 30
10 30
11 30
12 30
13 30
14 30
15 30
16 30
17 30
18 30
19 31
20 31
21 31
22 31
23 31
24 31
25 31
26 31
27 32
28 32
29 32
30 32
31 32
32 33
33 33
34 33
35 33
36 33
37 33
38 33
39 33
40 34
41 34
42 34
43 34
44 34
45 34
46 34
47 34
48 34
49 35
50 35
51 35
52 35
53 35
54 35
55 35
56 35
57 35
58 35
59 36
60 36
61 36
62 36
63 37
64 37
65 37
66 37
67 37
68 38
69 38
70 38
71 38
72 38
73 39
74 41
75 41
76 41
77 41
78 41
79 42
80 44
81 45
82 45
83 46
84 48
85 48
86 48
87 49
88 49
89 49
90 51
91 52
92 52
93 52
94 52
95 53
96 53
97 55
98 55
99 56
100 57
101 58
102 59
103 61
104 61
105 62
106 66
107 66
108 66
109 68
110 68
111 76
112 1782
113 1802
114 1822
115 1857
116 1858
117 1895
118 1921
119 1992
120 2072
};
\end{axis}
\end{tikzpicture}        & \begin{tikzpicture}[xscale=0.25, yscale=0.20]
\pgfplotsset{compat=newest}

\begin{axis}[
    width=15cm,
    height=15cm,
    enlargelimits=false,
    ymode=log,
    ytick pos=left,
    ylabel={\huge Time in ms},
    xmin=0,
    xmax=120,
    xtick={1,20,40,60,80,100,120},
    xtick pos=left,
    xlabel={\huge \#Queries},
    tick label style={font=\huge},
    tick align=outside,
    legend cell align=left,
    legend style={at={(0,1)}, anchor=north west, font=\huge}
]
\addplot+[color=black, mark=none] table {
0 26542
120 26542
};
\addplot+[color=green, mark=none] table {
1 14874
2 14893
3 14899
4 14920
5 14958
6 14988
7 15005
8 15006
9 15027
10 15030
11 15040
12 15044
13 15047
14 15048
15 15065
16 15073
17 15076
18 15101
19 15111
20 15113
21 15113
22 15114
23 15115
24 15122
25 15126
26 15127
27 15136
28 15145
29 15146
30 15147
31 15153
32 15157
33 15167
34 15172
35 15172
36 15187
37 15190
38 15199
39 15208
40 15214
41 15227
42 15237
43 15239
44 15241
45 15245
46 15259
47 15259
48 15266
49 15270
50 15274
51 15300
52 15305
53 15308
54 15312
55 15318
56 15319
57 15330
58 15332
59 15339
60 15341
61 15355
62 15376
63 15378
64 15400
65 15402
66 15405
67 15410
68 15417
69 15427
70 15432
71 15435
72 15439
73 15441
74 15473
75 15476
76 15487
77 15538
78 15549
79 15583
80 15629
81 15641
82 15689
83 15766
84 15790
85 15915
86 15952
87 15996
88 16089
89 16163
90 16195
91 16196
92 16275
93 16383
94 16528
95 16689
96 16733
97 16872
98 17372
99 18567
100 18731
101 22292
102 22520
103 22534
104 22550
105 22618
106 22640
107 22881
108 22995
109 23015
110 23254
111 23307
112 23553
113 23565
114 23586
115 23594
116 23788
117 24021
118 24491
119 33359
120 35613
};
\addplot+[color=blue, mark=none]  table {
1 46
2 46
3 47
4 48
5 53
6 54
7 54
8 58
9 60
10 61
11 61
12 62
13 63
14 64
15 65
16 66
17 67
18 67
19 67
20 67
21 67
22 69
23 69
24 69
25 69
26 70
27 71
28 71
29 71
30 71
31 72
32 72
33 73
34 73
35 74
36 74
37 74
38 75
39 75
40 75
41 75
42 76
43 78
44 78
45 78
46 79
47 79
48 80
49 82
50 83
51 83
52 84
53 85
54 87
55 88
56 88
57 88
58 89
59 90
60 90
61 93
62 93
63 95
64 96
65 96
66 96
67 97
68 97
69 98
70 99
71 100
72 101
73 102
74 102
75 106
76 109
77 110
78 113
79 117
80 118
81 122
82 123
83 123
84 124
85 124
86 127
87 127
88 128
89 131
90 134
91 138
92 142
93 143
94 143
95 145
96 157
97 161
98 161
99 161
100 168
101 168
102 172
103 172
104 179
105 193
106 194
107 227
108 233
109 235
110 266
111 275
112 17789
113 17791
114 17876
115 17898
116 17971
117 18127
118 18397
119 18440
120 18618
};
\addplot+[color=red, mark=none]  table {
1 46
2 47
3 48
4 49
5 54
6 55
7 58
8 58
9 59
10 61
11 64
12 64
13 65
14 65
15 67
16 67
17 68
18 69
19 69
20 70
21 71
22 71
23 72
24 72
25 73
26 73
27 73
28 74
29 75
30 75
31 75
32 77
33 77
34 78
35 78
36 79
37 79
38 80
39 81
40 81
41 82
42 82
43 82
44 86
45 86
46 89
47 89
48 90
49 91
50 91
51 92
52 92
53 92
54 92
55 92
56 93
57 93
58 95
59 95
60 95
61 98
62 99
63 100
64 101
65 101
66 104
67 105
68 108
69 109
70 114
71 115
72 118
73 118
74 119
75 119
76 122
77 130
78 130
79 132
80 135
81 138
82 139
83 141
84 147
85 148
86 151
87 153
88 153
89 155
90 157
91 158
92 164
93 166
94 174
95 174
96 176
97 179
98 183
99 189
100 196
101 198
102 203
103 208
104 223
105 239
106 245
107 251
108 252
109 271
110 272
111 336
112 22196
113 22286
114 22374
115 22394
116 22406
117 22854
118 22996
119 23072
120 27054
};
\end{axis}
\end{tikzpicture}        & \begin{tikzpicture}[xscale=0.25, yscale=0.20]
\pgfplotsset{compat=newest}

\begin{axis}[
    width=15cm,
    height=15cm,
    enlargelimits=false,
    ymode=log,
    ytick pos=left,
    ylabel={\huge Time in ms},
    xmin=0,
    xmax=120,
    xtick={1,20,40,60,80,100,120},
    xtick pos=left,
    xlabel={\huge \#Queries},
    tick label style={font=\huge},
    tick align=outside,
    legend cell align=left,
    legend style={at={(0,1)}, anchor=north west, font=\huge}
]
\addplot+[color=black, mark=none] table {
0 147492
120 147492
};
\addplot+[color=green, mark=none] table {
1 84786
2 84815
3 84900
4 85163
5 85198
6 85228
7 85236
8 85293
9 85408
10 85412
11 85417
12 85540
13 85558
14 85735
15 85740
16 85743
17 85762
18 85852
19 85990
20 86001
21 86002
22 86003
23 86071
24 86108
25 86119
26 86147
27 86180
28 86190
29 86194
30 86284
31 86297
32 86329
33 86343
34 86362
35 86446
36 86460
37 86534
38 86538
39 86555
40 86581
41 86622
42 86632
43 86687
44 86735
45 86888
46 86902
47 86975
48 87030
49 87045
50 87055
51 87114
52 87182
53 87262
54 87308
55 87364
56 87425
57 87459
58 87472
59 87584
60 87622
61 87732
62 87830
63 87872
64 87993
65 88078
66 88094
67 88095
68 88140
69 88160
70 88192
71 88292
72 88447
73 88813
74 88852
75 89033
76 89211
77 89248
78 89289
79 89537
80 89786
81 90460
82 90919
83 91049
84 91124
85 91986
86 92162
87 92432
88 92753
89 92851
90 92901
91 94062
92 94284
93 95064
94 95259
95 95777
96 96745
97 99254
98 99850
99 103646
100 107711
101 118572
102 120927
103 120972
104 121052
105 121686
106 122175
107 122459
108 122500
109 123946
110 125121
111 127547
112 127982
113 128440
114 128661
115 130203
116 130322
117 134085
118 152140
119 177543
120 191057
};
\addplot+[color=blue, mark=none]  table {
1 96
2 97
3 99
4 100
5 114
6 115
7 115
8 126
9 128
10 134
11 137
12 143
13 147
14 159
15 163
16 164
17 167
18 168
19 169
20 169
21 170
22 170
23 171
24 172
25 173
26 177
27 177
28 179
29 180
30 185
31 193
32 194
33 194
34 198
35 202
36 205
37 206
38 209
39 213
40 218
41 222
42 223
43 225
44 225
45 227
46 230
47 231
48 233
49 251
50 255
51 258
52 261
53 263
54 264
55 265
56 277
57 281
58 284
59 287
60 302
61 307
62 313
63 319
64 321
65 322
66 327
67 336
68 340
69 347
70 351
71 364
72 367
73 372
74 379
75 432
76 451
77 472
78 481
79 484
80 488
81 504
82 505
83 527
84 529
85 544
86 555
87 560
88 564
89 579
90 613
91 668
92 687
93 712
94 718
95 735
96 776
97 804
98 834
99 842
100 866
101 900
102 947
103 1019
104 1042
105 1047
106 1191
107 1469
108 1503
109 1581
110 1686
111 1861
112 103873
113 104221
114 105150
115 105605
116 106022
117 106444
118 106732
119 107010
120 120921
};
\addplot+[color=red, mark=none]  table {
1 104
2 105
3 105
4 106
5 120
6 121
7 122
8 136
9 136
10 137
11 141
12 156
13 161
14 170
15 180
16 180
17 200
18 204
19 204
20 205
21 207
22 207
23 207
24 207
25 209
26 209
27 209
28 210
29 212
30 212
31 214
32 221
33 229
34 232
35 233
36 240
37 253
38 253
39 257
40 259
41 262
42 273
43 274
44 275
45 281
46 289
47 293
48 298
49 299
50 305
51 307
52 309
53 315
54 328
55 330
56 334
57 340
58 343
59 348
60 351
61 354
62 358
63 358
64 365
65 388
66 395
67 421
68 421
69 440
70 458
71 467
72 493
73 526
74 529
75 542
76 546
77 550
78 572
79 587
80 604
81 647
82 676
83 690
84 704
85 725
86 751
87 758
88 762
89 783
90 788
91 791
92 800
93 960
94 963
95 1016
96 1029
97 1106
98 1121
99 1208
100 1222
101 1228
102 1276
103 1282
104 1291
105 1344
106 1539
107 1718
108 1745
109 1912
110 1919
111 2629
112 117702
113 118739
114 118944
115 119698
116 122357
117 123276
118 131733
119 133021
120 138361
};
\end{axis}
\end{tikzpicture}        \\
            \rotatebox{90}{\hspace{0.6cm}\# Derived Facts} & \begin{tikzpicture}[xscale=0.25, yscale=0.20]
\pgfplotsset{compat=newest}

\begin{axis}[
    width=15cm,
    height=15cm,
    enlargelimits=false,
    ymode=log,
    xtick pos=left,
    ylabel={\huge \#Total facts},
    xmin=0,
    xmax=120,
    xtick={1,20,40,60,80,100,120},
    xtick pos=left,
    xlabel={\huge \#Queries},
    tick label style={font=\huge},
    tick align=outside,
    legend cell align=left,
    legend style={at={(0,1)}, anchor=north west, font=\huge}
]
\addplot+[color=black, mark=none] table {
0 1590000
120 1590000
};
\addplot+[color=green, mark=none] table {
1 1013002
2 1015002
3 1015002
4 1016002
5 1017002
6 1017002
7 1017002
8 1018002
9 1018002
10 1019002
11 1019002
12 1019002
13 1019002
14 1019002
15 1019002
16 1019003
17 1019003
18 1020002
19 1020002
20 1020003
21 1020003
22 1021002
23 1021002
24 1021002
25 1022002
26 1022002
27 1022002
28 1022002
29 1022002
30 1022002
31 1022002
32 1022002
33 1022002
34 1022002
35 1022002
36 1022002
37 1022002
38 1022002
39 1022002
40 1022002
41 1022002
42 1022003
43 1022003
44 1023002
45 1023002
46 1023002
47 1023003
48 1024002
49 1024002
50 1024002
51 1024002
52 1024002
53 1024002
54 1024002
55 1025002
56 1025002
57 1025002
58 1025002
59 1025002
60 1025002
61 1025003
62 1025003
63 1025004
64 1026002
65 1026003
66 1026003
67 1026004
68 1027002
69 1027002
70 1027002
71 1027003
72 1028002
73 1028003
74 1028003
75 1030003
76 1031003
77 1031004
78 1032003
79 1033003
80 1033003
81 1033003
82 1033004
83 1034003
84 1034003
85 1034003
86 1035003
87 1035003
88 1037004
89 1038003
90 1038004
91 1039004
92 1040003
93 1040003
94 1041003
95 1041004
96 1043004
97 1046004
98 1046004
99 1048004
100 1050004
101 1715002
102 1720002
103 1721002
104 1721002
105 1725002
106 1725004
107 1732003
108 1733003
109 1734003
110 1737002
111 1737002
112 1740003
113 1745003
114 1746003
115 1747003
116 1747003
117 1749002
118 1750002
119 1755004
120 2465003
};
\addplot+[color=blue, mark=none]  table {
1 4000
2 4000
3 4000
4 4000
5 4000
6 4000
7 5000
8 5000
9 5000
10 6000
11 6000
12 6000
13 6000
14 6000
15 7000
16 7000
17 7000
18 7000
19 7000
20 7000
21 7000
22 7000
23 7000
24 7000
25 7000
26 7000
27 7000
28 7000
29 7000
30 8000
31 8000
32 8000
33 8000
34 8000
35 8000
36 8000
37 9000
38 9000
39 9000
40 9000
41 9000
42 9000
43 9000
44 9000
45 9000
46 9000
47 9000
48 10000
49 10000
50 10000
51 10000
52 10000
53 10000
54 10000
55 11000
56 12000
57 12000
58 12000
59 12000
60 12000
61 13000
62 13000
63 13000
64 13000
65 13000
66 14000
67 14000
68 14000
69 14000
70 14000
71 14000
72 15000
73 16000
74 16000
75 16000
76 16000
77 17000
78 17000
79 17000
80 17000
81 18000
82 19000
83 19000
84 20000
85 20000
86 20000
87 20000
88 20000
89 21000
90 22000
91 22000
92 22000
93 22000
94 24000
95 24000
96 25000
97 26000
98 27000
99 27000
100 28000
101 29000
102 30000
103 30000
104 33000
105 34000
106 34000
107 41000
108 42000
109 44000
110 44000
111 56000
112 1481000
113 1482000
114 1482000
115 1482000
116 1482000
117 1482000
118 1482000
119 1482000
120 1482000
};
\addplot+[color=red, mark=none]  table {
1 4002
2 4002
3 4002
4 4002
5 4002
6 4002
7 5002
8 5002
9 5002
10 5003
11 5003
12 7002
13 7002
14 7002
15 7002
16 7002
17 8002
18 8002
19 8002
20 8002
21 8002
22 8002
23 8002
24 8002
25 8002
26 8002
27 8002
28 8002
29 8002
30 8002
31 8003
32 9002
33 9002
34 9002
35 9002
36 9002
37 9003
38 9003
39 10002
40 10003
41 11002
42 11002
43 11002
44 11002
45 11002
46 11002
47 11002
48 11002
49 11002
50 11003
51 12002
52 12002
53 12002
54 12002
55 12003
56 12003
57 12003
58 13002
59 13002
60 13002
61 14002
62 14002
63 14003
64 14004
65 15002
66 15002
67 15002
68 15003
69 17002
70 17002
71 17002
72 17003
73 17004
74 18002
75 18003
76 18003
77 19002
78 19003
79 21002
80 21003
81 21003
82 22003
83 22004
84 23003
85 23003
86 25003
87 25003
88 26003
89 27003
90 27004
91 28003
92 28003
93 28004
94 28004
95 30003
96 31003
97 33003
98 33004
99 33004
100 34003
101 34004
102 35003
103 36004
104 37003
105 38003
106 44004
107 45003
108 45003
109 50004
110 52004
111 63004
112 1704002
113 1705002
114 1707002
115 1707002
116 1716003
117 1721002
118 1722003
119 1723003
120 1728003
};
\end{axis}
\end{tikzpicture} & \begin{tikzpicture}[xscale=0.25, yscale=0.20]
\pgfplotsset{compat=newest}

\begin{axis}[
    width=15cm,
    height=15cm,
    enlargelimits=false,
    ymode=log,
    xtick pos=left,
    ylabel={\huge \#Total facts},
    xmin=0,
    xmax=120,
    xtick={1,20,40,60,80,100,120},
    xtick pos=left,
    xlabel={\huge \#Queries},
    tick label style={font=\huge},
    tick align=outside,
    legend cell align=left,
    legend style={at={(0,1)}, anchor=north west, font=\huge}
]
\addplot+[color=black, mark=none] table {
0 15900000
120 15900000
};
\addplot+[color=green, mark=none] table {
1 10130002
2 10150002
3 10150002
4 10160002
5 10170002
6 10170002
7 10170002
8 10180002
9 10180002
10 10190002
11 10190002
12 10190002
13 10190002
14 10190002
15 10190002
16 10190003
17 10190003
18 10200002
19 10200002
20 10200003
21 10200003
22 10210002
23 10210002
24 10210002
25 10220002
26 10220002
27 10220002
28 10220002
29 10220002
30 10220002
31 10220002
32 10220002
33 10220002
34 10220002
35 10220002
36 10220002
37 10220002
38 10220002
39 10220002
40 10220002
41 10220002
42 10220003
43 10220003
44 10230002
45 10230002
46 10230002
47 10230003
48 10240002
49 10240002
50 10240002
51 10240002
52 10240002
53 10240002
54 10240002
55 10250002
56 10250002
57 10250002
58 10250002
59 10250002
60 10250002
61 10250003
62 10250003
63 10250004
64 10260002
65 10260003
66 10260003
67 10260004
68 10270002
69 10270002
70 10270002
71 10270003
72 10280002
73 10280003
74 10280003
75 10300003
76 10310003
77 10310004
78 10320003
79 10330003
80 10330003
81 10330003
82 10330004
83 10340003
84 10340003
85 10340003
86 10350003
87 10350003
88 10370004
89 10380003
90 10380004
91 10390004
92 10400003
93 10400003
94 10410003
95 10410004
96 10430004
97 10460004
98 10460004
99 10480004
100 10500004
101 17150002
102 17200002
103 17210002
104 17210002
105 17250002
106 17250004
107 17320003
108 17330003
109 17340003
110 17370002
111 17370002
112 17400003
113 17450003
114 17460003
115 17470003
116 17470003
117 17490002
118 17500002
119 17550004
120 24650003
};
\addplot+[color=blue, mark=none]  table {
1 40000
2 40000
3 40000
4 40000
5 40000
6 40000
7 50000
8 50000
9 50000
10 60000
11 60000
12 60000
13 60000
14 60000
15 70000
16 70000
17 70000
18 70000
19 70000
20 70000
21 70000
22 70000
23 70000
24 70000
25 70000
26 70000
27 70000
28 70000
29 70000
30 80000
31 80000
32 80000
33 80000
34 80000
35 80000
36 80000
37 90000
38 90000
39 90000
40 90000
41 90000
42 90000
43 90000
44 90000
45 90000
46 90000
47 90000
48 100000
49 100000
50 100000
51 100000
52 100000
53 100000
54 100000
55 110000
56 120000
57 120000
58 120000
59 120000
60 120000
61 130000
62 130000
63 130000
64 130000
65 130000
66 140000
67 140000
68 140000
69 140000
70 140000
71 140000
72 150000
73 160000
74 160000
75 160000
76 160000
77 170000
78 170000
79 170000
80 170000
81 180000
82 190000
83 190000
84 200000
85 200000
86 200000
87 200000
88 200000
89 210000
90 220000
91 220000
92 220000
93 220000
94 240000
95 240000
96 250000
97 260000
98 270000
99 270000
100 280000
101 290000
102 300000
103 300000
104 330000
105 340000
106 340000
107 410000
108 420000
109 440000
110 440000
111 560000
112 14810000
113 14820000
114 14820000
115 14820000
116 14820000
117 14820000
118 14820000
119 14820000
120 14820000
};
\addplot+[color=red, mark=none]  table {
1 40002
2 40002
3 40002
4 40002
5 40002
6 40002
7 50002
8 50002
9 50002
10 50003
11 50003
12 70002
13 70002
14 70002
15 70002
16 70002
17 80002
18 80002
19 80002
20 80002
21 80002
22 80002
23 80002
24 80002
25 80002
26 80002
27 80002
28 80002
29 80002
30 80002
31 80003
32 90002
33 90002
34 90002
35 90002
36 90002
37 90003
38 90003
39 100002
40 100003
41 110002
42 110002
43 110002
44 110002
45 110002
46 110002
47 110002
48 110002
49 110002
50 110003
51 120002
52 120002
53 120002
54 120002
55 120003
56 120003
57 120003
58 130002
59 130002
60 130002
61 140002
62 140002
63 140003
64 140004
65 150002
66 150002
67 150002
68 150003
69 170002
70 170002
71 170002
72 170003
73 170004
74 180002
75 180003
76 180003
77 190002
78 190003
79 210002
80 210003
81 210003
82 220003
83 220004
84 230003
85 230003
86 250003
87 250003
88 260003
89 270003
90 270004
91 280003
92 280003
93 280004
94 280004
95 300003
96 310003
97 330003
98 330004
99 330004
100 340003
101 340004
102 350003
103 360004
104 370003
105 380003
106 440004
107 450003
108 450003
109 500004
110 520004
111 630004
112 17040002
113 17050002
114 17070002
115 17070002
116 17160003
117 17210002
118 17220003
119 17230003
120 17280003
};
\end{axis}
\end{tikzpicture} & \begin{tikzpicture}[xscale=0.25, yscale=0.20]
\pgfplotsset{compat=newest}

\begin{axis}[
    width=15cm,
    height=15cm,
    enlargelimits=false,
    ymode=log,
    xtick pos=left,
    ylabel={\huge \#Total facts},
    xmin=0,
    xmax=120,
    xtick={1,20,40,60,80,100,120},
    xtick pos=left,
    xlabel={\huge \#Queries},
    tick label style={font=\huge},
    tick align=outside,
    legend cell align=left,
    legend style={at={(0,1)}, anchor=north west, font=\huge}
]
\addplot+[color=black, mark=none] table {
0 79500000
120 79500000
};
\addplot+[color=green, mark=none] table {
1 50650002
2 50750002
3 50750002
4 50800002
5 50850002
6 50850002
7 50850002
8 50900002
9 50900002
10 50950002
11 50950002
12 50950002
13 50950002
14 50950002
15 50950002
16 50950003
17 50950003
18 51000002
19 51000002
20 51000003
21 51000003
22 51050002
23 51050002
24 51050002
25 51100002
26 51100002
27 51100002
28 51100002
29 51100002
30 51100002
31 51100002
32 51100002
33 51100002
34 51100002
35 51100002
36 51100002
37 51100002
38 51100002
39 51100002
40 51100002
41 51100002
42 51100003
43 51100003
44 51150002
45 51150002
46 51150002
47 51150003
48 51200002
49 51200002
50 51200002
51 51200002
52 51200002
53 51200002
54 51200002
55 51250002
56 51250002
57 51250002
58 51250002
59 51250002
60 51250002
61 51250003
62 51250003
63 51250004
64 51300002
65 51300003
66 51300003
67 51300004
68 51350002
69 51350002
70 51350002
71 51350003
72 51400002
73 51400003
74 51400003
75 51500003
76 51550003
77 51550004
78 51600003
79 51650003
80 51650003
81 51650003
82 51650004
83 51700003
84 51700003
85 51700003
86 51750003
87 51750003
88 51850004
89 51900003
90 51900004
91 51950004
92 52000003
93 52000003
94 52050003
95 52050004
96 52150004
97 52300004
98 52300004
99 52400004
100 52500004
101 85750002
102 86000002
103 86050002
104 86050002
105 86250002
106 86250004
107 86600003
108 86650003
109 86700003
110 86850002
111 86850002
112 87000003
113 87250003
114 87300003
115 87350003
116 87350003
117 87450002
118 87500002
119 87750004
120 123250003
};
\addplot+[color=blue, mark=none]  table {
1 200000
2 200000
3 200000
4 200000
5 200000
6 200000
7 250000
8 250000
9 250000
10 300000
11 300000
12 300000
13 300000
14 300000
15 350000
16 350000
17 350000
18 350000
19 350000
20 350000
21 350000
22 350000
23 350000
24 350000
25 350000
26 350000
27 350000
28 350000
29 350000
30 400000
31 400000
32 400000
33 400000
34 400000
35 400000
36 400000
37 450000
38 450000
39 450000
40 450000
41 450000
42 450000
43 450000
44 450000
45 450000
46 450000
47 450000
48 500000
49 500000
50 500000
51 500000
52 500000
53 500000
54 500000
55 550000
56 600000
57 600000
58 600000
59 600000
60 600000
61 650000
62 650000
63 650000
64 650000
65 650000
66 700000
67 700000
68 700000
69 700000
70 700000
71 700000
72 750000
73 800000
74 800000
75 800000
76 800000
77 850000
78 850000
79 850000
80 850000
81 900000
82 950000
83 950000
84 1000000
85 1000000
86 1000000
87 1000000
88 1000000
89 1050000
90 1100000
91 1100000
92 1100000
93 1100000
94 1200000
95 1200000
96 1250000
97 1300000
98 1350000
99 1350000
100 1400000
101 1450000
102 1500000
103 1500000
104 1650000
105 1700000
106 1700000
107 2050000
108 2100000
109 2200000
110 2200000
111 2800000
112 74050000
113 74100000
114 74100000
115 74100000
116 74100000
117 74100000
118 74100000
119 74100000
120 74100000
};
\addplot+[color=red, mark=none]  table {
1 200002
2 200002
3 200002
4 200002
5 200002
6 200002
7 250002
8 250002
9 250002
10 250003
11 250003
12 350002
13 350002
14 350002
15 350002
16 350002
17 400002
18 400002
19 400002
20 400002
21 400002
22 400002
23 400002
24 400002
25 400002
26 400002
27 400002
28 400002
29 400002
30 400002
31 400003
32 450002
33 450002
34 450002
35 450002
36 450002
37 450003
38 450003
39 500002
40 500003
41 550002
42 550002
43 550002
44 550002
45 550002
46 550002
47 550002
48 550002
49 550002
50 550003
51 600002
52 600002
53 600002
54 600002
55 600003
56 600003
57 600003
58 650002
59 650002
60 650002
61 700002
62 700002
63 700003
64 700004
65 750002
66 750002
67 750002
68 750003
69 850002
70 850002
71 850002
72 850003
73 850004
74 900002
75 900003
76 900003
77 950002
78 950003
79 1050002
80 1050003
81 1050003
82 1100003
83 1100004
84 1150003
85 1150003
86 1250003
87 1250003
88 1300003
89 1350003
90 1350004
91 1400003
92 1400003
93 1400004
94 1400004
95 1500003
96 1550003
97 1650003
98 1650004
99 1650004
100 1700003
101 1700004
102 1750003
103 1800004
104 1850003
105 1900003
106 2200004
107 2250003
108 2250003
109 2500004
110 2600004
111 3150004
112 85200002
113 85250002
114 85350002
115 85350002
116 85800003
117 86050002
118 86100003
119 86150003
120 86400003
};
\end{axis}
\end{tikzpicture} \\
                                                           & \Genseven-1k                                      & \Genseven-10k                                      & \Genseven-50k                                      \\
            \rotatebox{90}{\hspace{0.5cm}Query Times (ms)} & \begin{tikzpicture}[xscale=0.25, yscale=0.20]
\pgfplotsset{compat=newest}

\begin{axis}[
    width=15cm,
    height=15cm,
    enlargelimits=false,
    ymode=log,
    ytick pos=left,
    ylabel={\huge Time in ms},
    xmin=0,
    xmax=120,
    xtick={1,20,40,60,80,100,120},
    xtick pos=left,
    xlabel={\huge \#Queries},
    tick label style={font=\huge},
    tick align=outside,
    legend cell align=left,
    legend style={at={(0,1)}, anchor=north west, font=\huge}
]
\addplot+[color=black, mark=none] table {
0 5610
120 5610
};
\addplot+[color=green, mark=none] table {
1 4138
2 4153
3 4155
4 4166
5 4167
6 4167
7 4170
8 4174
9 4181
10 4181
11 4182
12 4184
13 4184
14 4185
15 4187
16 4188
17 4191
18 4192
19 4193
20 4194
21 4194
22 4195
23 4196
24 4196
25 4196
26 4196
27 4196
28 4198
29 4199
30 4199
31 4200
32 4200
33 4200
34 4201
35 4203
36 4204
37 4206
38 4208
39 4209
40 4209
41 4211
42 4211
43 4216
44 4217
45 4219
46 4220
47 4225
48 4226
49 4228
50 4228
51 4252
52 4262
53 4262
54 4263
55 4270
56 4275
57 4290
58 4320
59 4374
60 4515
61 4589
62 4960
63 4971
64 4979
65 5006
66 5012
67 5014
68 5024
69 5027
70 5029
71 5030
72 5031
73 5043
74 5045
75 5057
76 5061
77 5067
78 5071
79 5073
80 5108
81 5120
82 5121
83 5132
84 5171
85 5180
86 5202
87 5229
88 5234
89 5248
90 5252
91 5262
92 5278
93 5318
94 5320
95 5372
96 5394
97 5488
98 5493
99 5509
100 5511
101 5511
102 5515
103 5531
104 5573
105 5579
106 5582
107 5678
108 5700
109 5707
110 5750
111 5780
112 5791
113 5833
114 5894
115 5942
116 6032
117 6057
118 6067
119 6077
120 7478
};
\addplot+[color=blue, mark=none]  table {
1 36
2 36
3 36
4 36
5 36
6 36
7 37
8 37
9 37
10 37
11 37
12 37
13 38
14 38
15 38
16 38
17 38
18 38
19 38
20 39
21 39
22 39
23 39
24 39
25 39
26 39
27 39
28 39
29 40
30 40
31 40
32 40
33 41
34 41
35 41
36 41
37 41
38 41
39 41
40 41
41 41
42 41
43 42
44 42
45 42
46 42
47 43
48 43
49 43
50 43
51 44
52 44
53 44
54 44
55 44
56 44
57 44
58 45
59 45
60 45
61 45
62 45
63 45
64 45
65 45
66 46
67 46
68 47
69 48
70 48
71 48
72 48
73 49
74 49
75 49
76 50
77 51
78 53
79 53
80 54
81 54
82 55
83 55
84 56
85 57
86 57
87 59
88 59
89 59
90 59
91 63
92 64
93 64
94 65
95 66
96 67
97 68
98 70
99 70
100 72
101 89
102 4365
103 4395
104 4404
105 4414
106 4424
107 4433
108 4464
109 4500
110 4511
111 4533
112 4612
113 4678
114 4707
115 4789
116 4839
117 4856
118 4902
119 4968
120 5096
};
\addplot+[color=red, mark=none]  table {
1 36
2 36
3 36
4 37
5 37
6 37
7 38
8 38
9 38
10 38
11 38
12 38
13 39
14 39
15 39
16 39
17 39
18 39
19 39
20 39
21 39
22 40
23 40
24 40
25 40
26 40
27 40
28 40
29 40
30 40
31 41
32 41
33 41
34 41
35 41
36 41
37 41
38 41
39 42
40 42
41 42
42 42
43 42
44 43
45 43
46 43
47 43
48 43
49 43
50 44
51 44
52 44
53 44
54 44
55 44
56 45
57 45
58 45
59 46
60 46
61 46
62 46
63 46
64 47
65 47
66 47
67 47
68 47
69 47
70 47
71 48
72 48
73 49
74 49
75 50
76 50
77 51
78 51
79 52
80 53
81 54
82 55
83 57
84 57
85 58
86 60
87 60
88 61
89 62
90 63
91 64
92 66
93 67
94 69
95 69
96 70
97 72
98 72
99 73
100 74
101 90
102 5437
103 5440
104 5446
105 5460
106 5470
107 5486
108 5488
109 5496
110 5539
111 5542
112 5569
113 5578
114 5784
115 5832
116 5900
117 6053
118 6054
119 6228
120 6581
};
\end{axis}
\end{tikzpicture}        & \begin{tikzpicture}[xscale=0.25, yscale=0.20]
\pgfplotsset{compat=newest}

\begin{axis}[
    width=15cm,
    height=15cm,
    enlargelimits=false,
    ymode=log,
    ytick pos=left,
    ylabel={\huge Time in ms},
    xmin=0,
    xmax=120,
    xtick={1,20,40,60,80,100,120},
    xtick pos=left,
    xlabel={\huge \#Queries},
    tick label style={font=\huge},
    tick align=outside,
    legend cell align=left,
    legend style={at={(0,1)}, anchor=north west, font=\huge}
]
\addplot+[color=black, mark=none] table {
0 73805
120 73805
};
\addplot+[color=green, mark=none] table {
1 53922
2 54003
3 54045
4 54065
5 54092
6 54095
7 54102
8 54125
9 54169
10 54217
11 54239
12 54378
13 54379
14 54409
15 54422
16 54436
17 54442
18 54450
19 54451
20 54458
21 54477
22 54479
23 54487
24 54499
25 54500
26 54509
27 54525
28 54530
29 54550
30 54573
31 54580
32 54592
33 54621
34 54627
35 54673
36 54711
37 54724
38 54801
39 54804
40 54823
41 54838
42 54838
43 54844
44 54948
45 54958
46 54964
47 54974
48 54988
49 55016
50 55038
51 55075
52 55216
53 55459
54 55601
55 55785
56 55977
57 56715
58 56866
59 58071
60 58303
61 59245
62 60553
63 60608
64 60670
65 60700
66 60887
67 60946
68 61045
69 61088
70 61112
71 61226
72 61245
73 61412
74 61526
75 61772
76 61782
77 61785
78 61824
79 62045
80 62105
81 62164
82 62449
83 62647
84 63227
85 63422
86 63539
87 63646
88 63724
89 63984
90 64117
91 64242
92 64505
93 64508
94 64521
95 64770
96 65029
97 65059
98 65883
99 68210
100 69242
101 69327
102 69490
103 69624
104 69831
105 69988
106 70160
107 70245
108 70431
109 70783
110 70809
111 70943
112 71015
113 71024
114 71086
115 75825
116 76043
117 77039
118 77153
119 77735
120 98783
};
\addplot+[color=blue, mark=none]  table {
1 52
2 54
3 54
4 54
5 54
6 54
7 54
8 55
9 55
10 56
11 57
12 57
13 58
14 61
15 62
16 63
17 67
18 68
19 68
20 70
21 70
22 70
23 71
24 72
25 73
26 73
27 73
28 73
29 74
30 77
31 78
32 78
33 79
34 80
35 80
36 81
37 81
38 82
39 83
40 83
41 84
42 85
43 85
44 86
45 87
46 88
47 88
48 88
49 91
50 92
51 94
52 94
53 95
54 97
55 98
56 98
57 99
58 100
59 100
60 100
61 101
62 102
63 102
64 103
65 105
66 106
67 106
68 108
69 108
70 111
71 111
72 112
73 114
74 118
75 118
76 119
77 119
78 120
79 126
80 135
81 136
82 140
83 142
84 142
85 144
86 145
87 151
88 153
89 155
90 159
91 160
92 161
93 184
94 188
95 197
96 209
97 221
98 222
99 222
100 225
101 329
102 50798
103 50835
104 51067
105 51299
106 51350
107 51432
108 51495
109 51526
110 51881
111 51993
112 52104
113 52976
114 53034
115 53104
116 53144
117 54924
118 59369
119 59672
120 62045
};
\addplot+[color=red, mark=none]  table {
1 55
2 55
3 55
4 55
5 57
6 57
7 58
8 58
9 58
10 60
11 61
12 61
13 62
14 63
15 65
16 68
17 69
18 70
19 71
20 72
21 72
22 73
23 74
24 75
25 77
26 77
27 78
28 80
29 82
30 82
31 83
32 83
33 84
34 84
35 84
36 84
37 84
38 86
39 87
40 89
41 92
42 92
43 96
44 97
45 98
46 98
47 98
48 98
49 99
50 99
51 99
52 102
53 104
54 104
55 104
56 104
57 104
58 106
59 108
60 109
61 109
62 110
63 110
64 113
65 113
66 114
67 115
68 115
69 115
70 117
71 118
72 121
73 129
74 129
75 131
76 134
77 138
78 139
79 139
80 140
81 149
82 150
83 157
84 158
85 158
86 160
87 164
88 168
89 184
90 185
91 192
92 200
93 200
94 221
95 226
96 230
97 231
98 239
99 246
100 258
101 312
102 62144
103 62220
104 62488
105 62570
106 62865
107 62991
108 63030
109 63061
110 63106
111 63253
112 63439
113 63972
114 64202
115 65743
116 69977
117 70016
118 70204
119 70821
120 73876
};
\end{axis}
\end{tikzpicture}        & \begin{tikzpicture}[xscale=0.25, yscale=0.20]
\pgfplotsset{compat=newest}

\begin{axis}[
    width=15cm,
    height=15cm,
    enlargelimits=false,
    ymode=log,
    ytick pos=left,
    ylabel={\huge Time in ms},
    xmin=0,
    xmax=120,
    ymax=1000000,
    xtick={1,20,40,60,80,100,120},
    xtick pos=left,
    xlabel={\huge \#Queries},
    tick label style={font=\huge},
    tick align=outside,
    legend cell align=left,
    legend style={at={(0,1)}, anchor=north west, font=\huge}
]
\addplot+[color=black, mark=none] table {
0 397359
120 397359
};
\addplot+[color=green, mark=none] table {
1 265600
2 265774
3 266382
4 266979
5 267035
6 267112
7 267135
8 267237
9 267396
10 267400
11 267484
12 267678
13 267832
14 268123
15 268166
16 268324
17 268354
18 268462
19 268634
20 268706
21 268710
22 268887
23 268962
24 269241
25 269364
26 269584
27 269685
28 269703
29 269727
30 269789
31 269892
32 269951
33 270133
34 270334
35 270360
36 270399
37 270430
38 270495
39 270727
40 270728
41 270965
42 271159
43 271402
44 271436
45 271694
46 271815
47 272354
48 273112
49 273548
50 274154
51 274622
52 275147
53 275644
54 275678
55 277280
56 278204
57 279516
58 280222
59 281118
60 295031
61 295455
62 296189
63 296431
64 296875
65 297062
66 297377
67 297575
68 298071
69 299913
70 300420
71 300668
72 301375
73 301591
74 301957
75 302231
76 302601
77 302775
78 302943
79 304678
80 305808
81 307853
82 308000
83 308059
84 309499
85 309867
86 309966
87 311077
88 311102
89 312147
90 313640
91 314421
92 315049
93 315583
94 315758
95 316387
96 317244
97 320549
98 321931
99 322047
100 323604
101 324352
102 324954
103 325520
104 325856
105 326419
106 327609
107 332253
108 333813
109 337898
110 340258
111 341597
112 362149
113 363262
114 366328
115 374353
116 375297
117 378548
118 378640
119 379253
120 448892
};
\addplot+[color=blue, mark=none]  table {
1 67
2 91
3 93
4 93
5 94
6 94
7 94
8 95
9 95
10 96
11 96
12 97
13 98
14 107
15 109
16 110
17 121
18 123
19 124
20 125
21 125
22 126
23 130
24 130
25 131
26 139
27 140
28 148
29 152
30 155
31 155
32 157
33 158
34 159
35 162
36 174
37 179
38 183
39 183
40 183
41 190
42 193
43 194
44 199
45 200
46 203
47 209
48 210
49 218
50 221
51 221
52 222
53 239
54 241
55 242
56 246
57 248
58 251
59 251
60 259
61 262
62 272
63 274
64 277
65 287
66 292
67 316
68 325
69 340
70 346
71 352
72 353
73 358
74 367
75 368
76 375
77 383
78 387
79 391
80 431
81 436
82 478
83 512
84 524
85 553
86 580
87 580
88 582
89 587
90 588
91 641
92 667
93 917
94 920
95 937
96 964
97 994
98 1122
99 1135
100 1160
101 2107
102 240702
103 241069
104 242368
105 243249
106 243297
107 243723
108 244076
109 245180
110 245495
111 246146
112 246517
113 248463
114 249582
115 249615
116 250324
117 252387
118 252601
119 255104
120 281570
};
\addplot+[color=red, mark=none]  table {
1 96
2 99
3 99
4 99
5 100
6 100
7 100
8 100
9 100
10 101
11 101
12 101
13 103
14 112
15 114
16 114
17 127
18 129
19 131
20 133
21 133
22 139
23 143
24 143
25 159
26 163
27 165
28 174
29 180
30 182
31 183
32 184
33 184
34 185
35 185
36 185
37 187
38 190
39 192
40 206
41 206
42 216
43 225
44 230
45 238
46 239
47 242
48 246
49 247
50 250
51 256
52 256
53 259
54 263
55 266
56 276
57 280
58 284
59 284
60 284
61 287
62 290
63 295
64 309
65 313
66 334
67 343
68 344
69 351
70 359
71 359
72 362
73 375
74 421
75 431
76 436
77 446
78 457
79 459
80 475
81 489
82 496
83 509
84 579
85 584
86 599
87 665
88 668
89 706
90 869
91 958
92 962
93 1019
94 1060
95 1069
96 1145
97 1170
98 1256
99 1273
100 1325
101 1729
102 282457
103 282717
104 283645
105 284403
106 284729
107 286034
108 287553
109 287968
110 288751
111 288838
112 294148
113 296385
114 303193
115 306102
116 318589
117 319748
118 321122
119 321863
120 325115
};
\end{axis}
\end{tikzpicture}        \\
            \rotatebox{90}{\hspace{0.6cm}\# Derived Facts} & \begin{tikzpicture}[xscale=0.25, yscale=0.20]
\pgfplotsset{compat=newest}

\begin{axis}[
    width=15cm,
    height=15cm,
    enlargelimits=false,
    ymode=log,
    xtick pos=left,
    ylabel={\huge \#Total facts},
    xmin=0,
    xmax=120,
    xtick={1,20,40,60,80,100,120},
    xtick pos=left,
    xlabel={\huge \#Queries},
    tick label style={font=\huge},
    tick align=outside,
    legend cell align=left,
    legend style={at={(0,1)}, anchor=north west, font=\huge}
]
\addplot+[color=black, mark=none] table {
0 2787000
120 2787000
};
\addplot+[color=green, mark=none] table {
1 2357002
2 2363002
3 2363002
4 2363002
5 2365002
6 2366002
7 2368002
8 2369002
9 2371002
10 2375002
11 2375002
12 2375002
13 2375002
14 2375002
15 2375002
16 2375002
17 2375002
18 2375002
19 2375002
20 2375002
21 2375002
22 2375002
23 2375002
24 2375002
25 2376003
26 2377002
27 2377002
28 2377002
29 2377002
30 2377002
31 2377002
32 2378002
33 2378002
34 2378002
35 2378002
36 2378002
37 2378002
38 2378002
39 2378002
40 2378003
41 2378003
42 2378003
43 2379003
44 2380002
45 2380002
46 2380002
47 2380002
48 2380002
49 2380002
50 2380003
51 2381002
52 2381002
53 2381003
54 2382002
55 2382002
56 2382003
57 2386004
58 2387004
59 2394003
60 2396003
61 2397003
62 2997003
63 3002002
64 3007003
65 3010002
66 3012002
67 3012004
68 3015002
69 3015003
70 3016003
71 3020003
72 3021003
73 3024004
74 3024005
75 3025003
76 3027003
77 3028003
78 3029003
79 3029003
80 3029003
81 3032005
82 3036003
83 3037003
84 3039002
85 3040005
86 3041002
87 3043003
88 3045003
89 3046002
90 3049002
91 3049003
92 3050002
93 3050003
94 3051002
95 3053002
96 3054003
97 3056002
98 3056003
99 3056004
100 3058002
101 3058004
102 3059002
103 3060002
104 3061003
105 3070002
106 3075003
107 3076003
108 3077003
109 3672003
110 3686003
111 3690002
112 3697003
113 3699002
114 3703004
115 3714002
116 3723003
117 3733003
118 3736003
119 3738004
120 5104002
};
\addplot+[color=blue, mark=none]  table {
1 4000
2 4000
3 4000
4 4000
5 4000
6 4000
7 4000
8 4000
9 4000
10 4000
11 4000
12 4000
13 4000
14 4000
15 4000
16 4000
17 5000
18 5000
19 5000
20 5000
21 5000
22 6000
23 6000
24 6000
25 6000
26 6000
27 6000
28 6000
29 7000
30 7000
31 7000
32 7000
33 7000
34 7000
35 7000
36 8000
37 8000
38 8000
39 8000
40 8000
41 8000
42 9000
43 9000
44 9000
45 9000
46 9000
47 9000
48 10000
49 10000
50 10000
51 11000
52 11000
53 11000
54 12000
55 12000
56 12000
57 12000
58 12000
59 12000
60 12000
61 12000
62 12000
63 13000
64 13000
65 13000
66 14000
67 14000
68 15000
69 15000
70 15000
71 15000
72 16000
73 16000
74 17000
75 17000
76 17000
77 18000
78 18000
79 19000
80 20000
81 21000
82 21000
83 22000
84 22000
85 22000
86 23000
87 23000
88 23000
89 24000
90 25000
91 26000
92 27000
93 31000
94 32000
95 32000
96 34000
97 38000
98 39000
99 39000
100 40000
101 61000
102 2768000
103 2768000
104 2768000
105 2769000
106 2769000
107 2769000
108 2769000
109 2769000
110 2769000
111 2769000
112 2769000
113 2769000
114 2769000
115 2769000
116 2769000
117 2769000
118 2769000
119 2769000
120 2769000
};
\addplot+[color=red, mark=none]  table {
1 4002
2 4002
3 4002
4 4002
5 4002
6 4002
7 4002
8 4002
9 4002
10 4002
11 4002
12 4002
13 4002
14 4002
15 4002
16 4002
17 5002
18 5002
19 5002
20 5002
21 5002
22 6002
23 6003
24 7002
25 7002
26 7002
27 7002
28 7002
29 8002
30 8002
31 8002
32 8002
33 8002
34 8002
35 8002
36 8002
37 8002
38 8003
39 8003
40 9002
41 9002
42 9003
43 10002
44 11002
45 11002
46 11002
47 11002
48 11002
49 11002
50 11003
51 12002
52 12002
53 12002
54 13002
55 13002
56 13002
57 13002
58 13003
59 13003
60 13003
61 13003
62 14002
63 14002
64 14002
65 14003
66 14003
67 15002
68 15002
69 15002
70 15003
71 15003
72 17003
73 18002
74 18003
75 19004
76 20002
77 20003
78 21002
79 21002
80 21002
81 21002
82 21003
83 21004
84 25003
85 26003
86 27003
87 27003
88 27004
89 28004
90 30003
91 31003
92 33003
93 35003
94 38003
95 38004
96 40003
97 40005
98 41003
99 43005
100 46003
101 55003
102 2985002
103 2988003
104 2989002
105 2989003
106 2989003
107 2990002
108 2991002
109 2992003
110 2994003
111 2994003
112 2997003
113 3005003
114 3009003
115 3012003
116 3012003
117 3664002
118 3670003
119 3674004
120 3687004
};
\end{axis}
\end{tikzpicture} & \begin{tikzpicture}[xscale=0.25, yscale=0.20]
\pgfplotsset{compat=newest}

\begin{axis}[
    width=15cm,
    height=15cm,
    enlargelimits=false,
    ymode=log,
    xtick pos=left,
    ylabel={\huge \#Total facts},
    xmin=0,
    xmax=120,
    xtick={1,20,40,60,80,100,120},
    xtick pos=left,
    xlabel={\huge \#Queries},
    tick label style={font=\huge},
    tick align=outside,
    legend cell align=left,
    legend style={at={(0,1)}, anchor=north west, font=\huge}
]
\addplot+[color=black, mark=none] table {
0 27870000
120 27870000
};
\addplot+[color=green, mark=none] table {
1 23570002
2 23630002
3 23630002
4 23630002
5 23650002
6 23660002
7 23680002
8 23690002
9 23710002
10 23750002
11 23750002
12 23750002
13 23750002
14 23750002
15 23750002
16 23750002
17 23750002
18 23750002
19 23750002
20 23750002
21 23750002
22 23750002
23 23750002
24 23750002
25 23760003
26 23770002
27 23770002
28 23770002
29 23770002
30 23770002
31 23770002
32 23780002
33 23780002
34 23780002
35 23780002
36 23780002
37 23780002
38 23780002
39 23780002
40 23780003
41 23780003
42 23780003
43 23790003
44 23800002
45 23800002
46 23800002
47 23800002
48 23800002
49 23800002
50 23800003
51 23810002
52 23810002
53 23810003
54 23820002
55 23820002
56 23820003
57 23860004
58 23870004
59 23940003
60 23960003
61 23970003
62 29970003
63 30020002
64 30070003
65 30100002
66 30120002
67 30120004
68 30150002
69 30150003
70 30160003
71 30200003
72 30210003
73 30240004
74 30240005
75 30250003
76 30270003
77 30280003
78 30290003
79 30290003
80 30290003
81 30320005
82 30360003
83 30370003
84 30390002
85 30400005
86 30410002
87 30430003
88 30450003
89 30460002
90 30490002
91 30490003
92 30500002
93 30500003
94 30510002
95 30530002
96 30540003
97 30560002
98 30560003
99 30560004
100 30580002
101 30580004
102 30590002
103 30600002
104 30610003
105 30700002
106 30750003
107 30760003
108 30770003
109 36720003
110 36860003
111 36900002
112 36970003
113 36990002
114 37030004
115 37140002
116 37230003
117 37330003
118 37360003
119 37380004
120 51040002
};
\addplot+[color=blue, mark=none]  table {
1 40000
2 40000
3 40000
4 40000
5 40000
6 40000
7 40000
8 40000
9 40000
10 40000
11 40000
12 40000
13 40000
14 40000
15 40000
16 40000
17 50000
18 50000
19 50000
20 50000
21 50000
22 60000
23 60000
24 60000
25 60000
26 60000
27 60000
28 60000
29 70000
30 70000
31 70000
32 70000
33 70000
34 70000
35 70000
36 80000
37 80000
38 80000
39 80000
40 80000
41 80000
42 90000
43 90000
44 90000
45 90000
46 90000
47 90000
48 100000
49 100000
50 100000
51 110000
52 110000
53 110000
54 120000
55 120000
56 120000
57 120000
58 120000
59 120000
60 120000
61 120000
62 120000
63 130000
64 130000
65 130000
66 140000
67 140000
68 150000
69 150000
70 150000
71 150000
72 160000
73 160000
74 170000
75 170000
76 170000
77 180000
78 180000
79 190000
80 200000
81 210000
82 210000
83 220000
84 220000
85 220000
86 230000
87 230000
88 230000
89 240000
90 250000
91 260000
92 270000
93 310000
94 320000
95 320000
96 340000
97 380000
98 390000
99 390000
100 400000
101 610000
102 27680000
103 27680000
104 27680000
105 27690000
106 27690000
107 27690000
108 27690000
109 27690000
110 27690000
111 27690000
112 27690000
113 27690000
114 27690000
115 27690000
116 27690000
117 27690000
118 27690000
119 27690000
120 27690000
};
\addplot+[color=red, mark=none]  table {
1 40002
2 40002
3 40002
4 40002
5 40002
6 40002
7 40002
8 40002
9 40002
10 40002
11 40002
12 40002
13 40002
14 40002
15 40002
16 40002
17 50002
18 50002
19 50002
20 50002
21 50002
22 60002
23 60003
24 70002
25 70002
26 70002
27 70002
28 70002
29 80002
30 80002
31 80002
32 80002
33 80002
34 80002
35 80002
36 80002
37 80002
38 80003
39 80003
40 90002
41 90002
42 90003
43 100002
44 110002
45 110002
46 110002
47 110002
48 110002
49 110002
50 110003
51 120002
52 120002
53 120002
54 130002
55 130002
56 130002
57 130002
58 130003
59 130003
60 130003
61 130003
62 140002
63 140002
64 140002
65 140003
66 140003
67 150002
68 150002
69 150002
70 150003
71 150003
72 170003
73 180002
74 180003
75 190004
76 200002
77 200003
78 210002
79 210002
80 210002
81 210002
82 210003
83 210004
84 250003
85 260003
86 270003
87 270003
88 270004
89 280004
90 300003
91 310003
92 330003
93 350003
94 380003
95 380004
96 400003
97 400005
98 410003
99 430005
100 460003
101 550003
102 29850002
103 29880003
104 29890002
105 29890003
106 29890003
107 29900002
108 29910002
109 29920003
110 29940003
111 29940003
112 29970003
113 30050003
114 30090003
115 30120003
116 30120003
117 36640002
118 36700003
119 36740004
120 36870004
};
\end{axis}
\end{tikzpicture} & \begin{tikzpicture}[xscale=0.25, yscale=0.20]
\pgfplotsset{compat=newest}

\begin{axis}[
    width=15cm,
    height=15cm,
    enlargelimits=false,
    ymode=log,
    xtick pos=left,
    ylabel={\huge \#Total facts},
    xmin=0,
    xmax=120,
    xtick={1,20,40,60,80,100,120},
    xtick pos=left,
    xlabel={\huge \#Queries},
    tick label style={font=\huge},
    tick align=outside,
    legend cell align=left,
    legend style={at={(0,1)}, anchor=north west, font=\huge}
]
\addplot+[color=black, mark=none] table {
0 139350000
120 139350000
};
\addplot+[color=green, mark=none] table {
1 117850002
2 118150002
3 118150002
4 118150002
5 118250002
6 118300002
7 118400002
8 118450002
9 118550002
10 118750002
11 118750002
12 118750002
13 118750002
14 118750002
15 118750002
16 118750002
17 118750002
18 118750002
19 118750002
20 118750002
21 118750002
22 118750002
23 118750002
24 118750002
25 118800003
26 118850002
27 118850002
28 118850002
29 118850002
30 118850002
31 118850002
32 118900002
33 118900002
34 118900002
35 118900002
36 118900002
37 118900002
38 118900002
39 118900002
40 118900003
41 118900003
42 118900003
43 118950003
44 119000002
45 119000002
46 119000002
47 119000002
48 119000002
49 119000002
50 119000003
51 119050002
52 119050002
53 119050003
54 119100002
55 119100002
56 119100003
57 119300004
58 119350004
59 119700003
60 119800003
61 119850003
62 149850003
63 150100002
64 150350003
65 150500002
66 150600002
67 150600004
68 150750002
69 150750003
70 150800003
71 151000003
72 151050003
73 151200004
74 151200005
75 151250003
76 151350003
77 151400003
78 151450003
79 151450003
80 151450003
81 151600005
82 151800003
83 151850003
84 151950002
85 152000005
86 152050002
87 152150003
88 152250003
89 152300002
90 152450002
91 152450003
92 152500002
93 152500003
94 152550002
95 152650002
96 152700003
97 152800002
98 152800003
99 152800004
100 152900002
101 152900004
102 152950002
103 153000002
104 153050003
105 153500002
106 153750003
107 153800003
108 153850003
109 183600003
110 184300003
111 184500002
112 184850003
113 184950002
114 185150004
115 185700002
116 186150003
117 186650003
118 186800003
119 186900004
120 255200002
};
\addplot+[color=blue, mark=none]  table {
1 200000
2 200000
3 200000
4 200000
5 200000
6 200000
7 200000
8 200000
9 200000
10 200000
11 200000
12 200000
13 200000
14 200000
15 200000
16 200000
17 250000
18 250000
19 250000
20 250000
21 250000
22 300000
23 300000
24 300000
25 300000
26 300000
27 300000
28 300000
29 350000
30 350000
31 350000
32 350000
33 350000
34 350000
35 350000
36 400000
37 400000
38 400000
39 400000
40 400000
41 400000
42 450000
43 450000
44 450000
45 450000
46 450000
47 450000
48 500000
49 500000
50 500000
51 550000
52 550000
53 550000
54 600000
55 600000
56 600000
57 600000
58 600000
59 600000
60 600000
61 600000
62 600000
63 650000
64 650000
65 650000
66 700000
67 700000
68 750000
69 750000
70 750000
71 750000
72 800000
73 800000
74 850000
75 850000
76 850000
77 900000
78 900000
79 950000
80 1000000
81 1050000
82 1050000
83 1100000
84 1100000
85 1100000
86 1150000
87 1150000
88 1150000
89 1200000
90 1250000
91 1300000
92 1350000
93 1550000
94 1600000
95 1600000
96 1700000
97 1900000
98 1950000
99 1950000
100 2000000
101 3050000
102 138400000
103 138400000
104 138400000
105 138450000
106 138450000
107 138450000
108 138450000
109 138450000
110 138450000
111 138450000
112 138450000
113 138450000
114 138450000
115 138450000
116 138450000
117 138450000
118 138450000
119 138450000
120 138450000
};
\addplot+[color=red, mark=none]  table {
1 200002
2 200002
3 200002
4 200002
5 200002
6 200002
7 200002
8 200002
9 200002
10 200002
11 200002
12 200002
13 200002
14 200002
15 200002
16 200002
17 250002
18 250002
19 250002
20 250002
21 250002
22 300002
23 300003
24 350002
25 350002
26 350002
27 350002
28 350002
29 400002
30 400002
31 400002
32 400002
33 400002
34 400002
35 400002
36 400002
37 400002
38 400003
39 400003
40 450002
41 450002
42 450003
43 500002
44 550002
45 550002
46 550002
47 550002
48 550002
49 550002
50 550003
51 600002
52 600002
53 600002
54 650002
55 650002
56 650002
57 650002
58 650003
59 650003
60 650003
61 650003
62 700002
63 700002
64 700002
65 700003
66 700003
67 750002
68 750002
69 750002
70 750003
71 750003
72 850003
73 900002
74 900003
75 950004
76 1000002
77 1000003
78 1050002
79 1050002
80 1050002
81 1050002
82 1050003
83 1050004
84 1250003
85 1300003
86 1350003
87 1350003
88 1350004
89 1400004
90 1500003
91 1550003
92 1650003
93 1750003
94 1900003
95 1900004
96 2000003
97 2000005
98 2050003
99 2150005
100 2300003
101 2750003
102 149250002
103 149400003
104 149450002
105 149450003
106 149450003
107 149500002
108 149550002
109 149600003
110 149700003
111 149700003
112 149850003
113 150250003
114 150450003
115 150600003
116 150600003
117 183200002
118 183500003
119 183700004
120 184350004
};
\end{axis}
\end{tikzpicture} \\
                                                           & \Geneight-1k                                      & \Geneight-10k                                      & \Geneight-50k                                      \\
            \rotatebox{90}{\hspace{0.5cm}Query Times (ms)} & \begin{tikzpicture}[xscale=0.25, yscale=0.20]
\pgfplotsset{compat=newest}

\begin{axis}[
    width=15cm,
    height=15cm,
    enlargelimits=false,
    ymode=log,
    ytick pos=left,
    ylabel={\huge Time in ms},
    xmin=0,
    xmax=120,
    xtick={1,20,40,60,80,100,120},
    xtick pos=left,
    xlabel={\huge \#Queries},
    tick label style={font=\huge},
    tick align=outside,
    legend cell align=left,
    legend style={at={(0,1)}, anchor=north west, font=\huge}
]
\addplot+[color=black, mark=none] table {
0 3915
120 3915
};
\addplot+[color=green, mark=none] table {
1 4312
2 4321
3 4322
4 4325
5 4326
6 4327
7 4330
8 4332
9 4339
10 4339
11 4340
12 4340
13 4341
14 4343
15 4344
16 4347
17 4348
18 4351
19 4351
20 4353
21 4354
22 4355
23 4356
24 4356
25 4357
26 4357
27 4357
28 4360
29 4360
30 4361
31 4365
32 4365
33 4365
34 4367
35 4369
36 4369
37 4369
38 4370
39 4370
40 4371
41 4372
42 4373
43 4373
44 4373
45 4374
46 4375
47 4375
48 4375
49 4376
50 4376
51 4376
52 4376
53 4376
54 4377
55 4378
56 4379
57 4381
58 4381
59 4381
60 4382
61 4382
62 4382
63 4382
64 4383
65 4386
66 4387
67 4388
68 4392
69 4392
70 4393
71 4394
72 4396
73 4397
74 4397
75 4398
76 4398
77 4402
78 4402
79 4403
80 4404
81 4411
82 4416
83 4424
84 4431
85 4435
86 4438
87 4441
88 4453
89 4453
90 4465
91 4478
92 4502
93 4502
94 4508
95 4511
96 4512
97 4520
98 4531
99 4531
100 4532
101 4538
102 4540
103 4581
104 5012
105 5027
106 5032
107 5061
108 5093
109 5101
110 5101
111 5103
112 5129
113 5144
114 5164
115 5165
116 5177
117 5186
118 5232
119 5234
120 5413
};
\addplot+[color=blue, mark=none]  table {
1 24
2 24
3 24
4 24
5 25
6 25
7 25
8 25
9 26
10 26
11 26
12 26
13 26
14 26
15 27
16 27
17 27
18 27
19 27
20 27
21 27
22 27
23 27
24 27
25 27
26 28
27 28
28 28
29 28
30 28
31 28
32 28
33 28
34 28
35 28
36 28
37 28
38 28
39 28
40 28
41 28
42 28
43 28
44 29
45 29
46 29
47 29
48 29
49 29
50 29
51 29
52 29
53 29
54 29
55 29
56 29
57 29
58 29
59 29
60 29
61 30
62 30
63 30
64 30
65 30
66 30
67 30
68 30
69 30
70 31
71 31
72 31
73 31
74 32
75 32
76 32
77 32
78 33
79 33
80 33
81 33
82 33
83 33
84 34
85 34
86 34
87 34
88 34
89 34
90 35
91 35
92 35
93 35
94 36
95 37
96 37
97 38
98 40
99 42
100 43
101 43
102 43
103 43
104 44
105 44
106 44
107 45
108 45
109 47
110 47
111 53
112 58
113 59
114 68
115 3276
116 3312
117 3320
118 3334
119 3511
120 3849
};
\addplot+[color=red, mark=none]  table {
1 25
2 25
3 25
4 25
5 26
6 26
7 26
8 26
9 26
10 26
11 26
12 27
13 27
14 27
15 27
16 27
17 27
18 27
19 27
20 28
21 28
22 28
23 28
24 29
25 29
26 29
27 29
28 29
29 29
30 29
31 29
32 30
33 30
34 30
35 30
36 30
37 30
38 30
39 30
40 30
41 31
42 31
43 31
44 31
45 31
46 31
47 31
48 31
49 31
50 31
51 31
52 31
53 31
54 32
55 32
56 32
57 32
58 32
59 32
60 32
61 32
62 32
63 32
64 32
65 32
66 32
67 32
68 33
69 33
70 33
71 33
72 33
73 33
74 34
75 34
76 34
77 35
78 35
79 35
80 36
81 36
82 36
83 36
84 36
85 36
86 36
87 36
88 36
89 37
90 37
91 37
92 38
93 39
94 40
95 40
96 40
97 40
98 43
99 43
100 43
101 44
102 47
103 47
104 47
105 49
106 50
107 50
108 51
109 54
110 55
111 56
112 62
113 64
114 67
115 3739
116 3740
117 3774
118 3855
119 4022
120 5121
};
\end{axis}
\end{tikzpicture}        & \begin{tikzpicture}[xscale=0.25, yscale=0.20]
\pgfplotsset{compat=newest}

\begin{axis}[
    width=15cm,
    height=15cm,
    enlargelimits=false,
    ymode=log,
    ytick pos=left,
    ylabel={\huge Time in ms},
    xmin=0,
    xmax=120,
    xtick={1,20,40,60,80,100,120},
    xtick pos=left,
    xlabel={\huge \#Queries},
    tick label style={font=\huge},
    tick align=outside,
    legend cell align=left,
    legend style={at={(0,1)}, anchor=north west, font=\huge}
]
\addplot+[color=black, mark=none] table {
0 55001
120 55001
};
\addplot+[color=green, mark=none] table {
1 56417
2 56437
3 56485
4 56495
5 56505
6 56581
7 56626
8 56626
9 56653
10 56656
11 56659
12 56667
13 56698
14 56742
15 56764
16 56767
17 56772
18 56791
19 56803
20 56816
21 56830
22 56850
23 56881
24 56914
25 56927
26 56942
27 56944
28 56944
29 56949
30 56993
31 56997
32 57001
33 57016
34 57024
35 57042
36 57045
37 57046
38 57066
39 57081
40 57081
41 57092
42 57098
43 57115
44 57117
45 57122
46 57169
47 57171
48 57176
49 57186
50 57195
51 57196
52 57219
53 57220
54 57236
55 57238
56 57244
57 57248
58 57253
59 57268
60 57283
61 57308
62 57333
63 57343
64 57362
65 57364
66 57380
67 57421
68 57422
69 57427
70 57446
71 57458
72 57485
73 57509
74 57516
75 57546
76 57586
77 57622
78 57649
79 57655
80 57655
81 57778
82 57873
83 57941
84 57992
85 58026
86 58028
87 58186
88 58198
89 58300
90 58419
91 58553
92 58698
93 58922
94 59289
95 59310
96 59315
97 60010
98 61348
99 61467
100 61938
101 62186
102 62196
103 62358
104 62362
105 62410
106 62426
107 62582
108 63083
109 63172
110 63319
111 65346
112 65899
113 66091
114 66696
115 67062
116 67681
117 68626
118 69474
119 70565
120 73203
};
\addplot+[color=blue, mark=none]  table {
1 40
2 41
3 41
4 42
5 42
6 43
7 43
8 44
9 44
10 44
11 49
12 50
13 51
14 52
15 55
16 56
17 57
18 59
19 61
20 62
21 64
22 65
23 65
24 66
25 66
26 66
27 67
28 67
29 67
30 67
31 67
32 67
33 68
34 69
35 69
36 70
37 70
38 71
39 71
40 71
41 71
42 71
43 72
44 72
45 72
46 72
47 72
48 72
49 73
50 73
51 73
52 74
53 74
54 75
55 76
56 76
57 77
58 77
59 78
60 78
61 79
62 79
63 79
64 81
65 81
66 82
67 82
68 82
69 83
70 84
71 84
72 86
73 87
74 88
75 89
76 90
77 90
78 90
79 90
80 91
81 91
82 92
83 93
84 94
85 94
86 95
87 95
88 96
89 97
90 97
91 99
92 100
93 102
94 108
95 111
96 122
97 125
98 130
99 134
100 135
101 135
102 136
103 137
104 140
105 145
106 147
107 154
108 161
109 163
110 175
111 206
112 211
113 241
114 294
115 45321
116 45522
117 45920
118 46086
119 51256
120 51665
};
\addplot+[color=red, mark=none]  table {
1 40
2 42
3 43
4 45
5 45
6 46
7 47
8 47
9 48
10 48
11 51
12 52
13 52
14 52
15 56
16 59
17 64
18 66
19 68
20 69
21 71
22 71
23 71
24 71
25 71
26 71
27 72
28 72
29 72
30 72
31 73
32 74
33 74
34 74
35 76
36 76
37 77
38 78
39 79
40 80
41 81
42 81
43 82
44 82
45 83
46 83
47 84
48 84
49 85
50 85
51 85
52 85
53 86
54 86
55 86
56 86
57 87
58 87
59 89
60 89
61 89
62 90
63 90
64 91
65 92
66 92
67 92
68 94
69 94
70 94
71 95
72 96
73 97
74 97
75 97
76 98
77 99
78 100
79 101
80 102
81 105
82 105
83 106
84 108
85 108
86 109
87 112
88 113
89 114
90 116
91 118
92 119
93 120
94 122
95 129
96 135
97 141
98 147
99 147
100 153
101 155
102 157
103 162
104 168
105 169
106 170
107 171
108 180
109 209
110 222
111 267
112 277
113 291
114 292
115 47350
116 47419
117 48047
118 48208
119 48425
120 60177
};
\end{axis}
\end{tikzpicture}        & \begin{tikzpicture}[xscale=0.25, yscale=0.20]
\pgfplotsset{compat=newest}

\begin{axis}[
    width=15cm,
    height=15cm,
    enlargelimits=false,
    ymode=log,
    ytick pos=left,
    ylabel={\huge Time in ms},
    xmin=0,
    xmax=120,
    xtick={1,20,40,60,80,100,120},
    xtick pos=left,
    xlabel={\huge \#Queries},
    tick label style={font=\huge},
    tick align=outside,
    legend cell align=left,
    legend style={at={(0,1)}, anchor=north west, font=\huge}
]
\addplot+[color=black, mark=none] table {
0 288050
120 288050
};
\addplot+[color=green, mark=none] table {
1 289709
2 290548
3 290617
4 290790
5 291016
6 291167
7 291193
8 291236
9 291532
10 291711
11 291745
12 291967
13 292201
14 292268
15 292442
16 292581
17 292829
18 292835
19 293035
20 293303
21 293304
22 293371
23 293478
24 293586
25 293687
26 293942
27 294071
28 294106
29 294198
30 294208
31 294343
32 294489
33 294540
34 294641
35 294812
36 294876
37 295126
38 295349
39 295423
40 295491
41 295676
42 295709
43 295855
44 296009
45 296110
46 296127
47 296203
48 296369
49 296386
50 296534
51 296743
52 296797
53 296868
54 296895
55 296910
56 296996
57 297005
58 297140
59 297307
60 297454
61 297500
62 297646
63 297671
64 297689
65 297966
66 298316
67 298316
68 298416
69 298484
70 298570
71 298990
72 299067
73 299268
74 299392
75 299476
76 299506
77 299527
78 299682
79 300195
80 300249
81 300462
82 301120
83 301455
84 301774
85 301921
86 302111
87 302233
88 302516
89 302812
90 303009
91 303864
92 304875
93 305676
94 305682
95 305699
96 307869
97 314786
98 316810
99 316850
100 317314
101 319547
102 319597
103 319762
104 320520
105 320869
106 323083
107 328339
108 330948
109 331857
110 332285
111 333244
112 335190
113 336062
114 337485
115 338495
116 344538
117 347453
118 354471
119 359124
120 370282
};
\addplot+[color=blue, mark=none]  table {
1 82
2 83
3 83
4 83
5 83
6 84
7 85
8 85
9 85
10 87
11 100
12 101
13 102
14 102
15 112
16 114
17 115
18 127
19 136
20 139
21 141
22 143
23 143
24 146
25 146
26 147
27 148
28 149
29 149
30 154
31 155
32 156
33 160
34 166
35 171
36 176
37 177
38 178
39 178
40 178
41 180
42 180
43 180
44 181
45 181
46 181
47 181
48 182
49 183
50 184
51 187
52 188
53 189
54 192
55 192
56 193
57 199
58 199
59 199
60 207
61 211
62 211
63 213
64 214
65 214
66 217
67 217
68 219
69 224
70 224
71 225
72 226
73 230
74 237
75 245
76 250
77 256
78 261
79 264
80 268
81 274
82 280
83 283
84 285
85 289
86 291
87 294
88 294
89 298
90 307
91 309
92 339
93 350
94 402
95 413
96 441
97 470
98 484
99 506
100 511
101 511
102 537
103 542
104 553
105 598
106 630
107 670
108 706
109 777
110 885
111 1080
112 1151
113 1289
114 1716
115 223617
116 224061
117 226332
118 237687
119 238440
120 253216
};
\addplot+[color=red, mark=none]  table {
1 86
2 87
3 88
4 88
5 89
6 89
7 90
8 91
9 91
10 92
11 104
12 104
13 105
14 106
15 119
16 120
17 135
18 145
19 162
20 171
21 172
22 172
23 174
24 174
25 175
26 176
27 176
28 176
29 180
30 180
31 181
32 190
33 191
34 193
35 195
36 195
37 201
38 202
39 212
40 214
41 221
42 224
43 224
44 225
45 225
46 226
47 227
48 228
49 229
50 229
51 229
52 230
53 231
54 232
55 234
56 235
57 243
58 243
59 243
60 247
61 253
62 253
63 254
64 255
65 256
66 264
67 268
68 268
69 270
70 271
71 271
72 271
73 278
74 287
75 290
76 295
77 299
78 316
79 318
80 318
81 329
82 338
83 339
84 342
85 345
86 358
87 360
88 369
89 377
90 381
91 400
92 401
93 416
94 421
95 422
96 458
97 461
98 535
99 549
100 579
101 599
102 610
103 653
104 719
105 775
106 780
107 793
108 850
109 880
110 1005
111 1234
112 1393
113 1611
114 1695
115 242371
116 245183
117 247084
118 249663
119 249998
120 307233
};
\end{axis}
\end{tikzpicture}        \\
            \rotatebox{90}{\hspace{0.6cm}\# Derived Facts} & \input{figures/plot_generated28_1000_total_facts} & \input{figures/plot_generated28_10000_total_facts} & \input{figures/plot_generated28_50000_total_facts} \\
        \end{tabular}
    }
    \caption{The results for \textcolor{black}{\matrun}, \textcolor{blue}{\relevancerun}, \textcolor{green}{\magicrun}, and \textcolor{red}{\allrun} on second-order scenarios}\label{fig:so:results:2}
\end{figure}

Figures~\ref{fig:so:results:1} and~\ref{fig:so:results:2} show the results of
our experiments on second-order scenarios. Since the numbers of rules are
independent of the sizes of the base instances, we show the numbers of rules
just once. Table~\ref{tab:so:results} summarises the distribution of our
results.

Our results on second-order dependencies are broadly in agreement with the
results on first-order dependencies. Specifically, our techniques seem to be
effective in that they allow most queries to be answered orders of magnitude
faster than by computing the chase. The difference between magic sets and
relevance is less pronounced than in our first-order experiments: across the
board, \matrun is slower than \magicrun, which is in turn considerably slower
than \relevancerun. Moreover, the difference between \relevancerun and \allrun
is negligible; thus, while the magic sets do not considerably improve
performance, they do not hurt it either.

This behaviour is explained to some extent by the very high ratios of the
numbers of total and useful facts: as shown in Table~\ref{tab:so:results},
these often exceed a factor of ten, and already \matrun incurs considerable
overheads (unlike on first-order scenarios). We confirmed that this is mainly
due to the facts involving the $\fnpred{f}$ predicates produced by
$\D$-restricted function reflexivity axioms. In particular, as per
Definition~\ref{def:defun}, each axiom of the form \eqref{eq:Dfnref} introduces
a rule of the form \eqref{eq:defun:Dfnref}, which is transformed by
desingularisation into
\begin{align}
    \fnpred{f}(x_1,\dots,x_n,f(x_1,\dots,x_n)) \leftarrow \D(x_1) \wedge \dots \wedge \D(x_n).
\end{align}
In other words, these axioms generate the value of $f$ on all values of ${x_1,
\dots, x_n}$ that have been identified as relevant (i.e., they occur in facts
with the $\D$ predicate, which is the case whenever they occur in relational
facts). This is a considerable source of overhead, particularly when the
relevant terms are deep. The most effective way to reduce this overhead seems
to be to remove function symbols altogether, which explains why performance
improvements of \relevancerun are much more substantial than of \magicrun or
\allrun.

At present, it is not clear whether this inefficiency can be avoided. The chase
for second-order dependencies by \citet{DBLP:journals/corr/abs-1106-3745} uses
a similar step: it enumerates the domain of the Herbrand universe up to a depth
that is determined in advance from the graph used to ensure that the input is
weakly acyclic. Our approach is somewhat more efficient in that facts with the
$\fnpred{f}$ predicates are produced `on demand' (i.e., only for the relevant
terms).

Nevertheless, our results show that efficient goal-driven query answering over
generalised second-order dependencies is feasible and can lead to considerable
efficiency gains in practice.

\section{Conclusion}\label{sec:conclusion}

In this paper, we presented several techniques for goal-driven query answering
over first- and second-order dependencies with equality. Towards this goal, we
presented a revised formulation of the singularisation technique by
\citet{DBLP:conf/pods/Marnette09}, which overcomes the incompleteness on
second-order dependencies in the work by \citet{DBLP:journals/jodsn/CateHK16}.
Furthermore, we presented a novel relevance analysis technique that tries to
identify rules whose instances are all irrelevant to the query, and we
presented a variant of the magic sets technique for logic programs
\cite{DBLP:conf/pods/BancilhonMSU86, DBLP:journals/jlp/BeeriR91,
DBLP:journals/jlp/BalbinPRM91} that can identify the relevant instances of
rules with equality. In order to evaluate our techniques on second-order
dependencies, we presented (in Appendix~\ref{sec:generating}) a technique for
generating synthetic second-order scenarios that are guaranteed to exhibit
nontrivial inferences. Finally, we conducted an extensive empirical evaluation
and showed that our techniques can often be used to answer a query orders of
magnitude faster than by computing the chase of the input dependencies in full.

We see several interesting avenues for future work. On the theoretical side, it
would be interesting to make our techniques applicable to classes of decidable
dependencies with nonterminating chase (e.g., guarded dependencies). The key
difficulty in relevance analysis is that the fixpoint of the base instance
abstraction may not always be finite. Reasoning in such cases often involves
computing finite representations of infinite models, but it is unclear how to
perform backward chaining over such representations. The key difficulty in the
magic sets is to ensure that the transformation result falls into a decidable
dependency class. On the practical side, a key challenge is to develop ways to
reduce the number of auxiliary facts (i.e., facts involving the $\D$,
$\fnpred{f}$, or magic predicates), as well as ways to reduce the overheads of
applying magic rules.

\section*{Acknowledgements}

This work was funded by the EPSRC grant AnaLOG (EP/P025943/1). For the purpose
of Open Access, the authors have applied a CC BY public copyright licence to
any Author Accepted Manuscript (AAM) version arising from this submission.

\bibliographystyle{ACM-Reference-Format}
\bibliography{references}

@InProceedings{DBLP:conf/lpnmr/CalimeriCIL09,
  author    = {Francesco Calimeri and Susanna Cozza and Giovambattista Ianni and Nicola Leone},
  booktitle = {Proc.\ of the 10th Int.\ Conf.\ on Logic Programming and Nonmonotonic Reasoning (LPNMR 2009)},
  title     = {{Magic Sets for the Bottom-Up Evaluation of Finitely Recursive Programs}},
  editor    = {Esra Erdem and Fangzhen Lin and Torsten Schaub},
  pages     = {71--86},
  publisher = {Springer},
  series    = {LNCS},
  volume    = {5753},
  address   = {Potsdam, Germany},
  month     = {September 14--18},
  year      = {2009},
}

@Article{DBLP:journals/jair/GrauHKKMMW13,
  author  = {Bernardo Cuenca Grau and Ian Horrocks and Markus Kr{\"{o}}tzsch and Clemens Kupke and Despoina Magka and Boris Motik and Zhe Wang},
  title   = {{Acyclicity Notions for Existential Rules and Their Application to Query Answering in Ontologies}},
  pages   = {741--808},
  volume  = {47},
  journal = {J. Artif. Intell. Res.},
  year    = {2013},
}

@Article{DBLP:journals/jlp/BalbinPRM91,
  author  = {Isaac Balbin and Graeme S. Port and Kotagiri Ramamohanarao and Krishnamurthy Meenakshi},
  title   = {{Efficient Bottom-Up Computation of Queries on Stratified Databases}},
  number  = {3{\&}4},
  pages   = {295--344},
  volume  = {11},
  journal = {J. Log. Program.},
  year    = {1991},
}

@Article{GottlobLeitsch85,
  author    = {G. Gottlob and A. Leitsch},
  title     = {{On the Efficiency of Subsumption Algorithms}},
  number    = {2},
  pages     = {280--295},
  volume    = {32},
  journal   = {Journal of the ACM},
  publisher = {ACM Press},
  year      = {1985},
}

@Article{DBLP:journals/corr/abs-1106-3745,
  author  = {Marcelo Arenas and Ronald Fagin and Alan Nash},
  title   = {{Composition with Target Constraints}},
  number  = {3},
  volume  = {7},
  journal = {Log. Methods Comput. Sci.},
  year    = {2011},
  pages   = {1--38},
}

@Article{DBLP:journals/pvldb/BellomariniSG18,
  author  = {Luigi Bellomarini and Emanuel Sallinger and Georg Gottlob},
  title   = {{The Vadalog System: Datalog-based Reasoning for Knowledge Graphs}},
  number  = {9},
  pages   = {975--987},
  volume  = {11},
  journal = {Proc.\ VLDB Endow.},
  year    = {2018},
}

@InProceedings{DBLP:conf/rr/Mugnier11,
  author    = {Marie-Laure Mugnier},
  booktitle = {Proc.\ of the 5th Int.\ Conf.\ on Web Reasoning and Rule Systems (RR 2011)},
  title     = {{Ontological Query Answering with Existential Rules}},
  editor    = {Sebastian Rudolph and Claudio Gutierrez},
  pages     = {2--23},
  publisher = {Springer},
  series    = {LNCS},
  volume    = {6902},
  address   = {Galway, Ireland},
  month     = {August 29--30},
  year      = {2011},
}

@Article{DBLP:journals/tods/Fagin77,
  author  = {Ronald Fagin},
  title   = {Multivalued Dependencies and a New Normal Form for Relational Databases},
  number  = {3},
  pages   = {262--278},
  volume  = {2},
  journal = {{ACM} Trans. Database Syst.},
  year    = {1977},
}

@Article{DBLP:journals/pvldb/AlexeTV08,
  author  = {Bogdan Alexe and Wang Chiew Tan and Yannis Velegrakis},
  title   = {{STBenchmark: towards a benchmark for mapping systems}},
  number  = {1},
  pages   = {230--244},
  volume  = {1},
  journal = {Proc.\ VLDB Endow.},
  year    = {2008},
}

@InProceedings{DBLP:conf/pods/Marnette09,
  author    = {B. Marnette},
  booktitle = {Proc.\ of the 28th ACM SIGMOD-SIGACT-SIGART Symposium on Principles of Database Systems (PODS 2009)},
  title     = {{Generalized schema-mappings: from termination to tractability}},
  editor    = {J. Paredaens and J. Su},
  pages     = {13--22},
  publisher = {ACM},
  address   = {Providence, RI, USA},
  month     = {June 19--July 1},
  year      = {2009},
}

@InProceedings{mnpho14parallel-materialisation-RDFox,
  author    = {Boris Motik and Yavor Nenov and Robert Piro and Ian Horrocks and Dan Olteanu},
  booktitle = {Proc.\ of the 28th AAAI Conf.\ on Artificial Intelligence (AAAI 2014)},
  title     = {{Parallel Materialisation of Datalog Programs in Centralised, Main-Memory RDF Systems}},
  pages     = {129--137},
  publisher = {AAAI Press},
  address   = {Qu{\'e}bec City, Qu{\'e}bec, Canada},
  month     = {July 27--31},
  year      = {2014},
}

@InProceedings{DBLP:conf/aaai/BenediktMT18,
  author    = {Michael Benedikt and Boris Motik and Efthymia Tsamoura},
  booktitle = {Proc.\ of the 32nd AAAI Conf.\ on Artificial Intelligence (AAAI 2018)},
  title     = {{Goal-Driven Query Answering for Existential Rules With Equality}},
  pages     = {1761--1770},
  publisher = {AAAI Press},
  address   = {New Orleans, LA, USA},
  month     = {February 2--7},
  year      = {2018},
}

@Article{DBLP:journals/pvldb/CateCKT09,
  author  = {Balder ten Cate and Laura Chiticariu and Phokion G. Kolaitis and Wang Chiew Tan},
  title   = {{Laconic Schema Mappings: Computing the Core with SQL Queries}},
  number  = {1},
  pages   = {1006--1017},
  volume  = {2},
  journal = {{PVLDB}},
  year    = {2009},
}

@Article{DBLP:journals/jair/ArtaleCKZ09,
  author  = {Alessandro Artale and Diego Calvanese and Roman Kontchakov and Michael Zakharyaschev},
  title   = {{The DL-Lite Family and Relations}},
  pages   = {1--69},
  volume  = {36},
  journal = {Journal of Artificial Intelligence Research},
  year    = {2009},
}

@InCollection{Levy2000,
  author    = {Levy, Alon Y.},
  booktitle = {Logic-Based Artificial Intelligence},
  title     = {{Logic-Based Techniques in Data Integration}},
  editor    = {Minker, Jack},
  pages     = {575--595},
  publisher = {Springer US},
  address   = {Boston, MA},
  year      = {2000},
}

@PhdThesis{hirschthesis,
  author       = {Colin Hirsch},
  title        = {{Guarded Logics: Algorithms and Bisimulation}},
  url          = {http://www.umbrialogic.com/hirsch-thesis.pdf},
  address      = {Aachen, Germany},
  lastaccessed = {July 4, 2022},
  school       = {RWTH Aachen},
  year         = {2002},
}

@InProceedings{DBLP:conf/pods/BenediktKMMPST17,
  author    = {Michael Benedikt and George Konstantinidis and Giansalvatore Mecca and Boris Motik and Paolo Papotti and Donatello Santoro and Efthymia Tsamoura},
  booktitle = {Proc.\ of the 36th ACM SIGMOD-SIGACT-SIGAI Symposium on Principles of Database Systems (PODS 2017)},
  title     = {{Benchmarking the Chase}},
  editor    = {Emanuel Sallinger and Jan Van den Bussche and Floris Geerts},
  pages     = {37--52},
  publisher = {ACM},
  address   = {Chicago, IL, USA},
  month     = {May 14--19},
  year      = {2017},
}

@Article{DBLP:journals/vldb/Halevy01,
  author  = {Alon Y. Halevy},
  title   = {{Answering queries using views: A survey}},
  number  = {4},
  pages   = {270--294},
  volume  = {10},
  journal = {VLDB Journal},
  year    = {2001},
}

@Book{theorem-proving,
  author    = {Melvin Fitting},
  title     = {{First-Order Logic and Automated Theorem Proving}},
  edition   = {2nd},
  publisher = {Springer},
  address   = {New York, NY},
  series    = {Texts in Computer Science},
  year      = {1996},
}

@InProceedings{DBLP:conf/icalp/BeeriV81,
  author    = {Catriel Beeri and Moshe Y. Vardi},
  booktitle = {Proc. of the 8th Int.\ Colloquium on Automata, Languages, and Programming (ICALP 1981)},
  title     = {{The Implication Problem for Data Dependencies}},
  editor    = {Shimon Even and Oded Kariv},
  pages     = {73--85},
  publisher = {Springer},
  series    = {LNCS},
  volume    = {115},
  address   = {Acre, Israel},
  month     = {July 13--17},
  year      = {1981},
}

@InProceedings{DBLP:conf/sigmod/BarbosaMKL02,
  author    = {Denilson Barbosa and Alberto O. Mendelzon and John Keenleyside and Kelly A. Lyons},
  booktitle = {Proc.\ of the ACM SIGMOD Int.\ Conf.\ on Management of Data (SIGMOD 2002)},
  title     = {{ToXgene: a template-based data generator for XML}},
  editor    = {Michael J. Franklin and Bongki Moon and Anastassia Ailamaki},
  pages     = {616},
  publisher = {ACM},
  address   = {Madison, WI, USA},
  month     = {June 3--6},
  year      = {2002},
}

@InProceedings{DBLP:conf/vldb/HalevyRO06,
  author    = {Alon Y. Halevy and Anand Rajaraman and Joann J. Ordille},
  booktitle = {Proc.\ of the 32nd Int.\ Conf.\ on Very Large Data Bases (VLDB 2006)},
  title     = {{Data Integration: The Teenage Years}},
  editor    = {Umeshwar Dayal and Kyu{-}Young Whang and David B. Lomet and Gustavo Alonso and Guy M. Lohman and Martin L. Kersten and Sang Kyun Cha and Young{-}Kuk Kim},
  pages     = {9--16},
  publisher = {{ACM}},
  address   = {Seoul, Korea},
  month     = {September 12--15},
  year      = {2006},
}

@InProceedings{DBLP:conf/sigmod/Nicolas78,
  author    = {Jean{-}Marie Nicolas},
  booktitle = {Proc.\ of the 1978 {ACM} {SIGMOD} Int.\ Conference on Management of Data (SIGMOD 1978)},
  title     = {{First Order Logic Formalization for Functional, Multivalued and Mutual Dependencies}},
  editor    = {Eugene I. Lowenthal and Nell B. Dale},
  pages     = {40--46},
  publisher = {{ACM}},
  address   = {Austin, TX, USA},
  month     = {May 31--June 2},
  year      = {1978},
}

@Article{DBLP:journals/pvldb/KonstantinidisA14,
  author  = {George Konstantinidis and Jos{\'{e}} Luis Ambite},
  title   = {Optimizing the Chase: Scalable Data Integration under Constraints},
  number  = {14},
  pages   = {1869--1880},
  volume  = {7},
  journal = {Proc.\ VLDB Endow.},
  year    = {2014},
}

@InProceedings{DBLP:conf/pods/BancilhonMSU86,
  author    = {Fran{\c{c}}ois Bancilhon and David Maier and Yehoshua Sagiv and Jeffrey D. Ullman},
  booktitle = {Proc.\ of the 5th ACM SIGACT-SIGMOD Symposium on Principles of Database Systems (PODS 1896)},
  title     = {{Magic Sets and Other Strange Ways to Implement Logic Programs}},
  editor    = {Avi Silberschatz},
  pages     = {1--15},
  publisher = {ACM},
  address   = {Cambridge, MA, USA},
  month     = {March 24--26},
  year      = {1986},
}

@Article{DBLP:journals/tods/MumickFPR96,
  author  = {Inderpal Singh Mumick and Sheldon J. Finkelstein and Hamid Pirahesh and Raghu Ramakrishnan},
  title   = {{Magic Conditions}},
  number  = {1},
  pages   = {107--155},
  volume  = {21},
  journal = {ACM Trans. Database Syst.},
  year    = {1996},
}

@Article{DBLP:journals/mst/CalauttiP21,
  author  = {Marco Calautti and Andreas Pieris},
  title   = {Semi-Oblivious Chase Termination: The Sticky Case},
  number  = {1},
  pages   = {84--121},
  volume  = {65},
  journal = {Theory Comput. Syst.},
  year    = {2021},
}

@Article{DBLP:journals/is/MeccaPR12,
  author  = {Giansalvatore Mecca and Paolo Papotti and Salvatore Raunich},
  title   = {{Core schema mappings: Scalable core computations in data exchange}},
  number  = {7},
  pages   = {677--711},
  volume  = {37},
  journal = {Inf. Syst.},
  year    = {2012},
}

@Article{DBLP:journals/pvldb/ArocenaGCM15,
  author  = {Patricia C. Arocena and Boris Glavic and Radu Ciucanu and Ren{\'e}e J. Miller},
  title   = {{The iBench Integration Metadata Generator}},
  number  = {3},
  pages   = {108--119},
  volume  = {9},
  journal = {Proc.\ VLDB Endow.},
  year    = {2015},
}

@InProceedings{DBLP:conf/pods/DeutschNR08,
  author    = {A. Deutsch and A. Nash and J. B. Remmel},
  booktitle = {Proc.\ of the 27th ACM SIGMOD-SIGACT-SIGART Symposium on Principles of Database Systems (PODS 2008)},
  title     = {{The chase revisited}},
  editor    = {M. Lenzerini and D. Lembo},
  pages     = {149--158},
  publisher = {ACM},
  address   = {Vancouver, BC, Canada},
  month     = {June 9--11},
  year      = {2008},
}

@Article{DBLP:journals/ws/CaliGL12,
  author  = {A. Cal{\`i} and G. Gottlob and T. Lukasiewicz},
  title   = {{A general Datalog-based framework for tractable query answering over ontologies}},
  pages   = {57--83},
  volume  = {14},
  journal = {Journal of Web Semantics},
  year    = {2012},
}

@InProceedings{DBLP:conf/pods/Lenzerini02,
  author    = {M. Lenzerini},
  booktitle = {Proc.\ of the 21st ACM SIGACT-SIGMOD-SIGART Symposium on Principles of Database Systems (PODS 2002)},
  title     = {{Data Integration: A Theoretical Perspective}},
  editor    = {L. Popa},
  pages     = {233--246},
  publisher = {ACM},
  address   = {Madison, WA, USA},
  month     = {June 3--5},
  year      = {2002},
}

@InProceedings{DBLP:conf/ijcai/CaliLR03,
  author    = {Andrea Cal{\`i} and Domenico Lembo and Riccardo Rosati},
  booktitle = {Proc.\ of the 18th Int.\ Joint Conf.\ on Artificial Intelligence (IJCAI 2003)},
  title     = {{Query rewriting and answering under constraints in data integration systems}},
  editor    = {Georg Gottlob and Toby Walsh},
  pages     = {16--21},
  publisher = {Morgan Kaufmann},
  address   = {Acapulco, Mexico},
  month     = {August 9--15},
  year      = {2003},
}

@Article{DBLP:journals/tkde/WangXWZW23,
  author  = {Zhe Wang and Peng Xiao and Kewen Wang and Zhiqiang Zhuang and Hai Wan},
  title   = {{Efficient Datalog Rewriting for Query Answering in {TGD} Ontologies}},
  number  = {3},
  pages   = {2515--2528},
  volume  = {35},
  journal = {IEEE Trans. Knowl. Data Eng.},
  year    = {2023},
}

@InProceedings{babl05,
  author    = {F. Baader and S. Brandt and C. Lutz},
  booktitle = {Proc.\ of the 19th Int.\ Joint Conference on Artificial Intelligence (IJCAI 2005)},
  title     = {{Pushing the $\mathcal{EL}$ Envelope}},
  editor    = {L. Pack Kaelbling and A. Saffiotti},
  pages     = {364--369},
  publisher = {Morgan Kaufmann Publishers},
  address   = {Edinburgh, UK},
  month     = {July 30--August 5},
  year      = {2005},
}

@Book{dl-handbook-2,
  title     = {{The Description Logic Handbook: Theory, Implementation and Applications}},
  edition   = {2nd},
  editor    = {F. Baader and D. Calvanese and D. McGuinness and D. Nardi and P. F. Patel-Schneider},
  publisher = {Cambridge University Press},
  address   = {Cambridge},
  month     = {August},
  year      = {2007},
}

@Article{mnph19maintenance-revisited,
  author  = {Boris Motik and Yavor Nenov and Robert Piro and Ian Horrocks},
  title   = {{Maintenance of Datalog Materialisations Revisited}},
  pages   = {76--136},
  volume  = {269},
  journal = {Artificial Intelligence},
  year    = {2019},
}

@InProceedings{DBLP:conf/ifip/Kowalski74,
  author    = {Robert A. Kowalski},
  booktitle = {Proc.\ of the 6th IFIP Congress on Information Processing (IFIP 1974)},
  title     = {Predicate Logic as Programming Language},
  editor    = {Jack L. Rosenfeld},
  pages     = {569--574},
  publisher = {North-Holland},
  address   = {Stockholm, Sweden},
  month     = {August 5--10},
  year      = {1974},
}

@InBook{Greco2012,
  author    = {Greco, Sergio and Molinaro, Cristian and Spezzano, Francesca},
  booktitle = {Incomplete Data and Data Dependencies in Relational Databases},
  title     = {Incomplete Databases},
  pages     = {15--23},
  publisher = {Springer International Publishing},
  address   = {Cham},
  year      = {2012},
}

@InProceedings{DBLP:conf/icdt/LeclereMTU19,
  author    = {Michel Lecl{\`{e}}re and Marie-Laure Mugnier and Micha{\"{e}}l Thomazo and Federico Ulliana},
  booktitle = {Proc.\ of the 22nd Int.\ Conf.\ on Database Theory (ICDT 2019)},
  title     = {{A Single Approach to Decide Chase Termination on Linear Existential Rules}},
  editor    = {Pablo Barcel{\'{o}} and Marco Calautti},
  pages     = {18:1--18:19},
  publisher = {Schloss Dagstuhl---Leibniz-Zentrum f{\"{u}}r Informatik},
  series    = {LIPIcs},
  volume    = {127},
  address   = {Lisbon, Portugal},
  month     = {March 26--28},
  year      = {2019},
}

@InCollection{NieuwenhuisRubio:HandbookAR:paramodulation:2001,
  author    = {R. Nieuwenhuis and A. Rubio},
  booktitle = {Handbook of Automated Reasoning},
  title     = {{Paramodulation-Based Theorem Proving}},
  chapter   = {7},
  editor    = {A. Robinson and A. Voronkov},
  pages     = {371--443},
  publisher = {Elsevier Science},
  address   = {Amsterdam},
  volume    = {I},
  year      = {2001},
}

@InProceedings{DBLP:conf/ijcai/KrotzschR11,
  author    = {M. Kr{\"o}tzsch and S. Rudolph},
  booktitle = {Proc.\ of the 22nd Int.\ Joint Conf.\ on Artificial Intelligence (IJCAI 2011)},
  title     = {{Extending Decidable Existential Rules by Joining Acyclicity and Guardedness}},
  editor    = {T. Walsh},
  publisher = {AAAI Press},
  pages     = {963--968},
  address   = {Barcelona, Spain},
  month     = {July 16--22},
  year      = {2011},
}

@InProceedings{DBLP:conf/ijcai/BarceloBLP18,
  author    = {Pablo Barcel{\'o} and Gerald Berger and Carsten Lutz and Andreas Pieris},
  booktitle = {Proc.\ of the 27th Int.\ Joint Conf.\ on Artificial Intelligence (IJCAI 2018)},
  title     = {{First-Order Rewritability of Frontier-Guarded Ontology-Mediated Queries}},
  editor    = {J{\'e}r{\^o}me Lang},
  pages     = {1707--1713},
  publisher = {ijcai.org},
  address   = {Stockholm, Sweden},
  month     = {July 13--19},
  year      = {2018},
}

@Article{DBLP:journals/ai/AlvianoFGL12,
  author  = {Mario Alviano and Wolfgang Faber and Gianluigi Greco and Nicola Leone},
  title   = {{Magic Sets for disjunctive Datalog programs}},
  pages   = {156--192},
  volume  = {187},
  journal = {Artificial Intelligence},
  year    = {2012},
}

@InProceedings{DBLP:conf/iclp/LierlerL09,
  author    = {Y. Lierler and V. Lifschitz},
  booktitle = {Proc.\ of the 25th Int.\ Conf.\ on Logic Programming (ICLP 2009)},
  title     = {{One More Decidable Class of Finitely Ground Programs}},
  editor    = {P. M. Hill and D. Scott Warren},
  pages     = {489--493},
  publisher = {Springer},
  series    = {LNCS},
  volume    = {5649},
  address   = {Pasadena, CA, USA},
  month     = {July 14--17},
  year      = {2009},
}

@Article{DBLP:journals/jcss/Arenas0RR13,
  author  = {Marcelo Arenas and Jorge P{\'{e}}rez and Juan L. Reutter and Cristian Riveros},
  title   = {{The language of plain SO-tgds: Composition, inversion and structural properties}},
  number  = {6},
  pages   = {763--784},
  volume  = {79},
  journal = {J. Comput. Syst. Sci.},
  year    = {2013},
}

@Article{DBLP:journals/jodsn/CateHK16,
  author  = {Balder ten Cate and Richard L. Halpert and Phokion G. Kolaitis},
  title   = {{Exchange-Repairs---Managing Inconsistency in Data Exchange}},
  number  = {2},
  pages   = {77--97},
  volume  = {5},
  journal = {Journal of Data Semantics},
  year    = {2016},
}

@InProceedings{DBLP:conf/mfcs/BaranyBC13,
  author    = {Vince B{\'a}r{\'a}ny and Michael Benedikt and Balder ten Cate},
  booktitle = {Proc.\ of the 38th Int.\ Symposium Mathematical Foundations of Computer Science 2013 (MFCS 2013)},
  title     = {{Rewriting Guarded Negation Queries}},
  editor    = {Krishnendu Chatterjee and Jir{\'i} Sgall},
  pages     = {98--110},
  publisher = {Springer},
  series    = {LNCS},
  volume    = {8087},
  address   = {Klosterneuburg, Austria},
  month     = {August 26--30},
  year      = {2013},
}

@InProceedings{mnph15owl-sameAs-rewriting,
  author    = {Boris Motik and Yavor Nenov and Robert Piro and Ian Horrocks},
  booktitle = {Proc. of the 29th AAAI Conf. on Artificial Intelligence (AAAI 2015)},
  title     = {{Handling owl:sameAs via Rewriting}},
  editor    = {Blai Bonet and Sven Koenig},
  pages     = {231--237},
  publisher = {AAAI Press},
  address   = {Austin, TX, USA},
  month     = {January 25--30},
  year      = {2015},
}

@InProceedings{DBLP:conf/icalp/GogaczM14,
  author    = {Tomasz Gogacz and Jerzy Marcinkowski},
  booktitle = {Proc.\ of the 41st Int.\ Colloquium on Automata, Languages, and Programming (ICALP 2014)},
  title     = {{All-Instances Termination of Chase is Undecidable}},
  editor    = {Javier Esparza and Pierre Fraigniaud and Thore Husfeldt and Elias Koutsoupias},
  pages     = {293--304},
  publisher = {Springer},
  series    = {LNCS},
  volume    = {8573},
  address   = {Copenhagen, Denmark},
  month     = {July 8--11},
  year      = {2014},
}

@Article{DBLP:journals/tods/NashBM07,
  author  = {Alan Nash and Philip A. Bernstein and Sergey Melnik},
  title   = {{Composition of Mappings Given by Embedded Dependencies}},
  number  = {1},
  pages   = {4},
  volume  = {32},
  journal = {ACM Trans. Database Syst.},
  year    = {2007},
}

@Article{DBLP:journals/tods/MaierMS79,
  author  = {David Maier and Alberto O. Mendelzon and Yehoshua Sagiv},
  title   = {{Testing Implications of Data Dependencies}},
  number  = {4},
  pages   = {455--469},
  volume  = {4},
  journal = {ACM Trans. Database Syst.},
  year    = {1979},
}

@InProceedings{DBLP:conf/semweb/AlucHOD14,
  author    = {G{\"{u}}nes Alu{\c{c}} and Olaf Hartig and M. Tamer {\"{O}}zsu and Khuzaima Daudjee},
  booktitle = {Proc.\ of the 13th International Semantic Web Conference (ISWC 2014)},
  title     = {{Diversified Stress Testing of RDF Data Management Systems}},
  pages     = {197--212},
  publisher = {Springer},
  series    = {LNCS},
  volume    = {8796},
  address   = {Riva del Garda, Italy},
  month     = {October 19--23},
  year      = {2014},
}

@Article{DBLP:journals/jacm/BeeriV84,
  author  = {Catriel Beeri and Moshe Y. Vardi},
  title   = {{A Proof Procedure for Data Dependencies}},
  number  = {4},
  pages   = {718--741},
  volume  = {31},
  journal = {J. ACM},
  year    = {1984},
}

@Article{DBLP:journals/jcss/JohnsonK84,
  author  = {D. S. Johnson and A. C. Klug},
  title   = {{Testing Containment of Conjunctive Queries under Functional and Inclusion Dependencies}},
  number  = {1},
  pages   = {167--189},
  volume  = {28},
  journal = {Journal of Computer and System Sciences},
  year    = {1984},
}

@InProceedings{DBLP:conf/lpnmr/AlvianoGL11,
  author    = {Mario Alviano and Gianluigi Greco and Nicola Leone},
  booktitle = {Proc.\ of the 11th Int.\ Conf.\ on Logic Programming and Nonmonotonic Reasoning (LPNMR 2011)},
  title     = {{Dynamic Magic Sets for Programs with Monotone Recursive Aggregates}},
  editor    = {James P. Delgrande and Wolfgang Faber},
  pages     = {148--160},
  publisher = {Springer},
  series    = {LNCS},
  volume    = {6645},
  address   = {Vancouver, BC, Canada},
  month     = {May 16--19},
  year      = {2011},
}

@Book{DBLP:books/daglib/0076838,
  author    = {Herbert B. Enderton},
  title     = {{A Mathematical Introduction to Logic}},
  edition   = {2nd},
  publisher = {Academic Press},
  address   = {San Diego, CA},
  year      = {2001},
}

@Article{DBLP:journals/ws/GuoPH05,
  author  = {Y. Guo and Z. Pan and J. Heflin},
  title   = {{LUBM: A benchmark for OWL knowledge base systems}},
  number  = {2--3},
  pages   = {158--182},
  volume  = {3},
  journal = {Journal of Web Semantics},
  year    = {2005},
}

@InProceedings{DBLP:conf/datalog/AlvianoLMTV12,
  author    = {Mario Alviano and Nicola Leone and Marco Manna and Giorgio Terracina and Pierfrancesco Veltri},
  booktitle = {Proc.\ of the 2nd Int.\ Workshop on Datalog in Academia and Industry (Datalog 2.0)},
  title     = {{Magic-Sets for Datalog with Existential Quantifiers}},
  editor    = {Pablo Barcel{\'o} and Reinhard Pichler},
  pages     = {31--43},
  publisher = {Springer},
  series    = {LNCS},
  volume    = {7494},
  address   = {Vienna, Austria},
  month     = {September 11--13},
  year      = {2012},
}

@Article{DBLP:journals/corr/abs-1212-0254,
  author   = {Bruno Marnette},
  title    = {Resolution and Datalog Rewriting Under Value Invention and Equality Constraints},
  volume   = {abs/1212.0254},
  journal  = {CoRR},
  year     = {2012},
  numpages = {12},
}

@Article{DBLP:journals/jar/CalvaneseGLLR07,
  author  = {Diego Calvanese and Giuseppe {De Giacomo} and Domenico Lembo and Maurizio Lenzerini and Riccardo Rosati},
  title   = {{Tractable Reasoning and Efficient Query Answering in Description Logics: The DL-Lite Family}},
  number  = {3},
  pages   = {385--429},
  volume  = {39},
  journal = {Journal of Automated Reasoning},
  year    = {2007},
}

@InProceedings{mnph15incremental-BF-sameAs,
  author    = {Boris Motik and Yavor Nenov and Robert Piro and Ian Horrocks},
  booktitle = {Proc.\ of the 24th Int.\ Joint Conf.\ on Artificial Intelligence (IJCAI 2015)},
  title     = {{Combining Rewriting and Incremental Materialisation Maintenance for Datalog Programs with Equality}},
  editor    = {Qiang Yang and Michael Wooldridge},
  pages     = {3127--3133},
  publisher = {AAAI Press},
  address   = {Buenos Aires, Argentina},
  month     = {July 25--31},
  year      = {2015},
}

@Article{DBLP:journals/ai/BagetLMS11,
  author  = {Jean{-}Fran{\c{c}}ois Baget and Michel Lecl{\`{e}}re and Marie{-}Laure Mugnier and Eric Salvat},
  title   = {{On rules with existential variables: Walking the decidability line}},
  number  = {9--10},
  pages   = {1620--1654},
  volume  = {175},
  journal = {Artificial Intelligence},
  year    = {2011},
}

@Article{DBLP:journals/vldb/Meier14,
  author  = {Michael Meier},
  title   = {{The backchase revisited}},
  number  = {3},
  pages   = {495--516},
  volume  = {23},
  journal = {VLDB Journal},
  year    = {2014},
}

@InProceedings{DBLP:conf/kr/CaliGK08,
  author    = {A. Cal{\`i} and G. Gottlob and M. Kifer},
  booktitle = {Proc.\ of the 11th Int.\ Joint Conf.\ on Principles of Knowledge Representation and Reasoning (KR 2008)},
  title     = {{Taming the Infinite Chase: Query Answering under Expressive Relational Constraints}},
  editor    = {Gerhard Brewka and J\'{e}r\^{o}me Lang},
  pages     = {70--80},
  publisher = {AAAI Press},
  address   = {Sydney, NSW, Australia},
  month     = {August 16--19},
  year      = {2008},
}

@Article{DBLP:journals/tods/FaginKPT05,
  author  = {Ronald Fagin and Phokion G. Kolaitis and Lucian Popa and Wang Chiew Tan},
  title   = {{Composing Schema Mappings: Second-Order Dependencies to the Rescue}},
  number  = {4},
  pages   = {994--1055},
  volume  = {30},
  journal = {ACM Trans. Database Syst.},
  year    = {2005},
}

@Article{DBLP:journals/jlp/BeeriR91,
  author  = {Catriel Beeri and Raghu Ramakrishnan},
  title   = {{On the Power of Magic}},
  number  = {3{\&}4},
  pages   = {255--299},
  volume  = {10},
  journal = {Journal of Logic Programming},
  year    = {1991},
}

@InProceedings{DBLP:conf/lics/GradelW99,
  author    = {Erich Gr{\"a}del and Igor Walukiewicz},
  booktitle = {Proc.\ of the 14th Annual IEEE Symposium on Logic in Computer Science (LICS 1999)},
  title     = {{Guarded Fixed Point Logic}},
  pages     = {45--54},
  publisher = {IEEE Computer Society},
  address   = {Trento, Italy},
  month     = {July 2--5},
  year      = {1999},
}

@Book{DBLP:books/daglib/0082516,
  author    = {Heinz{-}Dieter Ebbinghaus and J{\"{o}}rg Flum},
  title     = {{Finite Model Theory}},
  edition   = {2nd},
  publisher = {Springer},
  address   = {Berlin, Heidelberg},
  series    = {Perspectives in Mathematical Logic},
  year      = {1999},
}

@Article{DBLP:journals/tcs/FaginKMP05,
  author  = {R. Fagin and P. G. Kolaitis and R. J. Miller and L. Popa},
  title   = {{Data exchange: semantics and query answering}},
  number  = {1},
  pages   = {89--124},
  volume  = {336},
  journal = {Theoretical Computer Science},
  year    = {2005},
}

@InProceedings{DBLP:conf/sigmod/LuMSS95,
  author    = {James J. Lu and Guido Moerkotte and Joachim Sch{\"{u}} and V. S. Subrahmanian},
  booktitle = {Proc.\ of the 1995 ACM SIGMOD Int.\ Conf.\ on Management of Data},
  title     = {{Efficient Maintenance of Materialized Mediated Views}},
  editor    = {Michael J. Carey and Donovan A. Schneider},
  pages     = {340--351},
  publisher = {ACM Press},
  address   = {San Jose, CA, USA},
  month     = {May 22--25},
  year      = {1995},
}

@Article{DBLP:journals/tods/Delobel78,
  author  = {Claude Delobel},
  title   = {Normalization and Hierarchical Dependencies in the Relational Data Model},
  number  = {3},
  pages   = {201--222},
  volume  = {3},
  journal = {{ACM} Trans. Database Syst.},
  year    = {1978},
}

@Article{DBLP:journals/sigmod/DeutschPT06,
  author  = {Alin Deutsch and Lucian Popa and Val Tannen},
  title   = {{Query reformulation with constraints}},
  number  = {1},
  pages   = {65--73},
  volume  = {35},
  journal = {SIGMOD Record},
  year    = {2006},
}

@Article{DBLP:journals/jsyml/BaranyBC18,
  author  = {Vince B{\'a}r{\'a}ny and Michael Benedikt and Balder ten Cate},
  title   = {{Some Model Theory of Guarded Negation}},
  number  = {4},
  pages   = {1307--1344},
  volume  = {83},
  journal = {J. Symb. Log.},
  year    = {2018},
}

\ifappendix

    \clearpage

    \appendix

    \makeatletter
    \counterwithin{theorem}{section}
    \renewcommand{\theproposition}{\@Alph\c@section.\arabic{theorem}}
    \renewcommand{\thelemma}{\@Alph\c@section.\arabic{theorem}}
    \renewcommand{\thedefinition}{\@Alph\c@section.\arabic{theorem}}
    \makeatother

    \section{Proof of Theorem~\ref{thm:sg}}\label{sec:proof:sg}

\thmSG*

\begin{proof}
For arbitrary $\Sigma$, $\Sigma'$, $B$, and $\predQ(\vec a)$ as in the claim,
we show that ${\Sigma \cup B \not\modelsEq \predQ(\vec a)}$ if and only if
${\Sigma' \cup \SG{\Sigma'} \cup B \not\models \predQ(\vec a)}$. The proof of
the ($\Rightarrow$) direction is straightforward. If ${\Sigma \cup B
\not\modelsEq \predQ(\vec a)}$, then there exists an interpretation $I$ such
that ${I \modelsEq \Sigma \cup B}$ and ${I \not\modelsEq \predQ(\vec a)}$. Now
let $I'$ be the interpretation obtained from $I$ by simply setting ${\D^{I'} =
\Delta^{I'} = \Delta^I}$. Predicate $\equals$ is `true' equality in $I$, which
clearly ensures ${I' \models \Sigma' \cup \SG{\Sigma'} \cup B}$ and ${I'
\not\models \predQ(\vec a)}$. This implies ${\Sigma' \cup \SG{\Sigma'} \cup B
\not\models \predQ(\vec a)}$, as required.

For the ($\Leftarrow$) direction, we assume that ${\Sigma' \cup \SG{\Sigma'}
\cup B \not\models \predQ(\vec a)}$, and we show that this implies ${\Sigma
\cup B \not\modelsEq \predQ(\vec a)}$. We achieve this goal in six steps.
First, we use a variant of the chase to construct a Herbrand interpretation
$J^\infty$ such that ${J^\infty \models \Sigma' \cup \SG{\Sigma'} \cup B}$ and
${J^\infty \not\models \predQ(\vec a)}$; note that $\equals$ is interpreted in
$J^\infty$ as an `ordinary' predicate without any special meaning. Second, we
show that the interpretation of predicate $\D$ in $J^\infty$ satisfies a
particular property. Third, we convert $J^\infty$ into an interpretation $I'$
that satisfies the unrestricted functional reflexivity axioms \eqref{eq:fnref}.
Fourth, we define a function $\rho$ that maps each domain element of
$\Delta^{I'}$ into a representative domain element, and we use $\rho$ to
convert $I'$ into an interpretation $I$ where predicate $\equals$ is
interpreted as `true' equality. Fifth, we prove that ${I \modelsEq \Sigma \cup
B}$ and ${I \not\modelsEq \predQ(\vec a)}$, which in turn implies ${\Sigma \cup
B \not\modelsEq \predQ(\vec a)}$.

\medskip

\emph{(Step 1.)} We construct the Herbrand interpretation $J^\infty$ using a
variant of the chase that can handle both function symbols and first-order
quantifiers, which we summarise next. We assume we have a countably infinite
set of \emph{labelled nulls} that is pairwise disjoint with the sets of
constants, function symbols, and predicates. Labelled nulls can be used to
construct terms just like constants; for example, $f(x,c,n)$ is a term
containing variable $x$, constant $c$, and labelled null $n$. Note that $B$ is
a base instance and thus does not contain labelled nulls. Now let ${J^0, J^1,
\dots}$ be a sequence of instances such that ${J^0 = B \cup \{ \D(c) \mid
\text{ constant } c \text{ occurs in } \Sigma' \cup B \}}$ and where each
$J^{i+1}$ with ${i \geq 0}$ is obtained by extending $J^i$ as follows.
\begin{itemize}
    \item For each dependency ${\forall \vec x.(\varphi(\vec x) \rightarrow
    \exists \vec y.\psi(\vec x, \vec y))}$ in ${\Sigma' \cup \SG{\Sigma'}}$ and
    each substitution $\sigma$ mapping $\vec x$ to the terms of $J^i$ such that
    ${\varphi(\vec x)\sigma \subseteq J^i}$ and no extension $\sigma'$ of
    $\sigma$ to the variables of $\vec y$ satisfies ${\psi(\vec x, \vec
    y)\sigma' \subseteq J^i}$, add to $J^{i+1}$ all facts of ${\psi(\vec x,
    \vec y)\sigma''}$ where substitution $\sigma''$ extends $\sigma$ by mapping
    all variables of $\vec y$ to fresh labelled nulls not occurring in $J^i$.

    \item For each fact $R(t_1,\dots,t_n)$ with $R$ different from $\equals$
    and $\predQ$ introduced in the previous step, add to $J^{i+1}$ facts
    ${\D(t_1), \dots, \D(t_n)}$.
\end{itemize}
Let ${J^\infty = \bigcup_{i=0}^\infty J^i}$. Like the chase for SO-TGDs
\cite{DBLP:journals/tods/FaginKPT05}, this chase variant interprets function
symbols by themselves and introduces functional ground terms. This is in fact
the main difference from the chase variant outlined in
Section~\ref{sec:so-dependencies:chase}, which interprets functional terms
using labelled nulls. The approach outlined above is possible because equality
is treated as an ordinary predicate and our objective is to construct a
Herbrand model. Furthermore, like the restricted chase
\cite{DBLP:journals/tcs/FaginKMP05} for first-order dependencies, the approach
above introduces labelled nulls to satisfy the first-order existential
quantifiers. Finally, the approach above applies axioms \eqref{eq:dom-c} and
\eqref{eq:dom-R} eagerly for convenience. By the standard properties of the
mentioned chase variants, $J^\infty$ is a Herbrand interpretation that
satisfies ${J^\infty \models \Sigma' \cup \SG{\Sigma'} \cup B}$ and ${J^\infty
\not\models \predQ(\vec a)}$.

\medskip

\emph{(Step 2.)} We show by induction on ${i \geq 0}$ that each $J^i$ satisfies
the following auxiliary property, which is then clearly also satisfied for ${i
= \infty}$.
\begin{quote}
    ($\ast$): For each (relational or equality) fact ${F \in J^i}$, each
    argument $t$ of $F$, and each proper subterm $s$ of $t$, it is the case
    that ${\D(s) \in J^i}$; moreover, if the predicate of $F$ is different from
    $\equals$ and $\predQ$, then ${\D(t) \in J^i}$ holds as well.
\end{quote}
The base case ${i = 0}$ holds by the definition of $J^0$ and the fact $J^0$
contains no functional terms and no equality facts. Now assume that $J^i$
satisfies ($\ast$) and consider an arbitrary fact ${F \in J^{i+1} \setminus
J^i}$. Fact $F$ clearly satisfies ($\ast$) if it is produced by a dependency in
$\SG{\Sigma'}$. The only remaining possibility is that $F$ is produced by a
dependency ${\delta' = \forall \vec x'.[\varphi'(\vec x') \rightarrow \exists
\vec y.\psi'(\vec x', \vec y)]}$ in $\Sigma'$ by substitutions $\sigma$ and
$\sigma''$ where ${\varphi'(\vec x)\sigma \subseteq J^i}$ and $\sigma''$
extends $\sigma$ by mapping all variables of $\vec y$ to fresh labelled nulls.
Dependency $\delta'$ can be one of the following two forms.
\begin{itemize}
    \item Assume that $\psi'(\vec x',\vec y)$ does not contain the query
    predicate $\predQ$. Dependency $\delta'$ is obtained from a safe dependency
    ${\delta \in \Sigma}$ by singularisation, which does not affect safety.
    Thus, each variable ${x \in \vec x'}$ occurs as an argument of a relational
    atom of $\varphi(\vec x')$, and this atom is matched in the instance $J^i$
    to a relational fact satisfying ($\ast$); thus, $x\sigma$ occurs as an
    argument of a relational fact in $J^i$, and ${\D(s) \in J^i \subseteq
    J^{i+1}}$ holds for each subterm $s$ of $x\sigma$. Thus, fact $F$ is
    produced by instantiating a relational or an equality atom of $\psi(\vec
    x',\vec y)$ of the form $R(t_1,\dots,t_n)$ (where $R$ can be $\equals$).
    Each term $t_i$ is of depth at most one, so ${\D(s) \in J^i \subseteq
    J^{i+1}}$ holds for each proper subterm $s$ of $t_i\sigma$. If ${R =
    {\equals}}$, fact $F$ clearly satisfies ($\ast$). If ${R \neq {\equals}}$,
    then our definition of the chase ensures ${\D(t_i\sigma) \in J^i \subseteq
    J^{i+1}}$ for each $t_i$, so $F$ again satisfies ($\ast$).

    \item The only remaining possibility is that $\psi'(\vec x',\vec y)$ is a
    query atom for the form $\predQ(x_1',\dots,x_n')$, in which case
    $\varphi'(\vec x')$ contains ${x_1 \equals x_1' \wedge \dots \wedge x_n
    \equals x_n'}$. Now each atom ${x_i \equals x_i'}$ is matched in $J^i$ to
    an equality fact that satisfies ($\ast$), so ${\D(s) \in J^i}$ holds each
    proper subterm $s$ of $x_i'\sigma$. Clearly, fact ${F =
    \predQ(x_1',\dots,x_n')\sigma''}$ satisfies ($\ast$).
\end{itemize}

\medskip

\emph{(Step 3.)} Our objective is to convert the Herbrand interpretation
$J^\infty$ into an interpretation $I$ that interprets $\equals$ as `true
equality', and we shall achieve this by replacing all terms in $J^\infty$ that
belong to the same equivalence class in $J^\infty$ with a representative.
However, interpretation $J^\infty$ satisfies only the $\D$-restricted
functional reflexivity axioms \eqref{eq:Dfnref}, but not their unrestricted
version. As a result, $J^\infty$ can contain a fact ${s \equals t}$ without
also containing ${f(s) \equals f(t)}$, which makes choosing the representatives
while satisfying unrestricted functional reflexivity difficult. We overcome
this by first converting $J^\infty$ into an auxiliary interpretation $I'$ where
we `remap' each term not occurring in a relational fact into a fresh, distinct
domain element $\omega$. This ensures that $I'$ satisfies both ${\Sigma' \cup
\SG{\Sigma'} \cup B}$ and the unrestricted functional reflexivity axioms.

In particular, let $T$ be the set of all terms occurring in a (relational or
equality) fact of $J^\infty$, and let $\omega$ be a fresh labelled null not
occurring in $T$. The $\equals$ predicate is reflexive, symmetric, and
transitive in $J^\infty$ so it partitions the ground terms of $J^\infty$ into
equivalence classes. For each equivalence class $C$, we select a representative
for $C$ as follows: select arbitrary ${s \in C}$ such that ${\D(s) \in
J^\infty}$ if one exists, and select arbitrary ${s \in C}$ otherwise. For each
ground term $t$ occurring in $J^\infty$, let $\hat{t}$ be a representative for
the equivalence class that $t$ belongs to. Finally, let ${\hat{\omega} =
\omega}$.

Interpretation $I'$ is defined as shown below, where $c$ ranges over all
constants, $f$ ranges over all function symbols, $n$ is the arity of $f$, tuple
${\langle \alpha_1, \dots, \alpha_n \rangle}$ ranges over
$\big(\Delta^{I'}\big)^n$, $R$ ranges over relational predicates, and $m$ is
the arity of $R$.
\begin{align*}
    \Delta^{I'}                     & = T \cup \{ \omega \}     \\
    c^{I'}                          & = c \\
    f^{I'}(\alpha_1,\dots,\alpha_n) & = \begin{cases}
                                            f(\alpha_1,\dots,\alpha_n)              & \text{if } f(\alpha_1,\dots,\alpha_n) \in T \\
                                            f(\hat{\alpha_1},\dots,\hat{\alpha_n})  & \text{if } f(\alpha_1,\dots,\alpha_n) \not\in T \text{ and } \D(\hat{\alpha_i}) \in J^\infty \text{ for each } i \in \{ 1, \dots, n \} \\
                                            \omega                                  & \text{otherwise} \\
                                        \end{cases} \\
    R^{I'}                          & = \{ \langle t_1, \dots, t_m \rangle \mid R(t_1, \dots, t_m) \in J^\infty \} \\
    \equals^{I'}                    & = \{ \langle t_1, t_2 \rangle \mid t_1 \equals t_2 \in J^\infty \} \cup \{ \langle \omega, \omega \rangle \}
\end{align*}
To see that ${I' \models \Sigma' \cup \SG{\Sigma'} \cup B}$ holds, consider an
arbitrary generalised first-order dependency ${\delta' = \forall \vec
x'.[\varphi'(\vec x') \rightarrow \exists \vec y.\psi'(\vec x', \vec y)]}$ in
${\Sigma' \cup \SG{\Sigma'}}$ and a valuation $\pi'$ such that ${I',\pi'
\models \varphi'(\vec x')}$. If ${\delta' \in \SG{\Sigma'}}$, then ${I',\pi'
\models \delta'}$ because ${\D^{I'} = \D^{J^\infty}}$ and $\omega$ is not
connected in $\equals^{I'}$ to any element of $T$. Now assume that $\psi'(\vec
x', \vec y)$ does not contain the query predicate $\predQ$. For each variable
$x$ in $\varphi'(\vec x')$, variable $x$ occurs in $\varphi'(\vec x')$ in a
relational atom since $\delta'$ is safe, and ${\pi'(x) \in T}$ since no tuple
in any $R^{I'}$ contains $\omega$. Thus, ${J^\infty,\pi' \models \exists \vec
y.\psi'(\vec x', \vec y)}$ clearly implies ${I',\pi' \models \exists \vec
y.\psi'(\vec x', \vec y)}$. Furthermore, assume that $\psi'(\vec x', \vec y)$
is of the form $\predQ(x_1',\dots,x_n')$, so $\varphi'(\vec x')$ is of the form
${\phi \wedge x_1 \equals x_1' \wedge \dots \wedge x_n \equals x_n'}$. Formula
$\phi$ satisfies the safety condition so ${\pi'(x) \in T}$ holds for each
variable $x$ in $\phi$; moreover, $\omega$ is not connected in $\equals^{I'}$
to any element of $T$, so ${\pi'(x_i') \in T}$ for each ${i \in \{ 1, \dots, n
\}}$; thus, we have ${I',\pi' \models \predQ(x_1',\dots,x_n')}$. Finally, we
show that $I'$ satisfies the unrestricted functional reflexivity axiom
\eqref{eq:fnref} for each $n$-ary function symbol $f$. To this end, we consider
arbitrary valuation $\pi'$ such that ${I',\pi' \models x_1 \equals x_1' \wedge
\dots \wedge x_n \equals x_n'}$, and we show ${\langle f^{I'}(u_1,\dots,u_n),
f^{I'}(v_1,\dots,v_n) \rangle \in {\equals}^{I'}}$, where ${u_i = \pi'(x_i)}$
and ${v_i = \pi'(x_i')}$ for each ${i \in N}$ and ${N = \{ 1, \dots, n \}}$.
\begin{itemize}
    \item Assume that ${\{ \D(\hat{u}_i), \D(\hat{v}_i) \} \subseteq J^\infty}$
    for each ${i \in N}$. If ${f(u_1,\dots,u_n) \in T}$, let ${u_i' = u_i}$ for
    each ${i \in N}$; then, property ($\ast$) ensures ${\{ \D(u_1'), \dots,
    \D(u_n') \} \subseteq J^\infty}$. If ${f(u_1,\dots,u_n) \not\in T}$, let
    ${u_i' = \hat{u}_i}$ for each ${i \in N}$; then, ${\D(u_i') \in J^\infty}$
    for each ${i \in N}$ holds by our assumption. Analogously, if
    ${f(v_1,\dots,v_n) \in T}$, let ${v_i' = v_i}$ for each ${i \in N}$, and
    otherwise let ${v_i' = \hat{v}_i}$; we again have ${\D(v_i') \in J^\infty}$
    for each ${i \in N}$. These definitions clearly ensure
    ${f^{I'}(u_1,\dots,u_n) = f(u_1',\dots,u_n')}$ and ${f^{I'}(v_1,\dots,v_n)
    = f(v_1',\dots,v_n')}$, as well as ${u_i' \equals v_i' \in J^\infty}$ for
    each ${i \in N}$. But then, the $\D$-restricted functional reflexivity
    axiom \eqref{eq:Dfnref} for $f$ ensures ${f(u_1',\dots,u_n') \equals
    f(v_1',\dots,v_n') \in J^\infty}$ Consequently, we have ${\langle
    f(u_1',\dots,u_n'), f(v_1',\dots,v_n') \rangle \in {\equals}^{I'}}$, as
    required.

    \item Assume that there exists ${i \in N}$ such that either ${\D(\hat{u}_i)
    \not\in J^\infty}$ or ${\D(\hat{v}_i) \not\in J^\infty}$. Since $\hat{u}_i$
    and $\hat{v}_i$ are in the same equivalence class, we have ${\{
    \D(\hat{u}_i), \D(\hat{v}_i) \} \cap J^\infty = \emptyset}$. But then,
    property ($\ast$) and the definition of representatives ensure
    ${f(u_1,\dots,u_n) \not\in T}$ and ${f(v_1,\dots,v_n) \not\in T}$. Thus,
    the definition of $f^{I'}$ ensures ${f^{I'}(u_1,\dots,u_n) =
    f^{I'}(v_1,\dots,v_n) = \omega}$, and so ${\langle \omega, \omega \rangle
    \in {\equals}^{I'}}$, as required.
\end{itemize}

\medskip

\emph{(Step 4.)} Dependencies \eqref{eq:ref}, \eqref{eq:sym}, and
\eqref{eq:trans} ensure that $\equals^{I'}$ is an equivalence relation on
$\Delta^{I'}$. Let ${\rho : \Delta^{I'} \to \Delta^{I'}}$ be an arbitrary
function that maps each ${\alpha \in \Delta^{I'}}$ to an element ${\rho(\alpha)
\in \Delta^{I'}}$ that is unique associated with the equivalence class of
$\alpha$---that is, for all ${\alpha,\beta \in \Delta^{I'}}$ we have
${\rho(\alpha) = \rho(\beta)}$ if and only if $\alpha$ and $\beta$ belong to
the same equivalence class. The fact that $I'$ satisfies all equality
dependencies including the unrestricted functional reflexivity ensures the
following property.
\begin{quote}
    ($\blacklozenge$): For all ${\{ \alpha_1, \dots, \alpha_n, \beta_1, \dots,
    \beta_n \} \subseteq \Delta^{I'}}$ such that ${\rho(\alpha_i) =
    \rho(\beta_i)}$ for each ${i \in \{ 1, \dots, n \}}$, we have
    ${\rho(f^{I'}(\alpha_1, \dots, \alpha_n)) = \rho(f^{I'}(\beta_1, \dots,
    \beta_n))}$.
\end{quote}
We use $\rho$ to convert $I'$ into an interpretation $I$ where $\equals$ is
interpreted as `true' equality as follows, where $c$ ranges over all constants,
$f$ ranges over all function symbols, $n$ is the arity of $f$, tuple ${\langle
\alpha_1, \dots, \alpha_n \rangle}$ ranges over $\big(\Delta^I\big)^n$, $R$
ranges over relational predicates, and $m$ is the arity of $R$.
\begin{align*}
    \Delta^I                        & = \{ \rho(\alpha) \mid \alpha \in \Delta^{I'} \} \\
    c^I                             & = \rho(c^{I'})                                    & R^I       & = \{ \langle \rho(\alpha_1), \dots, \rho(\alpha_m) \rangle \mid \langle \alpha_1, \dots, \alpha_m \rangle \in R^{I'} \} \\
    f^I( \alpha_1, \dots, \alpha_n) & = \rho(f^{I'}(\alpha_1, \dots, \alpha_n))         & \equals^I & = \{ \langle \alpha, \alpha \rangle \mid \alpha \in \Delta^I \}
\end{align*}

\medskip

\emph{(Step 5.)} It is obvious that ${I \modelsEq B}$, and we next prove ${I
\modelsEq \Sigma}$ and ${I \not\modelsEq \predQ(\vec a)}$. The construction of
$I$ ensures that $I$ satisfies the properties of equality. Now consider an
arbitrary generalised FO dependency ${\delta = \forall \vec x.[\varphi(\vec x)
\rightarrow \exists \vec y.\psi(\vec x, \vec y)] \in \Sigma}$, its
singularisation ${\delta' = \forall \vec x'.[\varphi'(\vec x') \rightarrow
\exists \vec y.\psi'(\vec x', \vec y)] \in \Sigma'}$, and a valuation $\pi$
such that ${I,\pi \modelsEq \varphi(\vec x)}$. We define a valuation $\pi'$ on
$I'$ as follows.
\begin{itemize}
    \item For each relational atom $R(t_1,\dots,t_n)$ in $\varphi(\vec x)$ and
    the corresponding atom $R(x_1',\dots,x_n')$ in $\varphi'(\vec x')$, choose
    any tuple ${\langle \alpha_1, \dots, \alpha_n \rangle \in R^{I'}}$ such
    that ${\rho(\alpha_i) = t_i^{I,\pi}}$ for each ${i \in \{ 1, \dots, n \}}$,
    and define ${\pi'(x_i') = \alpha_i}$ for each such $i$. This definition is
    correctly formed since ${I,\pi \modelsEq R(t_1,\dots,t_n)}$ and the
    construction of $I$ guarantee the existence of at least one such ${\langle
    \alpha_1, \dots, \alpha_n \rangle}$, and singularisation ensures that each
    $x_i'$ occurs exactly once in exactly one relational atom of $\varphi'(\vec
    x')$.

    \item Due to safety, each variable $x'$ in $\varphi'(\vec x')$ not covered
    by the previous case necessarily occurs in $\varphi'(\vec x')$ in a unique
    equality atom of the form ${x \equals x'}$ that is introduced because
    $\psi(\vec x, \vec y)$ contains the query predicate $\predQ$. We define
    ${\pi'(x') = \pi'(x)}$, and we call such $x'$ a \emph{query variable}.
\end{itemize}
Interpretation $I$ and valuation $\pi'$ satisfy the following auxiliary
property.
\begin{quote}
    ($\lozenge$): For each term $t$ occurring in $\varphi(\vec x)$, we have
    ${t^{I,\pi} = \rho(t^{I',\pi'})}$.
\end{quote}
If $t$ is a variable $x$ occurring in $\varphi(\vec x)$, the definition of
$\pi'$ clearly ensures ${\pi(x) = \rho(\pi'(x))}$. Moreover, if $t$ is a
constant $c$, the definition of $I$ ensures ${c^{I,\pi} = \rho(c^{I',\pi'})}$.
Finally, if $t$ is of the form ${t = f(t_1,\dots,t_n)}$ where each $t_i$ is a
constant or a variable occurring in $\varphi(\vec x)$, we get the required
property as follows, where the second equality holds by the definition of $f^I$
and the fact that ${t_i^{I,\pi} = \rho(t_i^{I',\pi'})}$ for each ${i \in \{ 1,
\dots, n \}}$, and the third equality holds due to property ($\blacklozenge$).
\begin{align*}
    t^{I,\pi} = f^I(t_1^{I,\pi},\dots,t_n^{I,\pi}) = \rho(f^{I'}(\rho(t_1^{I,\pi}),\dots,\rho(t_n^{I,\pi}))) = \rho(f^{I'}(t_1^{I',\pi'},\dots,t_n^{I',\pi'})) = \rho(t^{I',\pi'})
\end{align*}

We are now ready to show that ${I',\pi' \models \varphi'(\vec
x')}$. The first item in the definition of $\pi'$ clearly ensures ${I',\pi'
\models R(x_1',\dots,x_n')}$ for each relational atom in $\varphi'(\vec x')$.
We next show that $I'$ and $\pi'$ satisfy each equality of $\varphi'(\vec x')$.
Note that singularisation changes neither the equality atoms nor the functional
terms occurring in $\varphi(\vec x)$.
\begin{itemize}
    \item Consider any equality atom ${t_1 \equals t_2}$ in $\varphi'(\vec x')$
    that also occurs in $\varphi(\vec x)$. Then, ${I,\pi \modelsEq t_1 \equals
    t_2}$ implies ${t_1^{I,\pi} = t_2^{I,\pi}}$, and property ($\lozenge$)
    ensures ${\rho(t_1^{I',\pi'}) = t_1^{I,\pi}}$ and ${\rho(t_2^{I',\pi'}) =
    t_2^{I,\pi}}$. Hence, we have ${\rho(t_1^{I',\pi'}) =
    \rho(t_2^{I',\pi'})}$, so terms $t_1^{I',\pi'}$ and $t_2^{I',\pi'}$ belong
    to the same equivalence class of $\equals^{I'}$ by the definition of
    $\rho$; thus, ${I',\pi' \models t_1 \equals t_2}$ holds, as required.

    \item Each equality atom in $\varphi'(\vec x')$ not occurring in
    $\varphi(\vec x)$ is of the form ${x_i' = t_i}$ and was obtained by
    applying a singularisation step to the argument $t_i$ of a relational atom
    $R(t_1,\dots,t_n)$; moreover, $R(t_1,\dots,t_n)$ is eventually reduced to a
    relational atom $R(x_1',\dots,x_n')$ in $\varphi'(\vec x')$. We have the
    following possibilities for $t_i$.
    \begin{itemize}
        \item Assume $t_i$ is a variable $z_k$ occurring in another relational
        atom ${S(z_1,\dots,z_m)}$ of $\varphi'(\vec x')$. Formula $\varphi(\vec
        x)$ then contains a relational atom ${S(u_1,\dots,u_m)}$ where ${t_i =
        u_k = z_k}$. But then, the definition of $\pi'$ ensures ${\pi(z_k) =
        \rho(\pi'(z_k)) = \rho(\pi'(x_i'))}$, and the definition of $\rho$
        ensures ${I',\pi' \models x_i' \equals z_k}$, as required.

        \item Assume $t_i$ is a constant. The definition of $\pi'$ and $I$
        ensure ${\rho(\pi'(x_i')) = t_i^{I,\pi}}$. Thus, $\pi'(x_i')$ and
        $t_i^{I',\pi'}$ belong to the same equivalence class of $\equals^{I'}$
        by the definition of $\rho$, and ${I',\pi' \models x_i' \equals t_i}$
        holds, as required.

        \item Assume $t_i$ is a query variable. The definition of $\pi'$ then
        ensures ${\pi'(t_i) = \pi'(x_i')}$, which in turn ensures ${I',\pi'
        \models x_i' \equals t_i}$, as required.
    \end{itemize}
\end{itemize}

Since ${I' \models \Sigma'}$ by our initial assumption, ${I',\pi' \models
\varphi'(\vec x')}$ implies ${I',\pi' \models \exists \vec y.\psi'(\vec x',
\vec y)}$, so there exists a valuation $\xi'$ obtained by extending $\pi'$ to
the variables of $\vec y$ such that ${I',\xi' \models \psi'(\vec x', \vec y)}$;
since $\exists \vec y.\psi'(\vec x', \vec y)$ is already satisfied in
$J^\infty$, we can safely assume that the range of $\xi'$ does not contain the
special labelled null $\omega$. Let $\xi$ be the valuation obtained by
extending $\pi$ so that ${\xi(y) = \rho(\xi'(y))}$ for each variable ${y \in
\vec y}$. Now consider an arbitrary argument $t$ of a relational or equality
atom in $\psi(\vec x, \vec y)$. If $t$ is a variable in $\vec y$, the
definition of $\xi'$ ensures ${t^{I,\xi} = \rho(t^{I',\xi'})}$. Alternatively,
$t$ is a term of depth at most one constructed from constants, variables in
$\vec x$, and function symbols, so property ($\lozenge$) and the definition of
$\xi$ ensure ${t^{I,\xi} = \rho(t^{I',\xi'})}$. We next prove ${I,\xi \modelsEq
\psi(\vec x, \vec y)}$ by considering all possible forms of an atom in
${\psi(\vec x, \vec y)}$.
\begin{itemize}
    \item Consider an arbitrary relational atom $R(t_1,\dots,t_n)$ in
    $\psi(\vec x, \vec y)$. Then ${I',\xi' \models R(t_1,\dots,t_n)}$ implies
    ${\langle t_1^{I',\xi'}, \dots, t_n^{I',\xi'} \rangle \in R^{I'}}$, and so
    the definition of $I$ ensures ${\langle \rho(t_1^{I',\xi'}), \dots,
    \rho(t_n^{I',\xi'}) \rangle \in R^I}$. Finally, ${\rho(t_i^{I',\xi'}) =
    t_i^{I,\xi}}$ holds for each $i$, so ${\langle t_1^{I,\xi}, \dots,
    t_n^{I,\xi} \rangle \in R^I}$, which ensures ${I,\xi \modelsEq
    R(t_1,\dots,t_n)}$, as required.

    \item Consider an equality atom ${t_1 \equals t_2}$ in $\psi(\vec x, \vec
    y)$. Then ${I',\xi' \models t_1 \equals t_2}$ ensures that $t_1^{I',\xi'}$
    and $t_2^{I',\xi'}$ belong to the same equivalence class of $\equals^{I'}$,
    so ${\rho(t_1^{I',\xi'}) = \rho(t_2^{I',\xi'})}$. Moreover, we again have
    ${\rho(t_i^{I',\xi'}) = t_i^{I,\xi}}$ for each ${i \in \{ 1, 2 \}}$, which
    implies ${t_1^{I,\xi} = t_2^{I,\xi}}$. Thus, ${I,\xi \modelsEq t_1 \equals
    t_2}$, as required.
\end{itemize}
We have thus shown that ${I,\pi \modelsEq \forall \vec x.[\varphi(\vec x)
\rightarrow \exists \vec y.\psi(\vec x, \vec y)]}$. This entire argument holds
for each valuation $\pi$ and each generalised first-order dependency in
$\Sigma$, so ${I \modelsEq \Sigma \cup B}$, as required.

We finally show ${I \not\modelsEq \predQ(\vec a)}$. Assume for contradiction
that ${I \modelsEq \predQ(\vec a)}$ for ${\vec a = a_1, \dots, a_n}$, so
${\langle a_1^I, \dots, a_n^I \rangle \in \predQ^I}$. By the definition of $I$,
there exists a tuple ${\langle \alpha_1, \dots, \alpha_n \rangle \in
\predQ^{I'}}$ such that ${\rho(\alpha_i) = a_i^I}$ for each ${i \in \{ 1,
\dots, n \}}$. The construction of $J^\infty$, the fact that the query
predicate $\predQ$ never occurs in the $\varphi'(\vec x')$, and that fact that
the interpretations of $\predQ$ in $J^\infty$ and $I'$ coincide ensure that
$J^\infty$ and $I'$ are minimal with respect to $\predQ$---that is, there exist
a valuation $\pi'$ on $I'$ and a dependency in $\Sigma'$ of the form ${\delta'
= \forall \vec x'.[\varphi'(\vec x') \wedge \phi \rightarrow
\predQ(x_1',\dots,x_n')]}$ where ${\phi = x_1 \equals x_1' \wedge \dots \wedge
x_n \equals x_n'}$ such that ${I',\pi' \models \varphi'(\vec x') \wedge \phi}$
and ${\pi'(x_i') = \alpha_i}$. Thus, ${I',\pi' \models \varphi'(\vec x') \wedge
\phi}$ and ${\pi'(x_i') = \alpha_i}$ for each $i$. Now let $\pi''$ be the
valuation that coincides with $\pi'$ on all variables apart from
${x_1',\dots,x_n'}$ where ${\pi''(x_i') = a_i^{I'}}$ for each ${i \in \{ 1,
\dots, n \}}$. Variables ${x_1',\dots,x_n'}$ do not occur in $\varphi'(\vec
x')$, which ensures ${I',\pi'' \models \varphi'(\vec x')}$. Moreover, for each
$i$, elements $\pi''(x_i')$ and $\pi'(x_i)$ belong to the same equivalence
class of $\Delta^{I'}$, so we have ${\rho(\pi''(x_i')) = \rho(\pi'(x_i)) =
a_i^I}$; this, in turn, implies ${I',\pi'' \models \phi}$. But then, we have
${I' \models \predQ(\vec a)}$, which contradicts our initial assumption
${J^\infty \not\models \predQ(\vec a)}$ and the observation that ${\predQ^{I'}
= \predQ^{J^\infty}}$. Thus, ${I \not\modelsEq \predQ(\vec a)}$ holds, as
required.
\end{proof}

    \section{Proof of Theorem~\ref{thm:relevance}}\label{sec:proof:relevance}

Before proving Theorem~\ref{thm:relevance}, we first address a technical
detail: the relevant program $\R$ produced by Algorithm~\ref{alg:relevance} may
contain fewer true function symbols than the input program $P$; thus, $\FR{\R}$
may not contain all $\D$-restricted functional reflexivity rules of $\FR{P}$,
and it is not obvious that the rules in ${\FR{P} \setminus \FR{\R}}$ are not
relevant to the query answers. The following lemma establishes a property that
we use later to show that the rules in ${\FR{P} \setminus \FR{\R}}$ are indeed
irrelevant.

\begin{lemma}\label{lem:Dfin-Rfin-fnsym}
    Let $\Dfin$ and $\Rfin$ be the final sets $\Di{}$ and $\R$ obtained by
    applying Algorithm~\ref{alg:relevance} to a well-formed program $P$ and a
    base instance $B$. Then, for each function symbol $f$ occurring in a fact
    of $\Dfin$, there exists a rule ${r \in \Rfin}$ that contains $f$.
\end{lemma}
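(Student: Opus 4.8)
The plan is to trace where a function symbol can come from and to push the whole analysis down to \emph{relational} facts, for which a clean induction works. First I record the two ways a fact enters $\Dfin$: the initial facts are $\predQ(\vec a)$ with $\vec a$ a tuple of constants and carry no function symbol, and every other $G \in \Dfin$ is a body‑atom instance $G = A\nu$ produced when Algorithm~\ref{alg:relevance} processes some fact $F$ with a rule $r \in P \cup \DOM{P} \cup \ST \cup \FR{P}$ satisfying $\head{r}\nu = F$ and $\body{r}\nu \subseteq I$, where $I = \fixpoint{P \cup \SG{P}}{B'}$. Two elementary observations drive everything. (i) The bodies of all rules in $\DOM{P} \cup \ST \cup \FR{P}$ use only variables as arguments, so no function symbol occurs \emph{syntactically} in such a body; hence if a function symbol occurs syntactically in $A$, then $r \in P$, and such an $r$ is added to $\R$ when matched, so $r \in \Rfin$. (ii) By Definition~\ref{def:well-formed} every variable occurring in the body of a $P$-rule also occurs in a \emph{relational} body atom (which has depth zero), while the extra equality variables of a $\predQ$-head rule map only to depth‑zero terms; consequently, if a function symbol $f$ is instead introduced by $\nu$ through a body variable $w$, then the relational atom containing $w$ yields a relational fact of $\Dfin$ that also contains $f$. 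Finally, the UNA clean‑up in lines~\ref{alg:relevance:UNA:start}--\ref{alg:relevance:UNA:end} deletes only equalities $x \equals t$ with $\dep{t}=0$ and substitutes depth‑zero terms, so it never removes a function symbol from a rule of $\Rfin$.

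The workhorse is \textbf{Claim A}: for every relational fact $G \in \Dfin$ (in particular every $\D$-fact) and every function symbol $f$ occurring in $G$, some rule of $\Rfin$ contains $f$. I would prove this by induction on the least fixpoint stage $\ell(G)$ at which $G$ appears in $I$. The proof is clean precisely because relational and $\D$-facts are \emph{never} produced by the reflexivity rule \eqref{eq:ref}: a genuine relational fact is derived in $I$ by a $P$-rule, and a $\D$-fact by a domain rule \eqref{eq:dom-R}. Since $G \in \Dfin$ is processed, this derivation is reconsidered by the algorithm, so the deriving $P$-rule is added to $\R$ and all body atoms land in $\Dfin$ at a strictly smaller stage. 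If $f$ sits syntactically in $\head{r}$ we finish by (i); otherwise $f$ enters through a head variable and, by (ii), reappears in a strictly‑lower‑stage relational fact of $\Dfin$, to which the induction hypothesis applies (for a $\D$-fact one passes through the relational premise of \eqref{eq:dom-R}).

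It remains to treat equality facts, and here I would run a second induction on processing order, unwinding how an equality $G = (s \equals t) \in \Dfin$ was added as a body‑atom instance $A\nu$ of a rule $r$ matching a parent $F$ processed earlier. If $f$ is syntactic in $A$, then $r \in P$ and case (i) applies; if $f$ comes from $\nu$ on a variable of a $P$-rule or a domain rule, then case (ii) reduces it to Claim~A. When $r$ is the symmetry rule \eqref{eq:sym}, the transitivity rule \eqref{eq:trans}, or the functional‑reflexivity rule \eqref{eq:Dfnref}, the $f$-carrying term is—in every sub‑case but one—forced into a head position of $r$ and hence into $F$, so that $F \in \Dfin$ still contains $f$ and the induction hypothesis applies (this is also what handles an equality whose two sides both have top symbol $f$). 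The single exception is the \emph{transitivity middle term}: $G = (s \equals u)$ arises as a body atom of \eqref{eq:trans} while processing $F = (s \equals t)$ with $f$ occurring only in the middle term $u$, so $F$ does not contain $f$ and the processing‑order induction stalls.

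The transitivity‑middle case is the main obstacle, and I expect to clear it with a forward (fixpoint‑stage) sub‑induction. The crucial point is that such a $G = (s \equals u)$ is \emph{asymmetric}—$f$ occurs in $u$ but not in $s$—so it is derivable in $I$ neither by reflexivity \eqref{eq:ref} (which forces $s=u$) nor by the functional‑reflexivity rule for $f$ (which forces both sides to have top symbol $f$); hence its forward derivation is by a $P$-rule (finished via Claim~A or (i)), by symmetry or transitivity (reducing to a strictly‑lower‑stage equality still carrying $f$), or by \eqref{eq:Dfnref} for some $h \neq f$ (reducing, via the easy invariant that every proper subterm of a term occurring in a fact of $I$ lies in $\D$, to a strictly‑lower‑stage $\D$-fact or equality still carrying $f$). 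The delicate part—what I expect to demand the most care—is that this forward recursion can reach a \emph{symmetric} equality $f(\vec p) \equals f(\vec q)$ produced by functional reflexivity, which is exactly a case that only the \emph{backward} argument closes; making the two inductions cooperate therefore requires a single well‑founded order (for instance a lexicographic combination of fixpoint stage with processing order, or a strengthened invariant phrased directly in terms of which $\D$-facts are guaranteed to lie in $\Dfin$), and identifying that order is the real content of the proof.
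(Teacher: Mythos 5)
Your overall architecture is the same as the paper's: a forward induction on fixpoint stages for relational and $\D$-facts, a backward induction on processing order for the main claim, and a special argument for function symbols that enter through the middle term of the transitivity rule \eqref{eq:trans}. However, the proposal is not a complete proof: you explicitly defer the step you yourself call ``the real content of the proof'', and as you have set up the forward sub-induction the gap is real. Your reduction steps only guarantee that the recursive subgoal is ``a strictly-lower-stage equality still carrying $f$''; with that invariant a transitivity step can indeed hand you the symmetric premise $f(\vec p) \equals f(\vec q)$, whose own derivation by the $\D$-restricted functional reflexivity rule \eqref{eq:Dfnref} for $f$ has a body in which $f$ need not occur at all, so the recursion loses track of $f$ and stalls exactly where you say it does.

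The missing idea is that no combined well-founded order is needed: your asymmetry condition, carried as the \emph{invariant} of the forward sub-induction rather than only as a hypothesis on its starting point, already closes the recursion on the fixpoint stage alone. For an asymmetric ${G = (s \equals u) \in \Dfin}$ with $f$ in $u$ but not in $s$, every derivation of $G$ yields a body atom that lies in $\Dfin$, has strictly smaller stage, and is either relational (your Claim~A) or again asymmetric: reflexivity and \eqref{eq:Dfnref} for $f$ itself are excluded; symmetry just flips $G$; for transitivity with middle term $w$ you must \emph{choose} the premise ${s \equals w}$ when $f$ occurs in $w$ and ${w \equals u}$ otherwise---one of the two is always asymmetric; for \eqref{eq:Dfnref} with ${h \neq f}$, asymmetry forces $f$ to occur in exactly one of the $i$-th arguments for some $i$, so ${x_i\nu \equals x_i'\nu}$ is asymmetric (alternatively, use the body atom ${\D(x_i'\nu) \in \Dfin}$ and Claim~A); and a rule of $P$ either contains $f$ syntactically or passes it through a variable into a relational body fact. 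Since the recursion never produces a symmetric equality as a subgoal, it never needs to call back into the backward induction. The paper's proof of Lemma~\ref{lem:Dfin-Rfin-fnsym} packages precisely this invariant into its definition of $f$ being \emph{relevant} for a fact---$f$ is relevant for ${t_1 \equals t_2}$ when the two sides first differ at a position below which $f$ occurs, recursing through equal top symbols---and then runs one forward induction on relevance followed by the backward induction on processing order; relevance of $f$ for the head of any rule of ${\DOM{P} \cup \ST \cup \FR{P}}$ provably passes to a body atom, which is exactly what your asymmetry bookkeeping achieves by hand.
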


\begin{proof}
Transformations in
lines~\ref{alg:relevance:UNA:start}--\ref{alg:relevance:UNA:end} do not remove
function symbols, so it suffices to prove an analogous claim for the
intermediate set of rules $\Rint$ just before
line~\ref{alg:relevance:UNA:start}. The query predicate $\predQ$ does not occur
in a rule body so facts of the form $\predQ(\vec t)$ are added to set
$\mathcal{D}$ only in line~\ref{alg:relevance:init}; thus, no fact in $\Dfin$
with the $\predQ$ predicate contains a function symbol. We next define a notion
of a function symbol $f$ being \emph{relevant} for a fact $F$. In particular,
if $F$ is relational, then $f$ is relevant for $F$ if $f$ occurs in an argument
of $F$. Furthermore, if $F$ is of the form ${t_1 \equals t_2}$, then $f$ is
relevant for $F$ if (i)~the outermost symbols (i.e., the constants or the
top-level function symbols) of $t_1$ and $t_2$ are different and there exists
${i \in \{ 1, 2 \}}$ such that $f$ occurs in $t_i$, or (ii)~${t_1 \equals t_2}$
is of the form ${g(u_1,\dots,u_n) \equals g(v_1,\dots,v_n)}$ for some function
symbol $g$ (which may or may not be $f$) and there exists ${i \in \{ 1, \dots,
n \}}$ such that $f$ is relevant for the fact ${u_i \equals v_i}$.

Next, we show that, for each function symbol that is relevant for a fact of
$\Dfin$, set $\Rint$ contains a rule with that function symbol. Let ${I^0, I^1,
\dots}$ be the sequence of instances used to compute ${\fixpoint{P'}{B'} = I}$
in line~\ref{alg:relevance:abstraction-fixpoint}, and consider an arbitrary
fact of $\Dfin$ and an arbitrary function symbol $f$ relevant for this fact.
Instance $B'$ does not contain function symbols and ${\Dfin \subseteq I}$, so
there exist ${i > 0}$ and a fact ${F \in I^i \cap \Dfin}$ such that $f$ is
relevant for $F$, and, for each ${j < i}$ and each fact ${F' \in I^j \cap
\Dfin}$, function symbol $f$ is not relevant for $F'$. But then, there exists a
rule ${r \in P'}$ and a substitution $\nu$ such that ${\body{r}\nu \subseteq
I^{i-1}}$ and ${F = \head{r}\nu \in I^i \setminus I^{i-1}}$.
Line~\ref{alg:relevance:add-T-D} ensures that $\Dfin$ contains each fact of
$\body{r}\nu$ apart from possibly facts of the form ${c \equals c}$. Now $f$ is
clearly not relevant for facts of the form ${c \equals c}$, so our assumption
on $i$ ensures that $f$ is not relevant for any fact of $\body{r}\nu$. Thus,
$r$ clearly cannot be a domain rule \eqref{eq:dom-c} or \eqref{eq:dom-R}, the
reflexivity rule \eqref{eq:ref}, the symmetry rule \eqref{eq:sym}, or the
transitivity rule \eqref{eq:trans} since these rules are function-free.
Moreover, $r$ cannot be a $\D$-restricted functional reflexivity rule
\eqref{eq:Dfnref}: then $F$ is of the form ${g(u_1,\dots,u_n) \equals
g(v_1,\dots,v_n)}$, and thus $f$ is relevant for $F$ only if $f$ is relevant
for some ${u_i \equals v_i}$, which cannot be the case. The only remaining
possibility is ${r \in P}$. All facts in $\Dfin$ with the query predicate
$\predQ$ consist of only constants, so the predicate of $F$ is not $\predQ$;
moreover, rule $r$ is well-formed, so Definition~\ref{def:well-formed} ensures
that $\body{r}$ is well-formed too. Thus, for each variable $x$ in $r$, there
exists a relational atom ${A \in \body{r}}$ that contains $x$. Function symbol
$f$ is not relevant for $A\nu$ by our observations thus far, which means that
no argument of $A\nu$ contains $f$; but then, $x\nu$ does not contain $f$
either. Thus, the only way for $\head{r}\nu$ to contain $f$ is if $\head{r}$
contains $f$. Finally, rule $r$ is added to $\Rint$ in
line~\ref{alg:relevance:add-r} of Algorithm~\ref{alg:relevance}, which
completes our claim.

We now show our main claim. Let ${\Di{0}, \Di{1}, \dots, \Dfin}$ be the
sequence of sets $\Di{}$ from Algorithm~\ref{alg:relevance} where $\Di{0}$ is
the set in line~\ref{alg:relevance:init}, and each $\Di{i+1}$ is obtained from
$\Di{i}$ after processing one rule $r$ and one substitution $\nu$ in
line~\ref{alg:relevance:rule:start}. We show by induction on $i$ that, for each
$i$, each fact ${F \in \Di{i}}$, and each function symbol $f$ occurring in $F$,
there exists a rule ${r \in \Rint}$ that contains $f$. The base case holds
trivially since $\Di{0}$ does not contain function symbols. For the induction
step, assume that $\Di{i}$ satisfies the claim, and consider a fact ${F \in
\Di{i}}$, rule ${r \in P \cup \DOM{P} \cup \ST \cup \FR{P}}$, and substitution
$\nu$ processed in line~\ref{alg:relevance:rule:start}. If $r$ is the symmetry
rule \eqref{eq:sym} or a $\D$-restricted functional reflexivity rule
\eqref{eq:Dfnref}, then each fact added to $\Di{i+1}$ in
line~\ref{alg:relevance:add-T-D} consists of subterms of the arguments of $F$,
so $\Di{i+1}$ satisfies the claim by the inductive hypothesis. If $r$ is the
transitivity rule \eqref{eq:trans}, the only interesting case is if term
$x_2\nu$ contains a function symbol $f$ that occurs in neither $x_1\nu$ nor
$x_3\nu$; but then, $f$ is relevant for ${x_1\nu \equals x_2\nu}$ or ${x_2\nu
\equals x_3\nu}$, so the claim from the previous paragraph ensures our claim.
If $r$ is a domain rule \eqref{eq:dom-c}, then the body of $r$ is empty so
${\Di{i+1} = \Di{i}}$ and the claim holds vacuously. The only remaining
possibility is ${r \in P}$ or $r$ is a domain rule \eqref{eq:dom-R}. Now
consider an arbitrary function symbol $f$ not occurring in $\Di{i}$ that occurs
in a fact of $\body{r}\nu$ added to $\Di{i+1}$ in
line~\ref{alg:relevance:add-T-D}. If $r$ contains $f$, then $r$ is not a domain
rule \eqref{eq:dom-R}; thus, $r$ is added to $\Rint$ in
line~\ref{alg:relevance:add-r} and the claim holds. Otherwise, rule $r$ is
well-formed, so it is of one of the two forms mentioned in
Definition~\ref{def:well-formed}. If $\head{r}$ does not contain the query
predicate $\predQ$, then $\body{r}$ is well-formed, so each variable of $r$
occurs in $\body{r}$ in a relational atom; but then, there exists a relational
fact ${F' \in \body{r}\nu}$ such that $f$ is relevant for $F'$, so the previous
paragraph ensures that $f$ occurs in $\Rint$. Finally, if $\head{r}$ is of the
form ${\predQ(x_1',\dots,x_n')}$, then $\body{r}$ is of the form ${\phi \wedge
x_1 \equals x_1' \wedge \dots \wedge x_n \equals x_n'}$. Since $\phi$ is
well-formed, the facts in $\phi\nu$ are handled as above. Moreover, each fact
involving $\predQ$ in $\Di{i}$ consists of constants only, so $x_i'\nu$ is a
constant for each ${i \in \{ 1, \dots, n \}}$. Thus, if $f$ occurs in some fact
${x_i \equals x_i'}$ added to $\Di{i+1}$ in line~\ref{alg:relevance:add-T-D},
then $f$ is relevant for ${x_i \equals x_i'}$ so the previous paragraph ensures
that $f$ occurs in $\Rint$.
\end{proof}

We are now ready to prove the correctness of our relevance analysis algorithm.

\thmRelevance*

\begin{proof}
Consider a run of Algorithm~\ref{alg:relevance} on a program $P$ and a base
instance $B$. Let $B'$ be the base instance used in
line~\ref{alg:relevance:abstraction}; let $\eta$ be any mapping of constants to
constants such that ${\eta(B') \subseteq B}$ and ${\eta(c) = c}$ for each
constant $c$ occurring in $P$; let $I$ and $P'$ be as in
line~\ref{alg:relevance:abstraction-fixpoint}; let $\Rint$ be the intermediate
set of rules just before line~\ref{alg:relevance:UNA:start}; and let $\Rfin$
and $\Dfin$ be the final sets. For $t$ a term, let $\eta(t)$ be the result of
replacing in $t$ each occurrence of constant $c$ on which $\eta$ is defined
with $\eta(c)$. The algorithm ensures ${\Rint \subseteq P}$, so set $\Rint$ is
well-formed; moreover, the transformation in line~\ref{alg:relevance:UNA:desg}
clearly preserves this property. Note that ${\SG{\Rint} = \SG{\Rfin} \subseteq
\SG{P}}$ and ${\Dfin \subseteq I}$. Also, the query predicate $\predQ$ does not
occur in a rule body so facts of the form $\predQ(\vec t)$ are added to set
$\mathcal{D}$ only in line~\ref{alg:relevance:init}; thus, no fact with the
$\predQ$ predicate occurring in $\Dfin$ contains a function symbol.

\medskip

($\Leftarrow$) Let ${I^0, I^1, \dots}$ be the sequence of instances used to
compute $\fixpoint{\Rfin \cup \SG{\Rfin}}{B}$ as described in
Section~\ref{sec:preliminaries}. We show by induction on $i$ that, for each
fact ${F \in I^i}$, we have ${\Rint \cup \SG{\Rint} \cup B \models F}$; then,
${\Rint \subseteq P}$ and ${\SG{\Rint} \subseteq \SG{P}}$ clearly imply ${P
\cup \SG{P} \cup B \models F}$. The base case is trivial since ${I^0 = B}$. For
the induction step, we assume that the claim holds for $I^i$, and we consider a
fact ${\head{r}\sigma \in I^{i+1} \setminus I^i}$ derived by a rule ${r \in
\Rfin \cup \SG{\Rfin}}$ and a substitution $\sigma$ satisfying ${\body{r}\sigma
\subseteq I^i}$. The inductive assumption then ensures ${\Rint \cup \SG{\Rint}
\cup B \models \body{r}\sigma}$. If ${r \in \SG{\Rint}}$, then ${\Rint \cup
\SG{\Rint} \cup B \models \head{r}\sigma}$ holds trivially; hence, we next
consider the case when ${r \not\in \Rint}$ and ${r \in \Rfin}$. Then, rule $r$
is obtained from some rule ${r' \in \Rint}$ by transformations in
line~\ref{alg:relevance:UNA:desg}. Let $\sigma'$ be the extension of $\sigma$
such that ${\sigma'(x) = t\sigma}$ for each variable $x$ that was replaced with
a term $t$. Each body atom ${A' \in \body{r'}}$ that was not removed in
line~\ref{alg:relevance:UNA:desg} corresponds to some ${A \in \body{r}}$ that
satisfies ${A'\sigma' = A\sigma \in I^i}$. Now consider an arbitrary atom ${A'
\in \body{r'}}$ of the form ${x \equals t}$ or ${t \equals x}$ that was removed
in line~\ref{alg:relevance:UNA:desg}. The definition of $\sigma'$ ensures
${x\sigma' = t\sigma'}$. Moreover, the condition in
line~\ref{alg:relevance:UNA:form} ensures that $\body{r'}$ contains a
relational atom ${R(\dots,x,\dots) \in \body{r'}}$; this atom is relational, so
it was not removed in line~\ref{alg:relevance:UNA:desg} and thus satisfies
${R(\dots,x,\dots)\sigma' \in I^i}$; hence, ${\Rint \cup \SG{\Rint} \cup B
\models R(\dots,x,\dots)\sigma'}$ holds by the inductive assumption. Moreover,
$\SG{\Rint}$ contains the domain rules \eqref{eq:dom-R} for $R$, so ${\Rint
\cup \SG{\Rint} \cup B \models \D(x)\sigma'}$; moreover, $\SG{\Rint}$ also
contains the reflexivity rule \eqref{eq:ref}, so ${\Rint \cup \SG{\Rint} \cup B
\models x\sigma' \equals x\sigma'}$. Finally, ${x\sigma' = t\sigma'}$ ensures
${\Rint \cup \SG{\Rint} \cup B \models A'\sigma'}$. This holds for each atom
${A' \in \body{r'}}$ that was removed in line~\ref{alg:relevance:UNA:desg}, so
${\Rint \cup \SG{\Rint} \cup B \models \body{r'}\sigma'}$ holds. Thus, we have
${\Rint \cup \SG{\Rint} \cup B \models \head{r'}\sigma'}$, and the definition
of $\sigma'$ ensures ${\head{r'}\sigma' = \head{r}\sigma}$. Hence, we have
${\Rint \cup \SG{\Rint} \cup B \models \head{r}\sigma}$, as required.

\medskip

($\Rightarrow$) Let ${I^0, I^1, \dots}$ be the sequence of instances used to
compute $\fixpoint{P \cup \SG{P}}{B}$ as described in
Section~\ref{sec:preliminaries}. We prove the claim in three steps.

\smallskip

\emph{Step 1.} We show by induction on $i$ that ${\eta(I^i) \subseteq I}$
holds. For the base case, we have ${I^0 = \eta(B') \subseteq I}$, as required.
For the induction step, assume that ${\eta(I^i) \subseteq I}$ holds, and
consider a fact ${\head{r}\sigma \in I^{i+1} \setminus I^i}$ derived by a rule
${r \in P'}$ and a substitution $\sigma$ satisfying ${\body{r}\sigma \subseteq
I^i}$. Let $\sigma'$ be the substitution defined as ${\sigma'(x) =
\eta(\sigma(x))}$ on each variable $x$ from the domain of $\sigma$. Since
$\eta$ is identity on the constants of $r$, we clearly have ${\eta(r\sigma) =
r\sigma'}$. But then, the inductive assumption ensures ${\body{r}\sigma' =
\eta(\body{r}\sigma) \subseteq \eta(I^i) \subseteq I}$, which implies
${\head{r}\sigma' = \eta(\head{r}\sigma) \in I}$, as required.

\smallskip

\emph{Step 2.} Let ${\Rstar = \DOM{P} \cup \Rfl \cup \ST \cup \FR{\Rfin}}$, and
let us say that a fact $F$ is \emph{$\eta$-relevant} if ${\eta(F) \in \Dfin}$,
and $F$ is not of the form ${t \equals t}$ for some term $t$. We prove by
induction on $i$ that, for each $\eta$-relevant fact ${F \in I^i}$, we have
${\Rfin \cup \Rstar \cup B \models F}$. Note that $\Rstar$ and $\SG{\Rfin}$
differ only in the use of $\DOM{P}$ and $\DOM{\Rfin}$, respectively; we deal
with this detail in the third step.

The induction base holds trivially. Now assume that $I^i$ satisfies this
property and consider an arbitrary rule ${r \in P \cup \SG{P}}$ and
substitution $\sigma$ such that ${\body{r}\sigma \subseteq I^i}$,
${\head{r}\sigma \in I^{i+1} \setminus I^i}$, and $\head{r}\sigma$ is
$\eta$-relevant. Thus, $\head{r}\sigma$ is not of the form ${t \equals t}$, so
$r$ is not the reflexivity rule \eqref{eq:ref}, and we have ${r \in P \cup
\DOM{P} \cup \ST \cup \FR{P}}$. Moreover, ${\eta(\head{r}\sigma) \in \Dfin}$
ensures that fact ${\eta(\head{r}\sigma)}$ was added to $\mathcal{T}$ in
line~\ref{alg:relevance:add-T-D}, so the fact was extracted from $\mathcal{T}$
at some point in line~\ref{alg:relevance:choose-F}. Furthermore,
${\body{r}\sigma \subseteq I^i}$, ${\eta(I^i) \subseteq I}$, and the
observation that $\eta$ is identity on the constants of $P$ ensures that $r$
and substitution $\nu$ satisfying ${r\nu = \eta(r\sigma)}$ were considered in
line~\ref{alg:relevance:rule:start}; thus, line~\ref{alg:relevance:add-r}
ensures ${r \in \Rint \cup \DOM{P} \cup \ST \cup \FR{P}}$. Moreover, $\eta$
does not affect the function symbols, so each function symbol of
$\head{r}\sigma$ occurs in $\eta(\head{r}\sigma)$; thus,
Lemma~\ref{lem:Dfin-Rfin-fnsym} ensures ${r \in \Rint \cup \Rstar}$.
Line~\ref{alg:relevance:add-T-D} ensures that each atom of $\body{r}\nu$ not of
the form ${c \equals c}$ is contained in $\Dfin$. We next prove the following
property.
\begin{quote}
    ($\ast$): $\Rfin \cup \Rstar \cup B \models \body{r}\sigma$.
\end{quote}

First, consider a ground atom ${F \in \body{r}\sigma}$ not of the form ${t
\equals t}$. If atom $F$ is relational, or if $F$ contains function symbols, or
if ${P \cup B}$ does not satisfy UNA, then $\eta(F)$ is not of the form ${c
\equals c}$ for $c$ a constant; hence, $\eta(F)$ is added to $\Dfin$ in
line~\ref{alg:relevance:add-T-D}, so the inductive assumption ensures ${\Rfin
\cup \Rstar \cup B \models F}$.

Second, consider a ground atom ${t \equals t \in \body{r}\sigma}$ obtained by
instantiating some atom ${A \in \body{r}}$---that is, ${A\sigma = t \equals
t}$. If $r$ is the symmetry rule \eqref{eq:sym}, then ${\head{r}\sigma = t
\equals t}$, which contradicts ${\head{r}\sigma \in I^{i+1} \setminus I^i}$. If
$r$ is the transitivity rule \eqref{eq:trans}, then let $A'$ be the other atom
from $\body{r}$; regardless of how we choose $A$ and $A'$, we have ${A'\sigma
\in I^i}$, and ${\head{r}\sigma = A'\sigma}$, which again contradicts
${\head{r}\sigma \in I^{i+1} \setminus I^i}$. Thus, ${r \not\in \ST}$, so the
rule $r$ is well-formed. Now assume that $\head{r}$ does not contain the
$\predQ$ predicate. For each constant $c$ in $A$, the domain rules in $\Rstar$
ensure ${\Rfin \cup \Rstar \cup B \models \D(c)}$, and the reflexivity rule in
$\Rstar$ ensures ${\Rfin \cup \Rstar \cup B \models c \equals c}$. Furthermore,
consider a variable ${x \in \vars{A}}$. Rule $r$ is well-formed, so there
exists a relational atom ${A' \in \body{r}}$ such that ${x \in \vars{A'}}$. The
previous paragraph ensures ${\Rfin \cup \Rstar \cup B \models A'\sigma}$; thus,
the domain rules for the predicate of $A'$ in $\Rstar$ ensure ${\Rfin \cup
\Rstar \cup B \models \D(x)\sigma}$, and the reflexivity rule in $\Rstar$
ensures ${\Rfin \cup \Rstar \cup B \models x\sigma \equals x\sigma}$. Thus, if
either argument of $A$ is a variable or a constant, our observations thus far
ensure ${\Rfin \cup \Rstar \cup B \models A\sigma}$. Otherwise, at least one
argument of $A$ contains a function symbol $f$, and $A\sigma$ is of the form
${f(\vec v) \equals f(\vec v)}$. Our observations ensure ${\Rfin \cup \Rstar
\cup B \models \D(v)}$ and ${\Rfin \cup \Rstar \cup B \models v \equals v}$ for
each term ${v \in \vec v}$. Moreover, $A\sigma$ is not of the form ${c \equals
c}$, so $\eta(A\sigma)$ is added to $\Dfin$ in
line~\ref{alg:relevance:add-T-D}; but then, Lemma~\ref{lem:Dfin-Rfin-fnsym}
ensures that $\FR{\Rfin}$ contains a $\D$-restricted functional reflexivity
rule for $f$, which in turn ensures ${\Rfin \cup \Rstar \cup B \models
A\sigma}$. Finally, if $\head{r}$ contains the $\predQ$ predicate, then the
only case not considered thus far is when $A$ is of the form ${x \equals x'}$
where only variable $x$ occurs in a relational atom; but then, the
corresponding domain rule ensures ${\Rfin \cup \Rstar \cup B \models \D(t)}$,
and the reflexivity rule in $\Rstar$ ensures ${\Rfin \cup \Rstar \cup B \models
A\sigma}$.

Thus, property ($\ast$) holds. If ${r \in \Rstar}$, this immediately proves the
inductive step. Otherwise, we have ${r \in \Rint}$, so there exists a rule ${r'
\in \Rfin}$ obtained from $r$ by transformations in
line~\ref{alg:relevance:UNA:desg}. Now consider an arbitrary body equality atom
${F \in \body{r}}$. If $F$ is not of the form ${c \equals c}$ for some constant
$c$, then either ${P \cup B}$ does not satisfy UNA, or $\eta(F)$ is not of the
form ${c \equals c}$ for $c$ a constant; hence, ${\langle r,i \rangle}$ is
added to $\mathcal{B}$ in line~\ref{alg:relevance:add-B}, so the body atom of
$r'$ corresponding to $F$ is not eliminated in
line~\ref{alg:relevance:UNA:desg}. Thus, we have ${\body{r'}\sigma \subseteq
\body{r}\sigma}$, so ${\Rfin \cup \Rstar \cup B \models \head{r}\sigma}$.

\smallskip

\emph{Step 3.} For each fact of the form ${\predQ(\vec a)}$ such that ${P \cup
\SG{P} \cup B \models \predQ(\vec a)}$, line~\ref{alg:relevance:init} makes
sure that ${\eta(\predQ(\vec a)) \in \Dfin}$; thus, the property from the
previous step ensures ${\Rfin \cup \Rstar\cup B \models \predQ(\vec a)}$. We
now argue that ${\Rfin \cup \SG{\Rfin} \cup B \models \predQ(\vec a)}$ holds as
well. In particular, note that ${\SG{\Rfin} \subseteq \Rstar}$, and that
${\Rstar \setminus \SG{\Rfin}}$ consists of zero or more domain rules of the
form \eqref{eq:dom-c} for a constant $c$ not occurring in $\R$, and domain
rules of the form \eqref{eq:dom-R} for a predicate $R$ not occurring in $\R$.
Such rules can derive facts of the form $\D(c)$ for a constant $c$ occurring in
the base instance $B$ in a fact with a predicate not occurring in ${\Rfin \cup
\SG{\Rfin}}$, which can eventually derive facts of the form ${c \equals c}$ and
${f(\vec u) \equals t}$ where $\vec u$ contain such constant $c$. Now consider
an arbitrary rule ${r \in \Rfin}$, and assume that the head of $r$ does not
contain $\predQ$. Furthermore, consider a variable $x$ occurring in an equality
in $\body{r}$. Since $r$ is well-formed, variable $x$ occurs in a relational
atom ${A \in \body{r}}$; since $R$ does not occur in $\Rfin$, no fact that can
be match $A$ can contain constant $c$, so $x$ cannot be matched to a term
containing $c$. In other words, facts ${c \equals c}$ and ${f(\vec u) \equals
t}$ mentioned earlier cannot contribute to a derivation made by $r$. Finally,
if $\head{r}$ contains $\predQ$, then the body of $r$ consists of a well-formed
conjunction $\phi$ and equalities $x \equals x'$. Since $\phi$ is well-formed,
facts of the form ${c \equals c}$ and ${f(\vec u) \equals t}$ cannot be matched
to $\phi$ as mentioned above. Furthermore, for each body equality $x \equals
x'$, variable $x$ occurs in a relational atom in $\phi$ so $x$ cannot be
matched to a term containing $c$. Finally, $x'$ can potentially be matched to a
functional term, but then $r$ does not derive a fact containing only constants.
\end{proof}

    \section{Proof of Theorem~\ref{thm:magic}}\label{sec:proof:magic}

\thmMagic*

\begin{proof}
($\Leftarrow$) Each rule ${r' \in \R}$ where $\head{r'}$ does not contain a
magic predicate is obtained from some rule ${r \in P}$ by appending an atom
with a magic predicate to the body of $r$. Furthermore, each function symbol in
$\R$ occurs in $P$, which ensures ${\FR{\R} \subseteq \FR{P}}$. Thus, for each
fact $F$ not containing a magic predicate such that ${\R \cup \SG{\R} \cup B
\models F}$, we clearly have ${P \cup \SG{P} \cup B \models F}$.

\medskip

($\Rightarrow$) Let ${\Rstar = \DOM{P} \cup \Rfl \cup \ST \cup \FR{\R}}$. We
prove in two steps that ${P \cup \SG{P} \cup B \models \predQ(\vec a)}$ implies
${\R \cup \SG{\R} \cup B \models \predQ(\vec a)}$.

\smallskip

\emph{Step 1.} Let ${I^0, I^1, \dots}$ be the sequence of instances used to
compute $\fixpoint{P \cup \SG{P}}{B}$ as specified in
Section~\ref{sec:preliminaries}. We prove by induction on $i$ that, for each
fact ${R(\vec u) \in I^i}$ not of the form ${t \equals t}$ and each magic
predicate $\mgc{R}{\alpha}$ for $R$ occurring in the body of a rule in $\R$,
properties \ref{thm:magic:eqb} and \ref{thm:magic:other} hold:
\begin{enumerate}\renewcommand{\theenumi}{(M\arabic{enumi})}\renewcommand{\labelenumi}{\theenumi}
    \item\label{thm:magic:eqb}
    if ${\alpha = \adEqb}$ (and thus ${R = {\equals}}$), then ${\R \cup \Rstar
    \cup B \cup \{ \mgc{\equals}{\adEqb}(\vec u^\beta) \} \models
    {\equals}(\vec u)}$ for each ${\beta \in \{ \ad{bf}, \ad{fb} \}}$; and

    \item\label{thm:magic:other}
    if ${\alpha \neq \adEqb}$ (and thus ${R \neq {\equals}}$), then ${\R \cup
    \Rstar \cup B \cup \{ \mgc{R}{\alpha}(\vec u^\alpha) \} \models R(\vec u)}$.
\end{enumerate}

The base case holds trivially for each fact in ${I^0 = B}$. For the induction
step, assume that $I^i$ satisfies properties \ref{thm:magic:eqb} and
\ref{thm:magic:other}, and consider an arbitrary fact ${R(\vec u) \in I^{i+1}
\setminus I^i}$ derived by a rule ${r \in P \cup \DOM{P} \cup \Rfl \cup \ST
\cup \FR{P}}$ and substitution $\sigma$---that is, ${\sigma(\body{r}) \subseteq
I^i}$ and ${R(\vec u) = \head{r}\sigma}$. The inductive claim holds vacuously
if $R(\vec u)$ is of the form ${t \equals t}$. Now assume that $R(\vec u)$ is
not of the form ${t \equals t}$, so $r$ cannot be the reflexivity rule
\eqref{eq:ref}. We consider the remaining possibilities for $r$.

Assume ${r \in P}$ and consider an arbitrary magic predicate $\mgc{R}{\alpha}$
for $R$ that occurs in the body of a rule in $\R$; moreover, consider arbitrary
${\beta \in \{ \ad{bf}, \ad{fb} \}}$ if ${\alpha = \adEqb}$ (and thus ${R =
{\equals}}$), and let ${\beta = \alpha}$ otherwise. Since $\mgc{R}{\alpha}$
occurs in the body of a rule in $\R$, predicate $\mgc{R}{\alpha}$ was extracted
at some point from $\mathcal{T}$ in line~\ref{alg:magic:R:start}, so rule $r$
was considered in line~\ref{alg:magic:r:start}. Thus, the algorithm calls
$\mathsf{process}(r,\alpha,\beta)$ in
lines~\ref{alg:magic:process:bf}--\ref{alg:magic:process:fb} or
line~\ref{alg:magic:process}. Let $R(\vec t)$ be the head of $r$; thus, ${\vec
u = \vec t\sigma}$. Also, let ${\langle R_1(\vec t_1), \dots, R_n(\vec t_n)
\rangle}$ and ${\langle \gamma_1, \dots, \gamma_n \rangle}$ be the permutation
of atoms and the accompanying adornments produced by applying the sideways
information passing to $\body{r}$ and $\vars{\vec t^\beta}$ in
line~\ref{alg:magic:SIPS}. Note that conditions \eqref{eq:SIPS:1} and
\eqref{eq:SIPS:2} can always be satisfied for ${r \in P}$: conjunction
$\body{r}$ consists of a well-formed conjunction $\phi$ and zero or more
equality atoms of the form ${x \equals x'}$ where $x$ occurs in $\phi$; thus,
each variable occurring in an equality atom of $\phi$ occurs in a relational
atom of $\phi$ so we can always place these relational atoms before the
relevant equality atom in the order. Furthermore, the rule added to $\R$ in
line~\ref{alg:magic:mod-rule} contains all constants, predicates, and function
symbols of $r$, so $\Rstar$ contains the domain rule \eqref{eq:dom-c} for each
constant in $r$, the domain rule \eqref{eq:dom-R} for each predicate in $r$,
and the $\D$-restricted functional reflexivity rule \eqref{eq:Dfnref} for each
function symbol in $r$. We next prove by another induction on ${0 \leq j \leq
n}$ the following property:
\begin{quote}
    ($\ast$): ${\R \cup \Rstar \cup B \cup \{ \mgc{R}{\alpha}(\vec u^\beta) \} \models \mgc{R}{\alpha}(\vec t^\beta)\sigma \wedge R_1(\vec t_1)\sigma \wedge \dots \wedge R_j(\vec t_j)\sigma}$.
\end{quote}
The base case for $j=0$ is trivial because ${\vec u^\beta = \vec
t^\beta\sigma}$. Thus, we consider arbitrary ${j \in \{ 1, \dots, n \}}$ and
assume that property ($\ast$) holds for $j-1$. We consider the possible forms
of fact $R_j(\vec t_j)\sigma$.
\begin{itemize}
    \item Assume ${R_j(\vec t_j)\sigma}$ is of the form ${t \equals t}$; thus,
    $R_j(\vec t_j)$ is of the form ${t_1 \equals t_2}$ and ${\gamma_j \in \{
    \ad{bf}, \ad{fb} \}}$. We next consider ${\gamma_j = \ad{bf}}$; the case of
    ${\gamma_j = \ad{fb}}$ is analogous. For each constant $c$ in $t_1$, set
    $\Rstar$ contains a domain rule \eqref{eq:dom-c} that ensures ${\R \cup
    \Rstar \cup B \cup \{ \mgc{R}{\alpha}(\vec u^\beta) \} \models \D(c)}$.
    Furthermore, for each variable ${x \in \vars{t_1}}$, property
    \eqref{eq:SIPS:2} of Definition~\ref{def:SIPS} ensures that there exists a
    relational atom $R_{j'}(\vec t_{j'})$ with ${j' < j}$ such that ${x \in
    \vec t_{j'}}$; but then, the inductive property \ref{thm:magic:other}
    ensures ${\R \cup \Rstar \cup B \cup \{ \mgc{R}{\alpha}(\vec u^\beta) \}
    \models R_{j'}(\vec t_{j'})\sigma}$, so the domain rules \eqref{eq:dom-R}
    for $R_{j'}$ and the reflexivity rule \eqref{eq:ref} in $\Rstar$ ensure
    ${\R \cup \Rstar \cup B \cup \{ \mgc{R}{\alpha}(\vec u^\beta) \} \models
    \D(x)\sigma}$. Thus, for each subterm $s$ of $t_1$ that is a variable or a
    constant, the reflexivity rule \eqref{eq:ref} of $\Rstar$ ensures ${\R \cup
    \Rstar \cup B \cup \{ \mgc{R}{\alpha}(\vec u^\beta) \} \models s\sigma
    \equals s\sigma}$; hence, ($\ast$) holds if $t_1$ is a variable or a
    constant. Finally, if $t_1$ is a functional term of the form $f(\vec s)$,
    then our observation about the proper subterms of $t_1$ and the
    $\D$-restricted functional reflexivity rule \eqref{eq:Dfnref} of $\Rstar$
    ensure ${\R \cup \Rstar \cup B \cup \{ \mgc{R}{\alpha}(\vec u^\beta) \}
    \models f(\vec s)\sigma \equals f(\vec s)\sigma}$, so property ($\ast$)
    holds as well.

    \item Assume ${R_j \neq {\equals}}$ and that $R_j$ does not occur in the
    head of a rule in $P$. Then ${R_j(\vec t_j)\sigma \in B}$, so ($\ast$)
    holds trivially.

    \item Assume ${R_j(\vec t_j)\sigma}$ is not of the form ${t \equals t}$ and
    that $R_j$ is processed in
    lines~\ref{alg:magic:body:start}--\ref{alg:magic:body:end}. Let $S$ be the
    magic predicate introduced in line~\ref{alg:magic:S}. The magic rule added
    to $\R$ in line~\ref{alg:magic:magic-rule} and property ($\ast$) together
    ensure ${\R \cup \Rstar \cup B \cup \{ \mgc{R}{\alpha}(\vec u^\beta) \}
    \models S(\vec t_j^{\gamma_j})\sigma}$, so ${\R \cup \Rstar \cup B \cup \{
    \mgc{R}{\alpha}(\vec u^\beta) \} \models R_j(\vec t_j)\sigma}$ holds by
    ${R_j(\vec t_j)\sigma \in I^i}$ and the inductive assumption for properties
    \ref{thm:magic:eqb} and \ref{thm:magic:other}.
\end{itemize}
This completes the proof of ($\ast$). Thus, if ${R = {\equals}}$ and ${\alpha =
\adEqb}$, then ($\ast$) and the rule added to $\R$ in
line~\ref{alg:magic:mod-rule} ensure that $R(\vec u)$ satisfies property
\ref{thm:magic:eqb}; in all other cases, ($\ast$) and the rule added to $\R$ in
line~\ref{alg:magic:mod-rule} ensure that $R(\vec u)$ satisfies property
\ref{thm:magic:other}.

Assume ${r \in \DOM{P}}$---that is, $r$ is of the form ${\D(x_i) \leftarrow
T(\vec x)}$ and ${R(\vec u) = \D(x_i)\sigma}$. Rule $r$ is processed in
line~\ref{alg:magic:process}, so, in the same way as in ($\ast$), we have ${\R
\cup \Rstar \cup B \cup \{ \mgc{\D}{\alpha}(x_i^\beta) \} \models T(\vec
x)\sigma}$. But then, ${r \in \Rstar}$ ensures that fact $\D(x_i)\sigma$
satisfies property \ref{thm:magic:other}.

In all remaining cases, $\head{r}$ is an equality atom, so $\mgc{R}{\alpha}$ is
of the form $\mgc{\equals}{\adEqb}$. Furthermore, we assumed that
$\mgc{R}{\alpha}$ occurs in the body of a rule in $\R$, so the condition in
line~\ref{alg:magic:eq:start} was satisfied; thus, $\R$ contains the rules from
lines~\ref{alg:magic:eq:mgc:1}--\ref{alg:magic:eq:end} of
Algorithm~\ref{alg:magic}.

Assume ${r \in \FR{P}}$ for an $n$-ary function symbol $f$.
Lines~\ref{alg:magic:eq:mgc:1}--\ref{alg:magic:eq:end} ensure that $f$ occurs
in $\R$, so we have ${r \in \FR{\R}}$. We next consider the case ${\beta =
\ad{bf}}$. We next argue that the following entailments hold for each ${j \in
\{ 1, \dots, n\}}$.
\begin{displaymath}
\begin{array}{@{}l@{\;}l@{}}
    \R \cup \Rstar \cup B \cup \{ \mgc{\equals}{\adEqb}(\vec u^\ad{bf}) \} \models  & \mgc{\equals}{\adEqb}(f(x_1,\dots,x_n))\sigma \; \wedge \\
                                                                                    & \mgc{\D}{\ad{b}}(x_j)\sigma \wedge \D(x_j)\sigma \;\wedge \\
                                                                                    & \mgc{\equals}{\adEqb}(x_j)\sigma \wedge x_j\sigma \equals x_j'\sigma \wedge \\
                                                                                    & \mgc{\D}{\ad{b}}(x_j')\sigma \wedge \D(x_j')\sigma \\
\end{array}
\end{displaymath}
Fact $\mgc{\equals}{\adEqb}(f(x_1,\dots,x_n))\sigma$ is entailed trivially
because ${\vec u^\ad{bf} = f(x_1,\dots,x_n)\sigma}$ by our assumption. This and
the rule added to $\R$ in line~\ref{alg:magic:eq:mgc:2} ensures that
$\mgc{\D}{\ad{b}}(x_j)\sigma$ is entailed, which by the inductive property
\ref{thm:magic:other} ensures that $\D(x_j)\sigma$ is entailed. But then, the
rule added to $\R$ in line~\ref{alg:magic:eq:mgc:3} ensures that
$\mgc{\equals}{\adEqb}(x_j)\sigma$ is entailed. Fact ${x_j\sigma \equals
x_j'\sigma}$ is entailed by
\begin{itemize}
    \item $\D(x_j)\sigma$ and the reflexivity rule \eqref{eq:ref} in $\Rstar$
    if ${x_j\sigma = x_j'\sigma}$, and

    \item $\mgc{\equals}{\adEqb}(x_j)\sigma$ and the inductive property
    \ref{thm:magic:eqb} otherwise.
\end{itemize}
Fact $\mgc{\D}{\ad{b}}(x_j')\sigma$ is entailed by the rule added to $\R$ in
line~\ref{alg:magic:eq:mgc:4}, so the inductive property \ref{thm:magic:other}
ensures that $\D(x_j')\sigma$ is entailed. These entailments hold for each ${j
\in \{ 1, \dots, n\}}$, so ${r \in \SG{P}}$ ensures that
${f(x_1,\dots,x_n)\sigma \equals f(x_1',\dots,x_n')\sigma}$ satisfies property
\ref{thm:magic:eqb}. The case for ${\beta = \ad{fb}}$ is analogous because
$\Rstar$ contains the symmetry rule \eqref{eq:sym}.

Assume that $r$ is the symmetry rule \eqref{eq:sym}, and furthermore assume
that ${\beta = \ad{bf}}$. We assumed that $R(\vec u)$ is not of the form ${t
\equals t}$, so we have ${x_1\sigma \neq x_2\sigma}$. Thus, the inductive
property \ref{thm:magic:eqb} ensures ${\R \cup \Rstar \cup B \cup \{
\mgc{\equals}{\adEqb}(\vec u^\ad{bf}) \} \models x_1\sigma \equals x_2\sigma}$;
but then, the symmetry rule \eqref{eq:sym} in $\Rstar$ ensures ${\R \cup \Rstar
\cup B \cup \{ \mgc{\equals}{\adEqb}(\vec u^\ad{bf}) \} \models x_2\sigma
\equals x_1\sigma}$. The case for ${\beta = \ad{fb}}$ is analogous.

Assume that $r$ is the transitivity rule \eqref{eq:trans}, and furthermore
assume that ${\beta = \ad{bf}}$. If ${x_1\sigma = x_2\sigma}$ or ${x_2\sigma =
x_3\sigma}$, then $r$ derives ${x_2\sigma \equals x_3\sigma}$ or ${x_1\sigma
\equals x_2\sigma}$, respectively; however, this contradicts our assumption
that ${x_1\sigma \equals x_3\sigma \in I^{i+1} \setminus I^i}$. Thus,
${x_1\sigma \neq x_2\sigma}$ and ${x_2\sigma \neq x_3\sigma}$ holds. We now
show that the following entailments hold.
\begin{displaymath}
    \R \cup \Rstar \cup B \cup \{ \mgc{\equals}{\adEqb}(\vec u^\ad{bf}) \} \models \mgc{\equals}{\adEqb}(x_1)\sigma \wedge x_1\sigma \equals x_2\sigma \wedge \mgc{\equals}{\adEqb}(x_2)\sigma \wedge x_2\sigma \equals x_3\sigma
\end{displaymath}
Fact $\mgc{\equals}{\adEqb}(x_1)\sigma$ is entailed trivially because our
assumption ensures ${\vec u^\ad{bf} = x_1\sigma}$. This, ${x_1\sigma \neq
x_2\sigma}$, and the inductive property \ref{thm:magic:eqb} entail ${x_1\sigma
\equals x_2\sigma}$. Then, the rule added to $\R$ in
line~\ref{alg:magic:eq:mgc:1} entails $\mgc{\equals}{\adEqb}(x_2)\sigma$, which
together with ${x_2\sigma \neq x_3\sigma}$ and the inductive property
\ref{thm:magic:eqb} entail ${x_2\sigma \equals x_3\sigma}$. Finally, the
transitivity rule \eqref{eq:trans} in $\Rstar$ ensures that the fact
${x_1\sigma \equals x_3\sigma}$ satisfies property \ref{thm:magic:eqb}. The
case of ${\beta = \ad{fb}}$ is analogous because $\Rstar$ contains the symmetry
rule \eqref{eq:sym}.

\smallskip

\emph{Step 2.} Let ${\alpha = f \cdots f}$. Line~\ref{alg:magic:init} of
Algorithm~\ref{alg:magic} ensures ${\mgc{\predQ}{\alpha} \leftarrow \; \in \R}$
and ${\mgc{\predQ}{\alpha} \in \Di{}}$. The former ensures ${\R \cup \Rstar
\cup B \models \mgc{\predQ}{\alpha}}$. The latter ensures that
$\mgc{\predQ}{\alpha}$ is extracted from $\Di{}$ in
line~\ref{alg:magic:R:start} and passed to the $\process$ function in
line~\ref{alg:magic:process}; thus, line~\ref{alg:magic:mod-rule} ensures that
$\mgc{\predQ}{\alpha}$ occurs in the body of a rule in $\R$. But then, for each
fact of the form ${\predQ(\vec a)}$ such that ${P \cup \SG{P} \cup B \models
\predQ(\vec a)}$, property \ref{thm:magic:other} ensures ${\R \cup \Rstar \cup
B \models \predQ(\vec a)}$. We now argue that ${\R \cup \SG{\R} \cup B \models
\predQ(\vec a)}$ holds as well. In particular, note that ${\SG{\R} \subseteq
\Rstar}$, and that ${\Rstar \setminus \SG{\R}}$ consists of zero or more domain
rules of the form \eqref{eq:dom-c} for a constant $c$ not occurring in $\R$,
and domain rules of the form \eqref{eq:dom-R} for a predicate $R$ not occurring
in $\R$. Such rules can derive facts of the form $\D(c)$ for a constant $c$
occurring in the base instance $B$ in a facts with a predicate not occurring in
${\R \cup \SG{\R}}$, which can in turn lead to the derivation of facts of the
form ${c \equals c}$ and ${f(\vec u) \equals t}$ where $\vec u$ contain such
constant $c$. However, the rules of $\R$ not containing the magic predicate in
the head are well-formed, so they cannot apply to such facts in exactly the
same way as in the proof of Theorem~\ref{thm:relevance}. Hence, facts ${c
\equals c}$ and ${f(\vec u) \equals t}$ cannot contribute to the derivation of
the fact $\predQ(\vec a)$.
\end{proof}

    \section{Proof of Theorem~\ref{thm:magic:termination}}\label{sec:proof:magic:termination}

\thmMagicTermination*

\begin{proof}
Let $M$ be the maximum depth of an atom in ${\fixpoint{P_1}{B}}$, and let
${I^0, I^1, \dots}$ be the sequence of instances used to compute
${\fixpoint{P_2}{B}}$ as defined in Section~\ref{sec:preliminaries}. We show by
induction on $k$ that, for each fact ${F \in I^k}$,
\begin{itemize}
    \item ${\dep{F} \leq M+1}$ if $F$ contains a magic predicate, and

    \item ${\dep{F} \leq M}$ and ${P_1 \cup B \models F}$ otherwise.
\end{itemize}
The base case ${I^0 = B}$ is trivial, so we assume that the claim holds for
some $I^k$ and consider a fact ${\head{r}\sigma \in I^{k+1} \setminus I^k}$
derived by a rule ${r \in P_2}$ and substitution $\sigma$ satisfying
${\body{r}\sigma \subseteq I^k}$. We consider the possible forms of the rule
$r$.

Assume that $r$ was added to $\R$ in line~\ref{alg:magic:mod-rule}. Then, $P_1$
contains a rule $r'$ such that ${\body{r'} \subseteq \body{r}}$, and the
induction assumption ensures ${P_1 \cup B \models \body{r'}\sigma}$. Thus, we
have ${P_1 \cup B \models F}$ and consequently fact $F$ is of depth at most $M$.

Assume that ${r \in \SG{\R}}$. Definition~\ref{def:magic-SG} ensures that
domain rules \eqref{eq:dom-R} are not instantiated in $\SG{\R}$ for the magic
predicates, so ${r \in P_1}$ and the property holds as in the previous
paragraph.

Assume that $r$ was added to $\R$ in line~\ref{alg:magic:magic-rule} for some
${i \in \{ 1, \dots, n \}}$. Let $R(\vec t)$ be as in
line~\ref{alg:magic:fn-process}, let ${\langle R_1(\vec t_1), \dots, R_n(\vec
t_n) \rangle}$ be as in line~\ref{alg:magic:SIPS}, and let $S(\vec
t_i^{\gamma_i})$ be as in line~\ref{alg:magic:magic-rule} for $i$. The
inductive assumption ensures ${\dep{\mgc{R}{\alpha}(\vec t^\beta)\sigma} \leq M
+ 1}$ and ${\dep{R_j(\vec t_j)\sigma} \leq M}$ for each $j$ with ${1 \leq j <
i}$. We consider the possible forms of atom $R_i(\vec t_i)$.
\begin{itemize}
    \item Assume predicate $R_i$ is relational. Since program $P$ is
    well-formed, terms $\vec t_i$ do not contain a function symbol, so atom
    $S(\vec t_i^{\gamma_i})$ does not contain a function symbol either.
    Moreover, each variable of $S(\vec t_i^{\gamma_i})$ occurs in
    $\mgc{R}{\alpha}(\vec t^\beta)$ or some $R_j(\vec t_j)$ with ${1 \leq j <
    i}$ due to property \eqref{eq:SIPS:1} of Definition~\ref{def:SIPS}, so the
    variable is mapped to a term of depth at most $M+1$. Thus, we have
    ${\dep{S(\vec t_i^{\gamma_i})\sigma} \leq M+1}$, as required.

    \item Assume ${R_i = {\equals}}$ so atom $R_i(\vec t_i)$ is of the form
    ${t_i^1 \equals t_i^2}$. We consider only the case of ${\beta = \ad{bf}}$
    as the case for ${\beta = \ad{fb}}$ is analogous. Property
    \eqref{eq:SIPS:2} of Definition~\ref{def:SIPS} ensures that, for each ${x
    \in \vars{t_i^1}}$, there exists a relational atom $R_j(\vec t_j)$ such
    that ${1 \leq j < i}$ and ${x \in \vars{\vec t_j}}$; now, we have already
    shown that ${\dep{R_j(\vec t_j)\sigma} \leq M}$, so ${\dep{x\sigma} \leq
    M}$ holds as well. Consequently, atom $S(\vec t_i^{\gamma_i})$ is of depth
    at most one, so ${\dep{S(\vec t_i^{\gamma_i})\sigma} \leq M+1}$.
\end{itemize}

Assume that $r$ was added to $\R$ in
lines~\ref{alg:magic:eq:mgc:1}--\ref{alg:magic:eq:end}. The head of $r$ then
contains a magic predicate; moreover, each term occurring in $\head{r}$ is a
variable occurring in $\body{r}$, so ${\dep{\head{r}\sigma} \leq M+1}$ clearly
holds by the inductive hypothesis.
\end{proof}

    \section{Proof of Theorem~\ref{thm:desg}}\label{sec:proof:desg}

\thmDESG*

\begin{proof}
Let $\Sigma_1$, $\Sigma_2$ and $P_3$--$P_7$ be as specified in
Algorithm~\ref{alg:answer-query}. Program $P_6$ is safe, and removal of
function symbols ensures that all equalities in the body involve only
variables. Desingularisation is thus applicable to each body equality atom, and
applying it removes the atom while also preserving safety. Moreover, for each
fact of the form $\predQ(\vec a)$, we have ${\{ \Sigma \} \cup B \modelsEq
\predQ(\vec a)}$ if and only if ${P_6 \cup \SG{P_6} \cup B \models \predQ(\vec
a)}$; this holds by Proposition~\ref{prop:fol}, Theorem~\ref{thm:sg},
Proposition~\ref{prop:sk}, Theorem~\ref{thm:relevance},
Theorem~\ref{thm:magic}, and Proposition~\ref{prop:defun}. Now consider
arbitrary fact $\predQ(\vec a)$.

\medskip

($\Leftarrow$) Let ${P' = \sk{\fol{\Sigma}}}$ and ${\Rstar = \EQ{P'} \cup
\EQ{P_7}}$. The properties of equality axiomatisation ensure ${\{ \Sigma \}
\cup B \modelsEq \predQ(\vec a)}$ if and only if ${P' \cup \EQ{P'} \cup B
\models \predQ(\vec a)}$, and the observation about the relationship between
$\Sigma$ and $P_6$ from the previous paragraph ensures ${P' \cup \EQ{P'} \cup B
\models \predQ(\vec a)}$ if and only if ${P_6 \cup \SG{P_6} \cup B \models
\predQ(\vec a)}$. The entailment relationship of logic programming then ensures
${P' \cup \EQ{P'} \cup \Rstar \cup B \models \predQ(\vec a)}$ if and only if
${P_6 \cup \SG{P_6} \cup \Rstar \cup B \models \predQ(\vec a)}$ as well;
moreover, for each rule ${r \in \SG{P_6}}$, there exists a rule ${r' \in
\Rstar}$ such that ${\head{r'} = \head{r}}$ and ${\body{r'} \subseteq
\body{r}}$, so $\SG{P_6}$ is redundant. All of these observations together
ensure ${\{ \Sigma \} \cup B \modelsEq \predQ(\vec a)}$ if and only if ${P'
\cup \Rstar \cup B \models \predQ(\vec a)}$ if and only if ${P_6 \cup \Rstar
\cup B \models \predQ(\vec a)}$. Now assume that ${P_7 \cup \EQ{P_7} \cup B
\models \predQ(\vec a)}$, and let ${I^0, I^1, \dots}$ be the sequence of
instances used to compute $\fixpoint{P_7 \cup \EQ{P_7}}{B}$ as defined in
Section~\ref{sec:preliminaries}. To prove ${P_6 \cup \Rstar \cup B \models
\predQ(\vec a)}$, we show by induction on $i$ that ${P_6 \cup \Rstar \cup B
\models F}$ for each fact ${F \in I^i}$. The base case is obvious, so assume
that this property holds for $I^i$ and consider a fact ${\head{r}\sigma \in
I^{i+1} \setminus I^i}$ derived by a rule ${r \in P_7 \cup \EQ{P_7}}$ and
substitution $\sigma$ such that ${\sigma(\body{r}) \subseteq I^i}$. If ${r \in
\EQ{P_7} \subseteq \Rstar}$, we clearly have ${P_6 \cup \Rstar \models
\head{r}\sigma}$. Furthermore, if ${r \in P_7}$, then there exists a rule ${r'
\in P_6}$ such that $r$ is obtained from $r'$ by desingularisation. Let
$\sigma'$ be the substitution obtained from $\sigma$ by setting ${\sigma'(x) =
t\sigma}$ for each variable $x$ replaced with term $t$ when
Definition~\ref{def:desg} is applied to $r$. For each relational body atom
${R(\vec t) \in \body{r}}$, the inductive assumption and ${R(\vec t)\sigma \in
I^i}$ ensure ${P_6 \cup \Rstar \models R(\vec t)\sigma}$; moreover, rule $r'$
contains a body atom ${R'(\vec t') \in \body{r'}}$ that satisfies ${R(\vec
t)\sigma = R'(\vec t')\sigma'}$. Furthermore, as mentioned at the beginning of
this proof, each equality in $\body{r'}$ is of the form ${x_1 \equals x_2}$;
moreover, $r'$ is well-formed, so $x_1$ and $x_2$ occur in relational atoms in
$\body{r'}$. Also, the definition of $\sigma'$ ensures ${x_1\sigma' =
x_2\sigma'}$, so the domain rules \eqref{eq:dom-R} and the reflexivity rule
\eqref{eq:ref} in $\Rstar$ ensure ${P_6 \cup \Rstar \models x_1\sigma' \equals
x_2\sigma'}$. Thus, we have ${P_6 \cup \Rstar \models \body{r'}\sigma'}$, which
ensures ${P_6 \cup \Rstar \models \head{r'}\sigma'}$. Finally, the
transformation from Definition~\ref{def:desg} ensures ${\head{r}\sigma =
\head{r}\sigma'}$.

\medskip

($\Rightarrow$) Assume ${P_6 \cup \SG{P_6} \cup B \models \predQ(\vec a)}$, and
let ${I^0, I^1, \dots}$ be the sequence of instances used to compute
$\fixpoint{P_6 \cup \SG{P_6}}{B}$ as defined in
Section~\ref{sec:preliminaries}. To prove ${P_7 \cup \EQ{P_7} \cup B \models
\predQ(\vec a)}$, we show by induction on $i$ that ${P_7 \cup \EQ{P_7} \cup B
\models F}$ for each fact ${F \in I^i}$. The base case ${I^0 = B}$ is obvious,
so assume that this property holds for $I^i$ and consider a fact
${\head{r}\sigma \in I^{i+1} \setminus I^i}$ derived by a rule ${r \in P_6 \cup
\SG{P_6}}$ and substitution $\sigma$ such that ${\sigma(\body{r}) \subseteq
I^i}$. First, assume that ${r \in \SG{P_6}}$. Then there exists a rule ${r' \in
\EQ{P_6} = \EQ{P_7}}$ such that ${\head{r'} = \head{r}}$ and ${\body{r'}
\subseteq \body{r}}$; the only case where ${\body{r'} \subsetneq \body{r}}$ is
when $r$ is a $\D$-restricted functional reflexivity rule \eqref{eq:Dfnref}.
Thus, we have ${P_7 \cup \EQ{P_7} \cup B \models \head{r}\sigma}$. Second,
assume that ${r \in P_6}$. The inductive assumption ensures ${P_7 \cup \EQ{P_7}
\cup B \models R(\vec t)\sigma}$ for each atom ${R(\vec t) \in \body{r}}$. Let
${r' \in P_7}$ be the rule obtained from $r$ as specified in
Definition~\ref{def:desg}. Now consider an arbitrary relational atom ${A \in
\body{r}}$, and let $A'$ be the corresponding atom in $\body{r'}$. Atom $A'$ is
obtained from $A$ by zero or more replacements of a variable $x$ with $t$ due
to ${x \equals t \in \body{r}}$. For each such ${x \equals t}$, we have
${x\sigma \equals t\sigma \in I^i}$, so the induction assumption ensures ${P_7
\cup \EQ{P_7} \cup B \models x\sigma \equals t\sigma}$. Moreover, we have
${A\sigma \in I^i}$, so the induction assumption ensures ${P_7 \cup \EQ{P_7}
\cup B \models A\sigma}$. But then, the congruence rules of $\EQ{P_7}$ clearly
ensure ${P_7 \cup \EQ{P_7} \cup B \models A'\sigma}$. Since this holds for
arbitrary ${A \in \body{r}}$ and the corresponding ${A' \in \body{r'}}$, we
have ${P_7 \cup \EQ{P_7} \cup B \models \head{r'}\sigma}$. Finally, note that
${\head{r'}\sigma = \head{r}\sigma}$, which ensures the claim.
\end{proof}

    \section{Generating Benchmarks for Second-Order Dependencies}\label{sec:generating}

In this appendix we discuss our approach for generating second-order
dependencies and datasets. We use the standard notion of a \emph{position},
which is a possibly empty sequence of integers that identifies an occurrence of
a subexpression in a term or atom. For example, given the atom $R(x,f(y,z),w)$,
the empty position identifies the atom itself; positions $1$, $2$, and $3$
identify the variable $x$, the term $f(y,z)$, and the variable $w$,
respectively; position $2.1$ identifies the variable $y$; and position $2.2$
identifies the variable $z$. Our approach is shown in
Algorithms~\ref{alg:generator}--\ref{alg:createRule}, where
Algorithm~\ref{alg:generator} provides the entry point. It takes a number of
parameters that control the generation process, and it produces a set $\R$ of
conjuncts of a generalised SO dependency and a base instance $B$. All elements
of $\R$ are of the form \eqref{eq:generalised-SO-dep:conjunct}, but with no
existential quantification over individual variables, and with just one atom
per head. In line with the terminology used thus far, we call the elements of
$\R$ \emph{rules}.

\begin{algorithm}[!tb]
\caption{$\benchmarkGenerator$}\label{alg:generator}
\begin{algorithmic}[1]\footnotesize
    \Statex \begin{tabular}{@{}l@{\;}l@{\;}l@{}}
                \textbf{Global:}    & $\maxPredArity$           & : the maximum predicate arity \\
                                    & $\maxFnSymArity$          & : the maximum function symbol arity \\
                                    & $\preds$                  & : the size of a set of predicates of arity at most $\maxPredArity$ \\
                                    & $\fnSyms$                 & : the size of a set of function symbols of arity at most $\maxFnSymArity$ \\
                                    & $\constants$              & : the size of a set of constants \\
                                    & $\seedQFacts$             & : the number of seed query facts \\
                                    & $\maxTermDepth$           & : the maximum term depth in derivations considered \\
                                    & $\maxNumberRules$         & : the maximum number of rules to produce in the first phase \\
                                    & $\maxNumberRulesPerFact$  & : the maximum number of rules that derive a certain fact \\
                                    & $\maxNumberRelBodyAtoms$  & : the maximum number of relational body atoms per rule \\
                                    & $\maxNumberEqBodyAtoms$   & : the maximum number of equality body atoms per rule \\
                                    & $\maxNumberTuples$        & : the `blowup' factor used to generate a dataset from the derivation leaves \\
            \end{tabular}
    \Statex
    \State $\R \defeq \emptyset$
    \State Set $\Gamma$ to the empty graph
    \State $\mathcal{D} \defeq \mathcal{T} \defeq \{ \predQ_1(\vec a_1), \dots, \predQ_{\seedQFacts}(\vec a_{\seedQFacts}) \}$ where vectors of constants $\vec a_i$ are randomly chosen    \label{alg:generator:seed}
    \While{$\mathcal{T} \neq \emptyset$ \textbf{and} $|\R| < \maxNumberRules$}                                                                                                              \label{alg:generator:1:start}
        \State Choose and remove a fact $F$ from $\mathcal{T}$
        \State Randomly choose a number $\numberRulesPerFact$ between one and $\maxNumberRulesPerFact$
        \For{$i \in \{ 1, \dots, \numberRulesPerFact \}$}
            \If{$F$ is an equality fact and this condition is randomly satisfied}
                 \State $\overline{\rho} \defeq \createGroundEqualityRule(F)$                                                                                                               \label{alg:generator:1:eq}
            \Else
                \State $\langle \rho, \overline{\rho} \rangle \defeq \createRule(F)$                                                                                                        \label{alg:generator:1:std}
                \If{the fixpoint of $\R \cup \{ \rho \} \cup \SG{\R \cup \{ \rho \}}$ and the critical instance cannot be proved finite}
                    \State \textbf{continue}
                \ElsIf{no rule in $\R$ subsumes $\rho$}
                    \State Remove from $\R$ each rule that is subsumed by $\rho$
                    \State Add $\rho$ to $\R$
                \EndIf
            \EndIf
            \For{\textbf{each} $F_i \in \body{\overline{\rho}}$}                                                                                                                            \label{alg:generator:1:body:start}
                \If{$F_i \not\in \mathcal{D}$}
                    \State Add $F_i$ to $\mathcal{T}$ and $\mathcal{D}$
                    \State Extend $\Gamma$ to contain vertices $F_i$ and $F$, as well as an edge from $F_i$ to $F$
                \EndIf
            \EndFor                                                                                                                                                                         \label{alg:generator:1:body:end}
        \EndFor
    \EndWhile                                                                                                                                                                               \label{alg:generator:1:end}
    \State $B \defeq \emptyset$
    \State $\mathcal{D} \defeq \mathcal{T} \defeq \{ F \mid F \text{ has zero incoming edges in } \Gamma \}$                                                                                \label{alg:generator:2:start}
    \While{$\mathcal{T} \neq \emptyset$}
        \State Choose and remove a fact $S(t_1,\dots,t_n)$ from $\mathcal{T}$
        \If{at least one argument of $S(t_1,\dots,t_n)$ contains a function symbol}
            \State $\langle \rho, \overline{\rho} \rangle \defeq \createTransferRule(S(t_1,\dots,t_n))$                                                                                     \label{alg:generator:2:transfer}
            \State Add $\rho$ to $\R$
            \State Let $R(u_1,\dots,u_m)$ be the (only) atom in $\body{\overline{\rho}}$
            \If{$R(u_1,\dots,u_m) \not\in \mathcal{D}$}
                \State Add $R(u_1,\dots,u_m)$ to $\mathcal{T}$ and $\mathcal{D}$
            \EndIf
        \Else
            \For{\textbf{each} $i \in \{ 1, \dots, \maxNumberTuples \}$}
                \State Add $S(t_1^{(i)},\dots,t_n^{(i)})$ to $B$
            \EndFor
        \EndIf
    \EndWhile                                                                                                                                                                               \label{alg:generator:2:end}
    \State \Return $\langle \R, B \rangle$
\end{algorithmic}
\end{algorithm}

\begin{algorithm}[!tb]
\caption{$\createTransferRule(S(t_1,\dots,t_n))$}\label{alg:createTransferRule}
\begin{algorithmic}[1]\footnotesize
    \State $\vec s \defeq \vec s' \defeq \vec t' \defeq \emptyset$
    \For{\textbf{each } $i \in \{ 1, \dots, n \}$}
        \If{$t_i$ is of the form $f(u_1,\dots,u_m)$}
            \State Let $\vec x$ be a tuple of $m$ fresh variables
            \State Append $f(\vec x)$ to $\vec s$, append $\vec x$ to $\vec s'$, and append $u_1,\dots,u_m$ to $\vec t'$
        \Else
            \State Let $x$ be a fresh variable
            \State Append $x$ to both $\vec{s}$ and $\vec{s}'$, and append $t_i$ to $\vec{t}'$
        \EndIf
    \EndFor
    \State Let $R$ be a fresh predicate of arity $|\vec s'|$
    \State $\rho \defeq S(\vec s) \leftarrow R(\vec s')$ \quad and \quad $\overline{\rho} \defeq S(t_1, \dots, t_n) \leftarrow R(\vec t')$
    \State \Return $\langle \rho, \overline{\rho} \rangle$
\end{algorithmic}
\end{algorithm}

Several steps of the algorithm involve randomly selecting predicates, function
symbols, and constants, as well as generating ground terms. These choices are
governed as follows. To select predicates, the algorithm maintains a cache of
previously selected predicates; then, when a predicate is to be selected, the
algorithm either randomly reuses a predicate from the cache, or it chooses a
new predicate of maximum arity $\maxPredArity$ out of a fixed set of $\preds$
predicates. Function symbols and constants are handled analogously; the number
and the arity of function symbols are bounded by $\fnSyms$ and
$\maxFnSymArity$, respectively, and the number of constants is bounded by
$\constants$. In addition, several steps of the algorithm also involve
generating ground terms of depth at most $d$. To this end, the algorithm
maintains a cache of previously seen terms, and then either randomly reuses a
term from the cache, selects a constant if $d$ is zero, or selects a function
symbol and recursively generates arguments of depth at most $d-1$.

The algorithm initialises in line~\ref{alg:generator:seed} the `ToDo' set
$\mathcal{T}$ with $\seedQFacts$ randomly generated \emph{seed} facts of the
form $\predQ_i(\vec a_i)$. Predicates $\predQ_i$ are all different, while
arguments $\vec a_i$ are randomly generated as outlined earlier and are thus
likely to overlap. The algorithm then randomly produces chains of rule
instances that derive the seed facts, and lifts these instances to rules with
variables. It also produces a base instance $B$ while ensuring that at least
$\maxNumberTuples$ copies of each seed fact are derived when $\R$ is applied to
$B$. Since the arguments of the seed facts overlap, the derivations of the seed
facts are likely to overlap as well. Thus, by using $\predQ_i$ as query
predicates, we can check how well our techniques from
Section~\ref{sec:answering} identify the inferences relevant to a query. The
algorithm realises this idea in two phases. In the first phase, it generates up
to $\maxNumberRules$ rules matching the form of equation
\eqref{eq:generalised-SO-dep:conjunct}, and it also constructs an acyclic graph
$\Gamma$ that records the derivation dependencies: the vertices of $\Gamma$ are
facts occurring in the rule instances, and the edges reflect which facts are
used to produce other facts. In the second phase, the algorithm `unfolds' the
function symbols in the facts occurring in the leaves of $\Gamma$ and populates
the base instance $B$. We next describe the two phases in detail.

The first phase comprises
lines~\ref{alg:generator:1:start}--\ref{alg:generator:1:end}, where the
algorithm processes each fact $F$ in the `ToDo' list $\mathcal{T}$ and
generates between one and $\maxNumberRulesPerFact$ ground rules
$\overline{\rho}$ that derive $F$ (i.e., that have $F$ in the head). Each
$\overline{\rho}$ can be either a rule that axiomatises equality
(line~\ref{alg:generator:1:eq}) or a conjunct of a generalised SO dependency
(line~\ref{alg:generator:1:std}). In the latter case, a nonground rule $\rho$
is also produced such that $\overline{\rho}$ is an instance of $\rho$, and the
rule $\rho$ is added to $\R$ provided that it passes two checks. First, the
algorithm verifies whether the fixpoint of ${\R \cup \{ \rho \}}$ is finite,
which can be determined using sound but incomplete checks such as
model-summarising or model-faithful acyclicity by
\citet{DBLP:journals/jair/GrauHKKMMW13}. Second, the algorithm checks that
$\rho$ is not subsumed by another rule in $\R$ and is thus not redundant. If
both checks are satisfied, then all rules in $\R$ subsumed by $\rho$ are
removed in order to keep $\R$ free of redundant rules, and $\rho$ is added to
$\R$. Finally, all facts from the body of the nonground rule $\overline{\rho}$
are added to the `ToDo' set $\mathcal{T}$, and set $\mathcal{D}$ is used to
make sure that no fact is processed more than once
(lines~\ref{alg:generator:1:body:start}--\ref{alg:generator:1:body:end}). Also,
the graph $\Gamma$ is updated to record that the facts in the body of
$\overline{\rho}$ derive $F$.

The second phase consists of
lines~\ref{alg:generator:2:start}--\ref{alg:generator:2:end}. It generates the
base instance $B$ and the rules that use the facts in $B$ to `feed' the rule
instances considered in the first phase. The `ToDo' set $\mathcal{T}$ is
initialised to contain all facts occurring in the leaves of $\Gamma$, and the
algorithm next considers each fact $F$ in $\mathcal{T}$. If $F$ contains no
function symbols, the algorithm adds $\maxNumberTuples$ copies of $F$ to the
base instance $B$. Otherwise, Algorithm~\ref{alg:createTransferRule} is used to
`unfold' top-level function symbols in $F$ using so-called `transfer' rules.
Specifically, the latter algorithm produces a ground rule $\overline{\rho}$ of
the form ${F \leftarrow F'}$ such that all functional terms in $F'$ are of
depth one less than in $F$, as well as a corresponding nonground rule $\rho$.
Fact $F'$ is then added to the `ToDo' set so that it may be `unfolded' further
or added to the base instance. This is best understood by an example where $F$
is $S(a,f(g(b)))$. Algorithm~\ref{alg:createTransferRule} produces a ground
`transfer' rule ${S(a,f(g(b))) \leftarrow X(a,g(b))}$ and the corresponding
nonground rule ${S(x,f(y)) \leftarrow X(x,y)}$. Fact $X(a,g(b))$ contains a
function symbol, so it gives rise to a further `transfer' rule ${X(a,g(b))
\leftarrow Y(a,b)}$ and the corresponding nonground rule ${X(x,g(y)) \leftarrow
Y(x,y)}$. Finally, fact $Y(a,b)$ contains no function symbols, so
$\maxNumberTuples$ copies of $Y(a,b)$ are added to $B$. Predicates $X$ and $Y$
are fresh---that is, they are not chosen randomly as outlined earlier so that
the inferences by transfer rules do not interact with the inferences of the
rules generated in the first phase.

What remains to be discussed is how the rules deriving $F$ in
lines~\ref{alg:generator:1:eq} and~\ref{alg:generator:1:std} are produced. For
the former, Algorithm~\ref{alg:createGroundEqualityRule} randomly selects a
functional reflexivity rule, a transitivity rule, or a symmetry rule. In the
case of transitivity, the algorithm randomly generates a term $t_3$ of depth at
most $\maxTermDepth$. Equality axiomatisation is not meant to be included in
the resulting program $\R$, so Algorithm~\ref{alg:createGroundEqualityRule}
creates only a ground rule $\overline{\rho}$ that derives $F$.

\begin{algorithm}[!tb]
\caption{$\createGroundEqualityRule(t_1 \equals t_2)$}\label{alg:createGroundEqualityRule}
\begin{algorithmic}[1]\footnotesize
    \If{$t_1 = f(u_1,\dots,u_n)$ and $t_2 = f(u_1',\dots,u_n')$ with $f$ a true function symbol, and this condition is randomly satisfied}
        \State \Return $f(u_1,\dots,u_n) \equals f(u'_1,\dots,u'_n) \leftarrow u_1 \equals u_1' \wedge \dots \wedge u_n \equals u_n'$
    \ElsIf{this condition is randomly satisfied}
        \State Let $t_3$ be a random ground term of depth at most $\maxTermDepth$
        \State \Return $t_1 \equals t_2 \leftarrow t_1 \equals t_3 \wedge t_3 \equals t_2$
    \Else
        \State \Return $t_1 \equals t_2 \leftarrow t_2 \equals t_1$
    \EndIf
\end{algorithmic}
\end{algorithm}

\begin{algorithm}[!tb]
\caption{$\createRule(F)$ where $F = R_0(t_1,\dots,t_{a_0})$ and $a_0$ is the arity of $R_0$}\label{alg:createRule}
\begin{algorithmic}[1]\footnotesize
    \State Set $G$ to the empty graph, and randomly choose numbers $\relBodyAtoms$ and $\eqBodyAtoms$ between one and $\maxNumberRelBodyAtoms$ and $\maxNumberEqBodyAtoms$, respectively
    \For{\textbf{each} ${j \in \{ 1, \dots, a_0 \}}$}                                                                                                                                           \label{alg:createRule:h:start}
        \If{$t_j$ is of the form $f(s_1,\dots,s_m)$, and either the depth of $t_j$ is $\maxTermDepth + 1$ or this branch is chosen randomly}
            \State $\FN{0}{j} \defeq f$                                                                                                                                                         \label{alg:createRule:h:fn:start}
            \For{\textbf{each} $k \in \{ 1, \dots, m \}$}
                \State Add the \emph{head position} vertex $v = \langle 0, j.k \rangle$ to $G$, let $\vNGT{v}$ be a fresh variable, and let $\vGT{v} \defeq s_k$
            \EndFor                                                                                                                                                                             \label{alg:createRule:h:fn:end}
        \Else
            \State $\FN{0}{j} \defeq \bot$                                                                                                                                                      \label{alg:createRule:h:var:start}
            \State Add the \emph{head position} vertex $v = \langle 0, j \rangle$ to $G$, let $\vNGT{v}$ be a fresh variable, and let $\vGT{v} \defeq t_j$
        \EndIf                                                                                                                                                                                  \label{alg:createRule:h:var:end}
    \EndFor                                                                                                                                                                                     \label{alg:createRule:h:end}
    \For{\textbf{each} $i \in \{ 1, \dots, \relBodyAtoms \}$}                                                                                                                                   \label{alg:createRule:br:start}
        \State Set $R_i$ to a randomly chosen predicate
        \For{\textbf{each} $j \in \{ 1, \dots, a_i \}$ where $a_i$ is the arity of $R_i$}
            \State $\FN{i}{j} \defeq \bot$
            \State Add the \emph{relational body position} vertex $\langle i, j \rangle$ to $G$
        \EndFor
    \EndFor                                                                                                                                                                                     \label{alg:createRule:br:end}
    \For{\textbf{each} $i \in \{ \relBodyAtoms + 1, \dots, \relBodyAtoms + \eqBodyAtoms \}$}                                                                                                    \label{alg:createRule:eq:start}
        \State $R_i \defeq {\equals}$ and $a_i \defeq 2$
        \For{\textbf{each} $j \in \{ 1, 2 \}$}
            \If{this condition is randomly satisfied}
                \State $\FN{i}{j} \defeq \bot$
                \State Add the \emph{equality body position} vertex $\langle i, j \rangle$ to $G$
            \Else
                \State Set $\FN{i}{j}$ to a randomly chosen function symbol
                \For{\textbf{each} $k \in \{ 1, \dots, m \}$ where $m$ is the arity of $\FN{i}{j}$}
                    \State Add the \emph{equality body position} vertex $\langle i, j.k \rangle$ to $G$
                \EndFor
            \EndIf
        \EndFor
    \EndFor                                                                                                                                                                                     \label{alg:createRule:eq:end}
    \State Randomly extend $G$ with edges such that (i)~graph $G$ is acyclic, (ii)~each head position vertex has zero incoming edges
    \Statex \hspace{0.25cm} and at least one outgoing edge to a relational body position vertex, (iii)~each relational body position vertex has at
    \Statex \hspace{0.25cm} most one incoming edge, and (iv)~each equality body position vertex has one incoming edge and no outgoing edges.                                                    \label{alg:createRule:G}
    \For{\textbf{each} relational body position vertex $v$ with no incoming edges}                                                                                                              \label{alg:createRule:leaves:start}
        \State Let $\vNGT{v}$ be a fresh variable
        \State Let $\vGT{v}$ be a random ground term of depth at most $\maxTermDepth$                                                                                                           \label{alg:createRule:leaves:end}
    \EndFor
    \While{$G$ contains a vertex $v$ such that $\vNGT{v}$ and $\vGT{v}$ are undefined}                                                                                                          \label{alg:createRule:prop:start}
        \State Choose a vertex $v$ in $G$ such that $\vNGT{v}$ and $\vGT{v}$ are undefined
        \State Choose a vertex $v'$ in $G$ such that $\vNGT{v'}$ and $\vGT{v'}$ are defined and $G$ contains an edge from $v'$ to $v$
        \State ${\vNGT{v} \defeq \vNGT{v'}}$ \quad and \quad ${\vGT{v} \defeq \vGT{v'}}$
    \EndWhile                                                                                                                                                                                   \label{alg:createRule:prop:end}
    \For{\textbf{each} ${i \in \{ 0, \dots, \relBodyAtoms + \eqBodyAtoms \}}$ and \textbf{each} ${j \in \{ 1, \dots, a_i \}}$}                                                                  \label{alg:createRule:arg:start}
        \If{$\FN{i}{j} = \bot$}
            \State Let $v$ be the vertex of $G$ of the form $\langle i,j \rangle$
            \State $\NGA{i}{j} \defeq \vNGT{v}$ \quad and \quad $\GA{i}{j} \defeq \vGT{v}$
        \Else
            \State Let $m$ be the arity of $\FN{i}{j}$ and let $v_1,\dots,v_m$ be all vertices of $G$ of the form  $\langle i, j.k \rangle$ for $k \in \{ 1, \dots, m \}$
            \State $\NGA{i}{j} \defeq \FN{i}{j}(\vNGT{v_1}, \dots, \vNGT{v_m})$ \quad and \quad $\GA{i}{j} \defeq \FN{i}{j}(\vGT{v_1}, \dots, \vGT{v_m})$
        \EndIf
    \EndFor                                                                                                                                                                                     \label{alg:createRule:arg:end}
    \State $\rho \defeq R_0(\NGA{0}{1},\dots,\NGA{0}{a_0}) \leftarrow \bigwedge\limits_{i = 1}^{\relBodyAtoms + \eqBodyAtoms} R_i(\NGA{i}{1},\dots,\NGA{i}{a_i})$                               \label{alg:createRule:rules1}
    \State $\overline{\rho} \defeq R_0(\GA{0}{1},\dots,\GA{0}{a_0}) \leftarrow \bigwedge\limits_{i = 1}^{\relBodyAtoms + \eqBodyAtoms} R_i(\GA{i}{1},\dots,\GA{i}{a_i})$                        \label{alg:createRule:rules2}
    \State \Return $\langle \rho, \overline{\rho} \rangle$
\end{algorithmic}
\end{algorithm}

Algorithm~\ref{alg:createRule} creates a nonground rule $\rho$ that may be
added to $\R$ and the corresponding instance $\overline{\rho}$. Roughly
speaking, this is done by generating a graph $G$ containing a vertex of the
form ${\langle i, p \rangle}$ for each head or body atom of $\rho$ and each
position $p$ in the atom: vertices of the form ${\langle 0, p \rangle}$
identify positions in the head of $\rho$, and vertices of the form ${\langle i,
p \rangle}$ with ${i \geq 1}$ identify positions in the $i$-th body atom of
$\rho$. The edges of $G$ will describe the joins among the positions of the
atoms of $\rho$ and $\overline{\rho}$. Each vertex $v$ in $G$ will be
associated with a variable $\vNGT{v}$ and a ground term $\vGT{v}$. Finally, for
each argument $j$ of an atom $i$, the algorithm will define $F_{i,j}$ as either
a function symbol or a special symbol $\bot$. Intuitively, ${F_{i,j} \neq
\bot}$ means that the $j$-th argument of the $i$-th atom of $\rho$ and
$\overline{\rho}$ are ${F_{i,j}(\vNGT{\langle i,j.1 \rangle}, \dots,
\vNGT{\langle i,j.m \rangle})}$ and ${F_{i,j}(\vGT{\langle i,j.1 \rangle},
\dots, \vGT{\langle i,j.m \rangle})}$, respectively; in contrast, ${F_{i,j} =
\bot}$ means that the $j$-th argument of the $i$-th atom of $\rho$ and
$\overline{\rho}$ are ${\vNGT{\langle i,j \rangle}}$ and ${\vGT{\langle i,j
\rangle}}$, respectively. Thus, $G$, $F_{i,j}$, $\vNGT{v}$, and $\vGT{v}$
correctly describe the arguments of all atoms in $\rho$ and $\overline{\rho}$,
and the main objective of Algorithm~\ref{alg:createRule} is to randomly select
these values while avoiding any contradictions.

To this end, Algorithm~\ref{alg:createRule} analyses each argument $t_j$ of ${F
= \head{\overline{\rho}}}$. If $t_j$ is a functional term involving a function
symbol $f$, the algorithm can choose whether the $j$-th argument of
$\head{\rho}$ should be a term of the form $f(\vec x)$ or a variable. In the
former case
(lines~\ref{alg:createRule:h:fn:start}--\ref{alg:createRule:h:fn:end}),
$F_{0,j}$ is set to $f$, a vertex ${v = \langle 0,j.k \rangle}$ is introduced
for each argument $k$ of $f$, a fresh variable $\vNGT{v}$ is introduced for
each such $v$, and $\vGT{v}$ is set to the $k$-th argument of $t_j$. In the
latter case
(lines~\ref{alg:createRule:h:var:start}--\ref{alg:createRule:h:var:end}), only
vertex ${v = \langle 0,j \rangle}$ is introduced, $\FN{0}{j}$ is set to $\bot$,
a fresh variable $\vNGT{v}$ is introduced for the $j$-th argument of
$\head{\rho}$, and $\vGT{v}$ is set to $t_j$. If $t_j$ does not contain a
function symbol (i.e., if $t_j$ is a constant), then only the latter case is
possible.

In lines~\ref{alg:createRule:br:start}--\ref{alg:createRule:br:end}
and~\ref{alg:createRule:eq:start}--\ref{alg:createRule:eq:end}, the algorithm
creates the vertices of $G$ that correspond to the relational and equality body
atoms, respectively, of $\rho$ and $\overline{\rho}$. Relational atoms contain
variable arguments only (cf.\ Section~\ref{sec:preliminaries}), so only
vertices of the form ${\langle i,j \rangle}$ are added for relational atoms. In
contrast, equality atoms can contain functional terms, so for each argument of
an equality atom the algorithm chooses whether to introduce a functional term.
These steps do \emph{not} yet determine $\vNGT{v}$ and $\vGT{v}$.

Graph $G$ is next extended in line~\ref{alg:createRule:G} with edges that
encode the propagation of terms among the positions of the body atoms of $\rho$
and $\overline{\rho}$. In
lines~\ref{alg:createRule:leaves:start}--\ref{alg:createRule:leaves:end}, for
each `leaf' vertex $v$ (i.e., for each vertex with no incoming edges),
$\vNGT{v}$ is assigned a fresh variable, and $\vGT{v}$ is assigned a random
ground term of depth at most $\maxTermDepth$. These choices are next propagated
via the edges of $G$ to all the remaining terms of $G$ in
lines~\ref{alg:createRule:prop:start}--\ref{alg:createRule:prop:end}. To avoid
contradictions, graph $G$ must satisfy the properties in
line~\ref{alg:createRule:G}: it must be acyclic; the head positions cannot have
an incoming edge because their values have already been determined, and the
corresponding variables must occur in relational body positions to satisfy the
safety requirement; relational body positions can receive at most one value
from another position; and equality body positions must receive exactly one
value and should not propagate it further to satisfy the safety requirement.
Thus, the structure of the rule has been determined after
line~\ref{alg:createRule:prop:end}: if vertices $v_1$ and $v_2$ satisfy
${\vNGT{v_1} = \vNGT{v_2}}$, then the atom positions corresponding to $v_1$ and
$v_2$ participate in a join.
Lines~\ref{alg:createRule:arg:start}--\ref{alg:createRule:arg:end} then
construct a nonground term $\NGA{i}{j}$ and a ground term $\GA{i}{j}$ for the
$j$-th argument of the $i$-th atom of $\rho$ and $\overline{\rho}$ using
$\FN{i}{j}$ and the corresponding vertices of $G$. Finally, the rules $\rho$
and $\overline{\rho}$ are constructed in lines~\ref{alg:createRule:rules1}
and~\ref{alg:createRule:rules2}, respectively.

Let $\R$ and $B$ be the output of Algorithm~\ref{alg:generator}. Clearly, when
computing the fixpoint of ${\R \cup \SG{\R}}$ on $B$, all rule instances
considered during the generation process will derive copies of the considered
facts; however, the rules $\rho$ considered in lines~\ref{alg:generator:1:std}
and~\ref{alg:generator:2:transfer} can give rise to other rule instances too.
Thus, computing the fixpoint of ${\R \cup \SG{\R}}$ on $B$ considers at least
the rule instances produced during benchmark generation (but it often considers
additional rule instances as well). This, in turn, ensures that the benchmarks
provide nontrivial workloads for the techniques from
Section~\ref{sec:answering}.

\fi

\end{document}